\newcommand\labelAndRemember[2]
\gdef\csname labeled:#1\endcsname{#2}%
\newcommand\recallLabel[1]
\endcsname\tag{\ref{#1}}}
\newcommand\labelr[2]
\gdef\csname labeled:#1\endcsname{#2}%
\newcommand\recall[1]
\newcolumntype{H}{>{\setbox0=\hbox\bgroup}c<{\egroup}@{}}
\newcolumntype{Z}{>{\setbox0=\hbox\bgroup}c<{\egroup}@{\hspace*{-\tabcolsep}}}
\definecolor{light-gray}{gray}{0.9}
\def\SL@eqntext#1{\rlap{\fbox{\vbox{{\showlabelsetlabel{\SL@prlabelname{#1}}}}}}}
\def\thm@space@setup{\thm@preskip=3pt
\thm@postskip=0pt}
\newtheorem{theorem}{Theorem}
\newtheorem{lemma}[theorem]{Lemma}
\newtheorem{corollary}[theorem]{Corollary}
\newtheorem{proposition}[theorem]{Proposition}
\newtheorem{definition}[theorem]{Definition}
\theoremstyle{definition}
\newtheorem{remark}[theorem]{Remark}
\newtheorem{example}[theorem]{Example}
\def\shownotes{0}  %
\newcommand{\authnote}[2]{{\scriptsize $\ll$\textsf{#1 notes: #2}$\gg$}}
\newcommand{\authnote}[2]{}
\renewcommand{\hat}{\widehat}
\renewcommand{\tilde}{\widetilde}
\renewcommand{\epsilon}{\varepsilon}
\newcommand{\id}{I}
\newcommand{\RR}{\mathbb{R}}
\newcommand{\NN}{\mathbb{N}}
\newcommand{\eps}{\varepsilon}
\newcommand{\II}{\mathbb{I}}
\newcommand{\EE}{\mathbb{E}}
\newcommand{\PP}{\mathbb{P}}
\newcommand{\QQ}{\mathbb{Q}}
\newcounter{cnt}
\xdef \csname c\Alph{cnt}\endcsname {\noexpand\mathcal{\Alph{cnt}}}%
\xdef \csname b\Alph{cnt}\endcsname {\noexpand\mathbb{\Alph{cnt}}}%
\newcommand{\tr}{\mathrm{tr}}
\DeclareMathOperator*{\argmin}{arg\,min}
\DeclareMathOperator*{\argmax}{arg\,max}
\newcommand{\diag}{\operatorname{diag}}
\newcommand{\leqsim}{\lesssim}
\newcommand{\T}{\top}  
\newcommand{\iprod}[2]{\left\langle #1, #2 \right\rangle}
\newcommand{\nrm}[1]{\left\|#1\right\|}
\newcommand{\minop}[1]{\min\left\{#1\right\}}
\newcommand{\abs}[1]{\left|#1\right|}
\newcommand{\cond}[2]{\mathbb{E}\left[\left.#1\right|#2\right]}
\newcommand{\bigO}[1]{\mathcal{O}\left(#1\right)}
\newcommand{\tbO}[1]{\tilde{\mathcal{O}}\left(#1\right)}
\newcommand{\ceil}[1]{\left\lceil #1\right\rceil}
\newcommand{\floor}[1]{\left\lfloor #1\right\rfloor}
\DeclarePairedDelimiterX{\ddiv}[2]{(}{)}{%
  #1\;\delimsize\|\;#2%
}
\newcommand{\clog}{\iota}
\newcommand{\norm}[1]{\left\|{#1}\right\|} %
\newcommand{\lone}[1]{\norm{#1}_1} %
\newcommand{\ltwo}[1]{\norm{#1}_2} %
\newcommand{\linf}[1]{\norm{#1}_\infty} %
\renewcommand{\cO}{\mathcal{O}}
\newcommand{\tO}{\widetilde{\cO}}
\newcommand{\wt}{\widetilde}
\newcommand{\sign}{\operatorname{sign}}
\newcommand{\htheta}{\hat{\theta}}
\renewcommand{\II}{\mathbbm{1}}
\newcommand{\sA}{\mathscr{A}}
\newcommand{\bd}{\mathbf{q}}
\newcommand{\unif}{\mathrm{Unif}}
\newcommand{\Unif}{\mathrm{Unif}}
\newcommand{\Bs}{\B^\star}
\newcommand{\bds}{\bd^\star}
\newcommand{\otheta}{\bar{\theta}}
\newcommand{\omu}{\overline{\mu}}
\newcommand{\expl}{\mathrm{exp}}
\newcommand{\doac}{\mathrm{do}}
\newcommand{\BB}{\mathbf{B}}
\newcommand{\bq}{\mathbf{q}}
\newcommand{\QAh}{\mathcal{U}_{A,h}}
\newcommand{\QAhp}{\mathcal{U}_{A,h+1}}
\newcommand{\Uone}{\cU_1}
\newcommand{\Uh}{{\cU_h}}
\newcommand{\Uhp}{{\cU_{h+1}}}
\newcommand{\UH}{{\cU_H}}
\newcommand{\UAh}{\mathcal{U}_{A,h}}
\newcommand{\UAhp}{\mathcal{U}_{A,h+1}}
\newcommand{\oUh}{\overline{\cU}_h}
\newcommand{\nUh}{\abs{\Uh}}
\newcommand{\nUA}{U_A}
\newcommand{\nUAh}{\abs{\UAh}}
\newcommand{\nUAhp}{\abs{\UAhp}}
\newcommand{\psc}{\operatorname{psc}}
\newcommand{\eec}{\operatorname{edec}}
\newcommand{\oeec}{\overline{\eec}}
\newcommand{\oeecg}{\oeec_{\gamma}}
\newcommand{\pscest}{\psc^{\est}}
\newcommand{\pscestg}{\pscest_{\gamma}}
\newcommand{\dH}{D_{\mathrm{H}}}
\renewcommand{\DH}[1]{D_{\mathrm{H}}^2\left(#1\right)}
\newcommand{\DHs}[1]{D_{\mathrm{H}}\left(#1\right)}
\newcommand{\DTV}[1]{D_{\mathrm{TV}}\left(#1\right)}
\newcommand{\est}{\operatorname{est}}
\newcommand{\etodonly}{\textsc{E2D}}
\newcommand{\Vovkalg}{Tempered Aggregation}
\newcommand{\tPP}{\widetilde{\PP}}
\newcommand{\mops}{\textsc{MOPS}}
\newcommand{\omle}{\textsc{OMLE}}
\newcommand{\eetod}{\textsc{Explorative E2D}}
\newcommand{\pexp}{p_{\mathrm{exp}}}
\newcommand{\pout}{p_{\mathrm{out}}}
\newcommand{\hatpiout}{\hat{\pi}_{\mathrm{out}}}
\newcommand{\barpi}{\bar \pi}
\newcommand{\dLMDP}{N}
\newcommand{\hmu}{\hat\mu}
\newcommand{\M}{\mathbb{M}}
\renewcommand{\O}{\mathbb{O}}
\renewcommand{\T}{\mathbb{T}}
\newcommand{\rowspan}[1]{ {\rm rowspan}(#1)}
\newcommand{\colspan}[1]{ {\rm colspan}(#1)}
\newcommand{\Test}{\mathfrak{T}}
\newcommand{\dPSR}{d_{\sf PSR}}
\newcommand{\dtrans}{d_{\sf trans}}
\newcommand{\logNt}{\log\cN_{\Theta}}
\newcommand{\Nt}{\cN_{\Theta}}
\newcommand{\nrmpi}[1]{\left\|#1\right\|_{\Pi}}
\newcommand{\tmu}{\tilde{\mu}}
\newcommand{\nrmst}[1]{\nrm{#1}_{*}}
\newcommand{\ty}{\tilde{y}}
\newcommand{\Vs}{V_{\star}}
\newcommand{\ths}{\theta^{\star}}
\newcommand{\piexph}{\pi_{h,\expl}}
\newcommand{\Bpara}{B-representation}
\newcommand{\cesthth}{\cE_{\theta,h}^{\otheta}(\tau_{h-1})}
\newcommand{\cesththz}{\cE_{\theta,0}^{\otheta}}
\newcommand{\tauhm}{\tau_{h-1}}
\newcommand{\thp}{t_{h+1}}
\newcommand{\piexp}{\pi_{\expl}}
\newcommand{\apsr}{\alpha_{\sf psr}}
\newcommand{\arev}{\alpha_{\sf rev}}
\newcommand{\arevone}{\alpha_{\sf rev,\ell_1}}
\newcommand{\nrmpip}[1]{\nrm{#1}_{\Pi'}}
\newcommand{\nrmpin}[1]{\nrm{#1}_{\Pi}}
\newcommand{\nrmonetwo}[1]{\nrm{#1}_{1, 2}}
\newcommand{\cBHh}{\cB_{H:h}}
\newcommand{\cBHhp}{\cB_{H:h+1}}
\def\oedec{\overline{\rm edec}}
\newcommand{\inv}{\natural}
\newcommand{\dlr}{d_{\sf trans}}
\newcommand{\dlrh}{d_{{\sf trans},h}}
\newcommand{\bp}{\mathbf{p}}
\newcommand{\efunc}{f}
\newcommand{\tfunc}{\wt{\efunc}}
\newcommand{\cEs}{\cE^{\star}}
\newcommand{\ocE}{\overline{\cE}}
\newcommand{\obq}{\overline{\bq}}
\newcommand{\hV}{\widehat{V}}
\newcommand{\stab}{\Lambda_{\sf B}}
\newcommand{\stabs}{$\stab$-stable}
\newcommand{\stabsn}{$\stab$-stablility}
\newcommand{\dum}{\rm dum}
\newcommand{\rb}{R_{\sf B}}
\newcommand{\exend}{{~$\Diamond$}}
\newcommand{\moneone}[1]{\nrm{#1}_{1\to 1}}
\newcommand{\muout}{\mu_{\out}}
\newcommand{\omuout}{\omu_{\out}}
\newcommand{\opi}{\bar{\pi}}
\newcommand{\out}{\mathrm{out}}
\newcommand{\MEalg}{\textsc{All-Policy Model-Estimation E2D}}
\newcommand{\ME}{\mathrm{me}}
\renewcommand{\opi}{\bar{\pi}}
\newcommand{\mdec}{\operatorname{amdec}}
\newcommand{\omdec}{\overline{\mdec}}
\newcommand{\paren}[1]{{\left( #1 \right)}}
\newcommand{\brac}[1]{{\left[ #1 \right]}}
\newcommand{\set}[1]{{\left\{ #1 \right\}}}
\newcommand{\defeq}{\mathrel{\mathop:}=}
\newcommand{\vect}[1]{\ensuremath{\mathbf{#1}}}
\newcommand{\mat}[1]{\ensuremath{\mathbf{#1}}}
\renewcommand{\det}{\mathrm{det}}
\newcommand{\rank}{\mathrm{rank}}
\newcommand{\E}{\mathbb{E}}
\renewcommand{\P}{\mathbb{P}}
\newcommand{\Z}{\mathbb{Z}}
\newcommand{\R}{\mathbb{R}}
\newcommand{\B}{\mat{B}}
\newcommand{\e}{\vect{e}}
\renewcommand{\a}{\vect{a}}
\renewcommand{\o}{\vect{o}}
\newenvironment{proof-sketch}{\noindent{\bf Proof Sketch}
  \hspace*{1em}}{\qed\bigskip\\}
\newenvironment{proof-idea}{\noindent{\bf Proof Idea}
  \hspace*{1em}}{\qed\bigskip\\}
\newenvironment{proof-of-lemma}[1][{}]{\noindent{\bf Proof of Lemma {#1}}
  \hspace*{1em}}{\qed\bigskip\\}
\newenvironment{proof-of-proposition}[1][{}]{\noindent{\bf
    Proof of Proposition {#1}}
  \hspace*{1em}}{\qed\bigskip\\}
\newenvironment{proof-of-theorem}[1][{}]{\noindent{\bf Proof of Theorem {#1}}
  \hspace*{1em}}{\qed\bigskip\\}
\newenvironment{inner-proof}{\noindent{\bf Proof}\hspace{1em}}{
  $\bigtriangledown$\medskip\\}
\newenvironment{proof-attempt}{\noindent{\bf Proof Attempt}
  \hspace*{1em}}{\qed\bigskip\\}
\newenvironment{proof-of}[1][{}]{\noindent{\bf #1}
  \hspace*{1em}}{\qed\bigskip\\}
\title{Partially Observable RL with B-Stability: Unified Structural Condition and Sharp Sample-Efficient Algorithms}
\date{\today}
\author{
  Fan Chen\thanks{Peking University. Email: \texttt{chern@pku.edu.cn}}
  \and
  Yu Bai\thanks{Salesforce Research. Email: \texttt{yu.bai@salesforce.com}} \footnotemark[4] 
  \and
  Song Mei\thanks{UC Berkeley. Email: \texttt{songmei@berkeley.edu}} \thanks{Equal contribution.} 
}
\begin{document}
\maketitle

\maketitle

\begin{abstract}
Partial Observability---where agents can only observe partial information about the true underlying state of the system---is ubiquitous in real-world applications of Reinforcement Learning (RL). Theoretically, learning a near-optimal policy under partial observability is known to be hard in the worst case due to an exponential sample complexity lower bound. Recent work has identified several tractable subclasses that are learnable with polynomial samples, such as Partially Observable Markov Decision Processes (POMDPs) with certain revealing or decodability conditions. However, this line of research is still in its infancy, where (1) unified structural conditions enabling sample-efficient learning are lacking; (2) existing sample complexities for known tractable subclasses are far from sharp; and (3) fewer sample-efficient algorithms are available than in fully observable RL.

This paper advances all three aspects above for Partially Observable RL in the general setting of Predictive State Representations (PSRs). First, we propose a natural and unified structural condition for PSRs called \emph{B-stability}. B-stable PSRs encompasses the vast majority of known tractable subclasses such as weakly revealing POMDPs, low-rank future-sufficient POMDPs, decodable POMDPs, and regular PSRs. Next, we show that any B-stable PSR can be learned with polynomial samples in relevant problem parameters. When instantiated in the aforementioned subclasses, our sample complexities improve substantially over the current best ones. Finally, our results are achieved by three algorithms simultaneously: Optimistic Maximum Likelihood Estimation, Estimation-to-Decisions, and Model-Based Optimistic Posterior Sampling. The latter two algorithms are new for sample-efficient learning of POMDPs/PSRs.
We additionally design a variant of the Estimation-to-Decisions algorithm to perform sample-efficient \emph{all-policy model estimation} for B-stable PSRs, which also yields guarantees for reward-free learning as an implication.

\end{abstract}

\section{Introduction}

Partially Observable Reinforcement Learning (RL)---where agents can only observe partial information about the true underlying state of the system---is ubiquitous in real-world applications of RL such as robotics~\citep{akkaya2019solving}, strategic games~\citep{brown2018superhuman,vinyals2019grandmaster,berner2019dota}, economic simulation~\citep{zheng2020ai}, and so on. Partially observable RL defies standard efficient approaches for learning and planning in the fully observable case (e.g. those based on dynamical programming) due to the non-Markovian nature of the observations~\citep{jaakkola1994reinforcement}, and has been a hard challenge for RL research.

Theoretically, it is well-established that learning in partial observable RL is statistically hard in the worst case---In the standard setting of Partially Observable Markov Decision Processes (POMDPs), learning a near-optimal policy has an exponential sample complexity lower bound in the horizon length~\citep{mossel2005learning, krishnamurthy2016pac}, which in stark contrast to fully observable MDPs where polynomial sample complexity is possible~\citep{kearns2002near,jaksch2010near,azar2017minimax}. A later line of work identifies various additional structural conditions or alternative learning goals that enable sample-efficient learning, such as reactiveness~\citep{jiang2017contextual}, revealing conditions~\citep{jin2020sample,liu2022sample,cai2022reinforcement,wang2022embed}, decodability~\citep{du2019provably,efroni2022provable}, and learning memoryless or short-memory policies~\citep{azizzadenesheli2018policy,uehara2022provably}.

Despite these progresses, research on sample-efficient partially observable RL is still at an early stage, with several important questions remaining open. First, to a large extent, existing tractable structural conditions are mostly identified and analyzed in a case-by-case manner and lack a more unified understanding. This question has just started to be tackled in the very recent work of~\citet{zhan2022pac}, who show that sample-efficient learning is possible in the more general setting of Predictive State Representations (PSRs)~\citep{littman2001predictive}---which include POMDPs as a special case---with a certain \emph{regularity} condition. However, their regularity condition is defined in terms of additional quantities (such as ``core matrices'') not directly encoded in the definition of PSRs, which makes it unnatural in many known examples and unable to subsume important tractable problems such as decodable POMDPs. 

Second, even in known sample-efficient problems such as revealing POMDPs~\citep{jin2020provably,liu2022partially}, existing sample complexities involve large polynomial factors of relevant problem parameters that are likely far from sharp. Third, relatively few principles are known for designing sample-efficient algorithms in POMDPs/PSRs, such as spectral or tensor-based approaches~\citep{hsu2012spectral,azizzadenesheli2016reinforcement,jin2020provably}, maximum likelihood or density estimation~\citep{liu2022partially,wang2022embed,zhan2022pac}, or learning short-memory policies~\citep{efroni2022provable,uehara2022provably}. This contrasts with fully observable RL where the space of sample-efficient algorithms is much more diverse~\citep{agarwal2019reinforcement}. It is an important question whether we can expand the space of algorithms for partially observable RL.

\begin{table}[t]
\small
\centering
\renewcommand{\arraystretch}{1.6}
\caption{
  \small {\bf Comparisons of sample complexities} for learning an $\eps$ near-optimal policy in POMDPs and PSRs. Definitions of the problem parameters can be found in~\cref{sec:relation}. The last three rows refer to the $m$-step versions of the problem classes (e.g. the third row considers $m$-step $\arev$-revealing POMDPs). The current best results within the last four rows are due to~\citet{zhan2022pac,liu2022partially,wang2022embed,efroni2022provable} respectively\protect\footnotemark. All results are scaled to the setting with total reward in $[0,1]$. %
}
\vspace{-.5em}
\centerline{
\label{table:main}
\scalebox{1.0}{
\begin{tabular}{|c|c|c|}
\hline
\textbf{Problem Class} & \textbf{Current Best} & \textbf{Ours} \\
\hhline{|===|}
\cellcolor{light-gray} $\stab$-stable PSR
& \cellcolor{light-gray} -
& \cellcolor{light-gray} $\tbO{\dPSR A \nUA H^2\logNt\cdot \stab^2/\epsilon^2}$
\\
\hline
$\apsr$-regular PSR
& $\tbO{\dPSR^4 A^4\nUA^9H^6\log(\cN_{\Theta}O)/(\apsr^6\epsilon^2)}$
& \cellcolor{light-gray} $\tbO{\dPSR A\nUA^2 H^2\logNt/(\apsr^2 \eps^2)}$ 
\\
\hline
$\arev$-revealing tabular POMDP
& $\tbO{S^4A^{6m-4}H^6\logNt/(\arev^4\epsilon^2)}$
& \cellcolor{light-gray} $\tbO{S^2 A^{m}H^2\logNt/(\arev^2 \eps^2)}$
\\
\hline
$\nu$-future-suff. rank-$\dlr$ POMDP
& $\tbO{\dlr^4 A^{5m+3l+1}H^2 (\logNt)^2\cdot \nu^4\gamma^2/\epsilon^2}$
& \cellcolor{light-gray} $\tbO{\dlr A^{2m-1} H^2\logNt \cdot \nu^2 /\epsilon^2}$
\\
\hline
decodable rank-$\dlr$ POMDP
& $\tbO{\dtrans A^mH^2\log\cN_{\cG}/\epsilon^2}$
& \cellcolor{light-gray} $\tbO{\dtrans A^mH^2\logNt/\epsilon^2}$        
\\
\hline
\end{tabular}
}}
\vspace{-.5em}
\end{table}

\footnotetext{For $\nu$-future-sufficient POMDPs, \citet{wang2022embed}'s sample complexity depends on $\gamma$, which is an additional \emph{$l$-step past-sufficiency} parameter that they require.
}

This paper advances all three aspects above for partially observable RL. We define \emph{B-stablility}, a natural and general structural condition for PSRs, and design sharp algorithms for learning any B-stable PSR sample-efficiently. Our contributions can be summarized as follows.
\begin{itemize}[leftmargin=1em, topsep=0pt, itemsep=0em]
    \item We identify a new structural condition for PSRs termed \emph{B-stability}, which simply requires its \emph{\Bpara} (or observable operators) to be bounded in a suitable operator norm (\cref{sec:stability-def}). B-stable PSRs subsume most known tractable subclasses such as revealing POMDPs, decodable POMDPs, low-rank future-sufficient POMDPs, and regular PSRs (\cref{sec:relation}).
    \item We show that B-stable PSRs can be learned sample-efficiently by three algorithms simultaneously with sharp sample complexities (\cref{sec:learning_PSRs}): Optimistic Maximum Likelihood Estimation (\omle), Explorative Estimation-to-Decisions (\eetod), and Model-based Optimistic Posterior Sampling (\mops). To our best knowledge, the latter two algorithms are first shown to be sample-efficient in partially observable RL.
    \item Our sample complexities improve substantially over the current best when instantiated in both regular PSRs (\cref{sec:OMLE-another}) and known tractable subclasses of POMDPs (\cref{sec:examples}).
    For example, for $m$-step $\alpha_{\sf rev}$-revealing POMDPs with $S$ latent states, our algorithms find an $\eps$ near-optimal policy within $\tbO{S^2A^m\log\mathcal{N}/(\alpha_{\sf rev}^2\varepsilon^2)}$ episodes of play (with $S^2/\alpha_{\sf rev}^2$ replaced by $S \stab^2$ if measured in B-stability), which improves significantly over the current best result of $\tbO{S^4A^{6m-4}\log\mathcal{N}/(\alpha_{\sf rev}^4\varepsilon^2)}$. A summary of such comparisons is presented in~\cref{table:main}.
    \item As a variant of the \etodonly~algorithm, we design the \MEalg~algorithm that achieves sample-efficient \emph{all-policy model estimation}---and as an application, reward-free learning---for B-stable PSRs (\cref{sec:E2D} \&~\cref{appendix:RF-E2D}).
    \item Technically, our three algorithms rely on a unified sharp analysis of B-stable PSRs that involves a careful error decomposition in terms of its {\Bpara}, along with a new generalized $\ell_2$-type Eluder argument, which may be of future interest  (\cref{sec:proof_overview}).
\end{itemize}

\subsection{Related work}\label{sec:related-work}

\paragraph{Learning POMDPs} 

Due to the non-Markovian nature of observations, policies in POMDPs in general depend on the full history of observations, and thus are much harder to learn than in fully observable MDPs.
It is well-established that learning a near-optimal policy in POMDPs is indeed statistically hard in the worst-case, due to a sample complexity lower bound that is exponential in the horizon~\citep{mossel2005learning, krishnamurthy2016pac}. Algorithms achieving such upper bounds are developed in~\citep{kearns1999approximate, even2005reinforcement}. \citet{poupart2008model, ross2007bayes}~develop Bayesian methods to learn POMDPs, while \citet{azizzadenesheli2018policy}~consider learning the optimal memoryless policies with policy gradient methods. Sample-efficient algorithms for learning POMDPs have also been developed in~\citet{hsu2012spectral, azizzadenesheli2016reinforcement, guo2016pac,xiong2021sublinear, jahromi2022online}; These works assume exploratory data or reachability assumptions, and thus do not address the challenge of exploration.

For learning POMDPs in the online (exploration) setting, sample-efficient algorithms have been proposed under various structural conditions, including reactiveness~\citep{jiang2017contextual}, revealing conditions~\citep{jin2020sample, liu2022partially, liu2022sample}, revealing 
(future/past-sufficiency) and low rank~\citep{cai2022reinforcement, wang2022embed}, decodablity~\citep{efroni2022provable}, latent MDP~\citep{kwon2021rl}, learning short-memory policies~\citep{uehara2022provably}, and deterministic transitions~\citep{uehara2022computationally}. Our B-stability condition encompasses most of these structural conditions, through which we provide a unified analysis with significantly sharper sample complexities (cf.~\cref{sec:non-expansive} \&~\ref{sec:examples}).

For the computational aspect, planning in POMDPs is known to be PSPACE-compete~\citep{papadimitriou1987complexity, littman1994memoryless,burago1996complexity, lusena2001nonapproximability}. The recent work of~\citet{golowich2022planning, golowich2022learning} establishes the belief contraction property in revealing POMDPs, which leads to algorithms with quasi-polynomial statistical and computational efficiency. \citet{uehara2022computationally} design computationally efficient algorithms under the deterministic latent transition assumption. We remark that computational efficiency is beyond the scope of this paper, but is an important direction for future work.

Extensive-Form Games with Imperfect Information (EFGs;~\citep{kuhn1953extensive}) is an alternative formulation of partial observability in sequential decision-making. EFGs can be formulated as Partially Observable Markov Games (the multi-agent version of POMDPs~\citep{liu2022sample}) with a tree-structure. Learning from bandit feedback in EFGs has been recently studied in \cite{farina2021bandit,kozuno2021learning, bai2022efficient, bai2022near, song2022sample}, where the sample complexity scales polynomially in the size of the game tree (typically exponential in the horizon). This line of results is in general incomparable to ours as their tree structure assumption is different from B-stability.

\paragraph{Learning PSRs} 
PSRs is proposed in~\citet{littman2001predictive, singh2012predictive, rosencrantz2004learning,boots2013hilbert} as a general formulation of partially observable systems, following the idea of Observable Operator Models~\citep{jaeger2000observable}. POMDPs can be seen as a special case of PSRs~\citep{littman2001predictive}. Algorithms for learning PSRs have been designed assuming reachability or exploratory data, including spectral algorithms~\citep{boots2011closing, zhang2021reinforcement, jiang2018completing}, supervised learning~\citep{hefny2015supervised}, and others~\citep{hamilton2014efficient,thon2015links,grinberg2018learning}.
Closely related to us, the very recent work of~\citet{zhan2022pac} develops the first sample-efficient algorithm for learning PSRs in the online setting assuming under a regularity condition. Our work provides three algorithms with sharper sample complexities for learning PSRs, under the more general condition of B-stability.

A concurrent work by~\citet{liu2022optimistic} (released on the same day as this work) also identifies a general class of ``well-conditioned'' PSRs that can be learned sample-efficiently by the OMLE algorithm~\citep{liu2022partially}. Our B-stability condition encompasses and is slightly more relaxed than their condition (consisting of two parts), whose part one is similar to the operator norm requirement in B-stability with a different choice of input norm, and which requires an additional second part.

Next, our sample complexity is much tighter than that of~\citet{liu2022optimistic}, on both general well-conditioned/B-stable PSRs and the specific examples encompassed (such as revealing POMDPs). 
For example, for the general class of ``$\gamma$ well-conditioned PSRs'' considered in their work, our results imply a  $\tbO{dAU_A^2H^2\log\cN_{\Theta}/(\gamma^2\epsilon^2)}$ sample complexity, whereas their result scales as $\tbO{d^2A^5U_A^3H^4\log\cN_{\Theta}/(\gamma^4\epsilon^2)}$ (extracted from their proofs, cf.~\cref{appendix:well-conditioned-psr}).
This originates from several differences between our techniques: First, \citet{liu2022optimistic}'s analysis of the OMLE algorithm is based on an $\ell_1$-type operator error bound for PSRs, combined with an $\ell_1$-Eluder argument, whereas our analysis is based on a new stronger $\ell_2$-type operator error bound for PSRs (\cref{prop:psr-hellinger-bound}) combined with a new generalized $\ell_2$-Eluder argument (\cref{prop:pigeon-l2}), which together results in a sharper rate.
Besides, our $\ell_2$-Eluder argument also admits an in-expectation decoupling form as a variant (\cref{prop:eluder-decor}) that is necessary for bounding the EDEC (and hence the sample complexity of the \eetod~algorithm) for B-stable PSRs; it is unclear whether their $\ell_1$-Eluder argument can give the same results. Another difference is that our performance decomposition and Eluder argument are done on a slightly difference choice of vectors from~\citet{liu2022optimistic}, which is the main reason for our better $1/\gamma$ dependency (or $\stab$ dependency for B-stable PSRs); See~\cref{sec:proof_overview} for a detailed overview of our technique. Further, in terms of algorithms,~\citet{liu2022optimistic} only study the OMLE algorithm, whereas we study both OMLE and two alternative algorithms Explorative E2D \& MOPS in addition, which enjoy similar guarantees (with minor differences) as OMLE. In summary,~\citet{liu2022optimistic} do not overlap with our contributions (2) and (3) highlighted in our abstract.

Finally, complementary to our work,~\citet{liu2022optimistic} identify new concrete problems such as observable POMDPs with continuous observations, and develop new techniques to show that they fall into both of our general PSR frameworks, and thus tractable to sample-efficient learning. In particular, their result implies that this class is contained in (an extension of) the low-rank future-sufficient POMDPs defined in~\cref{def:general-future-sufficiency}, if we suitably extend the formulation in~\cref{def:general-future-sufficiency} to the continuous observation setting by replacing vectors with $L_1$-integrable functions and matrices with linear operators.

\paragraph{RL with function approximation}
(Fully observable) RL with general function approximation has been extensively studied in a recent line of work~\citep{jiang2017contextual,sun2019model,du2021bilinear,jin2021bellman,foster2021statistical,agarwal2022model,chen2022unified}, where sample-efficient algorithms are constructed for problems admitting bounds in certain general complexity measures. While POMDPs/PSRs can be cast into their settings by treating the history $(\tau_{h-1}, o_h)$ as the state, prior to our work, it was highly unclear whether any sample-efficient learning results can be deduced from their results due to challenges in bounding the complexity measures~\citep{liu2022partially}. Our work answers this positively by showing that the Decision-Estimation Coefficient (DEC;~\citet{foster2021statistical}) for B-stable PSRs is bounded, using an explorative variant of the DEC defined by~\citet{chen2022unified}, thereby showing that their \eetod~algorithm and the closely related \mops~algorithm~\citep{agarwal2022model} are both sample-efficient for B-stable PSRs. Our work further corroborates the connections between \etodonly, \mops, and \omle~identified in~\citep{chen2022unified} in the setting of partially observable RL.

\section{Preliminaries}

\paragraph{Sequential decision processes with observations} 
An episodic sequential decision process is specified by a tuple $\set{H,\cO,\cA,\PP,\{r_{h}\}_{h=1}^{H}}$, where $H\in\Z_{\ge 1}$ is the horizon length; $\cO$ is the observation space with
$\abs{\cO}=O$; %
$\cA$ is the action space with $\abs{\cA}=A$; $\PP$ specifies the transition dynamics, such that the initial observation follows $o_1\sim \PP_0(\cdot) \in \Delta(\mathcal{O})$, and given the \emph{history} $\tau_h\defeq (o_1,a_1,\cdots,o_h,a_h)$ up to step $h$, the observation follows $o_{h+1}\sim\PP(\cdot|\tau_{h})$;
$r_h:\cO\times\cA\to[0,1]$ is the reward function at $h$-th step, which we assume is a known deterministic function of $(o_h,a_h)$.%

A policy $\pi = \{\pi_h: (\cO\times\cA)^{h-1}\times\cO\to\Delta(\cA) \}_{h=1}^H$ is a collection of $H$ functions. At step $h\in[H]$, an agent running policy $\pi$ observes the observation $o_h$ and takes action $a_{h}\sim \pi_h(\cdot|\tau_{h-1}, o_h)\in\Delta(\cA)$ based on the history $(\tau_{h-1},o_h)=(o_1,a_1,\dots,o_{h-1},a_{h-1},o_h)$. The agent then receives their reward $r_{h}(o_h,a_h)$, and the environment generates the next observation $o_{h+1}\sim\PP(\cdot|\tau_h)$ based on $\tau_h=(o_1,a_1,\cdots,o_h,a_h)$. The episode terminates immediately after the dummy observation $o_{H+1} = o_{\rm dum}$ is generated. 
We use $\Pi$ to denote the set of all deterministic policies, and identify $\Delta(\Pi)$ as both the set of all policies and all distributions over deterministic policies interchangeably.
For any $(h, \tau_h)$, let $\PP(\tau_h)\defeq \prod_{h'\le h} \PP(o_{h'}|\tau_{h'-1})$, $\pi(\tau_h)\defeq \prod_{h'\le h} \pi_{h'}(a_{h'}|\tau_{h'-1}, o_{h'})$, and let $\PP^{\pi}(\tau_h)\defeq \PP(\tau_h)\times \pi(\tau_h)$ denote the probability of observing $\tau_h$ (for the first $h$ steps) when executing $\pi$.
The \emph{value} of a policy $\pi$ is defined as the expected cumulative reward $V(\pi)\defeq \EE^{\pi}[\sum_{h=1}^H r_h(o_h,a_h)]$. We assume that $\sum_{h=1}^H r_h(o_h,a_h)\leq 1$ almost surely for any policy $\pi$.

\paragraph{POMDPs} A Partially Observable Markov Decision Process (POMDP) is a special sequential decision process whose transition dynamics are governed by \emph{latent states}. An episodic POMDP is specified by a tuple $\{H,\cS,\cO,\cA,\{\T_h\}_{h=1}^{H},\{\O_h\}_{h=1}^{H},\{r_{h}\}_{h=1}^{H},\mu_1 \}$, where $\cS$ is the latent state space with $\abs{\cS}=S$,  $\O_h(\cdot|\cdot):\cS\to\Delta(\cO)$ is the emission dynamics at step $h$ (which we identify as an emission matrix $\O_h\in\R^{\cO\times \cS}$), $\T_h(\cdot|\cdot,\cdot):\cS\times\cA\to\cS$ is the transition dynamics over the latent states (which we identify as transition matrices $\T_h(\cdot|\cdot,a)\in\R^{\cS\times \cS}$ for each $a\in\cA$), and $\mu_1\in\Delta(\cS)$ specifies the distribution of initial state. At each step $h$, given latent state $s_h$ (which the agent cannot observe), the system emits observation $o_h\sim \O_h(\cdot|s_h)$, receives action $a_h\in\cA$ from the agent, emits the reward $r_h(o_h,a_h)$, and then transits to the next latent state $s_{h+1}\sim \T_h(\cdot|s_h, a_h)$ in a Markov fashion. 
Note that (with known rewards) a POMDP can be fully described by the parameter $\theta\defeq (\T,\O,\mu_1)$.

\subsection{Predictive State Representations}

We consider Predictive State Representations (PSRs)~\citep{littman2001predictive}, a broader class of sequential decision processes that generalize POMDPs by removing the explicit assumption of latent states, but still requiring the system dynamics to be described succinctly by a core test set.

\paragraph{PSR, core test sets, and predictive states} 
A \emph{test} $t$ is a sequence of future observations and actions (i.e. $t\in\Test:=\bigcup_{W \in\Z_{\ge 1}}\cO^W \times\cA^{W -1}$). For some test $t_h=(o_{h:h+W-1},a_{h:h+W-2})$ with length $W\ge 1$, we define the probability of test $t_h$ being successful conditioned on (reachable) history $\tau_{h-1}$ as $\PP(t_h|\tau_{h-1})\defeq \PP(o_{h:h+W-1}|\tau_{h-1};\doac(a_{h:h+W-2}))$,
i.e., the probability of observing $o_{h:h+W-1}$ if the agent deterministically executes actions $a_{h:h+W-2}$, conditioned on history $\tau_{h-1}$. We follow the convention that, if $\P^\pi(\tau_{h-1}) = 0$ for any $\pi$, then $\P(t | \tau_{h-1}) = 0$.

\begin{definition}[PSR, core test sets, and predictive states]\label{def:core-test}
For any $h\in[H]$, we say a set $\Uh\subset\Test$ is a \emph{core test set} at step $h$ if the following holds: For any $W\in\Z_{\ge 1}$, any possible future (i.e., test) $t_h=(o_{h:h+W-1},a_{h:h+W-2})\in\cO^W\times\cA^{W-1}$, there exists a vector $b_{t_h,h}\in\mathbb{R}^{\Uh}$ such that 
\begin{align}
	\label{eqn:psr-def}
	\PP(t_h|\tau_{h-1})=\langle b_{t_h,h},[\PP(t|\tau_{h-1})]_{t\in\Uh}\rangle, \qquad \forall \tau_{h-1} \in \cT^{h-1}:=(\cO \times \cA)^{h-1}. 
\end{align}
We refer to the vector $\bq(\tau_{h-1})\defeq [\PP(t|\tau_{h-1})]_{t\in\Uh}$ as the \emph{predictive state} at step $h$ (with convention $\bq(\tau_{h-1})=0$ if $\tau_{h-1}$ is not reachable), and $\bq_0\defeq [\PP(t)]_{t\in\cU_{1}}$ as the initial predictive state. A (linear) PSR is a sequential decision process equipped with a core test set $\{ \Uh \}_{h \in [H]}$.
\end{definition}

The predictive state $\bq(\tau_{h-1})\in\R^{\Uh}$ in a PSR acts like a ``latent state'' that governs the transition $\PP(\cdot|\tau_{h-1})$ through the linear structure~\cref{eqn:psr-def}. We define $\QAh\defeq \{\a:(\o,\a)\in\Uh~\textrm{for some}~\o\in\bigcup_{W \in\mathbb{N}^+}\cO^{W} \}$ as the set of action sequences (possibly including an empty sequence) in $\Uh$, with $\nUA\defeq \max_{h\in[H]}\nUAh$. Further define $\cU_{H+1}\defeq \set{o_{\dum}}$ for notational simplicity. Throughout the paper, we assume the core test sets $(\Uh)_{h\in[H]}$ are known and the same within the PSR model class.

\paragraph{\Bpara}
We define the \emph{\Bpara} of a PSR, a standard notion for PSRs (also known as the observable operators~\citep{jaeger2000observable}).

\begin{definition}[\Bpara]
\label{def:Bpara}
A {\Bpara} of a PSR with core test set $(\Uh)_{h\in[H]}$ is a set of matrices\footnote{
This definition can be generalized to continuous $\Uh$, where $\BB_h(o_h,a_h)\in\cL(L^1(\Uh), L^1(\Uhp))$ are linear operators instead of (finite-dimensional) matrices. %
} $\{ (\BB_h(o_h,a_h)\in\R^{\Uhp\times\Uh})_{h,o_h,a_h}, \bq_0 \in \R^{\Uone} \}$ such that for any $0\leq h\leq H$, policy $\pi$, history $\tau_{h} = (o_{1:h}, a_{1:h})\in \cT^{h}$, and core test $t_{h+1} = (o_{h+1:h+W}, a_{h+1:h+W-1}) \in \Uhp$, the quantity $\PP(\tau_{h},t_{h+1})$, i.e.
the probability of observing $o_{1:h+W}$ upon taking actions $a_{1:h+W-1}$, admits the decomposition 
\begin{align}\label{eqn:psr-op-prod-h}
    \PP(\tau_{h},t_{h+1}) = \PP(o_{1:h+W} | \doac(a_{1:h+W-1}))
    = \e_{t_{h+1}}^\top \cdot \BB_{h:1}(\tau_{h}) \cdot \bq_0,
\end{align}
where $\e_{t_{h+1}} \in \R^{\Uhp}$ is the indicator vector of $t_{h+1} \in \Uhp$, and 
$$
\BB_{h:1}(\tau_{h}) \defeq \BB_{h}(o_{h},a_{h}) \BB_{h-1}(o_{h-1},a_{h-1}) \cdots \BB_{1}(o_{1},a_{1}).
$$
\end{definition}
It is a standard result (see e.g. \citet{thon2015links}) that any PSR admits a {\Bpara}, and the converse also holds---any sequential decision process admitting a {\Bpara} on test sets $(\Uh)_{h\in[H]}$ is a PSR with core test set $(\Uh)_{h\in[H]}$ (\cref{prop:equivalence-PSR-Bpara}). However, the {\Bpara} of a given PSR may not be unique. We also remark that the {\Bpara} is used in the structural conditions and theoretical analyses only, and will not be explicitly used in our algorithms.

\paragraph{Rank} An important complexity measure of a PSR is its \emph{PSR rank} (henceforth also ``rank'').
\begin{definition}[PSR rank]
\label{def:rank}
Given a PSR, its \emph{PSR rank} is defined as $\dPSR:=\max_{h\in [H]} \rank(D_h)$, where $D_h:=\left[ \bd(\tau_{h} ) \right]_{\tau_{h}\in \cT^h}\in\R^{\Uhp\times \cT^{h}}$ is the matrix formed by predictive states at step $h\in[H]$.
\end{definition}
The PSR rank measures the inherent dimension\footnote{This definition using matrix ranks may be further relaxed, e.g. by considering the effective dimension.} of the space of predictive state vectors, which always admits the upper bound $\dPSR \le \max_{h \in [H]}\abs{\Uh}$, but may in addition be much smaller. %

\paragraph{POMDPs as low-rank PSRs}
As a primary example, all POMDPs are PSRs with rank at most $S$~\citep[Lemma 2]{zhan2022pac}. First, we can choose $\Uh=\bigcup_{1\le W\le H-h+1}\set{(o_h, a_h, \dots, o_{h+W-1})}$ as the set of all possible tests, then ~\cref{def:core-test} is satisfied trivially by taking $b_{t_h, h}=\e_{t_h}\in\R^{\Uh}$ as indicator vectors. For concrete subclasses of POMDPs, we will consider alternative choices of $(\Uh)_{h\in[H]}$ with much smaller cardinalities than this default choice. Second, to compute the rank (\cref{def:rank}), note that by the latent state structure of POMDPs, we have $\PP(t_{h+1}|\tau_{h})=\sum_{s_{h+1}} \PP(t_{h+1}|s_{h+1})\PP(s_{h+1}|\tau_{h})$ for any $(h,\tau_h, t_{h+1})$. Therefore, the associated matrix $D_h=\left[ \PP(t_{h+1}|\tau_{h}) \right]_{(t_{h+1},\tau_{h})\in\Uhp\times\cT^{h}}$ always has the following decomposition:
\begin{align*}
D_h = \left[ \PP(t_{h+1}|s_{h+1}) \right]_{(t_{h+1},s_{h+1})\in\Uhp\times\cS} \times \left[ \PP(s_{h+1}|\tau_{h}) \right]_{(s_{h+1},\tau_{h})\in\cS\times\cT^{h}},
\end{align*}
which implies that $\dPSR=\max_{h\in[H]}\rank(D_h)\leq S$.

\paragraph{Learning goal}
We consider the standard PAC learning setting, where we are given a model class of PSRs $\Theta$ and interact with a ground truth model $\theta^\star \in \Theta$.
Note that, as we do not put further restrictions on the parametrization, this setting allows any general function approximation for the model class. For any model class $\Theta$, we define its (optimistic) covering number $\Nt(\rho)$ for $\rho>0$ in~\cref{def:optimistic-cover}.
Let $V_\theta(\pi)$ denote the value function of policy $\pi$ under model $\theta$, and $\pi_\theta\defeq \arg\max_{\pi\in\Pi}V_{\theta}(\pi)$ denote the optimal policy of model $\theta$. The goal is to learn a policy $\hat{\pi}$ that achieves small suboptimality $\Vs - V_{\theta^\star}(\hat{\pi})$ within as few episodes of play as possible, where $\Vs\defeq V_{\theta^\star}(\pi_{\theta^\star})$. We refer to an algorithm as sample-efficient if it finds an $\eps$-near optimal policy within ${\rm poly}(\textrm{relevant problem parameters}, 1/\eps)$\footnote{For the $m$-step versions of our structural conditions, we allow an exponential dependence on $m$ but not $H$. Such a dependence is necessary, e.g. in $m$-step decodable POMDPs~\citep{efroni2022provable}.} episodes of play.

\section{PSRs with B-stability}
\label{sec:non-expansive}

We begin by proposing a natural and general structural condition for PSR called \emph{B-stability} (or also \emph{stability}). We show that B-stable PSRs encompass and generalize a variety of existing tractable POMDPs and PSRs, and can be learned sample-efficiently as we show in the sequel.

\subsection{The B-stability condition}
\label{sec:stability-def}

For any PSR with an associated {\Bpara}, we define its $\cB$-operators $\{ \cB_{H:h} \}_{h \in [H]}$ as
\begin{align*}
    \cB_{H:h}: \R^{\Uh} \to \R^{(\cO\times\cA)^{H-h+1}}, \phantom{xxxx} \bq \mapsto [ \BB_{H:h}(\tau_{h:H}) \cdot \bq ]_{\tau_{h:H} \in (\cO\times\cA)^{H-h+1}}.  %
\end{align*}
Operator $\cB_{H:h}$ maps any predictive state $\bq=\bq(\tau_{h-1})$ at step $h$ to the vector $\cB_{H:h}\bq = (\P(\tau_{h:H}|\tau_{h-1}))_{\tau_{h:H}}$ which governs the probability of transitioning to all possible futures, by properties of the {\Bpara} (cf.~\cref{equation:bop-prop} \&~\cref{cor:Bpara-prob-meaning}).
For each $h \in [H]$, we equip the image space of $\cB_{H:h}$ with the \emph{$\Pi$-norm}: For a vector $\mathbf b$ indexed by $\tau_{h:H} \in (\cO \times \cA)^{H - h + 1}$, we define %
\begin{align}\label{eqn:Pi-norm-main-text}
 \textstyle   \nrmpi{\mathbf b} \defeq \max_{\barpi}\sum_{\tau_{h:H}\in (\cO \times \cA)^{H - h + 1}} \barpi(\tau_{h:H}){\mathbf b}(\tau_{h:H}), 
\end{align}
where the maximization is over all policies $\barpi$ starting from step $h$ (ignoring the history $\tau_{h-1}$) and $\barpi(\tau_{h:H}) = \prod_{h \le h' \le H} \barpi_{h'}(a_{h'} | o_{h'}, \tau_{h:h'-1})$. We further equip the domain $\R^{\Uh}$ with a \emph{fused-norm} $\| \cdot \|_*$, which is defined as the maximum of $(1, 2)$-norm and $\Pi'$-norm\footnote{The $\Pi'$-norm is in general a semi-norm.}:
\begin{align}
\nrm{\bq}_* \defeq&~  \max\{ \nrmonetwo{\bq}, \nrmpip{\bq} \}, \label{eqn:fused-norm}\\
\textstyle \nrmonetwo{\bq} \defeq&~ \textstyle  \big(\sum_{\a \in \UAh } \big(\sum_{\o: (\o, \a) \in \Uh} \vert \bq(\o,\a) \vert \big)^2 \big)^{1/2}, \phantom{xx} \nrmpip{\bq} \defeq \max_{\barpi}\sum_{t\in \oUh}\barpi(t)\abs{\bq(t)}, \label{eqn:12-Pip-norm}
\end{align}
where $\oUh \defeq \{ t \in \Uh: \not\exists t' \in \Uh \text{ such that $t$ is a prefix of $t'$}\}$. %

We now define the B-stability condition, which simply requires the $\cB$-operators $\{ \cB_{H:h} \}_{h \in [H]}$ to have bounded operator norms from the fused-norm to the $\Pi$-norm.

\begin{definition}[B-stability]
\label{ass:cs-non-expansive-main-text}
A PSR is \emph{B-stable with parameter $\stab\ge 1$} (henceforth also \emph{$\stab$-stable}) if it admits a {\Bpara} with associated $\cB$-operators $\{\cB_{H:h}\}_{h \in [H]}$ such that
\begin{equation}
\sup_{h \in [H]} \max_{\| \bq \|_* = 1} \| \cB_{H:h} \bq \|_{\Pi} \le \stab.
\end{equation}
\end{definition}
When using the B-stability condition, we will often take $\bq=\bq_1(\tau_{h-1})-\bq_2(\tau_{h-1})$ to be the difference between two predictive states at step $h$. Intuitively, \cref{ass:cs-non-expansive-main-text} requires that the propagated $\Pi$-norm error $\nrmpi{\cB_{H:h} (\bq_1-\bq_2)}$ to be controlled by the original fused-norm error $\nrmst{\bq_1-\bq_2}$. 

The fused-norm $\nrmst{\cdot}$ is equivalent to the vector $1$-norm up to a $| \UAh |^{1/2}$-factor (despite its seemingly involved form): We have $\nrm{\bq}_* \le \nrm{\bq}_1 \le | \UAh |^{1/2} \nrm{\bq}_*$ (Lemma \ref{lem:norm-equivalence-fused-ell1}), and thus assuming a relaxed condition $\max_{\| \bq \|_1 = 1} \| \cB_{H:h} \|_{\Pi} \le \Lambda$ will also enable sample-efficient learning of PSRs. However, we consider the fused-norm in order to obtain the sharpest possible sample complexity guarantees. Finally, all of our theoretical results still hold under a more relaxed (though less intuitive) \emph{weak B-stability condition} (\cref{def:non-expansive-gen}), with the same sample complexity guarantees. (See also the additional discussions in~\cref{appdx:weak-B-stability}.)

\subsection{Relation with known sample-efficient subclasses}
\label{sec:relation}

We show that the B-stability condition encompasses many known structural conditions of PSRs and POMDPs that enable sample-efficient learning.  Throughout, for a matrix $A \in \R^{m \times n}$, we define its operator norm $\| A \|_{p \to q} \defeq \max_{\| x \|_p \le 1} \| A x \|_q$, and use $\nrm{A}_p\defeq \nrm{A}_{p\to p}$ for shorthand.

{\bf Weakly revealing POMDPs}~\citep{jin2020sample,liu2022partially} is a subclass of POMDPs that assumes the current latent state can be probabilistically inferred from the next $m$ emissions.
\begin{example}[Multi-step weakly revealing POMDPs]
\label{example:revealing-POMDPs}
A POMDP is called $m$-step \emph{$\arev$-weakly revealing} (henceforth also ``$\arev$-revealing'') with $\arev\le 1$ if $ \max_{h \in [H-m+1]} \| \M_h^\dagger \|_{2 \to 2} \le \arev^{-1}$, where for $h \in [H-m+1]$, $\M_h \in \R^{\cO^m\cA^{m-1} \times \cS} $ is the $m$-step emission-action matrix at step $h$, defined as
\begin{align}\label{eqn:def-m-emi}
    [\M_h]_{(\o,\a), s} \defeq \P(o_{h:h+m-1} = \o | s_h = s, a_{h:h+m-2} = \a), \forall (\o, \a) \in \cO^m\times\cA^{m-1}, s \in \cS.
\end{align}
We show that any $m$-step $\arev$-weakly revealing POMDP is a $\stab$-stable PSR with core test sets $\Uh=(\cO\times\cA)^{\minop{m-1,H-h}}\times\cO$, and $ \stab\leq \sqrt{S} \arev^{-1}$ (\cref{lemma:Bpara-rev}).

We also consider the {\bf $\ell_1$ version of the revealing condition}, which measures the $\ell_1$-operator norm of \emph{any left inverse} $\M_h^+$ of $\M_h$, instead of the $\ell_2$-operator norm of the pseudo-inverse $\M_h^\dagger$. Concretely, we say a POMDP satisfies the $m$-step $\arevone$ $\ell_1$-revealing condition, if there exists a matrix $\M_h^+$ such that $\M_h^+\M_h=\id$ and $\|\M_h^+\|_{1\to 1}\le \arevone^{-1}$. In~\cref{lemma:Bpara-rev}, we also show that any $m$-step $\arevone$ $\ell_1$-revealing POMDP is a $\stab$-stable PSR with core test sets $\Uh=(\cO\times\cA)^{\minop{m-1,H-h}}\times\cO$, and $ \stab\leq \sqrt{A^{m-1}}\arevone^{-1}$.
\exend
\end{example}

When the transition matrix $\T_h$ of the POMDP has a low rank structure, \citet{wang2022embed} show that a subspace-aware generalization of the $\ell_1$-revealing condition---the {\bf future-sufficiency condition}---enables sample-efficient learning of POMDPs with large state/observation spaces ($\cS$ and $\cO$ may be infinite). Such a condition is also assumed by~\citet{cai2022reinforcement} for efficient learning of linear POMDPs. We consider the following generalized version of the future-sufficiency condition.
\begin{example}[Low-rank future-sufficient POMDPs]\label{example:future-sufficiency} 
  We say a POMDP has transition rank $\dlr$ if for each $h\in[H-1]$, the transition kernel of the POMDP has rank at most $\dlr$ (i.e. $\max_h \rank(\T_h)\leq \dlr$). It is clear that low-rank POMDPs with transition rank $\dlr$ has PSR rank $\dPSR\leq \dlr$.

A transition rank-$\dlr$ (henceforth rank-$\dlr$) POMDP is called \emph{$m$-step $\nu$-future-sufficient} with $\nu\ge 1$, if for $h\in[H-1]$, there exists $\M_h^\inv\in\R^{\cS\times\Uh}$ such that $\M_h^\inv\M_h\T_{h-1}=\T_{h-1}$ and $\|\M_h^\inv \|_{1 \to 1} \le \nu$, where $\M_h$ is the $m$-step emission-action matrix defined in~\cref{eqn:def-m-emi}. \footnote{In this definition, we assume $\cS$ and $\cO$ are finite but potentially extremely large, and wish to avoid any (even logarithmic) dependence on $(S,O)$ in the sample complexity. It is straightforward to generalize this example to the case when $\cS$ and $\cO$ are infinite by replacing vectors with $L_1$ integrable functions, and matrices with linear operators between these spaces.}

We show that any $m$-step $\nu$-future sufficient rank-$\dlr$ POMDP is a B-stable PSR with core test sets $\Uh=(\cO\times\cA)^{\minop{m-1,H-h}}\times\cO$, $\dPSR\le \dlr$, and $\stab\leq \sqrt{A^{m-1}} \nu$ (\cref{prop:Bpara-rev-lr}).
\exend
\end{example}

{\bf Decodable POMDPs}~\citep{efroni2022provable}, as a multi-step generalization of Block MDPs~\citep{du2019provably}, assumes the current latent state can be perfectly decoded from the recent $m$ observations.
\begin{example}[Multi-step decodable POMDPs]\label{example:decodable-POMDPs}
A POMDP is called \emph{$m$-step decodable} if there exists (unknown) decoders $\phi^{\star}=\{\phi_{h}^{\star} \}_{h\in[H]}$, such that for every reachable trajectory $(s_1, o_1, a_1,\cdots, s_h, o_h)$ we have $s_{h}=\phi_{h}^{\star}\left(z_{h}\right)$, where $z_h=(o_{m(h)},a_{m(h)},\cdots,o_h)$ and $m(h)=\max \{h-m+1,1\}$. 
We show that any $m$-step decodable POMDP is a B-stable PSR with core test sets $\Uh=(\cO\times\cA)^{\minop{m-1,H-h}}\times\cO$ and $\stab = 1$ (\cref{lemma:decodable-prod}). 
\exend
\end{example}

Finally, \citet{zhan2022pac} define the following {\bf regularity condition for general PSRs}.
\begin{example}[Regular PSRs]
\label{example:regular-psrs}
A PSR is called \emph{$\apsr$-regular} if for all $h\in[H]$ there exists a \emph{core matrix} $K_h\in\R^{\Uhp\times \rank(D_h)}$, which is a column-wise sub-matrix of $D_h$ such that $\rank(K_h)=\rank(D_h)$ and $\max_{h \in [H]}\| K_h^\dagger \|_{1\to1} \leq \apsr^{-1}$. %
We show that any $\apsr$-regular PSR is $\stab$-stable with $\stab\leq\sqrt{U_A} \apsr^{-1}$ (\cref{prop:psr-zhan}). 
\exend
\end{example}
We emphasize that B-stability not only encompasses $\apsr$-regularity, but is also strictly more expressive. For example, decodable POMDPs are not $\apsr$-regular unless with additional assumptions on $K_h^\dagger$~\citep[Section 6.5]{zhan2022pac}, whereas they are B-stable with $\stab=1$ (\cref{example:decodable-POMDPs}).
Also, any $\arev$-revealing POMDP is $\apsr$-regular with some $\apsr^{-1}<\infty$, but with $\apsr^{-1}$ potentially not polynomially bounded by $\arev^{-1}$ (and other problem parameters) due to the restriction of $K_h$ being a column-wise sub-matrix of $D_h$; By contrast it is B-stable with $\stab\le \sqrt{S}\arev^{-1}$ (\cref{example:revealing-POMDPs}).

\section{Learning B-stable PSRs}\label{sec:learning_PSRs}
\vspace{-.5em}

In this section, we show that B-stable PSRs can be learned sample-efficiently, achieved by three model-based algorithms simultaneously. We instantiate our results to POMDPs in~\cref{sec:examples}.

\vspace{-.5em}
\subsection{Optimistic Maximum Likelihood Estimation (OMLE)}
\label{sec:OMLE-another}

The \omle~algorithm is proposed by~\citet{liu2022partially} for learning revealing POMDPs and adapted\footnote{Named CRANE in~\citep{zhan2022pac}.} by~\citet{zhan2022pac} for learning regular PSRs, achieving polynomial sample complexity (in relevant problem parameters) in both cases. We show that \omle~works under the broader condition of B-stability, with significantly improved sample complexities.

\paragraph{Algorithm and theoretical guarantee} 
The \omle~algorithm (described in Algorithm \ref{alg:OMLE}) takes in a class of PSRs $\Theta$, and performs two main steps in each iteration $k\in[K]$:
\begin{enumerate}[leftmargin=2em, topsep=0pt, itemsep=0pt]
\item (Optimism) Construct a confidence set $\Theta^k \subseteq \Theta$, which is a superlevel set of the log-likelihood of all trajectories within dataset $\cD$ (Line~\ref{line:conf-set}). The policy $\pi^k$ is then chosen as the greedy policy with respect to the most optimistic model within $\Theta^k$ (Line~\ref{line:greedy}).
\item (Data collection) Execute exploration policies $(\pi_{h,\expl}^k)_{0\le h\le H-1}$, where each $\pi_{h,\expl}^k$ is defined via the $\circ_h$ notation as follows: Follow $\pi^k$ for the first $h-1$ steps, take a uniform action $\unif(\cA)$ at step $h$, take an action sequence sampled from $\unif(\QAhp)$ at step $h+1$, and behave arbitrarily afterwards (Line~\ref{line:exp-policy}). All collected trajectories are then added into $\cD$ (Line~\ref{line:execute-policy}). 
\end{enumerate}
Intuitively, the concatenation of the current policy $\pi^k$ with $\unif(\cA)$ and $\unif(\QAhp)$ in Step 2 above is designed according to the structure of PSRs to foster exploration.

\begin{algorithm}[t]
	\caption{\textsc{Optimistic Maximum Likelihood Estimation (OMLE)}} \begin{algorithmic}[1]\label{alg:OMLE}
	\STATE \textbf{Input:} Model class $\Theta$, parameter $\beta>0$.
   \STATE \textbf{Initialize:} $\Theta^1=\Theta$, $\cD=\{\}$. 
	  \FOR{iteration $k=1,\ldots,K$}
	  \STATE Set $(\theta^k,\pi^k) = \argmax_{\theta\in\Theta^k,\pi} V_\theta(\pi)$. 
	  \label{line:greedy}
	  \FOR{$h=0,\ldots,H-1$} 
	\STATE Set exploration policy $\pi^k_{h,\expl}\defeq \pi^k \circ_{h}\unif(\cA)\circ_{h+1}\unif(\QAhp)$.
	\label{line:exp-policy}
	\STATE Execute $\pi^k_{h,\expl}$ to collect a trajectory $\tau^{k,h}$, and add  $(\pi^{k}_{h,\expl},\tau^{k,h})$ into $\cD$. 
	\label{line:execute-policy}
	 \ENDFOR
	  \STATE Update confidence set
	  \begin{equation*}
	  \textstyle
	  \Theta^{k+1} = \bigg\{\hat\theta \in \Theta: \sum_{(\pi,\tau)\in\cD} \log \P_{{\hat\theta}}^{\pi} (\tau)
	  \ge \max_{ \theta \in\Theta} \sum_{(\pi,\tau)\in\cD} \log \P^{\pi}_{{\theta}}(\tau) -\beta \bigg\}. 
	  \end{equation*} 
	  \label{line:conf-set}
	  \ENDFOR
	  \ENSURE $\hatpiout\defeq\unif(\set{\pi^k}_{k\in[K]})$.
   \end{algorithmic}
\end{algorithm}

\begin{theorem}[Guarantee of \omle]
\label{thm:OMLE-PSR}
Suppose every $\theta\in\Theta$ is $\stab$-stable (\cref{ass:cs-non-expansive-main-text}) and the true model $\theta^\star\in\Theta$ has rank $\dPSR\le d$. Then, choosing $\beta=C\log(\Nt(1/KH) /\delta)$ for some absolute constant $C>0$, with probability at least $1-\delta$,~\cref{alg:OMLE} outputs a policy $\hatpiout \in \Delta(\Pi)$ such that $\Vs-V_{\ths}(\hatpiout)\leq \eps$, as long as the number of episodes
\begin{equation}
     T = KH \ge \cO\Big( dA\nUA H^2 \log(\Nt(1/T) /\delta) \iota \cdot \stab^2 / \eps^2 \Big), 
\end{equation}
where $\clog\defeq \log\paren{1+ K d U_A \stab \rb}$, with $\rb\defeq \max_{h} \{1,\max_{\lone{v}=1}\sum_{o,a}\lone{\B_h(o,a)v}\}$.
\end{theorem}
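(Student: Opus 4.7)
The plan is to follow the standard optimism-plus-concentration template for model-based RL, with the problem-specific work concentrated in the translation between on-policy value differences and in-sample Hellinger errors on the exploration distribution. Three ingredients are required: an MLE concentration that both validates the confidence set and bounds the cumulative in-sample Hellinger error; an error decomposition of on-policy suboptimality into per-step errors of the {\Bpara} via $\stab$-stability together with an $\ell_2$-type PSR-to-Hellinger bound (\cref{prop:psr-hellinger-bound}); and a generalized $\ell_2$-Eluder argument (\cref{prop:pigeon-l2}) that passes from the on-policy sum back to the in-sample sum at a cost of a single factor $\dPSR$.

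With $\beta \asymp \log(\Nt(1/KH)/\delta)$, a uniform concentration over the optimistic cover gives, with probability at least $1-\delta$, both $\ths \in \Theta^k$ for all $k$ and the in-sample bound
\[
\sum_{k=1}^{K}\sum_{h=0}^{H-1}\dH^2\bigl(\P^{\pi^k_{h,\expl}}_{\theta^k},\,\P^{\pi^k_{h,\expl}}_{\ths}\bigr)\;\lesssim\;\beta.
\]
Optimism together with $\ths \in \Theta^k$ then yields $V_{\star} - V_{\ths}(\pi^k) \le V_{\theta^k}(\pi^k) - V_{\ths}(\pi^k) \le \dTV(\P^{\pi^k}_{\theta^k},\P^{\pi^k}_{\ths})$, so by Cauchy--Schwarz it suffices to bound $\sum_{k=1}^K \dTV^2(\P^{\pi^k}_{\theta^k},\P^{\pi^k}_{\ths})$ by the in-sample Hellinger error above.

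To control that on-policy sum, I would use the {\Bpara} identity $\P^{\pi}(\tau_H) = \pi(\tau_H)\,\e_{o_{H+1}}^{\T}\,\BB_{H:1}(\tau_H)\,\bq_0$ and telescope across $h$ to express the two-model discrepancy as a sum over $h$ of inner products of $\cB_{H:h+1}\bigl(\bq^{\theta^k}(\tau_h)-\bq^{\ths}(\tau_h)\bigr)$ against the continuation policy. Three bounds then chain: $\stab$-stability turns each such inner product into $\stab \cdot \nrmc{\bq^{\theta^k}(\tau_h)-\bq^{\ths}(\tau_h)}$; the $\ell_2$-PSR-to-Hellinger bound (\cref{prop:psr-hellinger-bound}) controls the expected squared fused-norm error at step $h$ under $\pi^k_{h,\expl}$ by the in-sample squared Hellinger error, picking up a factor $A\cdot\nUA$ coming precisely from the two uniform-exploration pieces $\unif(\cA)\circ_{h+1}\unif(\QAhp)$ in $\pi^k_{h,\expl}$; and the generalized $\ell_2$-Eluder argument (\cref{prop:pigeon-l2}), applied to the $\dPSR$-dimensional family of predictive states, converts the on-policy sum $\sum_k \EE^{\pi^k}[\nrmc{\bq^{\theta^k}-\bq^{\ths}}^2]$ to the corresponding in-sample sum, losing only a factor $\dPSR$ (times logarithmic terms). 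Stringing everything together yields
\[
\sum_{k=1}^{K}\dTV^2\bigl(\P^{\pi^k}_{\theta^k},\P^{\pi^k}_{\ths}\bigr)\;\lesssim\;\stab^2\cdot d\cdot A\cdot \nUA\cdot H\cdot \beta,
\]
from which $T = KH \gtrsim d A \nUA H^2 \beta\,\stab^2/\epsilon^2$ follows.

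The hard part will be matching the specific fused-norm in \cref{ass:cs-non-expansive-main-text} --- the $\max$ of a $(1,2)$-grouped norm $\nrmonetwo{\cdot}$ over $\QAhp$ and a $\Pi'$-norm over $\oUh$ --- simultaneously to the PSR-Hellinger bound and to the $\ell_2$-Eluder potential. The particular grouping in $\nrmonetwo{\cdot}$ and the particular concatenation $\unif(\cA)\circ_{h+1}\unif(\QAhp)$ in $\pi^k_{h,\expl}$ are engineered so that only a single power of each of $\stab$, $\dPSR$, $A$, and $\nUA$ survives the chain of inequalities; a coarser choice of norm (e.g.\ plain $\ell_1$ on $\bq$) or of exploration distribution would reintroduce the larger polynomial factors that appeared in prior OMLE-type analyses for PSRs.
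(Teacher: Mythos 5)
Your overall architecture---MLE concentration serving both to validate optimism and to give the in-sample Hellinger bound, a per-step decomposition of the on-policy TV distance, the stability-plus-importance-sampling bound producing the $AU_A$ factor from the exploration policy, and a generalized $\ell_2$-Eluder step costing a single factor of $d$---is exactly the paper's. Two steps as you have written them, however, would not go through.

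First, your displayed ``in-sample bound'' $\sum_{k}\sum_h \dH^2(\P^{\pi^k_{h,\expl}}_{\theta^k},\P^{\pi^k_{h,\expl}}_{\ths})\lesssim\beta$ is the \emph{diagonal} sum, which MLE concentration does not provide: at round $k$ the confidence set $\Theta^k$ is built only from data of rounds $t<k$, so the guarantee is triangular---for every $k$ and every $\theta\in\Theta^k$, $\sum_{t<k}\sum_h\dH^2(\P^{\pi^t_{h,\expl}}_{\theta},\P^{\pi^t_{h,\expl}}_{\ths})\le 2\beta$ (\cref{thm:MLE}). If the diagonal bound were directly available, the Eluder step would be unnecessary; converting triangular control into diagonal control is its entire purpose, so as stated your concentration step assumes what the hardest part of the proof must establish.

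Second, and more substantively, the telescoping does not produce terms of the form $\cB_{H:h+1}\bigl(\bq^{\theta^k}(\tau_h)-\bq^{\ths}(\tau_h)\bigr)$. Expanding $\P^{\pi}_{\theta^k}(\tau_H)-\P^{\pi}_{\ths}(\tau_H)$ via the {\Bpara} and telescoping yields per-step terms $\P^{\pi}_{\ths}(\tau_{h-1})\cdot\cB^{k}_{H:h+1}\bigl(\B^k_h(o_h,a_h)-\B^\star_h(o_h,a_h)\bigr)\bq^\star(\tau_{h-1})$, i.e.\ \emph{operator} differences applied to the \emph{true model's} predictive states (\cref{prop:psr-err-decomp}); a single predictive-state difference per step does not account for the discrepancy of the propagation operators themselves acting on $\bq^\star$. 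This matters downstream: the generalized $\ell_2$-Eluder argument (\cref{prop:pigeon-l2}, \cref{cor:pigeon-l2-app}) requires the error to take the form $f_k(x_i)$ with \emph{fixed} vectors $x_i=\bq^\star(\tau_{h-1}^i)$ spanning a space of dimension at most $d$ (which is why only $\theta^\star$ needs bounded rank), with all round-dependence pushed into the linear functionals $y_{k,j,\pi}$ built from the operator differences. Your candidate quantity $\nrmc{\bq^{\theta^k}(\tau_{h-1})-\bq^{\ths}(\tau_{h-1})}$ carries the round index inside the vector itself, so there is no fixed low-dimensional family over which to run the elliptical potential, and the Eluder step fails to type-check. (Predictive-state differences $\bq^{\theta}-\bq^{\otheta}$ do appear in the analysis, but only inside \cref{prop:psr-hellinger-bound} as one of two terms bounding the operator-difference error by Hellinger distances---not as the object carried through the Eluder argument.) The fix is exactly the paper's: keep the error in the form $\cE^\star_{k,h}(\tau_{h-1})=f_k(\bq^\star(\tau_{h-1}))$ throughout both the Hellinger-bounding and decorrelation steps.
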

\cref{thm:OMLE-PSR} shows that \omle~is sample-efficient for any B-stable PSRs---a broader class than in existing results for the same algorithm~\citep{liu2022partially,zhan2022pac}---with much sharper sample complexities than existing work when instantiated to their settings.
Importantly, we achieve the first polynomial sample complexity that scales with $\stab^2$ dependence B-stability parameter (or regularity parameters alike\footnote{\citet{uehara2022provably} achieves an $A^{M}\sigma_1^{-2}$ dependence for learning the optimal memory-$M$ policy in (their) $\sigma_1$-revealing POMDPs, which is however easier than learning the globally optimal policy considered here.}).
Instantiating to $\apsr$-regular PSRs, using $\stab \le \sqrt{\nUA}\apsr^{-1}$ (\cref{example:regular-psrs}), our result implies a $\tO(dA\nUA^2\logNt/(\apsr^2\eps^2))$ sample complexity (ignoring $H$ and $\iota$\footnote{
  The log-factor $\iota$ in~\cref{thm:OMLE-PSR} contains additional parameter $\rb$ that is not always controlled by $\stab$; this quantity also appears in~\citet{zhan2022pac,liu2022optimistic} but is controlled by their $\apsr^{-1}$ or $\gamma^{-1}$ respectively. Nevertheless, for all of our POMDP instantiations, $\rb$ is polynomially bounded by other problem parameters so that $\iota$ is a mild log-factor. Further, our next algorithm \eetod~avoids the dependence on $\rb$ (\cref{thm:E2D-PSR}).
}).
This improves significantly over the $\tO(d^4A^4\nUA^9\log(\Nt O)/(\apsr^6\eps^2))$ result of~\citet{zhan2022pac}.

\paragraph{Overview of techniques}
The proof of~\cref{thm:OMLE-PSR} (deferred to~\cref{appendix:OMLE}) builds upon a sharp analysis for B-stable PSRs: 1) We use a more delicate choice of norm for bounding the errors (in the $\BB$ operators) yielded from performance difference arguments; 2) We develop a generalized $\ell_2$-type Eluder argument that is sharper than the $\ell_1$-Eluder argument of~\citet{liu2022partially,zhan2022pac}. A more detailed overview of techniques is presented in~\cref{sec:proof_overview}.

\subsection{Explorative Estimation-To-Decisions (Explorative E2D)}
\label{sec:E2D}

Estimation-To-Decisions (E2D) is a general model-based algorithm that is sample-efficient for any interactive decision making problem (including MDPs) with a bounded Decision-Estimation Coefficient (DEC), as established in the DEC framework by~\citet{foster2021statistical}. However, the E2D algorithm has not been instantiated on POMDPs/PSRs. We show that B-stable PSRs admit a sharp DEC bound, and thus can be learned sample-efficiently by a suitable E2D algorithm. 

\paragraph{EDEC \& \eetod~algorithm}
We consider the \emph{Explorative DEC (EDEC)} proposed in the recent work of~\citet{chen2022unified}, which for a PSR class $\Theta$ is defined as 
\begin{equation}\label{eqn:explorative-dec}
\begin{aligned}
\oedec_{\gamma}(\Theta) = \sup_{\omu \in \Delta(\Theta)} \inf_{\substack{\pexp\in\Delta(\Pi) \\ \pout\in\Delta(\Pi)}}\sup_{\theta \in \Theta} & \Big\{ \EE_{\pi \sim \pout}\left[V_\theta(\pi_\theta)-V_\theta(\pi)\right] 
-\gamma \EE_{\pi \sim \pexp}\EE_{\otheta \sim \omu}\left[ \dH^2\paren{\P_\theta^\pi,\P_{\otheta}^\pi}\right] \Big\},
\end{aligned}
\end{equation}
where $\dH^2(\P_\theta^\pi,\P_{\otheta}^\pi)\defeq \sum_{\tau_H} ( \P_\theta^\pi(\tau_H)^{1/2} - \P_{\otheta}^\pi(\tau_H)^{1/2})^2$ denotes the squared Hellinger distance between $\P^\pi_\theta$ and $\P^{\pi}_{\otheta}$. Intuitively, the EDEC measures the optimal trade-off on model class $\Theta$ between gaining information by an ``exploration policy'' $\pi\sim \pexp$ and achieving near-optimality by an ``output policy'' $\pi\sim \pout$.~\citet{chen2022unified} further design the \eetod~algorithm, a general model-based RL algorithm with sample complexity scaling with the EDEC.

We sketch the \eetod~algorithm for a PSR class $\Theta$ as follows (full description in~\cref{alg:E2D-exp}): In each episode $t\in[T]$, we maintain a distribution $\mu^t\in\Delta(\Theta_0)$ over an \emph{optimistic cover} $(\tPP, \Theta_0)$ of $\Theta$ with radius $1/T$ (cf.~\cref{def:optimistic-cover}), which we use to compute two policy distributions $(\pexp^t, \pout^t)$ by minimizing the following risk:%
\begin{align*}%
      (\pout^t, \pexp^t) = \argmin_{(\pout, \pexp) \in \Delta(\Pi)^2} \sup_{\theta\in\Theta}\EE_{\pi \sim \pout}\brac{V_{\theta}(\pi_{\theta})-V_{\theta}(\pi)}-\gamma \EE_{\pi \sim \pexp}\EE_{\theta^t\sim\mu^t}\brac{\dH^2(\PP_{\theta}^{\pi}, \PP_{\theta^t}^{\pi})}.
\end{align*}
Then, we sample policy $\pi^t\sim \pexp^t$, execute $\pi^t$ and collect trajectory $\tau^t$, and update the model distribution using a \emph{\Vovkalg} scheme, which performs a Hedge update with initialization $\mu^1=\unif(\Theta_0)$, the log-likelihood loss with $\tPP_{\theta}^{\pi^t}(\cdot)$ denoting the optimistic likelihood associated with model $\theta\in\Theta_0$ and policy $\pi^t$ (cf.~\cref{def:optimistic-cover}), and learning rate $\eta\le 1/2$:%
\begin{align*}
        \mu^{t+1}(\theta) \; \propto_{\theta} \; \mu^{t}(\theta) \cdot \exp\paren{\eta\log \tPP_{\theta}^{\pi^t}(\tau^t)}.
\end{align*}
After $T$ episodes, we output the average policy $\hatpiout:=\frac{1}{T}\sum_{t=1}^T \pout^t$.

\paragraph{Theoretical guarantee}
We provide a sharp bound on the EDEC for B-stable PSRs, which implies that \eetod~can also learn them sample-efficient efficiently.
\begin{theorem}[Bound on EDEC \& Guarantee of \eetod]
\label{thm:E2D-PSR}
Suppose $\Theta$ is a PSR class with the same core test sets $\{ \Uh\}_{h \in [H]}$, and each $\theta\in\Theta$ admits a {\Bpara} that is $\stab$-stable and has PSR rank at most $d$. %
Then we have 
\begin{align*}
\oedec_{\gamma}(\Theta) \le \cO( d AU_A \stab^2 H^2 / \gamma).
\end{align*}
As a corollary, with probability at least $1- \delta$, Algorithm \ref{alg:E2D-exp} outputs a policy $\hatpiout \in \Delta(\Pi)$ such that $\Vs- V_{\ths}(\hatpiout)\leq \eps$, as long as the number of episodes
\begin{equation}
\label{equation:sample-complexity-edec}
     T \ge \cO\paren{ d A \nUA \stab^2 H^2 \log(\Nt(1/T)/\delta) / \eps^2 }. 
\end{equation}
\end{theorem}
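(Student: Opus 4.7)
The plan is to exhibit near-optimal $(\pout, \pexp)$ in the definition~\cref{eqn:explorative-dec} of $\oedec_\gamma$ for any fixed $\omu \in \Delta(\Theta)$, bound the resulting value gap by the exploration term using B-stability together with the $\ell_2$-type tools developed for~\cref{thm:OMLE-PSR}, and then invoke the generic PAC guarantee for \eetod~of~\citet{chen2022unified} to convert the EDEC bound into the stated sample complexity. The two bullets of the theorem are addressed in this order; the sample complexity is a plug-and-play corollary of the EDEC bound and a standard optimization over $\gamma$.

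\textbf{Constructing $\pout,\pexp$ and reducing to one-step $\BB$-errors.} Fix $\omu$ and take $\pout \defeq \EE_{\otheta \sim \omu}[\pi_\otheta]$. For $\pexp$, I would take the uniform mixture over $h \in [H]$ of the one-step exploration policies $\pi_\otheta \circ_{h-1} \unif(\cA) \circ_h \unif(\QAh)$ with $\otheta \sim \omu$, exactly mirroring the exploration used in~\cref{alg:OMLE}. For any $\theta \in \Theta$, the standard optimality split
\[
V_\theta(\pi_\theta) - V_\theta(\pi_\otheta) \;\le\; \DTV{\PP_\theta^{\pi_\theta}, \PP_\otheta^{\pi_\theta}} + \DTV{\PP_\theta^{\pi_\otheta}, \PP_\otheta^{\pi_\otheta}}
\]
holds since $V_\otheta(\pi_\theta)\le V_\otheta(\pi_\otheta)$. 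Telescoping each TV along the horizon through the $\BB$-operator expansion and pushing the propagation operator $\cB_{H:h+1}^{\otheta}$ through the fused norm via B-stability (\cref{ass:cs-non-expansive-main-text}) yields, for every policy $\pi$,
\[
\DTV{\PP_\theta^\pi, \PP_\otheta^\pi} \;\lesssim\; \stab \sum_{h=1}^H \EE^{\pi}_\theta \nrmst{\cesthth},
\]
where $\cesthth \defeq (\BB_h^\theta - \BB_h^\otheta)(o_h,a_h)\, \bq^\theta(\tauhm)$ is the one-step $\BB$-error in the fused norm $\nrmst{\cdot}$.

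\textbf{From one-step errors to Hellinger via decoupling.} The right-hand side above is evaluated along the model-dependent policies $\pi \in \{\pi_\theta, \pi_\otheta\}$, whereas $\pexp$ must be chosen without knowing $\theta$. To bridge this, I would first apply Jensen to pass to squared errors $(\EE^\pi_\theta \nrmst{\cesthth})^2 \le \EE^\pi_\theta \nrmst{\cesthth}^2$, then use the PSR Hellinger lower bound~\cref{prop:psr-hellinger-bound}, which schematically states
\[
\dH^2\!\left(\PP_\theta^{\pi \circ_{h-1} \unif(\cA) \circ_h \unif(\QAh)},\, \PP_\otheta^{\pi \circ_{h-1} \unif(\cA) \circ_h \unif(\QAh)}\right) \;\gtrsim\; \frac{1}{A\, \nUA}\, \EE^{\pi}_\theta \nrmst{\cesthth}^2,
\]
to express the per-step squared error as an $A\nUA$-times Hellinger distance along $\pexp$. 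The remaining obstruction---that $\pi = \pi_\theta$ still depends on $\theta$---is precisely what the in-expectation $\ell_2$-Eluder decoupling inequality~\cref{prop:eluder-decor} is designed to remove, absorbing the model-dependence at a cost of a factor $d$ equal to the PSR rank. A final AM-GM
\[
\stab\, \EE^\pi_\theta \nrmst{\cesthth} \;\le\; \frac{C\, \stab^2 \, dA\nUA H}{4\gamma} \;+\; \gamma \cdot \EE_{\pi\sim\pexp}\EE_\otheta\!\left[\dH^2(\PP_\theta^\pi, \PP_\otheta^\pi)\right]
\]
summed over $h$ (contributing the other factor of $H$) gives, for every $\theta$, the bound $V_\theta(\pi_\theta) - V_\theta(\pout) \le C dA\nUA \stab^2 H^2/\gamma + \gamma\cdot\EE_{\pi \sim \pexp}\EE_\otheta\dH^2(\PP_\theta^\pi,\PP_\otheta^\pi)$; taking $\sup_\theta$ proves the EDEC bound.

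\textbf{Sample complexity and main obstacle.} The generic \eetod~analysis of~\citet{chen2022unified} controls the average suboptimality of $\hatpiout$ by $\oedec_\gamma(\Theta) + C\gamma \log(\Nt(1/T)/\delta)/T$ with high probability, so substituting the EDEC bound and optimizing $\gamma = \Theta(\sqrt{T \cdot dA\nUA\stab^2H^2 / \log(\Nt/\delta)})$ yields~\cref{equation:sample-complexity-edec}. The hard part is the decoupling step: the per-step errors live along $\pi_\theta, \pi_\otheta$, which depend on the model being evaluated, while $\pexp$ is fixed given $\omu$; this forces us to use the in-expectation version~\cref{prop:eluder-decor} of the $\ell_2$-Eluder argument rather than its online/pigeonhole version~\cref{prop:pigeon-l2} used in the OMLE proof, and is the reason why $\eetod$ requires distinct $\pout$ and $\pexp$ rather than a single exploratory iterate. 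Obtaining the sharp $\stab^2$ (rather than $\stab^4$) dependence further hinges on always pairing the fused norm on the right with the $\Pi$-norm on the left in every telescoping application of B-stability; any other pairing would cost an extra factor of $\stab$.
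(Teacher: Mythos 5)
Your high-level toolbox is the right one (the $\BB$-operator error decomposition, the Hellinger bound of \cref{prop:psr-hellinger-bound}, the in-expectation decoupling of \cref{prop:eluder-decor}, AM--GM, and the generic \eetod~guarantee of \citet{chen2022unified}), and your closing observations about why the in-expectation decoupling is needed and why the fused-norm/$\Pi$-norm pairing yields $\stab^2$ are on target. But there is a genuine gap at the central step. In \cref{eqn:explorative-dec} the infimum over $(\pexp,\pout)$ sits \emph{outside} the supremum over $\theta$, so your $(\pexp,\pout)$, chosen as functions of $\omu$ alone, must control a \emph{worst-case} $\theta$. You then invoke \cref{prop:eluder-decor} to "absorb the model-dependence" of $\pi_\theta$, but that lemma is an in-expectation statement: it bounds $\EE_{\theta\sim\mu}\EE_{i\sim q_\theta}[f_\theta(x_i)]$ by $\sqrt{d\,\EE_{\theta,\theta'\sim\mu}\EE_{i\sim q_{\theta'}}[f_\theta(x_i)^2]}$, with the \emph{same} distribution $\mu$ indexing both the evaluated functions and the sampling distributions. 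It cannot be applied to a single fixed $\theta$, nor after a $\sup_\theta$: for a worst-case $\theta$, $\pi_\theta$ may lie entirely outside the support of an $\omu$-based $\pexp$, the TV term along $\pi_\theta$ can be order one while the Hellinger term along $\pexp$ is exponentially small, and no factor of $d$ rescues the claimed per-$\theta$ inequality. The missing ingredient is the strong-duality swap (\cref{thm:strong-dual}): the paper first rewrites $\inf_{\pexp,\pout}\sup_\theta$ as $\sup_{\mu\in\Delta_0(\Theta)}\inf_{\pexp,\pout}\EE_{\theta\sim\mu}$, and only then chooses $\pout=p_\mu$ (the pushforward of the \emph{dual} variable $\mu$, not of $\omu$) and $\pexp=\alpha p_\mu+(1-\alpha)p_e$ with $p_e$ the $\mu$-pushforward of the exploration-modified policies. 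With these choices the identity $\EE_{\theta\sim\mu}\EE_{\pi\sim p_\mu}[V_{\otheta}(\pi)]=\EE_{\theta\sim\mu}[V_{\otheta}(\pi_\theta)]$ cancels the cross term, and the decoupling in \cref{prop:err-decor} is over $\theta,\theta'\sim\mu$, which is exactly the setting of \cref{prop:eluder-decor}.

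A second, related problem: you define the one-step error with $\bq^{\theta}(\tauhm)$ and take expectations under $\theta$, whereas the paper's decomposition (\cref{prop:psr-err-decomp}) uses the \emph{reference} model's predictive states $\bq^{\otheta}(\tauhm)$ and trajectories drawn from $\otheta$. This is not cosmetic: in \cref{prop:eluder-decor} the vector family $\{x_i\}$ must be fixed independently of the function index $\theta$, so the vectors have to be predictive states of a single reference model --- which is also why \cref{thm:E2D-PSR} needs \emph{every} model in $\Theta$ to have rank at most $d$, unlike \cref{thm:OMLE-PSR}. With $\bq^{\theta}$ the family varies with $\theta$ and the decoupling cannot even be set up. The remaining steps (Jensen, \cref{prop:psr-hellinger-bound}, AM--GM over $h$, and optimizing $\gamma$ in the generic guarantee) are essentially as in the paper, modulo your schematic restatement of the Hellinger bound, which misplaces the factor of $\stab$ and the norm in which the error is measured.
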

The sample complexity~\cref{equation:sample-complexity-edec} matches \omle~(\cref{thm:OMLE-PSR}) and has a slight advantage in avoiding the log factor $\iota$ therein. In return, the $d$ in~\cref{thm:E2D-PSR} needs to upper bound the PSR rank of \emph{all} models in $\Theta$, whereas the $d$ in~\cref{thm:OMLE-PSR} only needs to upper bound the rank of the true model $\theta^\star$. We also remark that \eetod~explicitly requires an optimistic covering of $\Theta$ as an input to the algorithm, which may be another disadvantage compared to \omle~(which uses optimistic covering implicitly in the analyses only). The proof of~\cref{thm:E2D-PSR} (in~\cref{appendix:proof-e2d-exp-psr}) relies on mostly the same key steps as for analyzing the \omle~algorithm (overview in~\cref{sec:proof_overview}).

\paragraph{Extension: Reward-free learning \& All-policy model estimation}
~\citet{chen2022unified} also design the \MEalg~ algorithm for reward-free RL~\citep{jin2020reward} and (a harder related task) \emph{all-policy model estimation}, with sample complexity scaling with the \emph{All-policy Model-estimation DEC} (AMDEC) of the model class.
We show that for B-stable PSRs, the AMDEC~\eqref{eqn:MEDEC} can be upper bounded similar to the EDEC, and thus \MEalg~(Algorithm \ref{alg:RF-E2D}) can be used to learn stable PSRs in a reward-free manner (\cref{thm:e2d-rf-psr} \&~\cref{appendix:RF-E2D}). %

\subsection{Model-based Optimistic Posterior Sampling (MOPS)}\label{sec:MOPS-main-text}
Finally, we show that \mops---a general model-based algorithm originally proposed for MDPs by~\citet{agarwal2022model}---can learn B-stable PSRs with the same sample complexity as \omle~and \eetod~modulo minor differences (\cref{thm:mops-psr} \&~\cref{appendix:MOPS}). The analysis is parallel to that of \eetod, building on insights from~\citet{chen2022unified}.

\section{Examples: Sample complexity of learning POMDPs}\label{sec:examples}
We illustrate the sample complexity of \omle~and \eetod~given in~\cref{thm:OMLE-PSR} \&~\ref{thm:E2D-PSR} (with \mops~giving similar results) for learning an $\eps$ near-optimal policy in the tractable POMDP subclasses presented in~\cref{sec:relation}, and compare with existing results.

\paragraph{Weakly revealing tabular POMDPs}
$m$-step $\arev$-weakly revealing tabular POMDPs are B-stable PSRs with $\stab \le \sqrt{S} \arev^{-1}$, $\dPSR \le S$, and $U_A \le A^{m-1}$ 
(\cref{example:revealing-POMDPs}). Further, the log-factor $\iota$ in~\cref{thm:OMLE-PSR} satisfies $\iota\le \cO(\log (A\nUA\arev^{-1}))=\tO(1)$ (\cref{appendix:POMDP-rev}). Therefore, both \cref{thm:OMLE-PSR}~\&~\ref{thm:E2D-PSR} achieve sample complexity
\begin{align*}
    \tO\paren{S A^m H^2 \logNt \cdot \stab^2/\eps^{2}} \le \tO\paren{S^2 A^m H^2 \logNt / (\arev^2\eps^2)},
\end{align*}
This improves substantially over the current best result $\tilde \cO(S^4 A^{6m-4} H^6 \logNt / (\arev^4\eps^2))$ of~\citet[Theorem 24]{liu2022partially}. For tabular POMDPs, we further have $\logNt\le \tO( H(S^2 A + SO))$.

\paragraph{Low-rank future-sufficient POMDPs}
$m$-step $\nu$-future-sufficient rank-$\dlr$ POMDPs are B-stable PSRs with $\stab \le \sqrt{U_A} \nu$, $\dPSR \le \dtrans$, and $\nUA\le A^{m-1}$ (\cref{example:future-sufficiency}). Further, the log-factor $\iota$ in~\cref{thm:OMLE-PSR} satisfies $\iota\le \cO(\log (\dtrans A\nUA\nu))=\tO(1)$ (\cref{sec:low-rank-POMDP}). Therefore, \cref{thm:OMLE-PSR}~\&~\ref{thm:E2D-PSR} achieve sample complexity
\begin{align*}
    \tO\paren{\dtrans A^m H^2 \logNt\cdot \stab^2/\eps^2 }\le \tO\paren{ \dtrans A^{2m-1} H^2 \logNt \cdot \nu^2/ \eps^2}.
\end{align*}
This improves substantially over the $\tO(\dtrans^2 A^{5m+3l+1} H^2 (\logNt)^2 \cdot \nu^4 \gamma^2 /\eps^2)$ achieved by \citet{wang2022embed}, which requires an extra \emph{$l$-step $\gamma$-past-sufficiency} assumption that we do not require.

\paragraph{Decodable low-rank POMDPs}
$m$-step decodable POMDPs with transition rank $\dtrans$ are B-stable PSRs with $\stab=1$, $\dPSR \leq \dtrans \le S$, and $U_A = A^{m-1}$ (\cref{example:decodable-POMDPs}).
Further, the log-factor $\iota$ in~\cref{thm:OMLE-PSR} satisfies $\iota\le \cO(\log (\dtrans A\nUA))=\tO(1)$ (\cref{appendix:decodable}).
Therefore, \cref{thm:OMLE-PSR}~\&~\ref{thm:E2D-PSR} achieve sample complexity 
\begin{align*}
    \tO\paren{\dtrans A^m H^2 \logNt / \eps^2}.
\end{align*}
Compared with the $\tilde \cO(\dtrans A^m H^2 \log \cN_\cG / \eps^2)$ result of~\citet{efroni2022provable}, the only difference is that their covering number $\cN_\cG$ is for the value class while $\cN_\Theta$ is for the model class. However, this difference is nontrivial if the model class admits a much smaller covering number than the value class required for a concrete problem. For example, for tabular decodable POMDPs, using $\dtrans\le S$ and $\logNt\le \tO( H(S^2 A + SO))$, we achieve the first $\tO(A^m{\rm poly}(H,S,O,A)/\eps^2)$ sample complexity, which resolves the open question of~\citet{efroni2022provable}.

\paragraph{Additional examples} Besides the above, our results can be further instantiated to latent MDPs (\citet{kwon2021rl}, as a special case of revealing POMDPs) and linear POMDPs~\citep{cai2022reinforcement} and improve over existing results, which we present in~\cref{appendix:latent-mdp} \&~\ref{appendix:linear-pomdp}.

\section{Overview of techniques}
\label{sec:proof_overview}

The proof of~\cref{thm:OMLE-PSR} consists of three main steps: a careful performance decomposition into certain B-errors, bounding the squared B-errors by squared Hellinger distances, and a generalized $\ell_2$-Eluder argument. The proof of (the EDEC bound in)~\cref{thm:E2D-PSR} follows similar steps except for replacing the final Eluder argument with a decoupling argument (\cref{prop:eluder-decor}).

\paragraph{Step 1: Performance decomposition} By the standard excess risk guarantee for MLE, our choice of $\beta=\cO(\log(\Nt(1/T) /\delta))$ guarantees with probability at least $1-\delta$ that $\ths\in\Theta^k$ for all $k\in[K]$ (\cref{thm:MLE}(a)). Thus, the greedy step (Line~\ref{line:greedy} in~\cref{alg:OMLE}) implies valid optimism: $\Vs\leq V_{\theta^k}(\pi^k)$. We then perform an error decomposition (\cref{prop:psr-err-decomp}):
\begin{equation}\label{eqn:err-decomp-demo}
\textstyle
\Vs-V_{\theta^\star}(\pi^k) \le V_{\theta^k}(\pi^k)-V_{\theta^\star}(\pi^k)\leq \DTV{ \PP_{\theta^k}^{\pi^k}, \PP_{\ths}^{\pi^k} } \leq \sum_{h = 0}^H \EE_{\tauhm\sim \pi^k}\brac{\cE_{k, h}^\star(\tauhm)}, 
\end{equation}
where $\cE^\star_{k,0} \defeq \frac{1}{2}\nrmpi{\cB_{H:1}^{k} \paren{\bd_0^{k}-\bds_0}}$, and
\begin{align}
\label{eqn:err-definition}
\textstyle
    \cE^\star_{k,h}(\tau_{h-1}) \defeq &
    \max_{\pi} \frac{1}{2}\sum_{o_h,a_h}\pi(a_h|o_h)\nrmpi{\cB_{H:h+1}^{k} \left(\B^k_h(o_h,a_h) - \Bs_h(o_h,a_h)\right)\bds(\tau_{h-1})},
\end{align}
where for the ground truth PSR $\theta^\star$ and the OMLE estimates $\theta^k$ from~\cref{alg:OMLE}, we have defined respectively $\{ \B_h^\star, \bq_0^\star \}$ and $\{ \B_h^k, \bq_0^k\}$ as their {\Bpara}s, and $\{ \cB_{H:h}^\star \}$ and $\{ \cB_{H:h}^k \}$ as the corresponding $\cB$-operators.~\cref{eqn:err-decomp-demo} follows by expanding the $\PP_{\theta^k}^{\pi^k}(\tau)$ and $\PP_{\ths}^{\pi^k}(\tau)$ (within the TV distance) using the {\Bpara} and telescoping (\cref{prop:psr-err-decomp}). This decomposition is similar as the ones in \cite{liu2022partially, zhan2022pac}, and more refined by keeping the $\cB_{H:h+1}^k$ term in~\cref{eqn:err-definition} (instead of bounding it right away), and using the $\Pi$-norm~\cref{eqn:Pi-norm-main-text} instead of the $\ell_1$-norm as the error metric.

\paragraph{Step 2: Bounding the squared B-errors}
By again the standard fast-rate guarantee of MLE in squared Hellinger distance (\cref{thm:MLE}(b)), we have $\sum_{t=1}^{k-1} \sum_{h=0}^H \dH^2( \PP^{\piexph^t}_{\theta^k}, \PP^{\piexph^t}_{\ths} ) \leq 2\beta$ for all $k\in[K]$. %
Next, using the B-stability of the PSR, we have for any $1\le t<k\le K$ that (\cref{prop:psr-hellinger-bound})
\begin{equation}\label{eqn:err-to-Hellinger-demo}
\textstyle
\sum_{h = 0}^H \EE_{\pi^t}\brac{\cE_{k,h}^\star(\tau_{h-1})^2}\leq 32\stab^2 AU_A\sum_{h=0}^H \DH{\PP^{\piexph^t}_{\theta^k}, \PP^{\piexph^t}_{\theta^\star} }.
\end{equation}
Plugging the MLE guarantee into~\cref{eqn:err-to-Hellinger-demo} and summing over $t\in[k-1]$ yields that for all $k\in[K]$,
\begin{equation}
\label{eqn:err-precondition}
\textstyle
    \sum_{t=1}^{k-1}\sum_{h = 0}^H \EE_{\pi^t}\brac{\cE_{k,h}^\star(\tau_{h-1})^2}\leq \bigO{\stab^2 AU_A\beta}.
\end{equation}
\cref{eqn:err-precondition} is more refined than e.g.~\citet[Lemma 11]{liu2022partially}, as~\cref{eqn:err-precondition} controls the second moment of $\cE_{\theta,h}$, whereas their result only controls the first moment of a similar error.

\paragraph{Step 3: Generalized $\ell_2$-Eluder argument}
We now have~\cref{eqn:err-precondition} as a precondition and bounding~\cref{eqn:err-decomp-demo} as our target. The only remaining difference is that~\cref{eqn:err-precondition} controls the error $\cEs_k$ with respect to $\set{\pi^t}_{t\le k-1}$, whereas~\cref{eqn:err-decomp-demo} requires controlling the error $\cEs_k$ with respect to $\pi^k$. 

To this end, we perform a generalized $\ell_2$-Eluder dimension argument adapted to the structure of the function $\cEs_k$'s (\cref{prop:pigeon-l2}), which implies that when $\dPSR\le d$, 
\begin{align}\label{eqn:l2-decor-demo}
\paren{ \sum_{t=1}^k \EE_{ \pi^t}\brac{\cE_{t,h}^\star(\tau_{h-1})} }^2
\leqsim  d\clog\cdot\paren{k+\sum_{t=1}^k \sum_{s=1}^{t-1} \EE_{ \pi^s}\brac{\cE_{t,h}^\star(\tau_{h-1})^2} }, ~~\forall (k,h)\in[K]\times [H].
\end{align}
Note that such an $\ell_2$-type Eluder argument is allowed precisely as our precondition~\cref{eqn:err-precondition} is in $\ell_2$ whereas our target~\cref{eqn:err-decomp-demo} only requires an $\ell_1$ bound. In comparison,~\citet{liu2022partially,zhan2022pac} only obtain a precondition in $\ell_1$, and thus has to perform an $\ell_1$-Eluder argument which results in an additional $d$ factor in the final sample complexity. Combining \eqref{eqn:err-decomp-demo}, \eqref{eqn:err-precondition} (summed over $k\in[K]$) and \eqref{eqn:l2-decor-demo} completes the proof of~\cref{thm:OMLE-PSR}.

\section{Conclusion}
This paper proposes B-stability---a new structural condition for PSRs that encompasses most of the known tractable partially observable RL problems---and designs algorithms for learning B-stable PSRs with sharp sample complexities. We believe our work opens up many interesting questions, such as the computational efficiency of our algorithms, alternative (e.g. model-free) approaches for learning B-stable PSRs, or extensions to multi-agent settings.

\bibliography{note}

\begin{thebibliography}{70}
\providecommand{\natexlab}[1]{#1}
\providecommand{\url}[1]{\texttt{#1}}
\expandafter\ifx\csname urlstyle\endcsname\relax
  \providecommand{\doi}[1]{doi: #1}\else
  \providecommand{\doi}{doi: \begingroup \urlstyle{rm}\Url}\fi

\bibitem[Agarwal and Zhang(2022)]{agarwal2022model}
Alekh Agarwal and Tong Zhang.
\newblock Model-based rl with optimistic posterior sampling: Structural
  conditions and sample complexity.
\newblock \emph{arXiv preprint arXiv:2206.07659}, 2022.

\bibitem[Agarwal et~al.(2019)Agarwal, Jiang, Kakade, and
  Sun]{agarwal2019reinforcement}
Alekh Agarwal, Nan Jiang, Sham~M Kakade, and Wen Sun.
\newblock Reinforcement learning: Theory and algorithms.
\newblock \emph{CS Dept., UW Seattle, Seattle, WA, USA, Tech. Rep}, pages
  10--4, 2019.

\bibitem[Agarwal et~al.(2020)Agarwal, Kakade, Krishnamurthy, and
  Sun]{agarwal2020flambe}
Alekh Agarwal, Sham Kakade, Akshay Krishnamurthy, and Wen Sun.
\newblock Flambe: Structural complexity and representation learning of low rank
  mdps.
\newblock \emph{Advances in neural information processing systems},
  33:\penalty0 20095--20107, 2020.

\bibitem[Akkaya et~al.(2019)Akkaya, Andrychowicz, Chociej, Litwin, McGrew,
  Petron, Paino, Plappert, Powell, Ribas, et~al.]{akkaya2019solving}
Ilge Akkaya, Marcin Andrychowicz, Maciek Chociej, Mateusz Litwin, Bob McGrew,
  Arthur Petron, Alex Paino, Matthias Plappert, Glenn Powell, Raphael Ribas,
  et~al.
\newblock Solving rubik's cube with a robot hand.
\newblock \emph{arXiv preprint arXiv:1910.07113}, 2019.

\bibitem[Azar et~al.(2017)Azar, Osband, and Munos]{azar2017minimax}
Mohammad~Gheshlaghi Azar, Ian Osband, and R{\'e}mi Munos.
\newblock Minimax regret bounds for reinforcement learning.
\newblock In \emph{International Conference on Machine Learning}, pages
  263--272. PMLR, 2017.

\bibitem[Azizzadenesheli et~al.(2016)Azizzadenesheli, Lazaric, and
  Anandkumar]{azizzadenesheli2016reinforcement}
Kamyar Azizzadenesheli, Alessandro Lazaric, and Animashree Anandkumar.
\newblock Reinforcement learning of pomdps using spectral methods.
\newblock In \emph{Conference on Learning Theory}, pages 193--256. PMLR, 2016.

\bibitem[Azizzadenesheli et~al.(2018)Azizzadenesheli, Yue, and
  Anandkumar]{azizzadenesheli2018policy}
Kamyar Azizzadenesheli, Yisong Yue, and Animashree Anandkumar.
\newblock Policy gradient in partially observable environments: Approximation
  and convergence.
\newblock \emph{arXiv preprint arXiv:1810.07900}, 2018.

\bibitem[Bai et~al.(2022{\natexlab{a}})Bai, Jin, Mei, Song, and
  Yu]{bai2022efficient}
Yu~Bai, Chi Jin, Song Mei, Ziang Song, and Tiancheng Yu.
\newblock Efficient {$\Phi$}-regret minimization in extensive-form games via
  online mirror descent.
\newblock \emph{arXiv preprint arXiv:2205.15294}, 2022{\natexlab{a}}.

\bibitem[Bai et~al.(2022{\natexlab{b}})Bai, Jin, Mei, and Yu]{bai2022near}
Yu~Bai, Chi Jin, Song Mei, and Tiancheng Yu.
\newblock Near-optimal learning of extensive-form games with imperfect
  information.
\newblock \emph{arXiv preprint arXiv:2202.01752}, 2022{\natexlab{b}}.

\bibitem[Berner et~al.(2019)Berner, Brockman, Chan, Cheung, Debiak, Dennison,
  Farhi, Fischer, Hashme, Hesse, et~al.]{berner2019dota}
Christopher Berner, Greg Brockman, Brooke Chan, Vicki Cheung, Przemyslaw
  Debiak, Christy Dennison, David Farhi, Quirin Fischer, Shariq Hashme, Chris
  Hesse, et~al.
\newblock Dota 2 with large scale deep reinforcement learning.
\newblock \emph{arXiv preprint arXiv:1912.06680}, 2019.

\bibitem[Boots et~al.(2011)Boots, Siddiqi, and Gordon]{boots2011closing}
Byron Boots, Sajid~M Siddiqi, and Geoffrey~J Gordon.
\newblock Closing the learning-planning loop with predictive state
  representations.
\newblock \emph{The International Journal of Robotics Research}, 30\penalty0
  (7):\penalty0 954--966, 2011.

\bibitem[Boots et~al.(2013)Boots, Gordon, and Gretton]{boots2013hilbert}
Byron Boots, Geoffrey Gordon, and Arthur Gretton.
\newblock Hilbert space embeddings of predictive state representations.
\newblock \emph{arXiv preprint arXiv:1309.6819}, 2013.

\bibitem[Brown and Sandholm(2018)]{brown2018superhuman}
Noam Brown and Tuomas Sandholm.
\newblock Superhuman ai for heads-up no-limit poker: Libratus beats top
  professionals.
\newblock \emph{Science}, 359\penalty0 (6374):\penalty0 418--424, 2018.

\bibitem[Burago et~al.(1996)Burago, De~Rougemont, and
  Slissenko]{burago1996complexity}
Dima Burago, Michel De~Rougemont, and Anatol Slissenko.
\newblock On the complexity of partially observed markov decision processes.
\newblock \emph{Theoretical Computer Science}, 157\penalty0 (2):\penalty0
  161--183, 1996.

\bibitem[Cai et~al.(2022)Cai, Yang, and Wang]{cai2022reinforcement}
Qi~Cai, Zhuoran Yang, and Zhaoran Wang.
\newblock Reinforcement learning from partial observation: Linear function
  approximation with provable sample efficiency.
\newblock In \emph{International Conference on Machine Learning}, pages
  2485--2522. PMLR, 2022.

\bibitem[Chen et~al.(2022)Chen, Mei, and Bai]{chen2022unified}
Fan Chen, Song Mei, and Yu~Bai.
\newblock Unified algorithms for rl with decision-estimation coefficients:
  No-regret, pac, and reward-free learning.
\newblock \emph{arXiv preprint arXiv:2209.11745}, 2022.

\bibitem[Du et~al.(2019)Du, Krishnamurthy, Jiang, Agarwal, Dudik, and
  Langford]{du2019provably}
Simon Du, Akshay Krishnamurthy, Nan Jiang, Alekh Agarwal, Miroslav Dudik, and
  John Langford.
\newblock Provably efficient rl with rich observations via latent state
  decoding.
\newblock In \emph{International Conference on Machine Learning}, pages
  1665--1674. PMLR, 2019.

\bibitem[Du et~al.(2021)Du, Kakade, Lee, Lovett, Mahajan, Sun, and
  Wang]{du2021bilinear}
Simon Du, Sham Kakade, Jason Lee, Shachar Lovett, Gaurav Mahajan, Wen Sun, and
  Ruosong Wang.
\newblock Bilinear classes: A structural framework for provable generalization
  in rl.
\newblock In \emph{International Conference on Machine Learning}, pages
  2826--2836. PMLR, 2021.

\bibitem[Efroni et~al.(2022)Efroni, Jin, Krishnamurthy, and
  Miryoosefi]{efroni2022provable}
Yonathan Efroni, Chi Jin, Akshay Krishnamurthy, and Sobhan Miryoosefi.
\newblock Provable reinforcement learning with a short-term memory.
\newblock \emph{arXiv preprint arXiv:2202.03983}, 2022.

\bibitem[Even-Dar et~al.(2005)Even-Dar, Kakade, and
  Mansour]{even2005reinforcement}
Eyal Even-Dar, Sham~M Kakade, and Yishay Mansour.
\newblock Reinforcement learning in pomdps without resets.
\newblock 2005.

\bibitem[Farina et~al.(2021)Farina, Schmucker, and Sandholm]{farina2021bandit}
Gabriele Farina, Robin Schmucker, and Tuomas Sandholm.
\newblock Bandit linear optimization for sequential decision making and
  extensive-form games.
\newblock In \emph{Proceedings of the AAAI Conference on Artificial
  Intelligence}, volume~35, pages 5372--5380, 2021.

\bibitem[Foster et~al.(2021)Foster, Kakade, Qian, and
  Rakhlin]{foster2021statistical}
Dylan~J Foster, Sham~M Kakade, Jian Qian, and Alexander Rakhlin.
\newblock The statistical complexity of interactive decision making.
\newblock \emph{arXiv preprint arXiv:2112.13487}, 2021.

\bibitem[Golowich et~al.(2022{\natexlab{a}})Golowich, Moitra, and
  Rohatgi]{golowich2022learning}
Noah Golowich, Ankur Moitra, and Dhruv Rohatgi.
\newblock Learning in observable pomdps, without computationally intractable
  oracles.
\newblock \emph{arXiv preprint arXiv:2206.03446}, 2022{\natexlab{a}}.

\bibitem[Golowich et~al.(2022{\natexlab{b}})Golowich, Moitra, and
  Rohatgi]{golowich2022planning}
Noah Golowich, Ankur Moitra, and Dhruv Rohatgi.
\newblock Planning in observable pomdps in quasipolynomial time.
\newblock \emph{arXiv preprint arXiv:2201.04735}, 2022{\natexlab{b}}.

\bibitem[Grinberg et~al.(2018)Grinberg, Aboutalebi, Lyman-Abramovitch, Balle,
  and Precup]{grinberg2018learning}
Yuri Grinberg, Hossein Aboutalebi, Melanie Lyman-Abramovitch, Borja Balle, and
  Doina Precup.
\newblock Learning predictive state representations from non-uniform sampling.
\newblock In \emph{Proceedings of the AAAI Conference on Artificial
  Intelligence}, volume~32, 2018.

\bibitem[Guo et~al.(2016)Guo, Doroudi, and Brunskill]{guo2016pac}
Zhaohan~Daniel Guo, Shayan Doroudi, and Emma Brunskill.
\newblock A pac rl algorithm for episodic pomdps.
\newblock In \emph{Artificial Intelligence and Statistics}, pages 510--518.
  PMLR, 2016.

\bibitem[Hamilton et~al.(2014)Hamilton, Fard, and
  Pineau]{hamilton2014efficient}
William Hamilton, Mahdi~Milani Fard, and Joelle Pineau.
\newblock Efficient learning and planning with compressed predictive states.
\newblock \emph{The Journal of Machine Learning Research}, 15\penalty0
  (1):\penalty0 3395--3439, 2014.

\bibitem[Hefny et~al.(2015)Hefny, Downey, and Gordon]{hefny2015supervised}
Ahmed Hefny, Carlton Downey, and Geoffrey~J Gordon.
\newblock Supervised learning for dynamical system learning.
\newblock \emph{Advances in neural information processing systems}, 28, 2015.

\bibitem[Hsu et~al.(2012)Hsu, Kakade, and Zhang]{hsu2012spectral}
Daniel Hsu, Sham~M Kakade, and Tong Zhang.
\newblock A spectral algorithm for learning hidden markov models.
\newblock \emph{Journal of Computer and System Sciences}, 78\penalty0
  (5):\penalty0 1460--1480, 2012.

\bibitem[Jaakkola et~al.(1994)Jaakkola, Singh, and
  Jordan]{jaakkola1994reinforcement}
Tommi Jaakkola, Satinder Singh, and Michael Jordan.
\newblock Reinforcement learning algorithm for partially observable markov
  decision problems.
\newblock \emph{Advances in neural information processing systems}, 7, 1994.

\bibitem[Jaeger(2000)]{jaeger2000observable}
Herbert Jaeger.
\newblock Observable operator models for discrete stochastic time series.
\newblock \emph{Neural computation}, 12\penalty0 (6):\penalty0 1371--1398,
  2000.

\bibitem[Jahromi et~al.(2022)Jahromi, Jain, and Nayyar]{jahromi2022online}
Mehdi~Jafarnia Jahromi, Rahul Jain, and Ashutosh Nayyar.
\newblock Online learning for unknown partially observable mdps.
\newblock In \emph{International Conference on Artificial Intelligence and
  Statistics}, pages 1712--1732. PMLR, 2022.

\bibitem[Jaksch et~al.(2010)Jaksch, Ortner, and Auer]{jaksch2010near}
Thomas Jaksch, Ronald Ortner, and Peter Auer.
\newblock Near-optimal regret bounds for reinforcement learning.
\newblock \emph{Journal of Machine Learning Research}, 11\penalty0
  (51):\penalty0 1563--1600, 2010.
\newblock URL \url{http://jmlr.org/papers/v11/jaksch10a.html}.

\bibitem[Jiang et~al.(2017)Jiang, Krishnamurthy, Agarwal, Langford, and
  Schapire]{jiang2017contextual}
Nan Jiang, Akshay Krishnamurthy, Alekh Agarwal, John Langford, and Robert~E
  Schapire.
\newblock Contextual decision processes with low bellman rank are
  pac-learnable.
\newblock In \emph{International Conference on Machine Learning}, pages
  1704--1713. PMLR, 2017.

\bibitem[Jiang et~al.(2018)Jiang, Kulesza, and Singh]{jiang2018completing}
Nan Jiang, Alex Kulesza, and Satinder Singh.
\newblock Completing state representations using spectral learning.
\newblock \emph{Advances in Neural Information Processing Systems}, 31, 2018.

\bibitem[Jin et~al.(2020{\natexlab{a}})Jin, Kakade, Krishnamurthy, and
  Liu]{jin2020sample}
Chi Jin, Sham Kakade, Akshay Krishnamurthy, and Qinghua Liu.
\newblock Sample-efficient reinforcement learning of undercomplete pomdps.
\newblock \emph{Advances in Neural Information Processing Systems},
  33:\penalty0 18530--18539, 2020{\natexlab{a}}.

\bibitem[Jin et~al.(2020{\natexlab{b}})Jin, Krishnamurthy, Simchowitz, and
  Yu]{jin2020reward}
Chi Jin, Akshay Krishnamurthy, Max Simchowitz, and Tiancheng Yu.
\newblock Reward-free exploration for reinforcement learning.
\newblock In \emph{International Conference on Machine Learning}, pages
  4870--4879. PMLR, 2020{\natexlab{b}}.

\bibitem[Jin et~al.(2020{\natexlab{c}})Jin, Yang, Wang, and
  Jordan]{jin2020provably}
Chi Jin, Zhuoran Yang, Zhaoran Wang, and Michael~I Jordan.
\newblock Provably efficient reinforcement learning with linear function
  approximation.
\newblock In \emph{Conference on Learning Theory}, pages 2137--2143. PMLR,
  2020{\natexlab{c}}.

\bibitem[Jin et~al.(2021)Jin, Liu, and Miryoosefi]{jin2021bellman}
Chi Jin, Qinghua Liu, and Sobhan Miryoosefi.
\newblock Bellman eluder dimension: New rich classes of rl problems, and
  sample-efficient algorithms.
\newblock \emph{Advances in neural information processing systems},
  34:\penalty0 13406--13418, 2021.

\bibitem[Kearns and Singh(2002)]{kearns2002near}
Michael Kearns and Satinder Singh.
\newblock Near-optimal reinforcement learning in polynomial time.
\newblock \emph{Machine learning}, 49\penalty0 (2):\penalty0 209--232, 2002.

\bibitem[Kearns et~al.(1999)Kearns, Mansour, and Ng]{kearns1999approximate}
Michael Kearns, Yishay Mansour, and Andrew Ng.
\newblock Approximate planning in large pomdps via reusable trajectories.
\newblock \emph{Advances in Neural Information Processing Systems}, 12, 1999.

\bibitem[Kozuno et~al.(2021)Kozuno, M{\'e}nard, Munos, and
  Valko]{kozuno2021learning}
Tadashi Kozuno, Pierre M{\'e}nard, Remi Munos, and Michal Valko.
\newblock Learning in two-player zero-sum partially observable markov games
  with perfect recall.
\newblock \emph{Advances in Neural Information Processing Systems},
  34:\penalty0 11987--11998, 2021.

\bibitem[Krishnamurthy et~al.(2016)Krishnamurthy, Agarwal, and
  Langford]{krishnamurthy2016pac}
Akshay Krishnamurthy, Alekh Agarwal, and John Langford.
\newblock Pac reinforcement learning with rich observations.
\newblock \emph{Advances in Neural Information Processing Systems}, 29, 2016.

\bibitem[Kuhn(1953)]{kuhn1953extensive}
HW~Kuhn.
\newblock Extensive games and the problem of information. kuhn hw, tucker aw,
  eds., contributions to the theory of games, vol ii, 193--216, 1953.

\bibitem[Kwon et~al.(2021)Kwon, Efroni, Caramanis, and Mannor]{kwon2021rl}
Jeongyeol Kwon, Yonathan Efroni, Constantine Caramanis, and Shie Mannor.
\newblock Rl for latent mdps: Regret guarantees and a lower bound.
\newblock \emph{Advances in Neural Information Processing Systems},
  34:\penalty0 24523--24534, 2021.

\bibitem[Lattimore and Szepesv{\'a}ri(2020)]{lattimore2020bandit}
Tor Lattimore and Csaba Szepesv{\'a}ri.
\newblock \emph{Bandit algorithms}.
\newblock Cambridge University Press, 2020.

\bibitem[Littman and Sutton(2001)]{littman2001predictive}
Michael Littman and Richard~S Sutton.
\newblock Predictive representations of state.
\newblock \emph{Advances in neural information processing systems}, 14, 2001.

\bibitem[Littman(1994)]{littman1994memoryless}
Michael~L Littman.
\newblock Memoryless policies: Theoretical limitations and practical results.
\newblock In \emph{From Animals to Animats 3: Proceedings of the third
  international conference on simulation of adaptive behavior}, volume~3, page
  238. MIT Press Cambridge, MA, USA, 1994.

\bibitem[Liu et~al.(2022{\natexlab{a}})Liu, Chung, Szepesv{\'a}ri, and
  Jin]{liu2022partially}
Qinghua Liu, Alan Chung, Csaba Szepesv{\'a}ri, and Chi Jin.
\newblock When is partially observable reinforcement learning not scary?
\newblock \emph{arXiv preprint arXiv:2204.08967}, 2022{\natexlab{a}}.

\bibitem[Liu et~al.(2022{\natexlab{b}})Liu, Netrapalli, Szepesvari, and
  Jin]{liu2022optimistic}
Qinghua Liu, Praneeth Netrapalli, Csaba Szepesvari, and Chi Jin.
\newblock Optimistic mle--a generic model-based algorithm for partially
  observable sequential decision making.
\newblock \emph{arXiv preprint arXiv:2209.14997}, 2022{\natexlab{b}}.

\bibitem[Liu et~al.(2022{\natexlab{c}})Liu, Szepesv{\'a}ri, and
  Jin]{liu2022sample}
Qinghua Liu, Csaba Szepesv{\'a}ri, and Chi Jin.
\newblock Sample-efficient reinforcement learning of partially observable
  markov games.
\newblock \emph{arXiv preprint arXiv:2206.01315}, 2022{\natexlab{c}}.

\bibitem[Lusena et~al.(2001)Lusena, Goldsmith, and
  Mundhenk]{lusena2001nonapproximability}
Christopher Lusena, Judy Goldsmith, and Martin Mundhenk.
\newblock Nonapproximability results for partially observable markov decision
  processes.
\newblock \emph{Journal of artificial intelligence research}, 14:\penalty0
  83--103, 2001.

\bibitem[Mossel and Roch(2005)]{mossel2005learning}
Elchanan Mossel and S{\'e}bastien Roch.
\newblock Learning nonsingular phylogenies and hidden markov models.
\newblock In \emph{Proceedings of the thirty-seventh annual ACM symposium on
  Theory of computing}, pages 366--375, 2005.

\bibitem[Papadimitriou and Tsitsiklis(1987)]{papadimitriou1987complexity}
Christos~H Papadimitriou and John~N Tsitsiklis.
\newblock The complexity of markov decision processes.
\newblock \emph{Mathematics of operations research}, 12\penalty0 (3):\penalty0
  441--450, 1987.

\bibitem[Poupart and Vlassis(2008)]{poupart2008model}
Pascal Poupart and Nikos Vlassis.
\newblock Model-based bayesian reinforcement learning in partially observable
  domains.
\newblock In \emph{Proc Int. Symp. on Artificial Intelligence and
  Mathematics,}, pages 1--2, 2008.

\bibitem[Rosencrantz et~al.(2004)Rosencrantz, Gordon, and
  Thrun]{rosencrantz2004learning}
Matthew Rosencrantz, Geoff Gordon, and Sebastian Thrun.
\newblock Learning low dimensional predictive representations.
\newblock In \emph{Proceedings of the twenty-first international conference on
  Machine learning}, page~88, 2004.

\bibitem[Ross et~al.(2007)Ross, Chaib-draa, and Pineau]{ross2007bayes}
Stephane Ross, Brahim Chaib-draa, and Joelle Pineau.
\newblock Bayes-adaptive pomdps.
\newblock \emph{Advances in neural information processing systems}, 20, 2007.

\bibitem[Singh et~al.(2012)Singh, James, and Rudary]{singh2012predictive}
Satinder Singh, Michael James, and Matthew Rudary.
\newblock Predictive state representations: A new theory for modeling dynamical
  systems.
\newblock \emph{arXiv preprint arXiv:1207.4167}, 2012.

\bibitem[Song et~al.(2022)Song, Mei, and Bai]{song2022sample}
Ziang Song, Song Mei, and Yu~Bai.
\newblock Sample-efficient learning of correlated equilibria in extensive-form
  games.
\newblock \emph{arXiv preprint arXiv:2205.07223}, 2022.

\bibitem[Sun et~al.(2019)Sun, Jiang, Krishnamurthy, Agarwal, and
  Langford]{sun2019model}
Wen Sun, Nan Jiang, Akshay Krishnamurthy, Alekh Agarwal, and John Langford.
\newblock Model-based rl in contextual decision processes: Pac bounds and
  exponential improvements over model-free approaches.
\newblock In \emph{Conference on learning theory}, pages 2898--2933. PMLR,
  2019.

\bibitem[Thon and Jaeger(2015)]{thon2015links}
Michael~R Thon and Herbert Jaeger.
\newblock Links between multiplicity automata, observable operator models and
  predictive state representations: a unified learning framework.
\newblock \emph{J. Mach. Learn. Res.}, 16:\penalty0 103--147, 2015.

\bibitem[Uehara et~al.(2022{\natexlab{a}})Uehara, Sekhari, Lee, Kallus, and
  Sun]{uehara2022computationally}
Masatoshi Uehara, Ayush Sekhari, Jason~D Lee, Nathan Kallus, and Wen Sun.
\newblock Computationally efficient pac rl in pomdps with latent determinism
  and conditional embeddings.
\newblock \emph{arXiv preprint arXiv:2206.12081}, 2022{\natexlab{a}}.

\bibitem[Uehara et~al.(2022{\natexlab{b}})Uehara, Sekhari, Lee, Kallus, and
  Sun]{uehara2022provably}
Masatoshi Uehara, Ayush Sekhari, Jason~D Lee, Nathan Kallus, and Wen Sun.
\newblock Provably efficient reinforcement learning in partially observable
  dynamical systems.
\newblock \emph{arXiv preprint arXiv:2206.12020}, 2022{\natexlab{b}}.

\bibitem[Van~de Geer(2000)]{van2000empirical}
Sara~A Van~de Geer.
\newblock \emph{Empirical Processes in M-estimation}, volume~6.
\newblock Cambridge university press, 2000.

\bibitem[Vinyals et~al.(2019)Vinyals, Babuschkin, Czarnecki, Mathieu, Dudzik,
  Chung, Choi, Powell, Ewalds, Georgiev, et~al.]{vinyals2019grandmaster}
Oriol Vinyals, Igor Babuschkin, Wojciech~M Czarnecki, Micha{\"e}l Mathieu,
  Andrew Dudzik, Junyoung Chung, David~H Choi, Richard Powell, Timo Ewalds,
  Petko Georgiev, et~al.
\newblock Grandmaster level in starcraft ii using multi-agent reinforcement
  learning.
\newblock \emph{Nature}, 575\penalty0 (7782):\penalty0 350--354, 2019.

\bibitem[Wang et~al.(2022)Wang, Cai, Yang, and Wang]{wang2022embed}
Lingxiao Wang, Qi~Cai, Zhuoran Yang, and Zhaoran Wang.
\newblock Embed to control partially observed systems: Representation learning
  with provable sample efficiency.
\newblock \emph{arXiv preprint arXiv:2205.13476}, 2022.

\bibitem[Xiong et~al.(2021)Xiong, Chen, Gao, and Zhou]{xiong2021sublinear}
Yi~Xiong, Ningyuan Chen, Xuefeng Gao, and Xiang Zhou.
\newblock Sublinear regret for learning pomdps.
\newblock \emph{arXiv preprint arXiv:2107.03635}, 2021.

\bibitem[Zhan et~al.(2022)Zhan, Uehara, Sun, and Lee]{zhan2022pac}
Wenhao Zhan, Masatoshi Uehara, Wen Sun, and Jason~D Lee.
\newblock Pac reinforcement learning for predictive state representations.
\newblock \emph{arXiv preprint arXiv:2207.05738}, 2022.

\bibitem[Zhang et~al.(2021)Zhang, Yang, Liu, Tokekar, and
  Huang]{zhang2021reinforcement}
Zhi Zhang, Zhuoran Yang, Han Liu, Pratap Tokekar, and Furong Huang.
\newblock Reinforcement learning under a multi-agent predictive state
  representation model: Method and theory.
\newblock In \emph{International Conference on Learning Representations}, 2021.

\bibitem[Zheng et~al.(2020)Zheng, Trott, Srinivasa, Naik, Gruesbeck, Parkes,
  and Socher]{zheng2020ai}
Stephan Zheng, Alexander Trott, Sunil Srinivasa, Nikhil Naik, Melvin Gruesbeck,
  David~C Parkes, and Richard Socher.
\newblock The ai economist: Improving equality and productivity with ai-driven
  tax policies.
\newblock \emph{arXiv preprint arXiv:2004.13332}, 2020.

\end{thebibliography}
\bibliographystyle{plainnat}

\appendix

\section{Technical tools}

\subsection{Technical tools}

\begin{lemma}[Hellinger conditioning lemma~{\cite[Lemma A.4]{chen2022unified}}]\label{lemma:Hellinger-cond}
    For any pair of random variable $(X,Y)$, it holds that
    \begin{align*}
        \EE_{X\sim\PP_X}\brac{\DH{\PP_{Y|X}, \QQ_{Y|X}}}\leq 2\DH{\PP_{X,Y}, \QQ_{X,Y}}.
    \end{align*}
\end{lemma}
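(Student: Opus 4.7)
The plan is to prove the inequality by a direct algebraic manipulation using the Bhattacharyya-coefficient form of Hellinger distance, namely $\dH^2(P,Q) = 2 - 2\sum_z \sqrt{P(z)Q(z)}$. I would introduce the shorthand $B(x) \defeq \sum_y \sqrt{\PP_{Y|X}(y|x)\QQ_{Y|X}(y|x)}$, which by Cauchy--Schwarz satisfies $B(x) \in [0,1]$, and the notation $a(x) \defeq \sqrt{\PP_X(x)}$, $b(x) \defeq \sqrt{\QQ_X(x)}$. Factoring $\PP_{X,Y}(x,y) = \PP_X(x)\PP_{Y|X}(y|x)$ and similarly for $\QQ_{X,Y}$, one immediately gets the two compact identities
\begin{align*}
\EE_{X\sim\PP_X}\brac{\DH{\PP_{Y|X},\QQ_{Y|X}}} &= 2 - 2\sum_x a(x)^2\, B(x),\\
\DH{\PP_{X,Y},\QQ_{X,Y}} &= 2 - 2\sum_x a(x)b(x)\, B(x).
\end{align*}

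The plan is then to subtract the desired inequality's two sides directly. Taking the LHS minus twice the RHS and simplifying gives
\[
\EE_{X\sim\PP_X}\brac{\DH{\PP_{Y|X},\QQ_{Y|X}}} - 2\DH{\PP_{X,Y},\QQ_{X,Y}} = -2 + 2\sum_x a(x)\paren{2b(x)-a(x)}\, B(x).
\]
The remaining task is to bound the sum on the right by $1$. I would use the elementary identity $a(2b-a) = b^2 - (a-b)^2$, which splits the sum as $\sum_x b(x)^2 B(x) - \sum_x (a(x)-b(x))^2 B(x)$. Since $B(x)\ge 0$, the second term is nonnegative, and since $B(x)\le 1$ the first term is bounded by $\sum_x b(x)^2 = \sum_x \QQ_X(x) = 1$. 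Hence the entire expression is $\le -2 + 2 = 0$, which yields the claim.

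The main subtlety (and the step I would double-check most carefully) is ensuring the sign tracking in the algebraic manipulation leads to the sharp constant $2$: a more naive approach via the triangle inequality for the Hellinger metric applied to the auxiliary measure $R(x,y) \defeq \PP_X(x)\QQ_{Y|X}(y|x)$, combined with data processing on the $X$-marginal, only yields a factor $4$. The direct Bhattacharyya-coefficient calculation above is what makes the identity $a(2b-a)=b^2-(a-b)^2$ appear naturally and delivers the sharp constant $2$.
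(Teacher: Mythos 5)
Your proof is correct, and it is essentially the standard argument for this lemma (the paper itself gives no proof, deferring to \citet[Lemma A.4]{chen2022unified}): writing both sides via the Bhattacharyya coefficient and reducing to the pointwise inequality $2ab-a^2\le b^2$, i.e.\ $(a-b)^2\ge 0$, is exactly how the cited reference obtains the constant $2$. Nothing further is needed.
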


The following strong duality of (generalized) bilinear function is standard, e.g. it follows from the proof of \citet[Proposition 4.2]{foster2021statistical}.
\begin{theorem}[Strong duality]\label{thm:strong-dual}
    Suppose that $\cX$, $\cY$ are two topological spaces, such that $\cX$ is discrete and $\cY$ is finite (with discrete topology). 
    Then for a function $f:\cX\times\cY\to\RR$ that is uniformly bounded, it holds that
    $$
    \sup_{X\in\Delta_0(\cX)}\inf_{Y\in\Delta(\cY)}\EE_{x\sim X}\EE_{y\sim Y}\brac{f(x,y)}
    =
    \inf_{Y\in\Delta(\cY)}\sup_{x\in\cX}\EE_{y\sim Y}\brac{f(x,y)},
    $$
    where $\Delta_0(\cX)$ stands for space of the finitely supported distribution on $\cX$.
\end{theorem}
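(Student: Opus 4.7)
The plan is to reduce the statement to a finite-dimensional bilinear minimax problem and then invoke von Neumann's minimax theorem. The weak inequality $\sup_X \inf_Y \EE[f] \le \inf_Y \sup_X \EE[f]$ is standard and holds without any assumptions on $\cX, \cY, f$, so the real content is the reverse inequality.

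First, I would exploit the finiteness of $\cY$. Write $m := |\cY|$ and identify $\Delta(\cY)$ with the standard simplex in $\R^m$. For each $x \in \cX$ define $\varphi(x) := (f(x,y))_{y \in \cY} \in \R^m$; since $f$ is uniformly bounded, the image $S := \varphi(\cX)$ lies in a bounded box of $\R^m$. Setting $F(X,Y) := \EE_{x\sim X}\EE_{y\sim Y}[f(x,y)]$, we obtain $F(X,Y) = \langle Y,\, \EE_{x\sim X}[\varphi(x)]\rangle$, and the map $X \mapsto \EE_{x\sim X}[\varphi(x)]$ sends $\Delta_0(\cX)$ precisely onto $\mathrm{conv}(S)$.

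Second, I would rewrite both sides of the target equality through this identification. The left-hand side becomes $\sup_{v \in \mathrm{conv}(S)} \inf_{Y \in \Delta(\cY)}\langle Y, v\rangle = \sup_{v \in \mathrm{conv}(S)} \min_{y \in \cY} v_y$, while the right-hand side becomes $\inf_{Y}\sup_{x \in \cX}\langle Y, \varphi(x)\rangle = \inf_Y \sup_{v \in S}\langle Y, v\rangle$, which coincides with $\inf_Y \sup_{v \in \mathrm{conv}(S)}\langle Y, v\rangle$ because a linear functional attains the same supremum on $S$ and on its convex hull. Letting $K := \overline{\mathrm{conv}(S)}$, which is compact and convex in $\R^m$, the continuity of $v \mapsto \min_y v_y$ and of $v \mapsto \langle Y, v\rangle$ shows that both quantities are unchanged when $\mathrm{conv}(S)$ is replaced by $K$.

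Third, I would apply von Neumann's minimax theorem to the bilinear form $(v, Y) \mapsto \langle Y, v\rangle$ on the compact convex sets $K \subset \R^m$ and $\Delta(\cY) \subset \R^m$, yielding $\sup_{v \in K}\inf_{Y}\langle Y, v\rangle = \inf_{Y}\sup_{v \in K}\langle Y, v\rangle$, which combined with the reformulations above gives the equality in the theorem. The only subtlety is the passage from $\mathrm{conv}(S)$ to its closure $K$; this is immediate from continuity of the two scalar-valued functions above, but as an alternative I would note that Carathéodory's theorem lets one replace $S$ by a finite subset of size at most $m+1$, reducing the problem to a finite matrix game for which von Neumann's theorem applies verbatim without any closure argument. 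I do not anticipate a genuinely hard step; the main care is in bookkeeping the identifications and in justifying the density/continuity transitions when replacing $\mathrm{conv}(S)$ by $K$.
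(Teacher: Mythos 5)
Your main argument is correct and is essentially the standard route: the paper itself gives no proof of \cref{thm:strong-dual}, deferring to \citet[proof of Proposition 4.2]{foster2021statistical}, and the argument there is of exactly this flavor (reduce to a bilinear game over $\mathrm{conv}(\varphi(\cX))\times\Delta(\cY)$ with $\Delta(\cY)$ a finite-dimensional simplex, then invoke a minimax theorem on compact convex sets). Your reformulation of the left side as $\sup_{v\in\mathrm{conv}(S)}\min_y v_y$, of the right side as $\inf_Y\sup_{v\in\mathrm{conv}(S)}\langle Y,v\rangle$, the passage to the closure $K$ by continuity, and the application of von Neumann/Sion on $K\times\Delta(\cY)$ are all sound; uniform boundedness of $f$ is used exactly where needed to make $K$ compact.

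One caveat: the Carath\'eodory ``alternative'' in your last step does not work as stated. Carath\'eodory lets you write any single $v\in\mathrm{conv}(S)$ as a combination of at most $m+1$ points of $S$, but the resulting finite matrix game on those points only controls $\min_Y\max_i\langle Y,\varphi(x_i)\rangle$, which lower-bounds nothing useful: the $Y$ that is optimal for the finite subgame may perform arbitrarily badly against the remaining $x\in\cX$, so you cannot conclude anything about $\inf_Y\sup_{x\in\cX}\langle Y,\varphi(x)\rangle$ from it. If you want to avoid the closure argument, the correct elementary substitute is a separating-hyperplane proof: for $\epsilon>0$ the convex sets $\mathrm{conv}(S)$ and $\{w:\ w_y\ge c+\epsilon\ \forall y\}$ (with $c$ the left-hand value) are disjoint, and the separating functional, normalized to lie in $\Delta(\cY)$, witnesses $\inf_Y\sup_x\le c+\epsilon$. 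Since the Carath\'eodory remark is only offered as an alternative, your proof as a whole stands on the main route.
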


We will also use the following standard  concentration inequality (see e.g. \citet[Lemma A.4]{foster2021statistical}) when analyzing algorithm OMLE. 
\begin{lemma}\label{lemma:concen}
For a sequence of real-valued random variables $\left(X_t\right)_{t \leq T}$ adapted to a filtration $\left(\cF_t\right)_{t \leq T}$, the following holds with probability at least $1-\delta$:
$$
\sum_{s=1}^{t} -\log \cond{\exp(-X_s)}{\cF_{s-1}} \leq \sum_{s=1}^{t} X_s +\log \left(1/\delta\right), \qquad \forall t\in[T].
$$
\end{lemma}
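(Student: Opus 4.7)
The plan is to prove this via the exponential supermartingale method, which is the standard route for such log-likelihood-style concentration bounds. First, I would rearrange the desired inequality into a form amenable to a martingale argument. Specifically, setting $Z_s \defeq -X_s - \log \cond{\exp(-X_s)}{\cF_{s-1}}$, the inequality in the statement is equivalent to
\begin{equation*}
    \sum_{s=1}^{t} Z_s \le \log(1/\delta), \quad \forall t\in[T],
\end{equation*}
so it suffices to show this reformulation holds with probability at least $1-\delta$.

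Next, I would consider the process $M_t \defeq \exp\bigl(\sum_{s=1}^{t} Z_s\bigr)$ with $M_0 \defeq 1$. A direct computation gives
\begin{equation*}
    \cond{M_t}{\cF_{t-1}} = M_{t-1} \cdot \cond{\exp(Z_t)}{\cF_{t-1}}
    = M_{t-1} \cdot \frac{\cond{\exp(-X_t)}{\cF_{t-1}}}{\cond{\exp(-X_t)}{\cF_{t-1}}}
    = M_{t-1},
\end{equation*}
where I used that $-\log \cond{\exp(-X_t)}{\cF_{t-1}}$ is $\cF_{t-1}$-measurable and can be pulled outside the conditional expectation. Thus $(M_t)_{t \ge 0}$ is a nonnegative $(\cF_t)$-martingale with $\EE[M_t] = 1$.

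Finally, I would apply Ville's maximal inequality (equivalently Doob's maximal inequality combined with Markov's) to the nonnegative martingale $M_t$, yielding
\begin{equation*}
    \PP\!\left[\sup_{t \le T} M_t \ge 1/\delta\right] \le \delta \cdot \EE[M_T] = \delta.
\end{equation*}
On the complementary event, $M_t < 1/\delta$ for all $t \le T$ simultaneously; taking logarithms and substituting back the definition of $Z_s$ recovers the claimed bound uniformly over $t \in [T]$. The main (minor) subtlety is that the statement asks for a bound that is uniform in $t$, so one must invoke a maximal inequality rather than Markov at a single fixed $t$; Ville's inequality handles this without any additional union bound overhead. No integrability assumption beyond the implicit finiteness of $\cond{\exp(-X_s)}{\cF_{s-1}}$ is required, since the martingale construction absorbs the conditional MGF exactly.
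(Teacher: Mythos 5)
Your proof is correct and is exactly the standard argument for this lemma: the paper itself gives no proof, citing it as \citet[Lemma A.4]{foster2021statistical}, whose proof is precisely the exponential-supermartingale construction $M_t = \exp\bigl(\sum_{s\le t} Z_s\bigr)$ followed by Ville's/Doob's maximal inequality that you use. The only cosmetic remark is that the degenerate case $\cond{\exp(-X_s)}{\cF_{s-1}} = +\infty$ makes the left-hand side $-\infty$ so the claim holds trivially there, consistent with your closing comment that no further integrability is needed.
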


\subsection{Covering number}

In this section, we present the definition of the optimistic covering number $\cN_\Theta$. Suppose that we have a model class $\Theta$, such that each $\theta\in\Theta$ parameterizes a sequential decision process. The \emph{$\rho$-optimistic covering number} of $\Theta$ is defined as follows. 
\begin{definition}[Optimistic cover]
\label{def:optimistic-cover}
Suppose that there is a context space $\cX$. 
An optimistic $\rho$-cover of $\Theta$ is a tuple $(\tPP,\Theta_0)$, where $\Theta_0\subset \Theta$ is a finite set, $\tPP=\set{\tPP_{\theta_0}^\pi(\cdot)\in\R_{\ge 0}^{\cT^H}}_{\theta_0\in\Theta_0, \pi\in\Pi}$ specifies a \emph{optimistic likelihood function} for each $\theta\in\Theta_0$, such that: 

(1) For $\theta\in\Theta$, there exists a $\theta_0\in\Theta_0$ satisfying: for all $\tau\in\cT^H$ and $\pi$, it holds that $\tPP_{\theta_0}^{\pi}(\tau)\geq \PP_{\theta}^{\pi}(\tau)$.

(2) For $\theta\in\Theta_0$, $\max_{\pi}\nrm{\PP_{\theta}^{\pi}(\tau_H=\cdot)-\tPP_{\theta}^{\pi}(\tau_H=\cdot)}_1\leq\rho^2$.

The optimistic covering number $\cN_\Theta(\rho)$ is defined as the minimal cardinality of $\Theta_0$ such that there exists $\tPP$ such that $(\tPP,\cM_0)$ is an optimistic $\rho$-cover of $\Theta$. 
\end{definition}
The above definition is taken from \citet{chen2022unified}; the covering argument in \citet{liu2022partially} essentially uses the above notion of covering number. Besides, the optimistic covering number can be upper bounded by the bracketing number adopted by \citet{zhan2022pac}.

By an explicit construction, \citet{liu2022partially} show that there is a universal constant $C$ such that for any model class $\Theta$ of tabular POMDPs, it holds that
$$
\log \cN_\Theta(\rho)\leq CH(S^2A+SO)\log(CHSOA/\rho).
$$

\section{Proofs for Section \ref{sec:non-expansive}}\label{appendix:non-exp}

\subsection{Basic property of \Bpara}

\begin{proposition}[Equivalence between PSR definition and {\Bpara}]
\label{prop:equivalence-PSR-Bpara}
A sequential decision process is a PSR with core test sets $(\Uh)_{h\in[H]}$ (in the sense of~\cref{def:core-test}) if and only if it admits a \Bpara~with respect to $(\Uh)_{h\in[H]}$ (in the sense of~\cref{def:Bpara}).
\end{proposition}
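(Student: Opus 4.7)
The plan is to prove both directions of the equivalence by explicit construction, using the defining identities of each object.

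For the \emph{B-representation $\Rightarrow$ PSR} direction, I would first establish the auxiliary identity $\BB_{h-1:1}(\tau_{h-1})\bq_0 = \PP(\tau_{h-1}) \cdot \bq(\tau_{h-1})$, where $\bq(\tau_{h-1}) = [\PP(t|\tau_{h-1})]_{t\in\Uh}$ is the predictive state of \cref{def:core-test}. This follows entrywise by applying \eqref{eqn:psr-op-prod-h} at step $h-1$ with $t_h \in \Uh$, giving $\e_{t_h}^\top \BB_{h-1:1}(\tau_{h-1})\bq_0 = \PP(\tau_{h-1}, t_h) = \PP(\tau_{h-1})\,\PP(t_h|\tau_{h-1})$; when $\tau_{h-1}$ is unreachable, both sides vanish by convention. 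Next, for an arbitrary length-$W$ test $t_h = (o_{h:h+W-1}, a_{h:h+W-2})$, I would marginalize $\PP(\tau_{h-1}, t_h) = \PP(o_{1:h+W-1}|\doac(a_{1:h+W-2}))$ as $\sum_{o_{h+W:H}} \PP(o_{1:H}|\doac(a_{1:H}))$ for any fixed continuation of actions $a_{h+W-1:H}$, and then apply the step-$H$ B-representation identity (recalling $\cU_{H+1} = \{o_\dum\}$, so $\PP(\tau_H) = \e_{o_\dum}^\top \BB_{H:1}(\tau_H)\bq_0$). Splitting $\BB_{H:1}(\tau_H) = \BB_{H:h}(\tau_{h:H})\,\BB_{h-1:1}(\tau_{h-1})$ and pulling the marginalization inside yields $\PP(\tau_{h-1}, t_h) = c_{t_h}^\top \BB_{h-1:1}(\tau_{h-1})\bq_0$ for an explicit vector $c_{t_h} \in \R^{\Uh}$. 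Combining with the auxiliary identity and dividing by $\PP(\tau_{h-1})$ gives \eqref{eqn:psr-def} with $b_{t_h, h} = c_{t_h}$.

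For the \emph{PSR $\Rightarrow$ B-representation} direction, I would construct $\bq_0$ and $\{\BB_h(o_h, a_h)\}$ row-by-row. Set $\bq_0 \defeq [\PP(t)]_{t \in \Uone}$. For each $h$, $(o_h, a_h)$, and each $t' = (o_{h+1:h+W}, a_{h+1:h+W-1}) \in \Uhp$, form the concatenated test $\tilde t_h \defeq ((o_h, o_{h+1:h+W}), (a_h, a_{h+1:h+W-1}))$, which is a valid length-$(W+1)$ test at step $h$. By \cref{def:core-test} applied to $\tilde t_h$, there exists $b_{\tilde t_h, h} \in \R^{\Uh}$ such that $\PP(\tilde t_h | \tau_{h-1}) = \langle b_{\tilde t_h, h}, \bq(\tau_{h-1})\rangle$ for all $\tau_{h-1}$; I take this vector to be the $t'$-th row of $\BB_h(o_h, a_h)$. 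I would then verify \eqref{eqn:psr-op-prod-h} by induction on $h$: the base $\PP(t_1) = \e_{t_1}^\top \bq_0$ is immediate, and for the inductive step, using the hypothesis $\BB_{h-1:1}(\tau_{h-1})\bq_0 = \PP(\tau_{h-1})\bq(\tau_{h-1})$ together with the row-construction, a direct computation shows that the $t'$-th entry of $\BB_h(o_h, a_h)\BB_{h-1:1}(\tau_{h-1})\bq_0$ equals $\PP(\tau_{h-1}) \langle b_{\tilde t_h, h}, \bq(\tau_{h-1})\rangle = \PP(\tau_{h-1})\PP(\tilde t_h|\tau_{h-1}) = \PP(\tau_h, t')$, closing the induction.

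The hard part will be keeping the semantics of $\doac$-actions, conditional probabilities, and the zero-probability convention straight throughout both constructions. In the forward direction, the completion of a length-$W$ test to a full-horizon trajectory relies on an arbitrary extension of future actions $a_{h+W-1:H}$; the resulting coefficient $c_{t_h}$ a priori depends on this extension, but since the identity $\PP(\tau_{h-1}, t_h) = c_{t_h}^\top \BB_{h-1:1}(\tau_{h-1})\bq_0$ need only hold for some such $c_{t_h}$, this is harmless. In the backward direction, the row-by-row construction is well-defined precisely because \cref{def:core-test} supplies a single coefficient $b_{\tilde t_h, h}$ that works uniformly across all $\tau_{h-1}$, which is exactly what the induction demands.
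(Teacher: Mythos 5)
Your proposal is correct and follows essentially the same route as the paper's proof: the PSR-to-B-representation direction stacks the PSR coefficient vectors $b_{(o_h,a_h,t'),h}$ as the rows of $\BB_h(o_h,a_h)$ and verifies \eqref{eqn:psr-op-prod-h} via the recursion $\BB_{h-1:1}(\tau_{h-1})\bq_0=\PP(\tau_{h-1})\,\bq(\tau_{h-1})$, while the converse direction reduces an arbitrary test to full-length tests by marginalizing over future observations under an arbitrary fixed action continuation and then divides by $\PP(\tau_{h-1})$, exactly as in the paper (your reorganization of the reduction-to-full-length step is only cosmetic). The points you flag as delicate — dependence of the coefficient on the action continuation, the zero-probability convention, and uniformity of $b_{\tilde t_h,h}$ over histories — are precisely the ones the paper handles, and you handle them correctly.
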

\begin{proof}[Proof of \cref{prop:equivalence-PSR-Bpara}]
We first show that a PSR admits a {\Bpara}. Suppose we have a PSR with core test sets $(\Uh)_{h\in[H]}$ satisfying~\cref{def:core-test}, with associated vectors $\{ b_{t_h, h} \in \R^{\Uh} \}_{h \in [H], t_h \in \Test}$ given by~\cref{eqn:psr-def}. Then, define
\begin{align*}
    \B_h(o,a) \defeq \begin{bmatrix}
    \vert \\
    b_{(o,a,t),h}^{\top} \\
    \vert
    \end{bmatrix}_{t\in\Uhp}\in\R^{\Uhp \times \Uh}, \quad
    \bq_0 \defeq \begin{bmatrix}
    \vert \\
    \PP(t) \\
    \vert
    \end{bmatrix}_{t\in\Uone} \in \R^{\Uone}.
\end{align*}

We show that this gives a {\Bpara} of the PSR. By~\cref{eqn:psr-def}, we have for all $(h,\tau_{h-1},o,a)$ that
$$
\B_h(o,a)\bq(\tauhm)=\brac{\PP(o,a,\thp|\tauhm)}_{\thp\in\Uhp}=\PP(o_h=o|\tauhm)\times \bq(\tauhm,o,a).
$$
Applying this formula recursively, we obtain
$$
\B_{h:1}(\tau_h)\bq_0=\PP(\tau_h)\times\bq(\tau_h)=\brac{\PP(\tau_h,\thp)}_{\thp\in\Uhp},
$$
which completes the verification of \cref{eqn:psr-op-prod-h} in~\cref{def:Bpara}.

We next show that a process admitting a {\Bpara} is a PSR. Suppose we have a sequential decision process that admits a \Bpara~with respect to $(\Uh)_{h\in[H]}$ as in~\cref{def:Bpara}. Fix $h\in[H]$. We first claim that, to construct vectors $(b_{t_h,h})_{t_h}\in\R^{\Uh}$ such that $\P(t_h|\tauhm)=\iprod{b_{t,h}}{\bq(\tauhm)}$ for all test $t_h$ and history $\tauhm$ (\cref{def:core-test}), we only need to construct such vectors for \emph{full-length tests} $t_h=(o_{h:H+1}, a_{h:H})$. This is because, suppose we have assigned $b_{t_h,h}\in\R^{\Uh}$ for all full-length $t_h$'s. Then for any other $t_h=(o_{h:h+W-1}, a_{h:h+W-2})$ with $h+W-1<H+1$ (non-full-length), take
\begin{align*}
    b_{t_h, h} = \sum_{o_{h+W:H+1}} b_{t_h, (o_{h+W:H+1}, a'_{h+W-1:H}), h},
\end{align*}
where $a'_{h+W-1:H}\in\cA^{H-h-W+2}$ is an arbitrary and fixed action sequence. For this choice we have
\begin{align*}
    & \quad \iprod{b_{t,h}}{\bq(\tauhm)} = \sum_{o_{h+W:H+1}} \iprod{b_{t_h, (o_{h+W:H+1}, a'_{h+W-1:H}), h}}{\bq(\tauhm)} \\
    & = \sum_{o_{h+W:H+1}} \P(t_h, o_{h+W:H+1}, a'_{h+W-1:H} | \tauhm) = \P(t_h | \tauhm)
\end{align*}
as desired.

It remains to construct $b_{t_h, h}$ for all full-length tests. For any full-length test $t_h=(o_{h:H+1}, a_{h:H})$, take $b_{t_h,h}\in\R^{\Uh}$ with
$$
b_{t_h,h}^\top = \B_H(o_H,a_H)\cdots\B_h(o_h,a_h) \in \R^{1\times \Uh}.
$$
By definition of the {\Bpara}, for any history $\tau_{h}=(o_1,a_1,\cdots,o_{h},a_{h})$, and any test $t_{h+1}\in\Uhp$, we have
\begin{align*}
    \P(\tau_h) \times \P(t_{h+1} |\tau_h) = \e_{t_{h+1}}^\top \BB_{h:1}(\tau_h) \times \bq_0,
\end{align*}
or in vector form,
\begin{align}\label{eqn:PSR-prod}
    \PP(\tau_{h})\times \bq(\tau_h)= \BB_{h:1}(\tau_h)\bq_0,
\end{align}
where we recall $\PP(\tau_{h})=\PP(o_1,\cdots,o_h|\doac(a_1,\cdots,a_h))$. Therefore, for the particular full history $\tau_H=(\tau_{h-1}, t_h)$, we have by applying~\cref{eqn:PSR-prod} twice (for steps $H$ and $h-1$) that
\begin{align*}
    & \quad \P(\tau_H) = \BB_{H:1}(\tau_H)\bq_0 = \BB_{H:h}(o_{h:H}, a_{h:H}) \BB_{h-1:1}(\tau_{h-1}) \bq_0 \\
    & = b_{t_h,h}^\top (\P(\tau_{h-1})\times \bq(\tau_{h-1})).
\end{align*}
Dividing both sides by $\P(\tau_{h-1})$ (when it is nonzero), we get
\begin{align}
\label{eqn:Bpara-verify-core-set}
    \P(t_h|\tauhm) = \P(\tau_H|\tauhm) = \P(\tau_H)/\P(\tauhm) = b_{t_h,h}^\top \bq(\tau_{h-1}).
\end{align}
This verifies~\cref{eqn:psr-def} for all $\tauhm$ that are reachable. For $\tauhm$ that are not reachable,~\cref{eqn:Bpara-verify-core-set} also holds as both sides equal zero by our convention. This completes the verification of~\cref{eqn:psr-def} in~\cref{def:core-test}.
\end{proof}

From the proof above, we can extract the following basic property of \Bpara.
\begin{corollary}\label{cor:Bpara-prob-meaning}
Consider a PSR model with {\Bpara} $\{ \{ \BB_h(o_h, a_h)\}_{h, o_h, a_h}, \bq_0 \}$. For $0\leq h\leq H-1$, it holds that 
\begin{align*}
    \PP(o_h|\tau_{h-1})\times \bq(\tau_{h-1},o_h,a_h)=\BB_{h}(o_h,a_h)\bq(\tauhm).
\end{align*}
Furthermore, it holds that 
\begin{align}
\label{equation:bop-prop}
\B_{H:h}(\tau_{h:H})\bq(\tauhm)=\PP(\tau_{h:H}|\tauhm).
\end{align}
\end{corollary}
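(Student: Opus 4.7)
The plan is to extract both identities directly from equation \eqref{eqn:PSR-prod} in the proof of \cref{prop:equivalence-PSR-Bpara}, which already established that $\PP(\tau_h)\,\bq(\tau_h) = \BB_{h:1}(\tau_h)\,\bq_0$ for every $h$ and every history $\tau_h$. Nothing essentially new is needed; the corollary simply rearranges this identity to relate $\BB_h$-operators to predictive states (rather than to $\bq_0$).

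For the first identity, I would apply \eqref{eqn:PSR-prod} twice: once at step $h-1$ to get $\BB_{h-1:1}(\tauhm)\,\bq_0 = \PP(\tauhm)\,\bq(\tauhm)$, and once at step $h$ with $\tau_h = (\tauhm, o_h, a_h)$ to get $\BB_{h:1}(\tau_h)\,\bq_0 = \PP(\tau_h)\,\bq(\tau_h)$. Using the telescoping $\BB_{h:1}(\tau_h) = \BB_h(o_h,a_h)\,\BB_{h-1:1}(\tauhm)$, left-multiplying the first equation by $\BB_h(o_h,a_h)$ gives
\[
\BB_h(o_h,a_h)\,\PP(\tauhm)\,\bq(\tauhm)
\;=\; \PP(\tau_h)\,\bq(\tau_h)
\;=\; \PP(\tauhm)\,\PP(o_h|\tauhm)\,\bq(\tauhm, o_h, a_h),
\]
where the last equality uses the chain-rule decomposition $\PP(\tau_h) = \PP(\tauhm)\,\PP(o_h|\tauhm)$. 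Dividing by $\PP(\tauhm)$ when it is nonzero yields the first claim; when $\PP(\tauhm)=0$, both sides vanish by the paper's convention that $\bq(\cdot)=0$ on unreachable histories.

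For the second identity, I would iterate the first identity for $h' = h, h+1, \ldots, H$. Because $\cU_{H+1} = \{o_{\dum}\}$ is a singleton and $o_{\dum}$ is generated deterministically after step $H$, we have $\bq(\tau_H) \equiv 1$, so the chain collapses to $\BB_{H:h}(\tau_{h:H})\,\bq(\tauhm) = \prod_{h'=h}^{H}\PP(o_{h'}|\tau_{h'-1})$, which by the paper's convention equals $\PP(\tau_{h:H}|\tauhm)$. Equivalently, one can split $\BB_{H:1}(\tau_H) = \BB_{H:h}(\tau_{h:H})\,\BB_{h-1:1}(\tauhm)$ inside \eqref{eqn:PSR-prod} at step $H$, apply \eqref{eqn:PSR-prod} at step $h-1$ to the trailing factor, and divide through by $\PP(\tauhm)$.

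There is no real obstacle beyond bookkeeping: the main care points are (i) the interpretation of $\BB_{H:h}(\tau_{h:H})\bq(\tauhm)$ as a scalar via $|\cU_{H+1}|=1$ and $\bq(\tau_H)=1$, and (ii) the edge case $\PP(\tauhm)=0$, which is absorbed by the convention that both $\bq(\tauhm)$ and $\PP(\cdot|\tauhm)$ are defined to be zero on unreachable histories so that both sides of each identity trivially coincide.
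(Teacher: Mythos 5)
Your proposal is correct and follows essentially the same route as the paper, which obtains the corollary by extracting exactly these identities from the proof of \cref{prop:equivalence-PSR-Bpara}: the relation \cref{eqn:PSR-prod} applied at steps $h-1$ and $H$ (or $h$), the factorization $\BB_{H:1}(\tau_H)=\BB_{H:h}(\tau_{h:H})\BB_{h-1:1}(\tau_{h-1})$, and division by $\PP(\tau_{h-1})$ with the zero-probability convention handling unreachable histories. Your bookkeeping of the edge cases and of $\bq(\tau_H)=1$ via $\cU_{H+1}=\set{o_{\dum}}$ matches what the paper's argument implicitly relies on.
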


\subsection{Weak B-stability condition}\label{appdx:weak-B-stability}

In this section, we define a weaker structural condition on PSRs, named the weak B-stability condition. In the remaining appendices, the proofs of our main sample complexity guarantees (Theorem \ref{thm:OMLE-PSR}, \ref{thm:E2D-PSR}, \ref{thm:e2d-rf-psr}, \ref{thm:mops-psr}) will then assume the less-stringent weak B-stability condition of PSRs. Therefore, these main results will hold under both \stabsn~(\cref{ass:cs-non-expansive-main-text}) and weak \stabsn~(\cref{def:non-expansive-gen}) simultaneously. %

To define weak B-stability, we first extend our definition of $\Pi$-norm to $\R^T$ for any set $T$ of tests. Recall that in \eqref{eqn:Pi-norm-main-text}, we have defined $\Pi$-norm on $\R^T$ with $T=(\cO\times\cA)^{H-h}$ (and in \eqref{eqn:12-Pip-norm}, the $\Pi'$-norm for $T=\Uh$).

\begin{definition}[$\Pi$-norm for general test set]\label{def:nrmpin}
For $T\subset \Test$, we equip $\R^T$ with $\nrmpin{\cdot}$ defined by
$$
\nrmpin{v}\defeq \max_{T'\subset T}\max_{\barpi} \sum_{t\in T'} \barpi(t)\abs{v(t)}, \qquad v\in\R^T
$$
where $\max_{T'\subset T}$ is taken over all subsets $T'$ of $T$ such that $T'$ satisfies the prefix condition: there is no two $t\neq t'\in T'$ such that $t$ is a prefix of $t'$.
\end{definition}
It is straightforward to see that, for any $v\in\R^{\Uh}$, we have $\lone{v}\geq \nrmpin{v}\geq \nrmpip{v}$

\begin{definition}[Weak B-stability]\label{def:non-expansive-gen}
A PSR is weakly B-stable with parameter $\stab\geq 1$ (henceforth weakly \stabs) if it admits a {\Bpara} and associated $\cB$-operators $\{\cB_{H:h}\}_{h \in [H]}$ such that, for any $h\in[H]$ and $\bp, \bq \in\R^{\Uh}_{\geq 0}$, we have\footnote{
Here we introduce the constant 2 in the square root in order for weak B-stability to be weaker than B-stability (\cref{ass:cs-non-expansive-main-text}). %
}
\begin{equation}\label{eqn:non-exp-weak}
\nrmpi{\cB_{H:h} (\bp-\bq)}\leq \stab\sqrt{2(\nrmpin{\bp}+\nrmpin{\bq})}\ltwo{\sqrt{\bp}-\sqrt{\bq}},
\end{equation}
\end{definition}

Despite the seemingly different form, we can show that the weak B-stability condition is indeed weaker than the B-stability condition. Furthermore, the converse also holds: the B-stability can be implied by the weak B-stability condition, if we are willing to pay a $\sqrt{2U_A}$ factor. This is given by the proposition below. 
\begin{proposition}\label{prop:non-exp-new}
If a PSR is B-stable with parameter $\stab$, then it is weakly B-stable with the same parameter $\stab$. Conversely, if a PSR is weakly B-stable with parameter $\stab$~(cf.~\cref{def:non-expansive-gen}), then it is B-stable with parameter $\sqrt{2U_A} \stab$.
\end{proposition}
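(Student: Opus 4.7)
The plan is to prove the two directions separately, both relying on the elementary factorization $|p(t)-q(t)| = |\sqrt{p(t)}-\sqrt{q(t)}|\cdot(\sqrt{p(t)}+\sqrt{q(t)})$ for $p,q\ge 0$, combined with Cauchy--Schwarz.

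\emph{Direction 1 (B-stable $\Rightarrow$ weakly B-stable with the same $\Lambda$).} Since B-stability already gives $\|\cB_{H:h}(p-q)\|_\Pi \le \Lambda\|p-q\|_*$, it suffices to prove the deterministic inequality
\begin{equation*}
\|p-q\|_* \le \sqrt{2(\|p\|_\Pi + \|q\|_\Pi)}\cdot \|\sqrt{p}-\sqrt{q}\|_2 \qquad\text{for all } p,q\in\R^{\Uh}_{\ge 0}.
\end{equation*}
Because $\|p-q\|_* = \max\{\|p-q\|_{1,2},\ \|p-q\|_{\Pi'}\}$, I would handle the two components in parallel. For $\|p-q\|_{\Pi'}$, weighted Cauchy--Schwarz (with weights $\bar\pi(t)$) applied to the factorization, together with $(\sqrt{a}+\sqrt{b})^2\le 2(a+b)$, gives $\|p-q\|_{\Pi'}\le \|\sqrt{p}-\sqrt{q}\|_2\cdot\sqrt{2(\|p\|_{\Pi'}+\|q\|_{\Pi'})}$; monotonicity $\|\cdot\|_{\Pi'}\le\|\cdot\|_\Pi$ (clear from Definition~\ref{def:nrmpin} by taking $T'=\oUh$) then upgrades this to the target. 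For $\|p-q\|_{1,2}$, the same factorization applied to each action slice yields $\bigl(\sum_o|p-q|(o,a)\bigr)^2\le 2\bigl(\sum_o(p+q)(o,a)\bigr)\cdot\sum_o(\sqrt{p}-\sqrt{q})^2(o,a)$; summing over $a$ and factoring out $\max_a\sum_o(p+q)(o,a)$ gives $\|p-q\|_{1,2}^2\le 2\max_a\sum_o(p+q)(o,a)\cdot \|\sqrt{p}-\sqrt{q}\|_2^2$. The final bookkeeping step $\max_a\sum_o p(o,a)\le \|p\|_\Pi$ would be verified by plugging $T'=\{(o,a^\star)\in\Uh:o\in\cO\}$ (pairwise non-prefix) and the deterministic policy $\bar\pi_h(a^\star|o)=1$ into Definition~\ref{def:nrmpin}.

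\emph{Direction 2 (weakly B-stable with $\Lambda$ $\Rightarrow$ B-stable with $\sqrt{2U_A}\,\Lambda$).} Given $v\in\R^{\Uh}$ with $\|v\|_*=1$, I would decompose $v=v_+-v_-$ into its positive and negative parts, which have disjoint supports. Applying the weak B-stability inequality to $(p,q)=(v_+,v_-)$ yields
\begin{equation*}
\|\cB_{H:h}v\|_\Pi \le \Lambda\sqrt{2(\|v_+\|_\Pi+\|v_-\|_\Pi)}\cdot\|\sqrt{v_+}-\sqrt{v_-}\|_2.
\end{equation*}
Disjoint supports imply $(\sqrt{v_+(t)}-\sqrt{v_-(t)})^2 = v_+(t)+v_-(t) = |v(t)|$ for every $t$, hence $\|\sqrt{v_+}-\sqrt{v_-}\|_2^2=\|v\|_1$; moreover $\|v_\pm\|_\Pi\le \|v_\pm\|_1$ (since $\bar\pi(t)\in[0,1]$ and we may enlarge the index set to the entire $\Uh$), so $\|v_+\|_\Pi+\|v_-\|_\Pi\le\|v_+\|_1+\|v_-\|_1=\|v\|_1$. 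Combining yields $\|\cB_{H:h}v\|_\Pi\le \sqrt{2}\,\Lambda\,\|v\|_1$, and the norm equivalence $\|v\|_1\le\sqrt{U_A}\|v\|_*$ (Lemma~\ref{lem:norm-equivalence-fused-ell1}) delivers the desired $\sqrt{2U_A}\,\Lambda$.

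The main care lies in Direction 1, where one must simultaneously manage both components of $\|\cdot\|_*$ and control $\max_a\sum_o p(o,a)$ by $\|p\|_\Pi$; both issues reduce to judicious choices of $(T',\bar\pi)$ in the maximization defining the extended $\Pi$-norm of Definition~\ref{def:nrmpin}. Direction 2 is then largely mechanical once the decomposition $v=v_+-v_-$ and the key identity on disjoint supports are in place.
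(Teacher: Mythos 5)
Your proposal is correct and follows essentially the same route as the paper's proof: Direction 1 is proved by reducing to the deterministic inequality $\nrmst{\bp-\bq}\le\sqrt{2(\nrmpin{\bp}+\nrmpin{\bq})}\,\ltwo{\sqrt{\bp}-\sqrt{\bq}}$, handling the $(1,2)$- and $\Pi'$-components separately via the factorization $|p-q|=|\sqrt p-\sqrt q|(\sqrt p+\sqrt q)$, Cauchy--Schwarz, and $(\sqrt a+\sqrt b)^2\le 2(a+b)$, with the slice sums controlled by $\nrmpin{\cdot}$; Direction 2 uses the positive/negative-part decomposition, the identity $\ltwo{\sqrt{v_+}-\sqrt{v_-}}^2=\lone{v}$ on disjoint supports, and \cref{lem:norm-equivalence-fused-ell1}. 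No gaps.
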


\begin{proof}[Proof of \cref{prop:non-exp-new}]
We first show that B-stability implies weak B-stability. Fix a $h\in[H]$. We only need to show that, for $\bp, \bq \in\R^{\Uh}_{\geq 0}$, we have
\begin{align}\label{eqn:non-exp-to-weak}
    \nrmst{\bp-\bq}\leq \sqrt{2(\nrmpin{\bp}+\nrmpin{\bq})}\ltwo{\sqrt{\bp}-\sqrt{\bq}}.
\end{align}
We show this inequality by showing the bound for the $(1, 2)$-norm and the $\Pi'$-norm separately. First, we have
\begin{align*}    \nrmonetwo{\bp-\bq}^2=&~\sum_{\a\in\UAh}\Big(\sum_{\o:(\o,\a)\in\Uh} \abs{\bp(\o,\a)-\bq(\o,\a)}\Big)^2\\
    \leq&~ \sum_{\a\in\UAh}\Big(\sum_{\o:(\o,\a)\in\Uh} \abs{\sqrt{\bp(\o,\a)}+\sqrt{\bq(\o,\a)}}^2\Big)\Big(\sum_{\o:(\o,\a)\in\Uh} \abs{\sqrt{\bp(\o,\a)}-\sqrt{\bq(\o,\a)}}^2\Big)\\
    \leq&~ 2\sum_{\a\in\UAh}\Big(\sum_{\o:(\o,\a)\in\Uh} \bp(\o,\a)+\bq(\o,\a)\Big)\Big(\sum_{\o:(\o,\a)\in\Uh} \abs{\sqrt{\bp(\o,\a)}-\sqrt{\bq(\o,\a)}}^2\Big)\\
    \leq&~ 2(\nrmpin{\bp}+\nrmpin{\bq})\sum_{\a\in\UAh}\sum_{\o:(\o,\a)\in\Uh} \abs{\sqrt{\bp(\o,\a)}-\sqrt{\bq(\o,\a)}}^2\\
    =&~ 2(\nrmpin{\bp}+\nrmpin{\bq})\ltwo{\sqrt{\bp}-\sqrt{\bq}}^2,
\end{align*}
where the first inequality is due to the Cauchy-Schwarz inequality; the second inequality is due to AM-GM inequality; the last inequality is because $\max_{\a \in \UAh} \sum_{\o:(\o,\a)\in\Uh} v(\o,\a)\leq \nrmpi{v}$. Next, we have
\begin{align*}
\nrmpip{\bp-\bq}^2
=&~
\max_{\pi}\Big(\sum_{t\in\oUh} \pi(t)\times\abs{\bp(t)-\bq(t)}\Big)^2\\
\leq&~
2\max_{\pi}\Big(\sum_{t\in\oUh} \pi(t)\paren{\bp(t)+\bq(t)}\Big)\Big(\sum_{t\in\oUh} \pi(t) \abs{\sqrt{\bp(t)}-\sqrt{\bq(t)}}^2\Big)\\
\leq&~
2\max_{\pi}\Big(\sum_{t\in\oUh} \pi(t)\paren{\bp(t)+\bq(t)}\Big)\Big(\sum_{t\in\Uh} \abs{\sqrt{\bp(t)}-\sqrt{\bq(t)}}^2\Big)\\
\leq &~ 2(\nrmpin{\bp}+\nrmpin{\bq})\ltwo{\sqrt{\bp}-\sqrt{\bq}}^2.
\end{align*}
Combining these two inequalities completes the proof of Eq. \cref{eqn:non-exp-to-weak}, which gives the first claim of \cref{prop:non-exp-new}.

Next, we show that weak B-stability implies B-stability up to a $\sqrt{2 U_A}$ factor. Fix a $h\in[H]$. For $x\in\R^\Uh$, we take $\bp=\brac{x}_{+}$, $\bq=\brac{x}_{-}$, then it suffices to show that  
\begin{equation}\label{eqn:B-stability-weak-B-stability-proof1}
\sqrt{2(\nrmpin{\bp}+\nrmpin{\bq})}\ltwo{\sqrt{\bp}-\sqrt{\bq}} \le \sqrt{2 U_A} \| x \|_*. 
\end{equation}
Indeed, we have
\begin{align*}
    &\ltwo{\sqrt{\bp}-\sqrt{\bq}}=\ltwo{\sqrt{\abs{x}}}=\sqrt{\lone{x}},\\
    &\nrmpin{\bp}+\nrmpin{\bq}\leq\lone{\brac{x}_{+}}+\lone{\brac{x}_{-}}=\lone{x}.
\end{align*}
This implies that 
\begin{align}\label{eqn:non-exp-gen-to-1}
    \sqrt{2(\nrmpin{\bp}+\nrmpin{\bq})}\ltwo{\sqrt{\bp}-\sqrt{\bq}} \le \sqrt{2} \| x \|_1. 
\end{align}
Applying \cref{lem:norm-equivalence-fused-ell1} completes the proof of Eq. \cref{eqn:B-stability-weak-B-stability-proof1}, and hence proves the second claim of \cref{prop:non-exp-new}. 
\end{proof}

\begin{lemma}\label{lem:norm-equivalence-fused-ell1}
Consider the fused-norm as defined in Eq. \cref{eqn:fused-norm}. For any $\bq \in \R^{\Uh}$, we have 
\[
\| \bq \|_* \le \| \bq \|_1 \le \nUAh^{1/2} \| \bq \|_*.
\]
\end{lemma}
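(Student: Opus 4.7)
The plan is to prove the two inequalities by handling each of the two norms $\nrmonetwo{\cdot}$ and $\nrmpip{\cdot}$ that constitute the fused-norm, and then combining via the maximum.

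For the first inequality $\nrm{\bq}_* \le \nrm{\bq}_1$, I would bound $\nrmonetwo{\bq}$ and $\nrmpip{\bq}$ separately by $\nrm{\bq}_1$. For $\nrmonetwo{\bq}$, set $x_\a \defeq \sum_{\o:(\o,\a)\in\Uh}|\bq(\o,\a)|$ so that $\nrmonetwo{\bq} = \ltwo{x}$; using $\ltwo{x} \le \lone{x}$ and that $\lone{x} = \sum_{(\o,\a)\in\Uh}|\bq(\o,\a)| = \lone{\bq}$, this part is immediate. For $\nrmpip{\bq}$, note that $\oUh$ is prefix-free by definition, so for any policy $\barpi$, the nonnegative weights $\{\barpi(t)\}_{t\in\oUh}$ sum to at most $1$ and each is in $[0,1]$; hence $\sum_{t\in\oUh}\barpi(t)|\bq(t)| \le \sum_{t\in\oUh}|\bq(t)| \le \lone{\bq}$, and taking the max over $\barpi$ gives $\nrmpip{\bq}\le \lone{\bq}$. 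Taking the maximum of these two bounds yields $\nrm{\bq}_* \le \lone{\bq}$.

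For the second inequality $\lone{\bq} \le \nUAh^{1/2}\nrm{\bq}_*$, I would apply Cauchy--Schwarz on the outer sum over $\a\in\UAh$ in the definition of $\nrmonetwo{\bq}$. With $x_\a$ as above,
\begin{equation*}
\lone{\bq} = \sum_{\a\in\UAh} x_\a \le \nUAh^{1/2}\Big(\sum_{\a\in\UAh} x_\a^2\Big)^{1/2} = \nUAh^{1/2}\,\nrmonetwo{\bq} \le \nUAh^{1/2}\,\nrm{\bq}_*,
\end{equation*}
where the last step uses $\nrmonetwo{\bq}\le \nrm{\bq}_*$ by definition of the fused-norm as a maximum.

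I do not expect any substantial obstacle: the whole argument is a two-line application of $\ell_2\le \ell_1\le \sqrt{n}\,\ell_2$ together with the observation that a prefix-free policy assigns total mass at most one, so the $\Pi'$-norm automatically obeys $\nrmpip{\bq}\le \lone{\bq}$. The only subtle point worth stating explicitly is why $\sum_{t\in\oUh}\barpi(t)\le 1$, which follows from the prefix condition defining $\oUh$ (distinct maximal tests correspond to disjoint events under deterministic action play).
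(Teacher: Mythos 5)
Your proof is correct and follows essentially the same route as the paper's: the bounds $\nrmonetwo{\bq}\le\lone{\bq}$ and $\nrmpip{\bq}\le\lone{\bq}$ for the first inequality, and Cauchy--Schwarz over $\a\in\UAh$ for the second. One small caveat: your closing remark that $\sum_{t\in\oUh}\barpi(t)\le 1$ is not actually needed (and need not hold, since $\barpi(t)$ is only a product of action probabilities, not the probability of observing $t$); the step you use only requires $\barpi(t)\in[0,1]$ termwise, which is all the argument relies on.
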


\begin{proof}[Proof of \cref{lem:norm-equivalence-fused-ell1}]
By definition, we clearly have $\nrmonetwo{\bq}\leq \lone{\bq}$ and $\nrmpip{\bq}\leq \lone{\bq}$. On the other hand, by Cauchy-Schwarz inequality,
\begin{align*}
    \lone{\bq}^2=\paren{\sum_{(\o,\a)\in\Uh} \abs{\bq(\o,\a)}}^2\leq \nUAh \sum_{\a\in\UAh} \paren{\sum_{\o:(\o,\a)\in\UAh} \abs{\bq(\o,\a)}}^2\leq \nUAh \nrmst{\bq}^2.
\end{align*}
Combining the inequalities above completes the proof. 
\end{proof}

\subsection{Proofs for Section~\ref{sec:relation}}
\label{appdx:proof-examples}

\subsubsection{Revealing POMDPs}\label{appendix:POMDP-rev}

We consider \cref{example:revealing-POMDPs}, and show that any $m$-step revealing POMDP admit a {\Bpara} that is B-stable. By definition, the initial predictive state is given by $\bq_0=\M_1 \mu_1$. For $h\leq H-m$, we take
\begin{align}\label{eqn:Bpara-rev-1}
    \B_h(o_h,a_h)=\M_{h+1}\T_{h,a_h}\diag(\O_{h}(o_h|\cdot))\M_{h}^+\in\R^{\Uhp\times\Uh},
\end{align}
where $\T_{h,a}\defeq \T_{h}(\cdot|\cdot,a)\in\R^{\cS\times\cS}$ is the transition matrix of action $a\in\cA$, and $\M_{h}^+$ is any left inverse of $\M_h$.
When $h>H-m$, we only need to take
\begin{align}\label{eqn:Bpara-rev-2}
    \BB_h(o_h,a_h)=\brac{\II\paren{t_h=(o_h,a_h,t_{h+1})}}_{(t_{h+1},t_h)\in\Uhp\times\Uh}\in\R^{\Uhp\times \Uh},
\end{align}
where $\II\paren{t_h=(o_h,a_h,t_{h+1})}$ is 1 if $t_h$ equals to $(o_h,a_h,\thp)$, and 0 otherwise. 

\begin{proposition}[Weakly revealing POMDPs are B-stable]
  \label{lemma:Bpara-rev}
For $m$-step revealing POMDP, \eqref{eqn:Bpara-rev-1} and \eqref{eqn:Bpara-rev-2} indeed give a \Bpara, which is B-stable with $\stab\leq \max_h \nrm{\M_h^+}_{*\to 1}$, where
$$
\nrm{\M_h^+}_{*\to 1}\defeq \max_{x\in\R^{U_h}: \nrmst{x}\leq 1}\lone{\M_h^+x}.
$$
Therefore, any $m$-step $\arev$-weakly revaling POMDP is B-stable~with $\stab\leq \sqrt{S}\arev^{-1}$ (by taking $+=\dagger$, using $\ltwo{\cdot}\leq\nrmst{\cdot}$, and $\lone{\cdot}\le \sqrt{S}\ltwo{\cdot}$). Similarly, any $m$-step $\arevone$ $\ell_1$-revealing POMDP is B-stable with $\stab\leq \sqrt{A^{m-1}}\arevone^{-1}$ (using $\lone{\cdot}\le \sqrt{U_A}\nrmst{\cdot}$ with $U_A=A^{m-1}$).
\end{proposition}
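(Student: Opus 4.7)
The plan is to carry out three steps: (i) verify that formulas \eqref{eqn:Bpara-rev-1}--\eqref{eqn:Bpara-rev-2} (together with $\bq_0 = \M_1 \mu_1$) constitute a valid \Bpara; (ii) bound the operator norm of each $\cB_{H:h}$ from $(\R^{\Uh}, \nrmst{\cdot})$ into the $\nrmpi{\cdot}$ space; and (iii) specialize the resulting bound $\stab \le \max_h \nrm{\M_h^+}_{*\to 1}$ to each of the two revealing conditions.

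For step (i), I exploit the POMDP latent-state structure. At any reachable $\tau_{h-1}$, the belief $b_h(\tau_{h-1})(s) \defeq \P(s_h = s \mid \tau_{h-1})$ satisfies $\bq(\tau_{h-1}) = \M_h b_h(\tau_{h-1})$, hence $\M_h^+ \bq(\tau_{h-1}) = b_h(\tau_{h-1})$ by the left-inverse property $\M_h^+ \M_h = I$. For $h \le H - m$, formula \eqref{eqn:Bpara-rev-1} then implements one Bayesian belief update, giving $\B_h(o_h,a_h)\bq(\tau_{h-1}) = \P(o_h|\tau_{h-1})\, \bq(\tau_h)$, which matches the characterization in~\cref{cor:Bpara-prob-meaning}. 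For $h > H - m$, tests in $\Uh$ are already full-length remaining observation-action sequences, so $\bq(\tau_{h-1})$ stores the conditional probabilities directly and \eqref{eqn:Bpara-rev-2} performs the corresponding prefix-based selection. Telescoping over $h$ then verifies \eqref{eqn:psr-op-prod-h}.

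For step (ii), the key is the uniform linearity identity
\begin{align*}
\B_h(o_h,a_h)\, \M_h \;=\; \M_{h+1}\, \T_{h,a_h}\, \diag(\O_h(o_h|\cdot)),
\end{align*}
which holds in the matrix case by the cancellation $\M_h^+\M_h = I$ and in the indicator case by direct unfolding using $[\M_h]_{(\o,\a),s} = \P(\o | s_h=s, \doac(\a))$. Iterating yields $\B_{H:h}(\tau_{h:H})\, \M_h\, x = \sum_{s} x(s)\,\P(\tau_{h:H}|s_h=s)$ for any signed $x \in \R^{\cS}$. For $h \le H-m$, the matrix form of $\B_h$ lets us factor $\cB_{H:h}\bq$ through $x \defeq \M_h^+\bq$; then triangle inequality together with the policy-marginalization identity $\sum_{\tau_{h:H}} \barpi(\tau_{h:H}) \P(\tau_{h:H}|s_h=s) = 1$ gives
\begin{align*}
\nrmpi{\cB_{H:h}\bq} \;\le\; \sum_s |x(s)| \;=\; \lone{\M_h^+ \bq} \;\le\; \nrm{\M_h^+}_{*\to 1}\nrmst{\bq}.
\end{align*}
For $h > H-m$, $\cB_{H:h}\bq$ is purely a prefix-selection of entries of $\bq$, so $\nrmpi{\cB_{H:h}\bq} \le \nrmpip{\bq} \le \nrmst{\bq}$. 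Combining both ranges of $h$ yields $\stab \le \max_h \nrm{\M_h^+}_{*\to 1}$.

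For step (iii), I specialize $\nrm{\M_h^+}_{*\to 1}$ to each revealing condition. In the $\arev$-revealing case, choose $+=\dagger$ and chain $\lone{\M_h^\dagger v} \le \sqrt{S}\ltwo{\M_h^\dagger v} \le \sqrt{S}\arev^{-1}\ltwo{v} \le \sqrt{S}\arev^{-1}\nrmst{v}$, where the last step uses the bound $\ltwo{v}^2 \le \sum_{\a}(\sum_{\o}|v(\o,\a)|)^2 = \nrmonetwo{v}^2 \le \nrmst{v}^2$; this gives $\stab \le \sqrt{S}\,\arev^{-1}$. In the $\arevone$ $\ell_1$-revealing case, choose a left inverse with $\moneone{\M_h^+} \le \arevone^{-1}$ and chain $\lone{\M_h^+ v} \le \arevone^{-1}\lone{v} \le \arevone^{-1}\sqrt{A^{m-1}}\nrmst{v}$ via~\cref{lem:norm-equivalence-fused-ell1} and $\nUA = A^{m-1}$, yielding $\stab \le \sqrt{A^{m-1}}\,\arevone^{-1}$. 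The main technical obstacle is the linearity identity across the matrix-to-indicator boundary at step $h = H-m+1$; handling it uniformly through the single identity $\B_h(o_h,a_h)\M_h = \M_{h+1}\T_{h,a_h}\diag(\O_h(o_h|\cdot))$ avoids needing to treat the two regimes of $h$ with separate inductions.
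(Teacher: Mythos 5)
Your proposal is correct and follows essentially the same route as the paper, which proves this statement as a special case of the more general \cref{prop:Bpara-rev-lr}: the telescoping cancellation $\M_h^+\M_h = I$, the key representation $\B_{H:h}(\tau_{h:H})\bq = \sum_s (\M_h^+\bq)(s)\,\P(\tau_{h:H}\mid s_h=s)$ with the policy-marginalization bound (cf.\ \cref{lemma:non-exp-lr-reveal}), the prefix-selection argument for $h>H-m$, and the two norm chains in step (iii) all match the paper's argument. The only cosmetic difference is that you phrase the verification of the {\Bpara} via belief updates at reachable histories rather than the paper's direct unnormalized matrix telescoping, which changes nothing of substance.
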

For succinctness, we only provide the proof of a more general result (\cref{prop:Bpara-rev-lr}). Besides, by a similar argument, we can also show that the parameter $\rb$ that appears in \cref{thm:OMLE-PSR} can be bounded by $\rb\leq \arev^{-1}A^m$ (for $\arev$-weakly revealing POMDP) or $\rb\leq \arev^{-1}A^m$ (for $\arevone$ $\ell_1$-revealing POMDP, see e.g. \cref{lemma:non-exp-lr-reveal}).

\subsubsection{Latent MDPs}
\label{appendix:latent-mdp}

In this section, we follow \cite{kwon2021rl} to show that latent MDPs as a sub-class of POMDPs, and then obtain the sample complexity for learning latent MDPs of our algorithms.

\newcommand{\oS}{\overline{\cS}}
\newcommand{\os}{\bar{s}}
\begin{example}[Latent MDP]
A latent MDP $M$ is specified by a tuple $\set{\cS,\cA,(M_m)_{m=1}^N,H,\nu}$, where $M_1,\cdots,M_N$ are $N$ MDPs with joint state space $\cS$, joint action space $\cA$, horizon $H$, and $\nu\in\Delta([N])$ is the mixing distribution over $M_1,\cdots,M_N$. For $m\in[N]$, the transition dynamic of $M_m$ is specified by $(\T_{h,m}:\cS\times\cA\to\Delta(\cS))_{h=1}^H$ along with the initial state distribution $\mu_m$, and at step $h$ the binary random reward $r_{h,m}$ is generated according to probability $R_{h,m}:\cS\times\cA\to[0,1]$.
\end{example}

Clearly, $M$ can be casted into a POMDP $M'$ with state space $\oS=[N]\times\cS\times\set{0,1}$ and observation space $\cO=\cS\times\set{0,1}$ by considering the latent state being $\os_h=(s_h,r_h,m)\in\oS$ and observation being $o_h=(s_h,r_{h-1})\in\cO$. More specifically, at the start of each episode, the environment generates a $m\sim \nu$ and a state $s\sim \mu_m$, then the initial latent state is $\os_1=(m,s,0)$ and $o_1=(s,0)$; at each step $h$, the agent takes $a_h$ after receiving $o_h$, then the environment generates $r_h\in\set{0,1}$, $\os_{h+1}$ and $o_{h+1}$ according to $(\os_h,a_h)$: $r_h=1$ with probability $R_{h,m}(s_h,a_h)$\footnote{
Note that under such formulation, $M'$ has deterministic rewards.
}, $s_{h+1}\sim \T_{h,m}(\cdot|s_h,a_h)$, $\os_{h+1}=(m,s_{h+1},r_{h})$ and $o_{h+1}=(s_{h+1},r_h)$. \footnote{The terminal state $s_{H+1}$ is a dummy state.}%

In a latent MDP, we denote $T_h$ to be the set of all possible sequences of the form $(a_h,r_h,s_{h+1},\cdots,a_{h+l-1},r_{h+l-1},s_{h+l})$ (called a \emph{test} in~\citep{kwon2021rl}). For $h\leq H-l+1$, $t=(a_h,r_h,s_{h+1},\cdots,a_{h+l-1},r_{h+l-1},s_{h+l})\in T_h$ and $s\in\cS$, we can define
\begin{align*}
\PP_{h,m}(t|s)=\PP_m(r_h,s_{h+1},\cdots,r_{h+l-1},s_{h+l}|s_h=s,\doac(a_h,\cdots,a_{h+l-1})),
\end{align*}
where $\PP_m$ stands for the probability distribution under MDP $M_m$. 

\begin{definition}[Sufficient tests for latent MDP]
A latent MDP $M$ is said to be $l$-step test-sufficient, if for $h\in[H-l+1]$ and $s\in\cS$, the matrix $L_h(s)$ given by 
\begin{align*}
    L_h(s)\defeq \brac{ \PP_{h,m}(t|s) }_{(t,m)\in T_h\times [N]}\in\R^{T_h\times N}
\end{align*}
has rank $\dLMDP$. $M$ is $l$-step $\sigma$-test-sufficient if $\sigma_{\dLMDP}(L_h(s))\geq \sigma$ for all $h\in[H-l+1]$ and $s\in\cS$.
\end{definition}
Under test sufficiency, the latent MDP is an $(l+1)$-step $\sigma$-weakly revealing POMDP, as shown in~\citep[Lemma 12]{zhan2022pac}. Hence, as a corollary of~\cref{lemma:Bpara-rev}, using the fact that $\abs{\oS}= 2SN$, we have the following result. 
\begin{proposition}[Latent MDPs are B-stable]
\label{lem:LMDP-stablity}
For an $l$-step $\sigma$-test-sufficient latent MDP $M$, its equivalent POMDP $M'$ is $(l+1)$-step $\sigma$-weakly revealing, and thus B-stable with $\stab\leq \sqrt{2S\dLMDP}\sigma^{-1}$.
\end{proposition}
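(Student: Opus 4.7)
The plan is to establish the proposition in two steps: (i) cast the latent MDP $M$ as the POMDP $M'$ and verify the $(l+1)$-step $\sigma$-weakly revealing property of $M'$, then (ii) invoke \cref{lemma:Bpara-rev} to obtain the desired $\stab$ bound, taking care to track why the final bound scales with the effective rank $\dLMDP$ rather than the ambient number of components $N$.

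For step (i), I would follow the construction of~\citet[Lemma 12]{zhan2022pac}. The $(l+1)$-step emission-action matrix $\M_h\in\R^{(\cO\times\cA)^l\times\cO \times \oS}$ of $M'$ admits a block structure: at latent state $\os=(m,s,r)$, the probability of emitting $(\o,\a)=(o_h,a_h,\dots,o_{h+l})$ decomposes as (a deterministic factor that depends on the first coordinate $s$ and reward bit $r$) times $\PP_{h+1,m}(t|s)$ for the test $t \in T_{h+1}$ induced by $(\o,\a)$. Hence each block of $\M_h$ associated with fixed $(s,r)$ and varying $m$ coincides, up to column reindexing, with the matrix $L_{h+1}(s)$. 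The $l$-step $\sigma$-test-sufficiency hypothesis then implies $\sigma_{\dLMDP}(L_{h+1}(s))\ge \sigma$ for all $s$, which in turn gives a pseudo-inverse $\M_h^\dagger$ of operator norm at most $\sigma^{-1}$ when restricted to the range of $\M_h$. This is exactly what is required to call $M'$ an $(l+1)$-step $\sigma$-weakly revealing POMDP.

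For step (ii), I would feed the above into \cref{lemma:Bpara-rev}, which produces a B-parametrization of $M'$ with $\stab\le \max_h \nrm{\M_h^+}_{*\to 1}$ using $\M_h^+=\M_h^\dagger$. The bound $\nrm{\M_h^\dagger}_{*\to 1}\le \sqrt{\rank(\M_h)}\,\sigma^{-1}$ follows from $\nrmst{\cdot}\ge \ltwo{\cdot}$ and $\lone{x}\le \sqrt{d}\,\ltwo{x}$ for any $x$ supported on a $d$-dimensional subspace. The key observation—and the source of the $\dLMDP$ (instead of $N$) in the final bound—is that $\rank(\M_h)\le 2S\dLMDP$: by the block structure identified above, the columns of $\M_h$ indexed by $(m,s,r)\in[N]\times\cS\times\{0,1\}$ lie, for each fixed $(s,r)$, in a $\dLMDP$-dimensional subspace (namely $\colspan{L_{h+1}(s)}$), so the full column span has dimension at most $2S\dLMDP$. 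Combining gives $\stab\le \sqrt{2S\dLMDP}\,\sigma^{-1}$.

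The main obstacle is a clean verification of the block/column-span claim, namely that the columns of $\M_h$ at fixed $(s,r)$ indeed factor through $L_{h+1}(s)$. This is essentially a bookkeeping task: one must carefully relate the POMDP transition (which deterministically propagates the latent identity $m$) to the test probabilities $\PP_{h+1,m}(t|s)$, and check that the reward-bit $r$ appears only as a multiplicative factor independent of $m$, so that it does not inflate the rank. Once this factorization is in hand, the inequality chain in step (ii) is immediate, and no delicate analysis of the interior algorithm (OMLE/E2D/MOPS) is needed since \cref{lemma:Bpara-rev} already encapsulates all of the B-stability machinery for revealing POMDPs.
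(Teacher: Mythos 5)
Your overall route is the same as the paper's: exhibit the block structure of the $(l+1)$-step emission-action matrix $\M_h$ of $M'$ (block-diagonal, up to permutation, with each $L_h(s)$ appearing once per reward bit), deduce $\|\M_h^\dagger\|_{2\to2}\le\max_{h,s}\|L_h(s)^\dagger\|_{2\to2}\le\sigma^{-1}$, and then invoke \cref{lemma:Bpara-rev}. The conclusion is correct, but one justification in your step (ii) is not: the inequality $\lone{x}\le\sqrt{d}\,\ltwo{x}$ holds for $d$-\emph{sparse} vectors, not for vectors lying in a $d$-dimensional subspace (take $d=1$ and the all-ones direction in $\R^n$ to see the ratio is $\sqrt{n}$). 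So your "rank-based" refinement $\nrm{\M_h^\dagger}_{*\to1}\le\sqrt{\rank(\M_h)}\,\sigma^{-1}$ is not a valid general argument. It happens to give the right answer here only because the paper's $\dLMDP$ is by definition the number of components $N$ (test-sufficiency requires each $L_h(s)\in\R^{T_h\times N}$ to have full column rank $N$ with $\sigma_{N}(L_h(s))\ge\sigma$), so $\rank(\M_h)=2SN=\abs{\oS}$ coincides with the ambient latent-state dimension, and the bound reduces to the plain Cauchy--Schwarz step $\lone{\cdot}\le\sqrt{\abs{\oS}}\,\ltwo{\cdot}$ that \cref{lemma:Bpara-rev} already uses. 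In other words, the $\sqrt{2S\dLMDP}$ in the statement is just $\sqrt{\abs{\oS}}=\sqrt{2SN}$; it is not an effective-rank improvement over the ambient dimension, and if one tried to define $\dLMDP<N$ as a genuine rank, your argument as written would not go through.
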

Therefore, by a similar reasoning to $m$-step revealing POMDPs in~\cref{sec:examples} (and \cref{appendix:POMDP-rev}), our algorithms \omle/\eetod/\mops~can achieve a sample complexity of 
$$
\tbO{\frac{S^2 \dLMDP^2A^{l+2}H^2\logNt} {\sigma^{2}\epsilon^{2}}}
$$
for learning $\epsilon$-optimal policy, where $\Theta$ is the class of all such latent MDPs. 
Further, the optimistic covering number of $\Theta$ can be bounded as (similar as~\cite[Appendix B]{liu2022partially} and \cref{appendix:linear-pomdp})
$$
\logNt(\rho)\leq \tbO{\dLMDP S^2AH}.
$$
Thus, we achieve a $\tbO{S^4 \dLMDP^3A^{l+3}H^3\sigma^{-2} \epsilon^{-2}}$ sample complexity. This improves over the result of \cite{kwon2021rl} who requires extra assumptions including reachability, a gap between the $N$ MDP transitions, and full rank condition of histories \cite[Condition 2.2]{kwon2021rl}. Besides, our result does not require extra assumptions on core histories---which is needed for deriving sample complexities from the $\apsr$-regularity of \citep{zhan2022pac}---which could be rather unnatural for latent MDPs.

We remark that the argument above can be generalized to low-rank latent MDPs\footnote{
A latent MDP $M$ has transition rank $d$ if each $M_m$ has rank $d$ as a linear MDP~\citep{jin2020provably}.
} straightforwardly, achieving a sample complexity of $\tbO{\dtrans^2 \dLMDP^2A^{l+2}H^2\logNt/\sigma^{2}\epsilon^{2}}$. For more details, see \cref{sec:low-rank-POMDP}.

\begin{proof}[Proof of \cref{lem:LMDP-stablity}]
As is pointed out by \cite{zhan2022pac}, the $(l+1)$-step emission matrix of $M'$ has a relatively simple form: notice that for $h\in[H-l+1]$, $\os=(m,s,r)\in\oS$ and $t_h=(o_h,a_h,\cdots,o_{h+l})\in\Uh$ (with $o_{h+1}=(s_{h+1},r_{h}),\cdots, o_{h+l}=(s_{h+l},r_{h+l-1})$), we have
\begin{align*}
    \M_{h}(t,s)=\II( o_h=(s_h,r_{h-1}))\PP_m(r_h,s_{h+1},\cdots,r_{h+l-1},s_{h+l}|s_h=s,\doac(a_h,\cdots,a_{h+l-1})),
\end{align*}
where $\II(o_h=(s_h,r_{h-1}))$ is $1$ when $o_h=(s_h,r_{h-1})$ and $0$ otherwise. Therefore, up to some permutation, $\M_h$ has the form
\begin{align*}
    \M_h=\begin{bmatrix}
    L_h({s^{(1)}}) & & & & & & \\
    & L_h({s^{(1)}}) & & & & & \\
    & & L_h({s^{(2)}}) & & & & \\
    & & & L_h({s^{(2)}}) & & & \\
    & & & & \ddots & & \\
    & & & & & L_h({s^{(\abs{\cS})}}) &\\
    & & & & & & L_h({s^{(\abs{\cS})}})\\
    \end{bmatrix},
\end{align*}
where $\set{s^{(1)},s^{(2)},\cdots,s^{(\abs{\cS})}}$ is an ordering of $\cS$. Therefore, it follows from definition that $\|\M_h^\dagger\|_{2\to 2}\leq \max_{h,s} \|L_h(s)^\dagger\|_{2\to 2}\leq \sigma^{-1}$. Applying \cref{lemma:Bpara-rev} completes the proof.
\end{proof}

\subsubsection{Low-rank POMDPs with future sufficiency}
\label{sec:low-rank-POMDP}

In this section, we provide a detailed discussion of low-rank POMDPs and $m$-step future sufficiency condition mentioned in \cref{example:future-sufficiency}. We present a slightly generalized version of the $m$-step future sufficiency condition defined in~\citep{wang2022embed}; see also~\citep{cai2022reinforcement}. %

\renewcommand{\dlrh}{\dtrans}
\newcommand{\dlrz}{\dtrans}
\newcommand{\dlrhm}{\dtrans}

For low-rank POMDPs, we now state a slightly more relaxed version of the future-sufficiency condition defined in~\citep{wang2022embed}. Recall the $m$-step emission-action matrices $\M_h\in\R^{\Uh\times\cS}$ defined in~\cref{eqn:def-m-emi}. 
\begin{definition}[$m$-step $\nu$-future-sufficient POMDP]
\label{def:general-future-sufficiency}
We say a low-rank POMDP is \emph{$m$-step $\nu$-future-sufficient} if for $h\in[H]$, $\min_{\M_h^\inv} \|\M_h^\inv\|_{1\to 1}\leq \nu$, where $\min_{\M_h^\inv}$ is taken over all possible $\M_h^\inv$'s such that $\M_h^\inv\M_h\T_{h-1}=\T_{h-1}$.
\end{definition}

\citet{wang2022embed} consider a factorization of the latent transition: $\T_h=\Psi_{h}\Phi_h$ with $\Psi_{h}\in\R^{\cS\times \dlrh}, \Phi_{h}\in\R^{\dlrh \times (\cS\times\cA)}$ for $h\in[H]$, and assumes that $\|\M_h^\inv\|_{1\to 1}\le \nu$ with the specific choice $\M_h^\inv=\Psi_{h-1} (\M_h\Psi_{h-1})^\dagger$ (note that it is taking an exact pseudo-inverse instead of any general left inverse). It is straightforward to check that this choice indeed satisfies $\M_h^\inv\M_h\T_h=\T_h$, using which~\cref{def:general-future-sufficiency} recovers the definition of~\citet{wang2022embed}. It also encompasses the setting of~\citet{cai2022reinforcement} ($m=1$).

We show that the following (along with \eqref{eqn:Bpara-rev-2}) gives a {\Bpara} for the POMDP:\footnote{
For simplicity, we write $\T_{h,a}\defeq \T_{h}(\cdot|\cdot,a)\in\R^{\cS\times\cS}$ the transition matrix of action $a\in\cA$.
}
\begin{align}\label{eqn:Bpara-rev-lr}
    \B_h(o,a)=\M_{h+1}\T_{h,a}\diag\paren{\O_h(o|\cdot)}\M_h^\inv, \qquad h\in[H-m].
\end{align}
This generalizes the choice of {\Bpara} in~\cref{eqn:Bpara-rev-1} for (tabular) revealing POMDPs, as the matrix $\M_h^\inv$ can be thought of as a ``generalized pseudo-inverse'' of $\M_h$ that is aware of the subspace spanned by $\T_{h-1}$. This choice is more suitable when $\cS$ or $\cO$ are extremely large, in which case the vanilla pseudo-inverse $\M_h^\dagger$ may not be bounded in $\moneone{\cdot}$ norm. In the tabular case, setting $\inv=\dagger$ in \cref{eqn:Bpara-rev-lr} recovers~\cref{eqn:Bpara-rev-1}.%

\begin{proposition}[Future-sufficient low-rank POMDPs are B-stable]
\label{prop:Bpara-rev-lr}
The operators $(\B_h(o,a))_{h,o,a}$ given by~\cref{eqn:Bpara-rev-lr} (with the case $h>H-m$ given by \eqref{eqn:Bpara-rev-2}) is indeed a {\Bpara}, and it is B-stable with $\stab\leq \sqrt{A^{m-1}}\max_h\|\M_h^\inv\|_1$. As a corollary, any $m$-step $\nu$-future-sufficient low-rank POMDP admits a \Bpara~with $\stab\le \sqrt{A^{m-1}}\nu$ (and also $\rb\leq A^m\nu$). %
\end{proposition}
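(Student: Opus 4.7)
The proposition has two parts: (i) $\{(\B_h(o,a))_{h,o,a}, \bq_0 = \M_1 \mu_1\}$ defines a valid \Bpara, and (ii) B-stability with $\stab \leq \sqrt{A^{m-1}}\max_h \|\M_h^\inv\|_1$. The plan follows the template of~\cref{lemma:Bpara-rev} (revealing POMDPs), with the subspace-aware left inverse $\M_h^\inv$ playing the role of the pseudo-inverse $\M_h^\dagger$; the future-sufficiency identity $\M_h^\inv \M_h \T_{h-1} = \T_{h-1}$ is what allows the argument to go through in the low-rank setting.

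For part (i), I would verify the one-step identity
\begin{align*}
\B_h(o,a)\bq(\tau_{h-1}) = \P(o_h = o \mid \tau_{h-1})\,\bq(\tau_{h-1}, o, a),
\end{align*}
from which~\eqref{eqn:psr-op-prod-h} follows by iteration (as in the proof of \cref{prop:equivalence-PSR-Bpara}). Writing $\bq(\tau_{h-1}) = \M_h b_h(\tau_{h-1})$ with $b_h(\tau_{h-1})(s) \defeq \P(s_h = s \mid \tau_{h-1})$ the belief vector, and using the standard Bayes belief update $\P(o_h = o | \tau_{h-1})\,b_{h+1}(\tau_{h-1}, o, a) = \T_{h,a}\diag(\O_h(o|\cdot))\,b_h(\tau_{h-1})$, the identity reduces to $\M_h^\inv \M_h b_h(\tau_{h-1}) = b_h(\tau_{h-1})$. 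For $h \geq 2$, $b_h(\tau_{h-1}) \propto \T_{h-1, a_{h-1}}\bigl(\diag(\O_{h-1}(o_{h-1}|\cdot)) b_{h-1}(\tau_{h-2})\bigr)$ lies in $\colspan(\T_{h-1})$, so the hypothesis $\M_h^\inv \M_h \T_{h-1} = \T_{h-1}$ gives the claim; the $h = 1$ case amounts to $\M_1^\inv \M_1 \mu_1 = \mu_1$, the natural boundary version of the assumption. The indicator case $h > H - m$ is immediate.

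For part (ii), I split the bound on $\|\cB_{H:h}\bq\|_\Pi$ into the boundary case $h > H - m$ and the main case $h \leq H - m$. In the boundary case, iterating the indicator \Bpara~yields $\cB_{H:h}\bq(\tau_{h:H}) = \bq(o_h, a_h, \ldots, o_H)$, independent of $a_H$, and summing out $\bar\pi_H(a_H|\cdot) = 1$ gives $\|\cB_{H:h}\bq\|_\Pi \leq \nrmpip{\bq} \leq \nrmst{\bq}$. In the main case, set $b \defeq \M_h^\inv\bq \in \R^\cS$; iterating the operator product and collapsing each intermediate $\M_j^\inv\M_j$ via the future-sufficiency identity (justified because the running intermediate vector is of the form $\T_{j-1, a_{j-1}}(\cdot)$ and hence lies in $\colspan(\T_{j-1})$) yields
\begin{align*}
\cB_{H:h}\bq(\tau_{h:H}) = \sum_s \P\bigl(o_{H-m+1:H} \mid s_{H-m+1} = s,\, \doac(a_{H-m+1:H-1})\bigr) \cdot [F_{H-m:h}(\tau_{h:H-m})\,b](s),
\end{align*}
where $F_{H-m:h}(\tau_{h:H-m}) \defeq \T_{H-m,a_{H-m}}\diag(\O_{H-m}(o_{H-m}|\cdot))\cdots \T_{h,a_h}\diag(\O_h(o_h|\cdot))$. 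Decomposing $b = b^+ - b^-$ with $b^\pm \geq 0$ and applying the triangle inequality, I would then invoke two successive total-probability collapses: (a) for fixed $s$ and $\tau_{h:H-m}$, the sum $\sum_{o_{H-m+1:H}, a_{H-m+1:H}}\bar\pi(\cdot|\tau_{h:H-m})\,\P(o_{H-m+1:H}|s_{H-m+1}=s, \doac(\cdot))$ equals $1$ as the total probability of $\bar\pi$'s rollout from latent state $s$; (b) since $F_{H-m:h}(\tau_{h:H-m})b^\pm$ is an unnormalized belief at step $H-m+1$ under initial belief $b^\pm$, one has $\sum_{\tau_{h:H-m}}\bar\pi(\tau_{h:H-m})\|F_{H-m:h}(\tau_{h:H-m})b^\pm\|_1 = \|b^\pm\|_1$. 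Combining, $\|\cB_{H:h}\bq\|_\Pi \leq \|b\|_1 \leq \|\M_h^\inv\|_1\|\bq\|_1 \leq \|\M_h^\inv\|_1\sqrt{\nUA}\,\nrmst{\bq}$, using \cref{lem:norm-equivalence-fused-ell1} and $\nUA = A^{m-1}$.

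The main obstacle is the telescoping of $\M_j^\inv \M_j$ factors in the main case, which must be justified inductively by showing each running vector lies in $\colspan(\T_{j-1})$ so that the future-sufficiency identity applies; this follows because each update ends with an application of $\T_{j-1, a_{j-1}}$. The supplementary bound $\rb \leq A^m \nu$ follows from a direct computation of $\sum_{o,a}\|\B_h(o,a) v\|_1$: writing $w = \M_h^\inv v$ (so $\|w\|_1 \leq \nu\|v\|_1$) and using $\|\M_{h+1}\|_{1\to 1} = A^{m-1}$ (since columns of $\M_{h+1}$ sum to $\sum_{a_{h+1:h+m-1}}1 = A^{m-1}$), column-stochasticity of $\T_{h,a}$, and $\sum_o \O_h(o|s) = 1$, one obtains the $A^m\nu$ bound in the main case; the boundary case $h > H - m$ trivially gives $\sum_{o,a}\|\B_h(o,a) v\|_1 = \|v\|_1$.
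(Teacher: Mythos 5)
Your proposal is correct and follows essentially the same route as the paper's proof of \cref{prop:Bpara-rev-lr} (via \cref{lemma:non-exp-lr-reveal}): telescoping the future-sufficiency identity $\M_l^\inv\M_l\T_{l-1}=\T_{l-1}$ to reduce $\B_{H:h}(\tau_{h:H})$ to $\sum_s \P(\tau_{h:H}\mid s_h=s)\,\e_s^\top\M_h^\inv$, collapsing by total probability to get $\nrmpi{\cB_{H:h}\bq}\le\lone{\M_h^\inv\bq}$, and converting to the fused norm via \cref{lem:norm-equivalence-fused-ell1}, with the same treatment of the boundary case and of $\rb$. The only differences are organizational (a one-step belief-update induction for the \Bpara~verification instead of a direct product computation, and a two-stage probability collapse split at step $H-m$), neither of which changes the substance.
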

Combining \cref{prop:Bpara-rev-lr} and \cref{alg:E2D-exp} gives the sample complexity guarantee of \cref{alg:E2D-exp} for future sufficient POMDP. 
For Algorithm OMLE, combining $\rb\leq A^m\nu$ with \cref{thm:OMLE-PSR} establishes the sample complexity of OMLE, as claimed in~\cref{sec:examples}.

\begin{proof}
[Proof of~\cref{prop:Bpara-rev-lr}]
First, we verify \eqref{eqn:psr-op-prod-h} for $0\leq h\leq H-m$. In this case, for $t_{h+1}\in\Uhp$, we have\footnote{
For the clarity of presentation, in this section we adopt the following notation: for operator $(\cL_n)_{n\in\NN}$, we write $\prod_{h=n}^{m}\cL_h=\cL_m\circ\cdots\circ\cL_{n}$.
}
\begin{align}\label{eqn:Bpara-rev-cancel}
\begin{split}
    \MoveEqLeft
    \e_{\thp}^\top \B_{h:1}(\tau_h)\bq_0
    = \e_{\thp}^\top \prod_{l=1}^h\brac{\M_{l+1}\T_{l,a_l}\diag\paren{\O_l(o_l|\cdot)}\M_l^\inv} \M_1\mu_1\\
    \stackrel{(i)}{=}& \e_{\thp}^\top\M_{h+1}\T_{h,a_h}\diag\paren{\O_h(o_h|\cdot)}\cdots \T_{1,a_1}\diag\paren{\O_1(o_1|\cdot)}\mu_1\\
    \stackrel{(ii)}{=}&\sum_{s_1,s_2,\cdots,s_{h+1}} \P(t_{h+1}|s_{h+1})\T_{h,a_h}(s_{h+1}|s_h)\O_h(o_h|s_h)\cdots\T_1(s_2|s_1)\O_{1}(o_1|s_1)\mu_1(s_1)\\
    =& \PP(\tau_h,t_{h+1}),
\end{split}
\end{align}
where (i) is due to $\M_l^\inv\M_l\T_l=\T_l$ for $1\leq l\leq h$, in (ii) we use the definition \eqref{eqn:def-m-emi} to deduce that the $(t_{h+1},s_{h+1})$-entry of $\M_{h+1}$ is $\P(t_{h+1}|s_{h+1})$.

Finally, we verify \eqref{eqn:psr-op-prod-h} for $H-m <h< H$. In this case, $\Uhp=\cO^{H-h}\times \cA^{H-h-1}$, and hence for $\tau_{h}=(o_1,a_1,\cdots,o_h,a_h)$, $\thp=(o_{h+1},a_{h+1},\cdots,o_H)\in\Uhp$, we consider $t_{H-m+1}=(o_{H-m+1},a_{H-m+1},\cdot,o_H)$:
\begin{align*}
    \e_{\thp}^\top \B_{h:1}(\tau_h)\bq_0
    =\e_{t_{H-m+1}}^\top \B_{H-m:1}(\tau_{H-m})\bq_0=\PP(t_{H-m+1}, \tau_{H-m})=\PP(\thp, \tau_h).
\end{align*}

It remains to verify that the \Bpara~is \stabs~with $\stab\leq \sqrt{A^{m-1}}\nu$ and $\rb\leq A^m\nu$, we invoke the following lemma.
\begin{lemma}\label{lemma:non-exp-lr-reveal}
For $1\leq h\leq H$, $x\in\R^{\nUh}$, it holds that
\begin{align*}
    \nrmpi{\cB_{H:h} x} = \max_{\pi}\sum_{\tau_{h:H}} \nrm{ \B_{H}(o_{H},a_{H})\cdots\B_{h}(o_h,a_h) x }_1\times \pi(\tau_{h:H}) 
    \le \max\set{\lone{\M_h^\inv x}, \nrmpi{x}}.
\end{align*}
Similarly, we have $\sum_{o,a}\lone{\B_h(o,a)v}\leq \max\set{ A^m\lone{\M_h^\inv v}, A\lone{v}}$.
\end{lemma}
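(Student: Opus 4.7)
I would split on whether $h \le H-m$ or $h > H-m$, corresponding to which clause of the $\max$ on the right-hand side is attained, and prove each bound separately.

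\textbf{Case $h > H-m$ (reduces to induction on the indicator structure).} In this regime, every $\B_{h'}(o,a)$ for $h \le h' \le H$ is the indicator operator from~\eqref{eqn:Bpara-rev-2}, so $[\B_{h'}(o,a)y]_{t_{h'+1}} = y(o,a,t_{h'+1})$. Iterating, $\B_{H:h}(\tau_{h:H})x$ simply equals the entry $x(t_h)$ where $t_h \in \Uh$ is determined by $\tau_{h:H}$. I would then perform a backward induction on $h$ (starting from $h=H$, where the claim is immediate): assuming the bound for $h+1$, decompose $\nrmpi{\cB_{H:h}x}$ by summing over $(o_h,a_h)$ first, apply the inductive hypothesis to the slice $x(o_h,a_h,\cdot)\in \R^{\Uhp}$ under the general $\Pi$-norm of~\cref{def:nrmpin}, and recognize the resulting expression as exactly $\nrmpin{x}$ (the maximizing policies at each step can be concatenated since step-$(h+1)$ policies may freely depend on $(o_h,a_h)$). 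This yields $\nrmpi{\cB_{H:h}x}\le \nrmpin{x}$.

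\textbf{Case $h \le H-m$ (main algebraic work).} Here I would explicitly unroll the product $\B_{H:h}(\tau_{h:H})x$. Applying $\B_{h'}(o_{h'},a_{h'})=\M_{h'+1}\T_{h',a_{h'}}\diag(\O_{h'}(o_{h'}|\cdot))\M_{h'}^\inv$ sequentially for $h'=h,\dots,H-m$, the interior factors $\M_{h'+1}^\inv \M_{h'+1}$ act on vectors of the form $\T_{h',a_{h'}}(\cdot)$ and therefore collapse using the defining identity $\M_{h'+1}^\inv \M_{h'+1} \T_{h'}=\T_{h'}$, exactly as in~\eqref{eqn:Bpara-rev-cancel}. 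Setting $z\defeq \M_h^\inv x\in\R^{\cS}$, the result is
\[
    \B_{H:h}(\tau_{h:H})x = \sum_s \P(o_{H-m+1:H}\mid s_{H-m+1}=s,\,\doac(a_{H-m+1:H-1}))\cdot [F^{\tau_{h:H-m}} z]_s,
\]
where $F^{\tau_{h:H-m}}\defeq \T_{H-m,a_{H-m}}\diag(\O_{H-m}(o_{H-m}|\cdot))\cdots \T_{h,a_h}\diag(\O_h(o_h|\cdot))$ is the standard forward operator (whose entry-wise meaning is $[F^{\tau_{h:H-m}}]_{s,s'} = \P(o_{h:H-m},\,s_{H-m+1}=s \mid s_h=s',\doac(a_{h:H-m}))$). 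Moving absolute values inside both sums and marginalizing over the intermediate state $s$, the combined factor becomes the scalar probability $\P(o_{h:H}\mid s_h=s',\doac(a_{h:H-1}))$. For any policy $\bar\pi$ starting at step $h$,
\[
  \sum_{\tau_{h:H}}\bar\pi(\tau_{h:H})\,\P(o_{h:H}\mid s_h=s',\doac(a_{h:H-1}))=1,
\]
since this is the total probability of all trajectories $\tau_{h:H}$ under $\bar\pi$ conditional on $s_h=s'$. This collapses the sum to $\sum_{s'}|z(s')|=\lone{\M_h^\inv x}$, yielding $\nrmpi{\cB_{H:h}x}\le \lone{\M_h^\inv x}$. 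Combining the two cases gives the stated max.

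\textbf{Second claim (bound on $\sum_{o,a}\lone{\B_h(o,a)v}$).} I would again split on $h$. For $h>H-m$, the indicator form gives $\sum_{o,a}\lone{\B_h(o,a)v} \le \lone{v} \le A\lone{v}$. For $h \le H-m$, I would use the factorization $\B_h(o,a)=\M_{h+1}\T_{h,a}\diag(\O_h(o|\cdot))\M_h^\inv$ together with two elementary $\ell_1$-operator bounds derivable directly from stochasticity: $\moneone{\M_{h+1}}\le A^{m-1}$ (fix an action sequence of length $m-1$ and sum observations out) and $\sum_{o,a}\moneone{\T_{h,a}\diag(\O_h(o|\cdot))}\le A$, yielding $\sum_{o,a}\lone{\B_h(o,a)v}\le A^m\lone{\M_h^\inv v}$.

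\textbf{Expected obstacle.} The only delicate point is the bookkeeping in Case $h\le H-m$: ensuring at each step $h'$ that the vector to which $\M_{h'+1}^\inv\M_{h'+1}$ is applied genuinely lies in $\colspan(\T_{h'})$ (so the collapsing identity is valid), and that the leftover indicator operators $\B_{h'}(o_{h'},a_{h'})$ for $h'>H-m$ correctly extract the single entry indexed by $t_{H-m+1}=(o_{H-m+1},a_{H-m+1},\dots,o_H)\in \mathcal{U}_{H-m+1}$ (the action $a_H$ is absent from the test length, but this is consistent with the definition of $\mathcal{U}_{H-m+1}$). Everything else is routine triangle-inequality and marginalization.
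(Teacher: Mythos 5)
Your proposal is correct and follows essentially the same route as the paper's proof: the same case split at $h=H-m$, the same telescoping of $\M_{h'+1}^\inv\M_{h'+1}\T_{h'}=\T_{h'}$ to reduce $\B_{H:h}(\tau_{h:H})x$ to $\sum_{s}\P(\tau_{h:H}\mid s_h=s)\,[\M_h^\inv x]_s$, and the same total-probability collapse yielding $\lone{\M_h^\inv x}$ (the paper simply states the $h>H-m$ case directly rather than via your backward induction, and bounds $\sum_{o,a}\lone{\B_h(o,a)v}$ by $A\nUAhp\lone{\M_h^\inv v}\le A^m\lone{\M_h^\inv v}$ exactly as you do). One cosmetic inaccuracy: at $h=H$ the intermediate claim $\sum_{o,a}\lone{\B_H(o,a)v}\le\lone{v}$ fails because the action $a_H$ does not appear in the length-one test, so that sum equals $A\lone{v}$; your final bound $\le A\lone{v}$, which is all the lemma needs, still holds.
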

By \cref{lemma:non-exp-lr-reveal}, it holds that 
$$
\nrmpi{\cB_{H:h} x}\leq \max\set{\nu\lone{x}, \nrmpi{x}} \leq \nu\lone{x} \leq \nu\sqrt{U_A}\nrmst{x}=\nu\sqrt{A^{m-1}}\nrmst{x},
$$
where the second inequality is because $\nrmpi{x}\leq\lone{x}$ and $\nu\geq 1$, and the third inequality is due to \cref{lem:norm-equivalence-fused-ell1} and $\nrmst{x}\geq \nrmpip{x}$ by definition. Similarly, we have $\rb\leq A^m\nu$. This concludes the proof of~\cref{prop:Bpara-rev-lr}. 
\end{proof}

\begin{proof}[Proof of \cref{lemma:non-exp-lr-reveal}]
We first consider the case $h>H-m$. Then for each $h\leq l\leq H$, $\B_l$ is given by \eqref{eqn:Bpara-rev-2}, and hence for trajectory $\tau_{h:H}=(o_h,a_h,\cdots,o_H,a_H)$ and $x\in\R^{\Uh}$, it holds that
$$
\B_{h:H}(\tau_{h:H})x=x(o_h,a_h,\cdots,o_H).
$$
This implies that $\nrmpi{\cB_{H:h} x}=\nrmpi{x}$ and $\sum_{o,a} \lone{\B_h(o,a)x}=A\lone{x}$ directly.

We next consider the case $h\leq H-m$.
    Note that for $\tau_{h:H}=(o_h,a_h,\cdots,o_{H-m},a_{H-m},\cdots,o_{H})$, we can denote $t_{H-m+1}=(o_{H-m+1},a_{H-m+1},\cdots,o_{H})$, then similar to \eqref{eqn:Bpara-rev-cancel} we have
    \begin{align*}
    \MoveEqLeft
        \B_{H:h}(\tau_{h:H}) = \e_{t_{H-m+1}}^{\top} \M_{H-m+1}\brac{\prod_{l=h}^{H-m}\T_{l,a_l}\diag\paren{\O_{l}(o_{l}|\cdot)} }\M_{h}^\inv\\
        &=\sum_{s_h, \cdots, s_{H-m+1}} \PP(t_{H-m+1}|s_{H-m+1})\brac{\prod_{l=h}^{H-m} \T_{l,a_l}(s_{l+1}|s_l)\O_l(o_l|s_l)}\e_{s_h}^\top \M_h^\inv\\
        &=\sum_{s\in\cS}\PP(\tau_{h:H}|s_h=s)\e_{s}^\top \M_h^\inv.
    \end{align*}
    Therefore, for policy $\pi$ and trajectory $\tau_{h:H}$, it holds that
    $$
    \pi(\tau_{h:H})\times \B_{H:h}(\tau_{h:H})x=\sum_{s\in\cS}\PP^{\pi}(\tau_{h:H}|s_h=s) \times \e_s^\top \M_h^\inv x,
    $$
    and this gives $\nrmpi{\cBHh x}\leq \nrm{\M_h^\inv x}_1$ directly.
    
    Besides, we similarly have
    \begin{align*}
        \sum_{o,a}\lone{\B_h(o,a)v}=\sum_{o,a} \lone{\M_{h+1}\T_{h,a}\diag\paren{\O_h(o|\cdot)}\M_h^\inv x}\leq A\nUAhp \nrm{\M_h^\inv x}_1.
    \end{align*}
    The proof is completed by combining the two cases above.
\end{proof}

\subsubsection{Linear POMDPs}
\label{appendix:linear-pomdp}

\newcommand{\dsa}{{d_{s,1}}}
\newcommand{\dsb}{{d_{s,2}}}
\newcommand{\dio}{{d_o}}
\newcommand{\dss}{{d_s}}

Linear POMDPs \citep{zhan2022pac} is a subclass of low-rank POMDPs where the latent transition and emission dynamics are linear in certain known feature maps. In the following, we present a slightly more general version of the linear POMDP definition in~\citet[Definition 5]{zhan2022pac}.
\begin{definition}[Linear POMDP]
A POMDP is linear with respect to the given set $\Psi$ of feature maps $(\psi_h:\cS\to \R^\dsa, \psi_h:\cS\times\cA\to\R^\dsb, \varphi_h:\cO\times\cS\to R^\dio)_h$ if there exists $A_{h}\in\R^{\dsa\times\dsb}, u_h\in\R^d, v\in\R^\dsa$ such that
\begin{align*}
    \T_h(s'|s,a)=\phi_{h}(s')^\top A_{h} \psi_h(s,a), \qquad \mu_1(s)=\iprod{v}{\phi_0(s)},
    \qquad \O_h(o|s)=\iprod{u_h}{\varphi_h(o|s)}.
\end{align*}
We further assume a standard normalization condition: For $R\defeq \max\set{\dsa, \dsb, \dio}$,
\begin{align*}
    &\sum_{s'} \nrm{\phi_h(s')}_1\leq R, && \nrm{\psi_h(s,a)}_1\leq R, && \sum_o \nrm{\varphi_h(o|s)}_1\leq R, \\
    &\nrm{A_{h}}_{\infty,\infty}\leq R, && \nrm{v}_{\infty}\leq R, && \nrm{u_h}_{\infty}\leq R.
\end{align*}
\end{definition}
\begin{proposition}
\label{prop:linear-pomdp-covering}
Suppose that $\Theta$ is the set of models that are linear with respect to a given $\Psi$ and have parameters bounded by $R$. Then $\logNt(\rho)=\bigO{(\dss+\dio) H\log(\dss\dio H/\rho)}$, where we denote $\dss\defeq \dsa\dsb$.
\end{proposition}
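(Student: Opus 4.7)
The plan is to follow the standard parameter-covering strategy: (i) lay down an $\ell_\infty$-net on the finitely many scalar parameters defining $\theta \in \Theta$; (ii) translate $\ell_\infty$-closeness of parameters into $\ell_1$-closeness of trajectory distributions via a hybrid (telescoping) argument; and (iii) upgrade the resulting net into an optimistic cover in the sense of \cref{def:optimistic-cover} by taking a pointwise envelope.

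For step (i), each $\theta \in \Theta$ is described by $(v, \{A_h\}_{h=1}^H, \{u_h\}_{h=1}^H)$ with $v \in \R^{\dsa}$, $A_h \in \R^{\dsa \times \dsb}$, and $u_h \in \R^{\dio}$, and every entry bounded in magnitude by $R$. The total number of scalar degrees of freedom is $N = \dsa + H(\dss + \dio) = \bigO{H(\dss + \dio)}$. A standard $\ell_\infty$-net of the box $[-R, R]^N$ at mesh $\epsilon$ produces a grid $\Theta_0 \subset \Theta$ with $\abs{\Theta_0} \le (2R / \epsilon)^N$.

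For step (ii), I would establish that for every policy $\pi$ and every $\theta, \theta' \in \Theta$ with $\norm{\theta - \theta'}_\infty \le \epsilon$,
\[
\sum_{\tau} \abs{\P^\pi_\theta(\tau) - \P^\pi_{\theta'}(\tau)} \le C H R^2 \epsilon
\]
for some absolute constant $C$. The argument is a hybrid swap over the $2H+1$ parameter blocks $(v, A_1, u_1, \ldots, A_H, u_H)$. For a representative swap, replacing $A_h$ by $A_h + \Delta$ with $\norm{\Delta}_\infty \le \epsilon$, the induced conditional change in the transition kernel satisfies
\[
\textstyle \sum_{s'} \abs{\phi_h(s')^\top \Delta\, \psi_h(s, a)} \le \norm{\Delta}_\infty \cdot \norm{\psi_h(s, a)}_1 \cdot \sum_{s'} \norm{\phi_h(s')}_1 \le \epsilon R^2,
\]
uniformly in $(s, a)$, using the three normalization conditions in the statement; analogous $\le \epsilon R$ bounds apply for swaps of $v$ and $u_h$. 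Since each unperturbed factor is a valid probability kernel whose marginals contribute total-mass factor $1$, the step-$h$ error propagates without amplification, and summing over the $2H+1$ swaps gives the claim.

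For step (iii), for each $\theta_0 \in \Theta_0$ and $\pi$ I would define the optimistic likelihood
\[
\tPP^\pi_{\theta_0}(\tau) \defeq \sup \set{ \P^\pi_\theta(\tau) : \theta \in \Theta,\; \norm{\theta - \theta_0}_\infty \le \epsilon }.
\]
Condition (1) of \cref{def:optimistic-cover} holds by construction. Condition (2) requires $\sum_\tau (\tPP^\pi_{\theta_0}(\tau) - \P^\pi_{\theta_0}(\tau)) \le \rho^2$; this is the delicate part, because the pointwise supremum over $\theta$ does not a priori commute with summing over $\tau$. I plan to handle it by refining step (ii) multiplicatively: replace each initial/transition/emission factor by its pointwise supremum over the $\epsilon$-ball, form $\tPP^\pi_{\theta_0}$ as the resulting product of envelope factors (which still dominates $\P^\pi_\theta$ for every $\theta$ in the ball), and then telescope. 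Each swap contributes $\bigO{\epsilon R^2}$ of extra mass, yielding envelope width at most $C H R^2 \epsilon$. Choosing $\epsilon = \rho^2 / (C H R^2)$ then gives
\[
\logNt(\rho) \le N \log(2R / \epsilon) = \bigO{H(\dss + \dio) \log(H R^3 / \rho^2)} = \bigO{(\dss + \dio) H \log(\dss \dio H / \rho)},
\]
using $R \le \max(\dss, \dio)$ to fold the $R$ factors into the stated form. The main obstacle is precisely the commutativity issue in step (iii); once the multiplicative envelope construction is in place, the rest is routine bookkeeping.
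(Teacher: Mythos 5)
Your proposal is correct and follows essentially the same route as the paper: the paper factors the argument through a general reduction lemma (bounding $\logNt$ by the per-step optimistic covering numbers of the transition and emission factors via exactly your telescoping/hybrid swap) and then builds each per-step cover by discretizing the parameter box and taking the pointwise supremum of each factor over the $\epsilon$-neighborhood --- which is precisely your multiplicative envelope construction resolving the sup-vs-sum commutativity issue. The only cosmetic difference is that the paper organizes the net per block $(v, A_h, u_h)$ rather than over all $N$ parameters at once, and explicitly picks a valid-model representative in each grid cell; the bounds and the key ideas coincide.
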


It is direct to check that any linear POMDP is a low-rank POMDP (cf.~\cref{example:future-sufficiency}) with $\dPSR\le \dtrans\le \min\set{\dsa,\dsb}$. Therefore, by a similar reasoning to \cref{sec:low-rank-POMDP}, \cref{thm:OMLE-PSR} \& \ref{thm:E2D-PSR} both achieve a sample complexity of $\tbO{\min\set{\dsa,\dsb}(\dsa\dsb+\dio)AU_AH^3 \stab^2\epsilon^{-2}}$ for learning an $\epsilon$-optimal policy in \stabs~linear POMDPs (which include e.g. revealing and decodable linear POMDPs).

This result significantly improves over the result extracted from \cite[Corollary 6.5]{zhan2022pac}: Assuming their $\apsr$-regularity, we have $\stab\le \sqrt{\nUA}\apsr^{-1}$ (\cref{example:regular-psrs}) and thus obtain a sample complexity of
$$
\tbO{\min\set{\dsa,\dsb}(\dsa\dsb+\dio)A\nUA^2H^3 /(\apsr^2\epsilon^2)}.
$$
This only scales with $d^3 A\nUA^2$ (where $d\ge \max\set{\dsa,\dsb,\dio}$), whereas their results involve much larger polynomial factors of all three parameters. Further, apart from the dimension-dependence, their covering number scales with an additional $\log O$ (and thus their result does not handle extremely large observation spaces).

\newcommand{\tT}{\widetilde{\T}}
\newcommand{\tOO}{\widetilde{\O}}
\begin{proof}[Proof of \cref{prop:linear-pomdp-covering}]
In the following, we generalize the construction of optimistic covering of $\Theta$ using the optimistic covering of $\set{\O_h^\theta}_{\theta\in\Theta}$ and $\set{\T_h^\theta}_{\theta\in\Theta}$ as in \citet[Appendix B]{liu2022partially}.
\begin{lemma}[Bounding optimistic covering number for POMDPs]
\label{lemma:cover-decomp}
For $\Theta$ a class of POMDPs, let us denote $\Theta_{h;o}=\set{\O_h^\theta}_{\theta\in\Theta}$ and $\Theta_{h;o}=\set{\T_h^\theta}_{\theta\in\Theta}$\footnote{
Here, for $h = 0$, we take $\Theta_{0;t}=\set{\mu_1^\theta}_{\theta\in\Theta}$.
}. Then it holds that for $\rho\in(0,1]$,\footnote{
The optimistic covers of the emission matrices $\Theta_{h;o}$ and transitions $\Theta_{h;t}$ are defined as in~\citet[Definition C.5]{chen2022unified} with context $\pi$ being $s$ and $(s,a)$, and output being $o$ and $s$, respectively.
} 
\begin{align*}
    \logNt(\rho_1)\leq 2H\max_h\set{\log\cN_{\Theta_{h;o}}(\rho_1/3H), \log\cN_{\Theta_{h;t}}(\rho_1/3H)}.
\end{align*}
\end{lemma}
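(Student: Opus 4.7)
The plan is to build an optimistic cover of $\Theta$ by tensorizing per-layer optimistic covers of the initial-state, emission, and transition factors, then verify both requirements of \cref{def:optimistic-cover} and take logarithms. Concretely, for each $h \in [H]$ fix an optimistic $(\rho_1/(3H))$-cover $(\tOO_h, \Theta^0_{h;o})$ of $\Theta_{h;o}$, and similarly $(\tT_h, \Theta^0_{h;t})$ of $\Theta_{h;t}$ (with $h = 0$ corresponding to the initial distribution). Let $\Theta_0$ be the set of POMDPs assembled by independently choosing one cover element per layer, so that $|\Theta_0| \le \prod_h |\Theta^0_{h;o}| \cdot |\Theta^0_{h;t}|$, and for each $\theta_0 \in \Theta_0$ define the optimistic trajectory likelihood
\[
\tPP^\pi_{\theta_0}(\tau_H) \defeq \pi(\tau_H) \sum_{s_{1:H+1}} \tT_0^{\theta_0}(s_1) \prod_{h=1}^{H} \tOO_h^{\theta_0}(o_h \mid s_h)\, \tT_h^{\theta_0}(s_{h+1} \mid s_h, a_h),
\]
replacing each true conditional of $\theta_0$ by its optimistic counterpart.

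Property (1) of \cref{def:optimistic-cover} follows term-by-term: given $\theta \in \Theta$, per-layer coverage selects a $\theta_0 \in \Theta_0$ whose per-step optimistic conditionals pointwise upper bound those of $\theta$, so nonnegativity of every factor in the product gives $\tPP^\pi_{\theta_0}(\tau_H) \ge \PP^\pi_\theta(\tau_H)$ for all $\pi, \tau_H$. The key technical step is property (2), i.e.\ showing $\max_\pi \lone{\PP^\pi_{\theta_0}(\tau_H = \cdot) - \tPP^\pi_{\theta_0}(\tau_H = \cdot)} \le \rho_1^2$ for every $\theta_0 \in \Theta_0$. I would do this by a hybrid telescoping, interpolating between $\PP^\pi_{\theta_0}$ and $\tPP^\pi_{\theta_0}$ across the roughly $2H$ layers (initial distribution, $H$ emissions, and transitions) by swapping one conditional factor from true to optimistic at a time. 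Adjacent hybrids differ in exactly one factor $f(\cdot\mid x) \mapsto \tilde f(\cdot\mid x)$; per-layer coverage gives $\lone{\tilde f(\cdot\mid x) - f(\cdot\mid x)} \le (\rho_1/(3H))^2$ uniformly in $x$, while the surrounding layers form a kernel whose $1 \to 1$ operator norm is bounded by an absolute constant (true conditionals normalize to $1$ per $x$; optimistic conditionals per $x$ have mass at most $1 + (\rho_1/(3H))^2$ by pointwise domination plus the $\ell_1$ bound). Hence each swap contributes at most $O((\rho_1/(3H))^2)$ to the $\ell_1$ error, and summing the $2H$ contributions yields total error $\le \rho_1^2$ uniformly in $\pi$.

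Taking logarithms and using the product bound on $|\Theta_0|$ finally yields
\[
\logNt(\rho_1) \le \sum_h \bigl[\log|\Theta^0_{h;o}| + \log|\Theta^0_{h;t}|\bigr] \le 2H \max_h\{\log \cN_{\Theta_{h;o}}(\rho_1/(3H)),\, \log \cN_{\Theta_{h;t}}(\rho_1/(3H))\},
\]
which is the desired bound. The main obstacle is the mass bookkeeping in the telescoping: because optimistic conditionals need not normalize to $1$, one has to verify that the total kernel mass of intermediate hybrids stays $O(1)$ across all $\sim 2H$ swaps so that per-layer errors are not amplified. This is routine given the uniform per-layer $\ell_1$ coverage bound, but is the most delicate part of the argument and is the reason for the slack factor $3$ in the choice $\rho_1/(3H)$ of per-layer cover radius.
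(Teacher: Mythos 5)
Your proposal is correct and follows essentially the same route as the paper's proof: tensorize per-layer optimistic covers, verify pointwise domination factor-by-factor, and control the $\ell_1$ error by a one-factor-at-a-time telescoping in which each intermediate hybrid kernel has per-input mass at most $1+(\rho_1/3H)^2$, so the $2H$ swaps contribute total error at most $2H\rho^2(1+\rho^2)^{2H}\le \rho_1^2$. The paper's argument is exactly this computation, including the same mass bookkeeping you flag as the delicate step.
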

By \cref{lemma:cover-decomp}, we only need to verify that for all $h\in[H]$, 
$$
\log\cN_{\Theta_{h;o}}(\rho)=\bigO{\dio \log(R\dio/\rho)}, \qquad \log\cN_{\Theta_{h;t}}(\rho)=\bigO{\dss \log(R\dss/\rho)}.
$$
We demonstrate how to construct a $\rho$-optimistic covering for $\Theta_{h;o}$; the construction for $\Theta_{h;t}$ is essentially the same.
In the following, we follow the idea of \cite[Proposition H.15]{chen2022unified}.

Fix a $h\in[H]$ and set $N=\ceil{R/\rho}$. Let $R'=N\rho$, for $u\in[-R',R']^{\dio}$, we define the $\rho$-neighborhood of $u$ as $\mathbb{B}_{\infty}(u,\rho):=\rho\floor{u/\rho}+[0,\rho]^d$, and let
\begin{align*}
    \tOO_{h;u}(o|s):=\max_{u'\in\mathbb{B}_{\infty}(u,\rho)} \iprod{u'}{\varphi_h(o|s)}.
\end{align*}
Then, if $u$ induces a emission dynamic $\O_{h;v}$, then $\tOO_{h;u}(o|s)\geq \O_{h;u}(o|s)$, and
\begin{align*}
    \sum_{o} \abs{\tOO_{h;u}(o|s)-\O_{h;u}(o|s)}
    =
    \sum_{o} \max_{u'\in\mathbb{B}_{\infty}(u,\rho)} \abs{\iprod{u'-u}{\varphi_h(o|s)}}
    \leq \rho\sum_{o} \nrm{\varphi_h(o|s)}_1\leq R\rho.
\end{align*}
Therefore, we can pick each $\tOO_{h;u}(\cdot|\cdot)$ a representative $u$ such that $u$ induce a lawful emission dynamic; there are at most $(2N)^{\dio}$ many elements in the set $\set{\tOO_{h;u}(\cdot|\cdot)}_{u\in[-R',R']^{\dio}}$, and hence by doing this, we obtain a $R\rho$-optimistic covering $(\tOO,\Theta_{h;o}')$ of $\Theta_{h;o}$ such that $\abs{\Theta_{h;o}'}\leq (2\ceil{R/\rho})^{\dio}$. This proves \cref{prop:linear-pomdp-covering}. 
\end{proof}

\begin{proof}[Proof of \cref{lemma:cover-decomp}]
Fix a $\rho_1\in(0,1]$ and let $\rho=\rho_1/3H$.

Note that given a tuple of parameters $(\tmu_1,\tT,\tOO)$ (not necessarily induce a POMDP model), we can define $\tPP$ as
\begin{align*}
    \tPP(\tau_H)=\sum_{s_1,\cdots,s_{H}} \tmu_1(s_1) \tOO_1(o_1|s_1)\tT_1(s_2|s_1,a_1)\cdots\tT_{H-1}(s_H|s_{H-1},a_{H-1})\tOO(o_H|s_H),
\end{align*}
and $\tPP^{\pi}(\tau_H)=\pi(\tau_H)\times \tPP(\tau_H)$. 
Then for a tuple of parameters $(\mu_1,\T,\O)$ that induce a POMDP such that
\begin{align*}
     \nrm{\tmu_1-\mu_1}_1\leq \rho^2, \qquad \max_{s,a,h}\nrm{(\tT_h-\T_h)(\cdot|s,a)}_1\leq \rho^2, \qquad
    \max_{s,h}\nrm{(\tOO_h-\O_h)(\cdot|s)}_1\leq \rho^2,
\end{align*}
it holds that
\begin{align*}
    &\nrm{\tPP^{\pi}(\cdot)-\PP^{\pi}(\cdot)}_1=\sum_{\tau_H} \abs{\tPP^{\pi}(\tau_H)-\PP^{\pi}(\tau_H)}\\
    \leq &\sum_{s_{1:H},\tau_{H}}\Bigg\{
    \pi(\tau_H)\abs{\tmu_1(s_1)-\mu_1(s_1)} \tOO_1(o_1|s_1)\tT_1(s_2|s_1,a_1)\cdots\tOO(o_H|s_H)\\
    &\qquad\qquad +\pi(\tau_H)\mu_1(s_1) \abs{\tOO_1(o_1|s_1)-\O_h(o_1|s_1)}\tT_1(s_2|s_1,a_1)\cdots\tOO(o_H|s_H)\\
    &\qquad\qquad+\pi(\tau_H)\mu_1(s_1) \O_1(o_1|s_1)\abs{\tT_1(s_2|s_1,a_1)-\T_1(s_2|s_1,a_1)}\cdots\tOO(o_H|s_H)\\
    &\qquad\qquad+\cdots\\
    &\qquad\qquad+\pi(\tau_H)\mu_1(s_1)\O_1(o_1|s_1)\T_1(s_2|s_1,a_1)\cdots\abs{\tOO(o_H|s_H)-\O(o_H|s_H)}
    \Bigg\}\\
    \stackrel{(*)}{\leq}& 2H\rho^2(1+\rho^2)^{2H}\leq 4H\rho^2\leq \rho_1^2,
\end{align*}
where (*) is because $\sum_{s_{h+1}}\tT_h(s_{h+1}|s_h,a_h)\leq 1+\rho^2$ and $\sum_{o_h} \O_h(o_h|s_h)\leq 1+\rho^2$ for all $h, s_h, a_h$.

Therefore, suppose that for each $h$, $(\tT_h,\Theta_{h;t}')$ is a $\rho$-optimistic covering of $\Theta_{h;t}$, and $(\tOO_h,\Theta_{h;o}')$ is a $\rho$-optimistic covering of $\Theta_{h;o}$, then we can obtain a $\rho_1$-optimistic covering $(\tPP,\Theta')$ of $\Theta$, where
$$
\Theta'=\Theta_{0;t}'\times \Theta_{1;o}'\times \Theta_{1;t}'\times\cdots\times \Theta_{H-1;t}'\times \Theta_{H;o}'.
$$
This completes the proof.
\end{proof}

\subsubsection{Decodable POMDPs}\label{appendix:decodable}

To construct a \Bpara~for the decodable POMDP, we introduce the following notation. For 
$h\leq H-m$, we consider $t_h=(o_h,a_h,\cdots,o_{h+m-1})\in\Uh$, $t_{h+1}=(o_{h+1}',a_{h+1}',\cdots,o_{h+m}')\in\Uhp$, and define
\begin{align}\label{eqn:decode-transit}
    \PP_h(t_{h+1}|t_h)=
    \begin{cases}
        \PP(o_{h+m}=o_{h+m}'|s_{h+m-1}=\phi_{h+m-1}(t_h),a_{h+m-1}), &\text{if } o_{h+1:h+m-1}=o_{h+1:h+m-1}'\\
        &\text{and } a_{h+1:h+m-2}=a_{h+1:h+m-2}',\\
        0, &\text{otherwise},
    \end{cases}
\end{align}
where $\phi_{h+m-1}$ is the decoder function that maps $t_h$ to a latent state $s_{h+m-1}$.
Similarly, for $h>H-m$, $t_h\in\Uh$, $\thp\in\Uhp$, we let $\PP_h(t_{h+1}|t_h)$ be 1 if $t_h$ ends with $\thp$, and 0 otherwise.

Under such definition, for all $h\in[H]$, $t_h\in\Uh$, $\thp\in\Uhp$, it is clear that 
\begin{align}\label{eqn:decodable-equiv-traj}
    \PP_h(t_{h+1}|t_h)=\PP(t_{h+1}|t_h,\tauhm)
\end{align}
for any reachable $(\tauhm,t_h)$,  because of decodability. Hence, we can interpret $\PP_h(t_{h+1}|t_h)$ as the probability of observing $\thp$ conditional on observing $t_h$ on step $h$. \footnote{
It is worth noting that the $(\PP_h)$ we define is exactly the transition dynamics of the associated \emph{megastate MDP} \citep{efroni2022provable}.
}
Then, for $h\in[H]$, we can take
\begin{align}\label{eqn:Bpara-decode}
    \B_h(o,a)=\brac{\II\paren{(o,a)\to t_h}\PP_h(t_{h+1}|t_h)}_{(\thp,t_h)\in\Uhp\times\Uh},
\end{align}
where $\II\paren{(o,a)\to t_h}$ is 1 if $t_h$ starts with $(o,a)$ and 0 otherwise\footnote{
For $h=H$, we understand $\B_H(o,a)=\brac{\II(t=o)}_{t\in\UH}$ because $o_{H+1}=o_{\rm dum}$ always.
}.

We verify that \eqref{eqn:Bpara-decode} indeed gives a \Bpara~for decodable POMDPs:

\begin{proposition}[Decodable POMDPs are B-stable]
\label{lemma:decodable-prod}
\cref{eqn:Bpara-decode} gives a B-stable \Bpara~of the $m$-step decodable POMDP, with $\stab=1$.
\end{proposition}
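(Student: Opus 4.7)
The proposition has two claims: (a) that~\eqref{eqn:Bpara-decode} defines a valid \Bpara, and (b) that this \Bpara~is B-stable with $\stab = 1$.

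For claim (a), my plan is to prove by induction on $h$ that $\B_{h:1}(\tau_h)\bq_0 = [\P(\tau_h, t_{h+1})]_{t_{h+1} \in \Uhp}$. The base case $h = 0$ holds by definition of $\bq_0$. For the inductive step, I expand $(\B_h(o_h, a_h)\, \B_{h-1:1}(\tauhm)\bq_0)(t_{h+1})$ using~\eqref{eqn:Bpara-decode}: only the unique test $t_h^\star \in \Uh$ that starts with $(o_h, a_h)$ and is compatible with $t_{h+1}$ under $\P_h$ contributes a nonzero term. Combining this with the inductive hypothesis and the identity~\eqref{eqn:decodable-equiv-traj}---which states $\P_h(t_{h+1} | t_h) = \P(t_{h+1} | t_h, \tauhm)$ on reachable pairs by decodability---the factors telescope into $\P(\tau_h, t_{h+1})$. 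The boundary case $h > H-m$ is handled analogously, since the operators there reduce to prefix-matching indicators and the verification becomes a direct matching of observation subsequences.

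For claim (b), the key structural observation I would establish is: for each fixed suffix $\tau_{h:H} \in (\cO \times \cA)^{H-h+1}$, the scalar $\B_{H:h}(\tau_{h:H})\bq$ is a rank-one functional of $\bq$, equal to $c(\tau_{h:H}) \cdot \bq(t_h^\star(\tau_{h:H}))$ for a unique test $t_h^\star(\tau_{h:H}) \in \Uh$ (the initial $m$-step window of $\tau_{h:H}$ when $h \le H-m$) and a nonnegative coefficient $c(\tau_{h:H})$ given by a product of one-step emission probabilities under the rolling-window decoder. This follows by iterating the ``unique compatible test'' observation from (a): each factor $\B_{h'}(o_{h'}, a_{h'})$ admits exactly one nonzero input test per output test, so the entire product $\B_H \cdots \B_h$ collapses. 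Moreover, for any policy $\barpi$ and any fixed $t_h \in \Uh$, the marginal $\sum_{\tau_{h:H}:\, t_h^\star(\tau_{h:H}) = t_h} \barpi(\tau_{h:H})\, c(\tau_{h:H})$ reduces to the policy-induced test weight $\barpi'(t_h) \defeq \prod_{h'=h}^{h+m-2} \barpi_{h'}(a_{h'} \mid o_{h:h'}, a_{h:h'-1})$, because the trailing portion integrates to $1$ (the decoded trajectory extension is a valid MDP rollout). Using this together with $\oUh = \Uh$ (all tests in $\Uh$ share a common length, so none is a prefix of another), I would bound
\begin{align*}
\nrmpi{\cB_{H:h}\bq}
\;=\; \max_{\barpi} \sum_{\tau_{h:H}} \barpi(\tau_{h:H})\, c(\tau_{h:H})\, \bq(t_h^\star(\tau_{h:H}))
\;\le\; \max_{\barpi'} \sum_{t_h \in \Uh} \barpi'(t_h)\, \abs{\bq(t_h)}
\;=\; \nrmpip{\bq} \;\le\; \nrmst{\bq},
\end{align*}
yielding B-stability with $\stab = 1$.

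The main technical obstacle is establishing the rank-one collapse of $\B_{H:h}(\tau_{h:H})$ cleanly: propagating the moving window of compatible tests through the full matrix product $\B_H \cdots \B_h$---particularly across the boundary $h' > H-m$ where the test sets shrink to shorter sequences---requires a careful inductive formulation. A secondary subtlety is confirming that the marginal weights are genuinely realized by a valid policy on $\Uh$, so they can be absorbed into the $\Pi'$-norm maximization.
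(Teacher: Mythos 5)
Your proposal is correct and follows essentially the same route as the paper's proof: the paper likewise establishes the one-step identity $\e_{\thp}^\top \B_h(o_h,a_h)x = \PP_h(\thp|t_h)\,x(t_h)$ for the unique prefix test $t_h$, iterates it to get the rank-one collapse $\B_{H:h}(\tau_{h:H})x = \PP(\tau_{h:H}|t_h)\,x(t_h)$, and then marginalizes over suffixes sharing the same $t_h$ so that $\sum_{\tau_{h:H}}\pi(\tau_{h:H})\,|\B_{H:h}(\tau_{h:H})x| = \sum_{t_h\in\Uh}\pi(t_h)\,|x(t_h)| \le \nrmst{x}$. The "technical obstacle" you flag is handled in the paper exactly as you anticipate, by noting that each factor admits a unique nonzero input test per output test uniformly across the boundary $h'>H-m$.
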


The results above already guarantee the sample complexity of \eetod~for decodable POMDPs. For \omle, we can similarly obtain that $\sum_{o,a}\lone{\B_h(o,a)x}= A\lone{x}$, and thus we can take $\rb=A$. Combining this fact with \cref{thm:OMLE-PSR} establishes the sample complexity of \omle~as claimed in \cref{sec:examples}.

\begin{proof}[Proof of~\cref{lemma:decodable-prod}]
We verify that \eqref{eqn:Bpara-decode} gives a \Bpara~for decodable POMDP: 
Note that for $h\in[H-1]$, $(o_h,a_h)\in\cO\times\cA$, $\thp\in\Uhp$, there is a unique element $t_h\in\Uh$ such that $t_h$ is the prefix of the trajectory $(o_h,a_h,\thp)$, and it holds that
$$
\e_{\thp}^\top \B_{h}(o_h,a_h)x=\PP_h(\thp|t_h)\times x(t_{h}).
$$
Applying this equality recursively, we obtain the following fact:
For trajectory $\tau_{h':h}$ and $\thp\in\Uhp$, $(\tau_{h':h},\thp)$ has a prefix $t_{h'}\in\cU_{h'}$, and
\begin{align}\label{eqn:decodable-prod}
    \e_{\thp}^\top \B_{h:h'}(\tau_{h':h})x=\PP(\tau_{h':h},\thp|t_{h'})\times x(t_{h'}),
\end{align}
where $\PP(\tau_{h':h},\thp|t_{h'})$ stands for the probability of observing $(\tau_{h':h},\thp)$ conditional on observing $t_{h'}$ at step $h'$, which is well-defined due to decodability (similar to \eqref{eqn:decodable-equiv-traj}).

Taking $h'=1$ and $x=\bq_0$ in \eqref{eqn:decodable-prod}, we have for any history $\tau_{h}$ and $\thp\in\Uhp$ that
\begin{align*}
    \PP(\tau_h,\thp)=\e_{\thp}^\top \B_{h:1}(\tau_h)\bq_0.
\end{align*}
Therefore, \eqref{eqn:Bpara-decode} indeed gives a \Bpara~of the decodable POMDP.

Furthermore, we can take $h=H$ in \eqref{eqn:decodable-prod} to obtain that:
For any trajectory $\tau_{h:H}=(o_h,a_h,\cdots,o_{H},a_{H})$, it has a prefix $t_h\in\Uh$, and
$$
\B_{H:h}(\tau_{h:H}) x=\PP(\tau_{h:H}|t_{h})\times x(t_h).
$$
Hence, for any policy $\pi$, it holds that 
\begin{align*}
    \sum_{\tau_{h:H}}\pi(\tau_{h:H})\times \abs{ \B_{H:h}(\tau_{h:H}) x}=\sum_{\tau_{h:H}} \PP^{\pi}(\tau_{h:H}|t_{h})\times \abs{x(t_h)}=\sum_{t_h\in\Uh} \pi(t_h)\times \abs{x(t_h)}.
\end{align*}
Therefore, $\nrmpi{\cB_{H:h} x}\leq \nrmpi{x}$ always. This completes the proof of \cref{lemma:decodable-prod}.
\end{proof}

\subsubsection{Regular PSRs}

\begin{proposition}[Regular PSRs are B-stable]
\label{prop:psr-zhan}
Any $\apsr$-regular PSR admits a \Bpara~$(\B)$ such that for all $1\leq h\leq H$, $\nrmpi{\cB_{H:h} x}\leq \|K_h^{\dagger} x\|_1$, where $K_h$ is any core matrix of $D_h$ (cf.~\cref{example:regular-psrs}).
Hence, any $\apsr$-regular PSR is B-stable with $\stab\leq \sqrt{U_A}\apsr^{-1}$.%
As a byproduct, we show that the \Bpara~also has $\rb\leq \apsr^{-1}AU_A$. %
\end{proposition}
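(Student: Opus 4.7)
The plan is to explicitly construct a {\Bpara} of the PSR from the core matrices $\{K_h\}$ and then bound the induced $\cB$-operator by propagating the pseudo-inverse through compositions. Once the operator inequality $\nrmpi{\cB_{H:h} x} \le \nrm{K_{h-1}^{\dagger} x}_1$ is established (matching dimensions with the convention $K_0 := \bq_0$ at the boundary), the final bound $\stab \le \sqrt{\nUA}\apsr^{-1}$ follows directly from $\nrmpi{\cB_{H:h} x} \le \apsr^{-1}\nrm{x}_1 \le \apsr^{-1}\sqrt{\nUA}\nrm{x}_{*}$ via \cref{lem:norm-equivalence-fused-ell1}.

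First, I construct the {\Bpara} as follows. Let $\{\tau_{h-1}^{(i)}\}_i$ denote the core histories whose predictive states $\bq(\tau_{h-1}^{(i)}) \in \R^{\Uh}$ form the columns of $K_{h-1}$. For each $(o_h,a_h)$, define $\tilde M_h(o_h,a_h) \in \R^{\Uhp \times \rank(D_{h-1})}$ to be the matrix whose $i$-th column is $\PP(o_h\mid \tau_{h-1}^{(i)})\,\bq(\tau_{h-1}^{(i)}, o_h, a_h)$, and set $\B_h(o_h,a_h) \defeq \tilde M_h(o_h,a_h)\, K_{h-1}^{\dagger}$, together with $\bq_0$. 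Since $K_{h-1}$ is a full-rank column-wise submatrix of $D_{h-1}$, we have $\colspan(K_{h-1}) = \colspan(D_{h-1})$, and every predictive state satisfies $\bq(\tau_{h-1}) = K_{h-1} K_{h-1}^{\dagger}\bq(\tau_{h-1})$. The defining linear property~\eqref{eqn:psr-def} of a PSR then extends this single-step decomposition from tests in $\Uh$ to any $(o_h,a_h,t_{h+1})$, yielding $\B_h(o_h,a_h)\bq(\tau_{h-1}) = \PP(o_h\mid \tau_{h-1})\bq(\tau_{h-1}, o_h, a_h)$, which is exactly the defining property of a {\Bpara}.

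Second, I prove the operator inequality. For arbitrary $x \in \R^{\Uh}$, let $y \defeq K_{h-1}^{\dagger} x \in \R^{\rank(D_{h-1})}$. Because $\B_h(o_h,a_h)$ factors through $K_{h-1}^{\dagger}$ by construction, we have $\B_h(o_h,a_h)\, x = \sum_i y_i\, \B_h(o_h,a_h)\bq(\tau_{h-1}^{(i)})$; iterating this with \cref{equation:bop-prop} from \cref{cor:Bpara-prob-meaning} yields the key identity
\begin{align*}
\B_{H:h}(\tau_{h:H})\, x \;=\; \sum_i y_i\, \PP(\tau_{h:H}\mid \tau_{h-1}^{(i)}).
\end{align*}
Taking absolute values inside, weighting by any future-policy $\bar\pi$, and using $\sum_{\tau_{h:H}}\bar\pi(\tau_{h:H})\,\PP(\tau_{h:H}\mid \tau_{h-1}^{(i)}) = 1$ for each $i$, I obtain $\nrmpi{\cB_{H:h} x} \le \sum_i |y_i| = \nrm{K_{h-1}^{\dagger} x}_1 \le \apsr^{-1}\nrm{x}_1$, giving the desired $\stab$ bound by \cref{lem:norm-equivalence-fused-ell1}.

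For the $\rb$ bound, the same factorization yields $\sum_{o,a}\nrm{\B_h(o,a) v}_1 \le M_h \cdot \nrm{K_{h-1}^{\dagger} v}_1$, where $M_h \defeq \max_i \sum_{o,a,t_{h+1}} \PP(o\mid \tau_{h-1}^{(i)})\,[\bq(\tau_{h-1}^{(i)},o,a)]_{t_{h+1}}$. Each summand equals $\PP(o, t_{h+1}\mid\tau_{h-1}^{(i)}; \doac(a, a_{h+1:h+W-1}))$, so grouping by the action sequence $(a, a_{h+1:h+W-1})\in\cA\times\UAhp$ and summing observations gives at most $1$ per action sequence; hence $M_h\le A\nUA$ and $\rb\le A\nUA\apsr^{-1}$. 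The main obstacle is the bookkeeping between the spaces $\R^{\Uh}$, $\R^{\Uhp}$, and $\R^{\rank(D_{h-1})}$, and verifying that the PSR linear-span property indeed extends from single-step tests to the multi-step futures $\tau_{h:H}$, which is handled by iterating \cref{equation:bop-prop}; the boundary $h=1$ collapses to a direct verification via the single-column $K_0=\bq_0$.
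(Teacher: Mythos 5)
Your proof is correct and follows essentially the same route as the paper: both arguments hinge on expanding $\cB_{H:h}x$ as $\sum_i y_i\,\PP(\tau_{h:H}\mid\tau_{h-1}^{(i)})$ with $y=K_{h-1}^{\dagger}x$ and using that these conditional probabilities sum to one under any policy, which yields $\nrmpi{\cB_{H:h}x}\le\lone{K_{h-1}^{\dagger}x}$ and the same $\rb$ bound. The only difference is that you explicitly construct a {\Bpara} factoring through $K_{h-1}^{\dagger}$, whereas the paper imports the existence of a {\Bpara} with $\rowspan{\BB_h(o,a)}\subset\colspan{D_{h-1}}$ from \citet[Lemma 6]{zhan2022pac} and then inserts $K_{h-1}K_{h-1}^{\dagger}$; your version is self-contained but otherwise equivalent.
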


\begin{proof}[Proof of \cref{prop:psr-zhan}]
By \cite[Lemma 6]{zhan2022pac}, the PSR admits a \Bpara~such that $\rowspan{\BB_{h}(o,a)}\subset \colspan{D_{h-1}}$. In the following, we show that such a \Bpara~is indeed what we want.

Fix a core matrix $K_{h-1}$ of $D_{h-1}$, and suppose that $K_{h-1}=\brac{\bq(\tauhm^1),\cdots, \bq(\tauhm^d)}$ with $d=\rank(D_h)$. 
Then it holds that
\begin{align*}
    \nrmpi{\cB_{H:h} x}
    =&\max_{\pi} \sum_{\tau_{h:H}} \pi(\tau_{h:H})\times\abs{ \B_{H:h}(\tau_{h:H}) x}\\
    =&\max_{\pi} \sum_{\tau_{h:H}} \pi(\tau_{h:H})\times\abs{ \B_{H:h}(\tau_{h:H}) K_{h-1} K_{h-1}^\dagger x}\\
    \leq &\max_{\pi} \sum_{\tau_{h:H}} \pi(\tau_{h:H})\times\sum_{j=1}^d \abs{ \B_{H:h}(\tau_{h:H}) K_{h-1} \e_j}\cdot \abs{\e_j^\top K_{h-1}^\dagger x}\\
    = &\max_{\pi} \sum_{j=1}^d \abs{\e_j^\top K_{h-1}^\dagger x}\times \sum_{\tau_{h:H}} \pi(\tau_{h:H})\times\sum_{j=1}^d \abs{ \B_{H:h}(\tau_{h:H}) \bq(\tauhm^j)}.
\end{align*}
Notice that $B_{H:h}(\tau_{h:H}) \bq(\tauhm^j)=\PP(\tau_{h:H}|\tauhm^j)$ by \cref{cor:Bpara-prob-meaning}, and hence for any policy $\pi$, we have
\begin{align*}
\sum_{\tau_{h:H}} \pi(\tau_{h:H})\times\abs{ \B_{H:h}(\tau_{h:H})\bq(\tauhm^j)} =\sum_{\tau_{h:H}} \PP^{\pi}(\tau_{h:H}|\tauhm^j) =1
\end{align*}
Therefore, it holds that $\nrmpi{\cB_{H:h} x}\leq \nrm{K_{h-1}^\dagger x}_1$ for $h\in[H]$ and any core matrix $K_{h-1}$ of $D_{h-1}$.

Similarly, we can pick a core matrix $K_{h-1}$ such that $\|K_{h-1}^\dagger\|_1\leq \apsr^{-1}$, then
\begin{align*}
    \sum_{o,a} \lone{\B_h(o,a)x}=\sum_{o,a} \lone{\B_h(o,a)K_{h-1}K_{h-1}^\dagger x}
    \leq A\nUAhp\nrm{K_{h-1}^\dagger x}_1 \leq \apsr^{-1}AU_A\lone{x}.
\end{align*}
This completes the proof. 
\end{proof}

\subsection{Comparison with well-conditioned PSRs}
\label{appendix:well-conditioned-psr}

Concurrent work by~\cite{liu2022optimistic} defines the following class of well-conditioned PSRs.

\begin{definition}
    A PSR is $\gamma$-well-conditioned if it admits a \Bpara~such that for all $h\in[H]$, policy $\pi$ (that starts at step $h$), vector $x\in\R^{\Uh}$, the following holds:
    \begin{align}
        &\sum_{\tau_{h:H}} \pi(\tau_{h:H}) \times \left| \mathbf{B}_{H}(o_H,a_H)\cdots\mathbf{B}_h(o_h,a_h) x \right| \leq \frac1\gamma \lone{x}, \label{eqn:well-cond-psr-1}\\
        &\sum_{o_h,a_h} \pi(a_h|o_h) \times \lone{ \mathbf{B}_h(o_h,a_h) x } \leq \frac1\gamma \lone{x}. \label{eqn:well-cond-psr-2}
    \end{align}
\end{definition}

By~\cref{eqn:well-cond-psr-1} and the inequality $\lone{x} \le \sqrt{U_A}\norm{x}_*$ (\cref{lem:norm-equivalence-fused-ell1}), any $\gamma$-well-conditioned PSR is a B-stable PSR with $\stab\leq \sqrt{U_A}\gamma^{-1}$. Plugging this into our main results shows that, for well-conditioned PSRs, \omle, \eetod~and \mops~all achieve sample complexity
\begin{align*}
    \tbO{\frac{dAU_A^2H^2\log\cN_{\Theta}}{\gamma^2\epsilon^2}},
\end{align*}
which is better than the sample complexity\footnote{\citet{liu2022optimistic} only asserts a polynomial rate without spelling out the concrete powers of the problem parameters. This rate is extracted from \citet[Proposition C.5 \& Lemma C.6]{liu2022optimistic}.} $\tbO{d^2A^5U_A^3H^4\log\cN_{\Theta}/\gamma^4\epsilon^2}$ achieved by the analysis of OMLE in~\citet{liu2022optimistic}.
Also, being well-conditioned imposes the extra restriction \eqref{eqn:well-cond-psr-2} on the structure of the PSR, while our B-stability condition does not.

\section{Decorrelation arguments}\label{sec:decorrelation-arguments}

In this section, we present two decorrelation propositions: the generalized $\ell_2$-Eluder argument (Proposition \ref{prop:pigeon-l2}), and the decoupling argument (Proposition \ref{prop:eluder-decor}). These two propositions are important steps in the proof of main theorems (Theorem \ref{thm:OMLE-PSR}, \ref{thm:E2D-PSR}, \ref{thm:e2d-rf-psr}, \ref{thm:mops-psr}). These two Propositions are parallel: Proposition \ref{prop:pigeon-l2} is the triangular-to-diagonal version of the decorrelation used in the proof of Theorem \ref{thm:OMLE-PSR} (see \cref{appendix:OMLE} for its proof), whereas Proposition \ref{prop:eluder-decor} is the expectation-to-expectation version of the decorrelation used in the proof of Theorem \ref{thm:E2D-PSR} (see \cref{sec:proof-mops-e2d-rfe2d} for its proof).  

\subsection{Generalized $\ell_2$-Eluder argument}\label{appdx:generalized-eluder-argument}

We first present the triangular-to-diagonal version of the decorrelation argument, the generalized $\ell_2$-Eluder argument. 

\begin{restatable}[Generalized $\ell_2$-Eluder argument]{proposition}{eluderp}\label{prop:pigeon-l2}
Suppose we have sequences of vectors 
$$
\{x_{k,i}\}_{(k,i)\in[K]\times\cI}\subset\R^d, \qquad
\{y_{k,j,r}\}_{(k,j,r)\in[K]\times[J]\times\cR}\subset\R^d
$$
where $\cI, \cR$ are arbitrary (abstract) index sets. Consider functions $\{ \efunc_k:\R^d\to \R \}_{k \in [K]}$: 
\begin{align*}
    \efunc_k(x):=\max_{r \in \cR}\sum_{j=1}^{J} \abs{\iprod{x}{y_{k,j,r}}}.
\end{align*}
Assume that the following condition holds:
\begin{align*}
    \sum_{t=1}^{k-1} \EE_{i\sim q_t}\brac{ \efunc_k(x_{t,i})^2 } \leq \beta_k,\phantom{xxxx} \forall k\in[K],
\end{align*}
where $(q_k\in\Delta(\cI))_{k\in[K]}$ is a family of distributions over $\cI$.

Then for any $M>0$, it holds that 
\begin{align*}
\sum_{t=1}^k M\wedge \EE_{i\sim q_t}\brac{ \efunc_t(x_{t,i}) }
\leq
\sqrt{2d \Big(M^2k+\sum_{t=1}^k \beta_t\Big)\log\left(1+\frac{k}{d}\frac{R_x^2R_y^2}{M^2}\right)},\phantom{xxxx} \forall k\in[K],
\end{align*}
where $R_x^2=\max_k \EE_{i\sim q_k}[ \nrm{x_{k,i}}_2^2 ]$, $R_y= \max_{k,r}\sum_{j} \nrm{y_{k,j,r}}_2$.
\end{restatable}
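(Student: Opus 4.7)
The plan is to combine a linearization of $\efunc_t$, a Cauchy--Schwarz weighted by an empirical covariance, and the standard elliptical-potential (determinant) lemma. First I linearize: for each pair $(t,i)$ pick an argmax index $r^*_{t,i} \in \cR$ for $\efunc_t(x_{t,i})$ and signs $\epsilon^*_{t,i,j} \defeq \sgn(\iprod{x_{t,i}}{y_{t,j,r^*_{t,i}}})$, and set
\[
w^*_{t,i} \defeq \sum_{j=1}^J \epsilon^*_{t,i,j}\, y_{t,j,r^*_{t,i}} \in \R^d.
\]
By construction $\efunc_t(x_{t,i}) = \iprod{x_{t,i}}{w^*_{t,i}}$ and $\ltwo{w^*_{t,i}} \le \sum_j \ltwo{y_{t,j,r^*_{t,i}}} \le R_y$. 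Crucially, for any other point $x_{s,j}$ the triangle inequality gives $|\iprod{x_{s,j}}{w^*_{t,i}}| \le \sum_\ell |\iprod{x_{s,j}}{y_{t,\ell,r^*_{t,i}}}| \le \efunc_t(x_{s,j})$, so the hypothesis $\sum_{s<t}\EE_{j\sim q_s}[\efunc_t(x_{s,j})^2] \le \beta_t$ transfers to $\iprod{\cdot}{w^*_{t,i}}^2$ \emph{uniformly in $i$}. This uniformity is what makes the nonlinear function $\efunc_t$ amenable to a covariance-weighted analysis.

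Next I introduce the regularized covariance $\Sigma_t \defeq \alpha I + \sum_{s<t}\EE_{j\sim q_s}[x_{s,j}x_{s,j}^\top]$ with the specific choice $\alpha \defeq M^2/R_y^2$, and apply Cauchy--Schwarz in the form $\iprod{x_{t,i}}{w^*_{t,i}} \le \nrm{x_{t,i}}_{\Sigma_t^{-1}}\nrm{w^*_{t,i}}_{\Sigma_t}$, followed by Jensen's inequality, to obtain
\[
\EE_{i\sim q_t}[\efunc_t(x_{t,i})]^2 \le \EE_{i\sim q_t}\brac{\nrm{x_{t,i}}_{\Sigma_t^{-1}}^2} \cdot \sup_i \nrm{w^*_{t,i}}_{\Sigma_t}^2.
\]
Expanding $\nrm{w^*_{t,i}}_{\Sigma_t}^2 = \alpha \ltwo{w^*_{t,i}}^2 + \sum_{s<t}\EE_{j\sim q_s}[\iprod{x_{s,j}}{w^*_{t,i}}^2]$ and plugging in the two bounds from the previous paragraph yields $\nrm{w^*_{t,i}}_{\Sigma_t}^2 \le \alpha R_y^2 + \beta_t = M^2+\beta_t$. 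Thresholding by $M$ and using the elementary inequality $M^2 \wedge \brac{(M^2+\beta_t)Z} \le (M^2+\beta_t)(1\wedge Z)$ then produces the clean per-step bound
\[
\paren{M \wedge \EE_{i\sim q_t}[\efunc_t(x_{t,i})]}^2 \le (M^2+\beta_t)\paren{1 \wedge \EE_{i\sim q_t}\brac{\nrm{x_{t,i}}_{\Sigma_t^{-1}}^2}}.
\]

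Finally I sum over $t\in[k]$ by Cauchy--Schwarz, which splits the square root of the right-hand side into $\sqrt{\sum_t (M^2+\beta_t)} = \sqrt{M^2 k + \sum_t \beta_t}$ and $\sqrt{\sum_t 1\wedge \EE_{i\sim q_t}[\nrm{x_{t,i}}_{\Sigma_t^{-1}}^2]}$. For the latter I invoke the elliptical-potential lemma: the inequality $\log\det\paren{I + \Sigma_t^{-1}\EE_{i\sim q_t}[x_{t,i}x_{t,i}^\top]} \ge \tfrac{1}{2}\paren{1\wedge \EE_{i\sim q_t}[\nrm{x_{t,i}}_{\Sigma_t^{-1}}^2]}$, telescoped in $t$ and combined with the trace bound $\log\det(\Sigma_{k+1}/\Sigma_1) \le d\log(1+kR_x^2/(d\alpha))$, gives $\sum_t 1\wedge \EE_{i\sim q_t}[\nrm{x_{t,i}}_{\Sigma_t^{-1}}^2] \le 2d\log\paren{1+kR_x^2R_y^2/(dM^2)}$ after substituting $\alpha = M^2/R_y^2$. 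Multiplying the two square roots reproduces the claimed bound. The main obstacle is the linearization step: because $\efunc_k$ is a max of sums of absolute values rather than a single linear form, one must introduce the adaptive sign/argmax construction and verify that $\iprod{x_{s,j}}{w^*_{t,i}}^2 \le \efunc_t(x_{s,j})^2$ holds pointwise so the precondition survives the supremum over $i$ in the Cauchy--Schwarz; once that is in place the remaining covariance/potential-function machinery is routine.
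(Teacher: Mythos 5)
Your proposal is correct and follows essentially the same route as the paper's proof: the same sign/argmax linearization $w^*_{t,i}$ (the paper's $\tilde y_{t,i}$), the same regularized covariance with $\lambda_0 = M^2/R_y^2$, the same bound $\nrm{w^*_{t,i}}_{\Sigma_t}^2 \le M^2 + \beta_t$ via the transferred precondition, and the same generalized elliptical-potential lemma. The only cosmetic difference is that you square before applying the $M$-threshold while the paper manipulates $\min\{M,\cdot\}$ directly; these are equivalent.
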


We call this proposition  ``generalized $\ell_2$-Eluder argument'' because, when $\cI$ is a single element set and $\beta_k=\beta$, the result reduces to 
\begin{align}\label{eqn:eluder-example}
    \text{if } \sum_{t<k} \efunc_k(x_{t})^2  \leq \beta,\text{ for all }k\in[K], \text{ then }\sum_{t=1}^k \abs{\efunc_t(x_{t})} \leq \tbO{\sqrt{d\beta k}},
\end{align}
as long as $\max_t\abs{f_t(x_t)}\leq 1$, which implies that the function class $\{ f_t \}_t$ has Eluder dimension $\tbO{d}$. In particular, when $\{ f_k \}_{k \in [K]}$ is given by $f_k(x)=\abs{\iprod{y_k}{x}}$, \eqref{eqn:eluder-example} is equivalent to the standard $\ell_2$-Eluder argument for linear functions, which can be proved using the elliptical potential lemma ~\citep[Lemma 19.4]{lattimore2020bandit}. 

In the following, we present a corollary of \cref{prop:pigeon-l2} that is more suitable for our applications.
\begin{restatable}{corollary}{eluderapp}\label{cor:pigeon-l2-app}
Suppose we have a sequence of functions $\{ \efunc_k:\R^n\to \R \}_{k \in [K]}$:
\begin{align*}
    \efunc_k(x):=\max_{r \in \cR}\sum_{j=1}^J \abs{\iprod{x}{y_{k,j,r}}},
\end{align*}
which is given by the family of vectors $\set{y_{k,j,r}}_{(k,j,r)\in[K]\times[J]\times\cR}\subset\R^n$.
Further assume that there exists $L>0$ such that $\efunc_{k}(x)\leq L\nrm{x}_1$. 

Consider further a sequence of vector $(x_{i})_{i\in\cI}$, satisfying the following condition
\begin{align*}
    \sum_{t = 1}^{k-1} \EE_{i\sim q_t}\brac{ \efunc_k^2(x_{i}) } \leq \beta_k, 
    \phantom{xxxx} \forall k\in[K],
\end{align*}
and the subspace spanned by  $(x_{i})_{i\in\cI}$ has dimension at most $d$.
Then it holds that 
\begin{align*}
\sum_{t=1}^k 1\wedge \EE_{i\sim q_t} \brac{ \efunc_t(x_{i}) }
\leq
\sqrt{4d\Big(k+\sum_{t=1}^k \beta_t\Big)\log\left(1+kdL\max_i \lone{x_i}\right)},\phantom{xxxx} \forall k\in[K].
\end{align*}
\end{restatable}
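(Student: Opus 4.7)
The plan is to deduce the corollary as a direct application of Proposition~\ref{prop:pigeon-l2}. First I would take $x_{t,i} = x_i$ (independent of $t$), set $M = 1$, and apply the proposition with ``ambient dimension'' equal to $d$, the dimension of the subspace $V = \spa((x_i)_{i \in \cI})$. Since the vectors live in $\R^n$ but $x_i \in V$, I identify $V \cong \R^d$ through an orthonormal basis and simultaneously replace each $y_{k,j,r}$ by its orthogonal projection $Py_{k,j,r}$ onto $V$; because $x_i \in V$, this preserves every inner product $\iprod{x_i}{y_{k,j,r}}$, and hence leaves $\efunc_k(x_i)$, the hypothesis on $\beta_k$, and the target sum unchanged.

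Next I would estimate the two constants $R_x$ and $R_y$ appearing inside the logarithm of Proposition~\ref{prop:pigeon-l2}. The bound on $R_x$ is immediate from $\ltwo{v} \le \lone{v}$, which gives $R_x^2 = \max_k \E_{i\sim q_k}[\ltwo{x_i}^2] \le \max_i \lone{x_i}^2$. For $R_y = \max_{k,r}\sum_j \ltwo{Py_{k,j,r}}$ the key input is the relation
\begin{equation*}
\sum_j \abs{\iprod{u}{Py_{k,j,r}}} \;=\; \sum_j \abs{\iprod{u}{y_{k,j,r}}} \;\le\; \efunc_k(u) \;\le\; L\lone{u},
\end{equation*}
valid for every $u \in V$. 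Applied to a suitably chosen unit $\ell_2$ family in $V$, this should yield an ambient-dimension-free polynomial bound on $R_y$ for which $kR_x^2 R_y^2/d \le (kdL\max_i \lone{x_i})^2$. Substituting these estimates into the proposition's conclusion at $M=1$ and using $\log(1+x^2) \le 2\log(1+x)$ then converts the $2d$ prefactor of Proposition~\ref{prop:pigeon-l2} into the $4d$ of the corollary and reproduces the advertised log factor $\log(1 + kdL\max_i\lone{x_i})$.

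The hard part will be producing this $R_y$-estimate without contamination by the ambient dimension or by $J$. The naive route --- writing $\sum_j \ltwo{Py_{k,j,r}}^2 = \sum_l \sum_j \iprod{u_l}{y_{k,j,r}}^2 \le L^2\sum_l\lone{u_l}^2$ for an arbitrary orthonormal basis $\{u_l\}$ of $V$ and then Cauchy--Schwarzing in $j$ --- picks up the quantity $\sup_{u\in V,\,\ltwo{u}=1}\lone{u}$, which can be as large as $\sqrt{n}$, and introduces a superfluous $\sqrt{J}$ factor. Avoiding both issues calls for choosing a basis of $V$ built out of the $x_i$'s themselves --- so that each $\lone{u_l}$ is controlled by $\max_i \lone{x_i}$ by construction --- and re-doing the estimate in that non-orthonormal basis by tracking the Gram matrix of the chosen $x_i$'s. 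The extra conditioning factors acquired in this change of basis then land inside the logarithm, where they are absorbed by the $\log(1+x^2)\le 2\log(1+x)$ step in the previous paragraph.
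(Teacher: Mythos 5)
Your overall strategy---reduce to \cref{prop:pigeon-l2} with $M=1$ after a change of coordinates adapted to $\spa((x_i)_{i\in\cI})$---is the paper's strategy, and you have correctly located the crux: bounding $R_y$ without ambient-dimension or $\sqrt{J}$ contamination, which forces you off the orthonormal basis and onto a basis built from the $x_i$'s. But the proposal stops exactly at the point where the one genuinely nontrivial idea is needed, and the fix you sketch does not close the gap. If you pick an \emph{arbitrary} linearly independent subset $\{x_{i_1},\dots,x_{i_d}\}$ as your basis and write $x_i=Fv_i$ with $F=[x_{i_1},\dots,x_{i_d}]$, the coefficient vectors $v_i$ can have arbitrarily large entries (take $x_i$'s that are nearly collinear yet span a $d$-dimensional space). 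That conditioning quantity enters the bound through $R_x=\max_k\EE_{i\sim q_k}[\ltwo{v_i}^2]^{1/2}$, and while it does sit inside the logarithm, it is \emph{not} absorbed by the step $\log(1+x^2)\le 2\log(1+x)$---it survives as an uncontrolled factor, whereas the corollary's conclusion contains only $\log(1+kdL\max_i\lone{x_i})$. So as written you would prove a strictly weaker statement.

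The missing ingredient is \cref{lemma:kappa-d}: choose the basis to be a \emph{barycentric spanner} of $\{x_i\}_{i\in\cI}$, which guarantees every $x_i$ is a $[-1,1]$-combination of the basis elements, i.e.\ $\linf{v_i}\le 1$ and hence $\ltwo{v_i}\le\sqrt{d}$, with $\nrm{F}_{1\to 1}\le\max_i\lone{x_i}$ for free since the columns of $F$ are themselves $x_i$'s. With that decomposition the rest goes through as you intend: defining $\tfunc_k(v)=\efunc_k(Fv)$ preserves the precondition and the target, $R_y=\max_{k,r}\sum_j\ltwo{F^\top y_{k,j,r}}\le\max_{k,r}\sum_j\lone{F^\top y_{k,j,r}}=\max_{k,r}\sum_m\sum_j\abs{\iprod{Fe_m}{y_{k,j,r}}}\le\sum_m\efunc_k(Fe_m)\le dL\max_i\lone{x_i}$ (note the sum over $j$ is absorbed by $\efunc_k$ evaluated at the $d$ spanner points, which is where the $\sqrt{J}$ is avoided---the vectors at which you apply $\efunc_k(u)\le L\lone{u}$ must have controlled $\ell_1$ norm, not unit $\ell_2$ norm, so a ``unit $\ell_2$ family'' is the wrong object), and $R_x\le\sqrt{d}$. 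Substituting into \cref{prop:pigeon-l2} then yields exactly the stated log factor.
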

We prove Proposition \ref{prop:pigeon-l2} and Corollary \ref{cor:pigeon-l2-app} in the following subsections.

\begin{remark}
In the initial version of this paper, the statement of \cref{cor:pigeon-l2-app} was slightly different from above, which states that under the same precondition,
\begin{align*}
\sum_{t=1}^k 1\wedge \EE_{i\sim q_t} \brac{ \efunc_t(x_{i}) }
\leq
\sqrt{4d\Big(k+\sum_{t=1}^k \beta_t\Big)\log\left(1+kdL\kappa_d(X)\right)},\phantom{xxxx} \forall k\in[K],
\end{align*}
where matrix $X\defeq\brac{x_i}_{i\in\cI}\in\R^{n\times\cI}$ and $\kappa_{d}(X)=\min\set{\nrm{F_1}_1\nrm{F_2}_1: X=F_1 F_2, F_1\in\R^{n\times d}, F_2\in\R^{d\times \cI} }$. 
After our initial version, we noted the concurrent work \citet[Lemma G.3]{liu2022optimistic} which essentially shows that $\kappa_d(X)\leq d\max_i \lone{x_i}$ by an elegant argument using the Barycentric spanner. For the sake of simplicity, we have applied their result (cf.~\cref{lemma:kappa-d}) to make~\cref{cor:pigeon-l2-app} slightly more convenient to use.

We also note that, in the initial version of this paper, in the statement of \cref{thm:OMLE-PSR}, the sample complexity involved a log factor $\clog\defeq \log(1+Kd\stab \rb \kappa_d)$, where $\kappa_d\defeq\max_h \kappa_d(D_h)$, which we then tightly bounded for all concrete problem classes in terms of the corresponding problem parameter. The above change makes the statement slightly cleaner (though the result slightly looser) by always using the bound $\kappa_d\le dU_A$. The effect on the final result is however minor, as the sample complexity of \omle~only depends on $\kappa_d$ logarithmically through $\iota$, and the sample complexity of \mops~or \eetod~does not involve this factor.
\end{remark}

\subsubsection{Proof of Proposition \ref{prop:pigeon-l2}}

To prove this proposition, we first show that the proposition can be reduced to the case when $n = 1$ , extending the idea of the proof of \cite[Proposition 22]{liu2022partially}. After that, we invoke a certain variant of the elliptical potential lemma to derive the desired inequality. 

We first transform and reduce the problem. 
For every pair of $(k,i)\in[K]\times\cI$, we take $r^*(k,i):=\argmax_{r}\sum_{j} \abs{\iprod{x_{k,i}}{y_{k,j,r}}}$, and consider
$$
\ty_{k,i,j}:=y_{k,j,r^*(k,i)} \quad\forall (k,i,j)\in[K]\times\cI\times[n].
$$
We then define
$$
\ty_{k,i}:=\sum_{j}\ty_{k,i,j}\sign\iprod{\ty_{k,i,j}}{x_{k,i}} \quad\forall (k,i)\in[K]\times\cI.
$$
Under such a transformation, it holds that for all $t,k,i,i'$,
\begin{align*}
    &\abs{\iprod{x_{t,i}}{\ty_{t,i}}}=\sum_{j} \abs{\iprod{x_{t,i}}{y_{t,j,r^*(t,i)}}}=\max_r\sum_{j} \abs{\iprod{x_{t,i}}{y_{t,j,r}}}=\efunc_{t}(x_{t,i}), \\
    &\abs{\iprod{x_{t,i}}{\ty_{k,i'}}}\leq \max_r\sum_{j} \abs{\iprod{x_{t,i}}{y_{k,j,r}}}=\efunc_{k}(x_{t,i}), \phantom{xxxxx} \nrm{\ty_{k,i}}_2\leq R_y.
\end{align*}
Therefore, it remains to bound $\sum_{t=1}^k M\wedge \EE_{i\sim q_t} \abs{\iprod{x_{t,i}}{\ty_{t,i}}}$, under the condition that for all $k\in[K]$, $\sum_{t<k}\EE_{i\sim q_t}[\max_{i'}\abs{\iprod{x_{t,i}}{\ty_{k,i'}}}^2]\leq \beta_k$.

To show this, we define $\Phi_t:=\EE_{i\sim q_t} \brac{x_{t,i}x_{t,i}^\top}$, and take $\lambda_0=\frac{M^2}{R_y^2}$, $V_k:=\lambda_0 I +\sum_{t<k}\Phi_t$. Then
\begin{align*}
\sum_{t=1}^k M\wedge \EE_{i\sim q_t} \abs{\iprod{x_{t,i}}{\ty_{t,i}}}
\leq&
\sum_{t=1}^k \min\set{ M, \EE_{i\sim q_t} \brac{\nrm{x_{t,i}}_{V_t^{-1}}\nrm{\ty_{t,i}}_{V_{t}}} }\\
\leq& 
\sum_{t=1}^k \min\set{ M, \sqrt{(M^2+\beta_t)\EE_{i\sim q_t} \brac{\nrm{x_{t,i}}_{V_t^{-1}}^2 } } }\\
\leq&
\sum_{t=1}^k \sqrt{(M^2+\beta_t) \min\set{ 1, \EE_{i\sim q_t} \brac{\nrm{x_{t,i}}_{V_t^{-1}}^2 } } }\\
\leq& 
\Big(kM^2+\sum_{t=1}^k \beta_t\Big)^{\frac12} \Big( \sum_{t=1}^k \min\set{ 1, \EE_{i\sim q_t} \brac{\nrm{x_{t,i}}_{V_t^{-1}}^2 } } \Big)^{\frac12},
\end{align*}
where the second inequality is due to the fact that for all $(t,i)$,
$$
\nrm{\ty_{t,i}}_{V_t}^2=\lambda_0\nrm{\ty_{t,i}}^2+\sum_{s<t}\EE_{i'\sim q_s}\abs{\iprod{x_{s,i'}}{\ty_{t,i}}}^2 \leq M^2+\beta_t.
$$
Note that
\begin{align*}
    \EE_{i\sim q_t} \brac{\nrm{x_{t,i}}_{V_t^{-1}}^2}
    =
    \EE_{i\sim q_t} \brac{\tr\paren{V_k^{-\frac12}x_{k,i}x_{k,i}^{\top}V_k^{-\frac12}}}
    =
    \tr\paren{V_k^{-\frac12}\Phi_kV_k^{-\frac12}}.
\end{align*}
In order to bound the term $\sum_{t=1}^k \min\{1, \tr(V_k^{-1/2}\Phi_kV_k^{-1/2}) \}$, we invoke the following standard lemma, which generalizes \citet[Lemma 19.4]{lattimore2020bandit}. 
\begin{lemma}[Generalized elliptical potential lemma]\label{lemma:elliptic-potential}
Let $\{ \Phi_k \in\R^{d\times d} \}_{k \in [K]}$ be a sequence of symmetric semi-positive definite matrix, and $V_k:=\lambda_0I+\sum_{t<k}\Phi_t$, where $\lambda_0>0$ is a fixed real. Then it holds that
\begin{align*}
    \sum_{k=1}^K \min\set{1, \tr\paren{V_k^{-\frac12}\Phi_kV_k^{-\frac12}}} \leq 2d\log\paren{1+\frac{\sum_{k=1}^K \tr(\Phi_k)}{d\lambda_0}}.
\end{align*}
\end{lemma}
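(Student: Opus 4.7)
The plan is to follow the standard determinantal proof of the classical elliptical potential lemma, adapted to the matrix-valued increments $\Phi_k$ in place of the rank-one increments $x_k x_k^\top$. The key algebraic identity we rely on is
\[
\det(V_{k+1}) \;=\; \det(V_k + \Phi_k) \;=\; \det(V_k)\cdot\det\bigl(I + V_k^{-1/2}\Phi_k V_k^{-1/2}\bigr) \;=\; \det(V_k)\cdot\det(I+M_k),
\]
where $M_k := V_k^{-1/2}\Phi_k V_k^{-1/2}$ is symmetric and positive semi-definite. This replaces the rank-one identity $\det(V_k + xx^\top) = \det(V_k)(1 + \|x\|_{V_k^{-1}}^2)$ used in the scalar case.

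The crucial per-step inequality I would prove is
\[
\min\{1,\tr(M)\} \;\le\; 2\log\det(I+M)\qquad \text{for every PSD matrix } M.
\]
To establish this, let $\mu_1,\ldots,\mu_d \ge 0$ be the eigenvalues of $M$, so that $\tr(M)=\sum_i \mu_i$ and $\det(I+M)=\prod_i(1+\mu_i)$. Expanding $\prod_i(1+\mu_i)$ and discarding all cross terms (which are nonnegative) gives $\det(I+M) \ge 1 + \tr(M)$, hence
\[
2\log\det(I+M)\;\ge\; 2\log\bigl(1+\tr(M)\bigr).
\]
So it suffices to check that $\min\{1,y\}\le 2\log(1+y)$ for $y\ge 0$, which follows by splitting into $y\le 1$ (where $\log(1+y)\ge y/2$ since $\tfrac{d}{dy}[\log(1+y)-y/2]=\tfrac{1}{1+y}-\tfrac{1}{2}\ge 0$ on $[0,1]$) and $y\ge 1$ (where $2\log(1+y)\ge 2\log 2 > 1$).

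With these two pieces in hand, the conclusion is immediate: summing the per-step inequality and telescoping yields
\[
\sum_{k=1}^K \min\{1,\tr(M_k)\} \;\le\; 2\sum_{k=1}^K \log\frac{\det(V_{k+1})}{\det(V_k)} \;=\; 2\log\frac{\det(V_{K+1})}{\lambda_0^d},
\]
and then AM-GM on the eigenvalues of $V_{K+1}$ gives $\det(V_{K+1}) \le \bigl(\tr(V_{K+1})/d\bigr)^d = \bigl(\lambda_0 + \tfrac{1}{d}\sum_{k=1}^K \tr(\Phi_k)\bigr)^d$, producing exactly the stated $2d\log(1+\sum_k\tr(\Phi_k)/(d\lambda_0))$ bound.

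I do not expect a real obstacle here: the argument is essentially the rank-one elliptical potential lemma plus one elementary eigenvalue inequality. The only nontrivial step is the scalar comparison $\min\{1,\tr(M)\}\le 2\log\det(I+M)$, where one must be careful that this is \emph{not} the same as applying $\min\{1,y\}\le 2\log(1+y)$ coordinate-wise to each eigenvalue (which would give a weaker bound on $\sum_i \min\{1,\mu_i\}$, not $\min\{1,\sum_i\mu_i\}$); instead the $\prod(1+\mu_i)\ge 1+\sum\mu_i$ step is what lets us pass the $\min$ through the trace.
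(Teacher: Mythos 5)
Your proof is correct and follows essentially the same route as the paper's: the determinant identity $\det(V_{k+1})=\det(V_k)\det(I+V_k^{-1/2}\Phi_k V_k^{-1/2})$, the chain $\min\{1,\tr(M)\}\le 2\log(1+\tr(M))\le 2\log\det(I+M)$ via $\det(I+M)\ge 1+\tr(M)$, telescoping, and AM--GM on the eigenvalues of $V_{K+1}$. No gaps; your closing remark about why the $\min$ must be passed through the trace via $\prod(1+\mu_i)\ge 1+\sum\mu_i$ rather than coordinate-wise is exactly the right point of care.
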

Applying \cref{lemma:elliptic-potential} and noticing $\tr(\Phi_t)= \EE_{i\sim q_t}[\nrm{x_{t,i}}_2^2] \leq R_x^2$, the proof of Proposition \ref{prop:pigeon-l2} is completed.
\qed

\begin{proof}[Proof of  \cref{lemma:elliptic-potential}]
By definition and by linear algebra, we have
\begin{align*}
    V_{k+1}=V_k^{\frac12}\paren{ I+V_k^{-\frac12}\Phi_kV_k^{-\frac12} }V_k^{\frac12},
\end{align*}
and hence $\det(V_{k+1})=\det(V_k)\det(I+V_k^{-\frac12}\Phi_kV_k^{-\frac12})$. Therefore, we have
\begin{align*}
\MoveEqLeft
\sum_{k=1}^K \min\set{1, \tr\paren{V_k^{-\frac12}\Phi_kV_k^{-\frac12}}} 
\leq
\sum_{k=1}^K 2\log\paren{ 1+ \tr\paren{V_k^{-\frac12}\Phi_kV_k^{-\frac12}}} \\
\leq&
2\sum_{k=1}^K \log\det\paren{ 1+ V_k^{-\frac12}\Phi_kV_k^{-\frac12}}\\
=& 
2\sum_{k=1}^K \brac{\log\det(V_{k+1})-\log\det(V_k)}\\
=&2\log\frac{\det(V_{K+1})}{\det(V_0)},
\end{align*}
where the first inequality is due to the fact that $\min\set{1,u}\leq 2\log(1+u)$, $\forall u\geq0$, and the second inequality is because for any positive semi-definite matrix $X$, it holds $\det(I+X)\geq 1+\tr(X)$. Now, we have
\begin{align*}
\log\det(V_{K+1})\leq \log\paren{\frac{\tr(V_{K+1})}{d}}^d=d\log\paren{\lambda_0+\frac{\sum_{k=1}^K\tr(\Phi_k)}{d}},
\end{align*}
which completes the proof of \cref{lemma:elliptic-potential}.
\end{proof}

\subsubsection{Proof of Corollary \ref{cor:pigeon-l2-app}}

Let us take a decomposition $x_i=F v_i \forall i\in\cI$, such that $\linf{v_i}\leq 1$ and $\nrm{F}_{1\to 1}\leq \max_i \lone{x_i}$ (the existence of such a decomposition is guaranteed by \cref{lemma:kappa-d}).
We define $\tfunc_k: \R^d\to\R$ as follows:
\begin{align*}
    \tfunc_{k}(v)\defeq \efunc_k(F v) = \max_{r}\sum_{j} \abs{\iprod{v}{F^\top y_{k,j,r}}}.
\end{align*}
By definition, $\tfunc_k(v_i)=f_k(x_i)$, and hence our condition becomes
\begin{align*}
    \sum_{t<k}\EE_{i\sim q_t}\brac{ \tfunc_k^2(v_{i}) } \leq \beta_k, \phantom{xxxx}
    \forall k\in[K],
\end{align*}
Then applying \cref{prop:pigeon-l2} gives for all $k\in[K]$,
\begin{align*}
\sum_{t=1}^k 1\wedge \EE_{i\sim q_t} \brac{ \efunc_t(x_{i}) }
=
\sum_{t=1}^k 1\wedge \EE_{i\sim q_t} \brac{ \tfunc_t(v_{i}) }
\leq
\sqrt{2d\Big(k+\sum_{t=1}^k \beta_t\Big)\log\left(1+kd^{-1}\cdot R_2^2R_1^2\right)},
\end{align*}
where $R_2=\max_{i}\nrm{v_{i}}_2\leq \sqrt{d}$, and
\begin{align*}
    R_1=&\max_{k,r} \sum_{j} \nrm{F^\top y_{k,j,r}}_2
    \leq \max_{k,r} \sum_{j} \nrm{F^\top y_{k,j,r}}_1
    \leq \max_{k,r} \sum_{j} \sum_{m=1}^d \abs{\e_m^\top F^\top y_{k,j,r}}\\
    =&\max_{k,r} \sum_{j} \sum_{m=1}^d \abs{\iprod{F \e_m }{y_{k,j,r}}}
    \leq \max_k \sum_{m=1}^d \efunc_k(F\e_m)
    \leq \sum_{m=1}^d L\nrm{F\e_m}_1
    \leq dL\nrm{F_1}_1 \leq dL\max_i \lone{x_i}.
\end{align*}
Therefore, we have
\begin{align*}
\log\left(1+kd^{-1}\cdot R_1^2R_2^2\right)\leq \log\left(1+kd^2L^2\max_i \lone{x_i}^2\right)\leq 2\log(1+kdL\max_i \lone{x_i}), 
\end{align*}
which completes the proof of \cref{cor:pigeon-l2-app}. \qed

The following lemma is an immediate consequence of~\citet[Lemma G.3]{liu2022optimistic}. 
\newcommand{\SPAN}{\mathrm{span}}
\begin{lemma}\label{lemma:kappa-d}
    Assume that a sequence of vectors $\set{x_i}_{i\in\cI}\subset \R^n$ satisfies that $\SPAN(x_i: i\in\cI)$ has dimension at most $d$ and $R=\max_{i} \lone{x}<\infty$. Then, there exists a sequence of vectors $\set{v_i}_{i\in\cI}\subset \R^d$ and a matrix $F\in\R^{n\times d}$, such that $x_i=Fv_i~\forall i\in\cI$, and $\linf{v_i}\leq 1, \nrm{F}_{1\to 1}\leq R.$
\end{lemma}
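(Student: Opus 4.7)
The plan is to construct the decomposition via the classical \emph{barycentric spanner} idea of Awerbuch and Kleinberg, which is essentially the construction used in the referenced \citet[Lemma G.3]{liu2022optimistic}. Let $V \defeq \SPAN(x_i : i \in \cI) \subseteq \R^n$ with $\dim V = d' \le d$. Without loss of generality I assume $d' = d$, since otherwise I may pad $F$ with zero columns and each $v_i$ with zeros in the extra coordinates, which preserves all required norm bounds.

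Next I would pick an orthonormal basis $U \in \R^{n \times d}$ of $V$ and write $x_i = U \wt{x}_i$ for unique coefficients $\wt{x}_i \in \R^d$. Since $\ltwo{\wt x_i} \le \lone{x_i} \le R$, the closure $S \defeq \overline{\{\wt x_i : i \in \cI\}} \subset \R^d$ is compact. On $S^d$, the continuous function $(y_1,\dots,y_d) \mapsto |\det[y_1,\dots,y_d]|$ attains its maximum at some $(y_1^\star, \dots, y_d^\star)$, and this maximum is strictly positive because $V$ has full dimension $d$. I would then set $F \defeq [U y_1^\star, \dots, U y_d^\star] \in \R^{n \times d}$; each column is a limit of $x_i$'s with $\lone{x_i} \le R$, so $\lone{U y_j^\star} \le R$ and hence $\nrm{F}_{1\to 1} = \max_j \lone{Uy_j^\star} \le R$.

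The final step is to define $v_i$ for each $i \in \cI$ as the unique coefficient vector satisfying $\wt x_i = \sum_{j=1}^d (v_i)_j\, y_j^\star$, so that $x_i = U\wt x_i = F v_i$. By Cramer's rule,
$$
(v_i)_j = \frac{\det[y_1^\star, \dots, y_{j-1}^\star, \wt x_i, y_{j+1}^\star, \dots, y_d^\star]}{\det[y_1^\star, \dots, y_d^\star]}.
$$
Since $\wt x_i \in S$, the tuple obtained by swapping $\wt x_i$ into slot $j$ lies in $S^d$, so its determinant cannot exceed the maximizer's in absolute value. Thus $|(v_i)_j| \le 1$ for each $j$, giving $\linf{v_i} \le 1$ as required.

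The only subtle point is the attainment of the maximum when $\cI$ is infinite, which is handled by passing to the closure $S$ and invoking compactness from $R < \infty$. Replacing the original family by its closure is harmless here because the lemma only constrains the column $\ell_1$-norms of $F$ (via $\nrm{F}_{1\to1} \le R$), not that its columns be drawn from $\{x_i\}_{i \in \cI}$ itself; this is the one place where I rely on the boundedness assumption, and the rest of the argument is a direct linear-algebraic application of Cramer's rule.
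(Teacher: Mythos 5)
Your proof is correct and follows essentially the same route as the paper, which also constructs $F$ from a barycentric spanner of the (closure of the) set $\{x_i\}$ and reads off $\linf{v_i}\le 1$ from the spanner property. The only difference is that you prove the spanner's existence from scratch via determinant maximization and Cramer's rule and are more careful about compactness (the paper simply invokes the barycentric spanner of $\SPAN(x_i:i\in\cI)$, which strictly speaking should be the compact set $\overline{\{x_i\}}$ rather than the span), so your write-up is, if anything, slightly more rigorous.
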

\begin{proof}
    Without loss of generality, we assume that $\cX=\SPAN(x_i: i\in\cI)$ has dimension at most $d$. Then $\cX$ is a $d$-dimensional compact subset of $\R^n$, and we take a Barycentric spanner of $\cX$ to be $\set{w_1,\cdots,w_d}$. By definition, for each $i\in\cI$, there exists weights $(\alpha_{ij})_{1\leq j\leq d}$ such that $\alpha_{ij}\in[-1,1]$ and $x_i=\sum_{j=1}^d \alpha_{ij}w_j$. Therefore, we can take $v_i=[\alpha_{ij}]_{1\leq j\leq d}^\top \in\R^d$ and $F=[w_1,\cdots,w_d]\in\R^{n\times d}$, and they clearly fulfill the statement of \cref{lemma:kappa-d}.
\end{proof}

\subsection{Decoupling argument}\label{appendix:decoupling}

\cref{prop:pigeon-l2} can be regarded a triangular-to-diagonal decorrelation result. In this section, we present its expectation-to-expectation analog, which is central for bounding Explorative DEC.
\begin{restatable}[Decoupling argument]{proposition}{eluderdecor}
\label{prop:eluder-decor}
Suppose we have vectors and functions
$$
\{x_{i}\}_{i\in\cI }\subset\R^n, \qquad
\set{f_{\theta}:\R^n\to \R}_{\theta\in\Theta}
$$
where $\Theta, \cI$ are arbitrary abstract index sets, with functions $\efunc_\theta$ given by
\begin{align*}
    \efunc_\theta(x):=\max_{r \in \cR} \sum_{j=1}^J \abs{\iprod{x}{y_{\theta,j,r}}}, \qquad \forall x\in\R^n,
\end{align*}
where $\set{y_{\theta,j,r}}_{(\theta,j,r)\in\Theta\times[J]\times\cR}\subset\R^n$ is a family of bounded vectors in $\R^n$.
Then for any distribution $\mu$ over $\Theta$ and probability family $\left\{ q_{\theta} \right\}_{\theta\in\Theta} \subset \Delta(\cI)$,
\begin{align*}
    \EE_{\theta\sim\mu}\EE_{i\sim q_{\theta}}\left[ \efunc_\theta(x_{i}) \right]\leq \sqrt{d_X \EE_{\theta,\theta'\sim\mu}\EE_{i\sim q_{\theta'}}\brac{\efunc_{\theta}(x_{i})^2}},
\end{align*}
where $d_X$ is the dimension of the subspace of $\R^n$ spanned by $(x_i)_{i\in\cI}$.
\end{restatable}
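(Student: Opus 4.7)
The plan is to reduce the $\max$-of-absolute-values structure of $f_\theta$ to a linear form in a suitable vector $\tilde y_{\theta, i}$, and then decouple $(\theta, i)$ from $(\theta', i')$ via a Cauchy-Schwarz in the norm induced by the averaged second-moment matrix $\Phi := \EE_{\theta \sim \mu}\EE_{i \sim q_\theta}[x_i x_i^\top]$. This is the ``in-expectation'' analogue of the triangular-to-diagonal argument used in \cref{prop:pigeon-l2}.

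\textbf{Step 1 (Linearization).} For each pair $(\theta, i)$, I would pick $r^\star(\theta, i) \in \argmax_{r \in \cR}\sum_{j=1}^J |\iprod{x_i}{y_{\theta, j, r}}|$ and define
\[
\tilde y_{\theta, i} := \sum_{j=1}^J y_{\theta, j, r^\star(\theta,i)} \cdot \sgn\iprod{y_{\theta, j, r^\star(\theta,i)}}{x_i} \;\in\; \R^n.
\]
Then $f_\theta(x_i) = \iprod{x_i}{\tilde y_{\theta, i}}$ on the ``diagonal'', while for any $i' \in \cI$ one has the cross bound $|\iprod{x_{i'}}{\tilde y_{\theta, i}}| \le \sum_{j=1}^J |\iprod{x_{i'}}{y_{\theta, j, r^\star(\theta, i)}}| \le f_\theta(x_{i'})$ by definition of the max over $r$.

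\textbf{Step 2 (Cauchy-Schwarz decoupling).} Let $\Phi^\dagger$ denote the Moore--Penrose pseudoinverse of $\Phi$. Since $x_i \in \operatorname{range}(\Phi)$ almost surely, we have the pointwise inequality $\iprod{x_i}{\tilde y_{\theta, i}} \le \|x_i\|_{\Phi^\dagger}\|\tilde y_{\theta, i}\|_\Phi$. Applying a second Cauchy-Schwarz over the $(\theta, i)$-expectation yields
\begin{align*}
\EE_{\theta \sim \mu}\EE_{i \sim q_\theta}[f_\theta(x_i)]
\le \sqrt{\EE_{\theta, i}[\|x_i\|_{\Phi^\dagger}^2]} \cdot \sqrt{\EE_{\theta, i}[\|\tilde y_{\theta, i}\|_\Phi^2]}.
\end{align*}
The first factor evaluates to $\EE_{\theta, i}[\tr(\Phi^\dagger x_i x_i^\top)] = \tr(\Phi^\dagger \Phi) = \rank(\Phi) \le d_X$. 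For the second, expanding the $\Phi$-norm and applying the cross bound from Step 1 gives
\[
\|\tilde y_{\theta, i}\|_\Phi^2 = \EE_{\theta' \sim \mu}\EE_{i' \sim q_{\theta'}}[\iprod{x_{i'}}{\tilde y_{\theta, i}}^2] \le \EE_{\theta' \sim \mu}\EE_{i' \sim q_{\theta'}}[f_\theta(x_{i'})^2].
\]
The right-hand side no longer depends on $i$, so taking $\EE_\theta\EE_{i \sim q_\theta}$ yields $\EE_{\theta, \theta' \sim \mu}\EE_{i' \sim q_{\theta'}}[f_\theta(x_{i'})^2]$, which after relabeling $i' \to i$ is exactly the right-hand side in the proposition.

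\textbf{Main subtlety.} The one care point is that $\Phi$ may be rank-deficient on $\spa(x_i)_{i \in \cI}$, namely some direction of this span could be orthogonal to every $x_i$ on the joint support of $\mu$ and $\{q_\theta\}$. Using the pseudoinverse rather than the inverse handles this transparently: $x_i \in \operatorname{range}(\Phi)$ almost surely legitimizes the pointwise Cauchy-Schwarz $\iprod{x_i}{\tilde y_{\theta,i}} \le \|x_i\|_{\Phi^\dagger}\|\tilde y_{\theta,i}\|_\Phi$, while $\tr(\Phi^\dagger\Phi) = \rank(\Phi) \le d_X$ still delivers the correct dimension factor. (A measurable selection for $r^\star(\theta, i)$, if $\cR$ is uncountable, can be sidestepped by taking $\epsilon$-approximate maximizers and sending $\epsilon \to 0$, using the boundedness assumption on the $y_{\theta, j, r}$'s.) Beyond these minor points, the proof is a direct composition of the linearization trick with the $\Phi$-norm Cauchy-Schwarz.
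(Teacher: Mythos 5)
Your proof is correct and follows essentially the same route as the paper's: the same linearization of $f_\theta$ via $\tilde y_{\theta,i}$ with the sign trick, followed by Cauchy--Schwarz in the norm induced by $\Phi = \EE_{\theta\sim\mu}\EE_{i\sim q_\theta}[x_i x_i^\top]$ and the trace identity giving the $\rank(\Phi)\le d_X$ factor. The only (immaterial) difference is that you work directly with the pseudoinverse $\Phi^\dagger$ using $x_i\in\operatorname{range}(\Phi)$ a.s., whereas the paper regularizes with $\Phi_\lambda=\lambda I+\Phi$ and sends $\lambda\to 0^+$; both handle the rank-deficiency correctly.
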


\begin{proof}[Proof of \cref{prop:eluder-decor}]
By the assumption that $\set{y_{\theta,j,r}}_{(\theta,j,r)}$ is a family of bounded vectors in $\R^d$, there exists $R_y < \infty$ such that $\sup_{\theta,r}\sum_{j = 1}^n \nrm{y_{\theta,j,r}}\leq R_y$. We follow the same two steps as the proof of Proposition \ref{prop:pigeon-l2}.

First, we reduce the problem. We consider $r^*(\theta,i)=\argmax_{r\in\cR} \sum_{j} \abs{\iprod{x_{i}}{y_{\theta,j,r}}}$, and define the vectors
\begin{align*}
&\ty_{\theta,i,j}= y_{\theta,j,r^*(\theta,i)},\\
&\ty_{\theta,i}= \sum_{j}\sign\iprod{x_{i}}{\ty_{\theta,i,j}}\ty_{\theta,i,j}.
\end{align*}
Then for all $i\in\cI$, $\theta\in\Theta$,
\begin{align}\label{eqn:eluder-equiv}
\begin{aligned}
    &\iprod{x_{i}}{\ty_{\theta,i}}
    =\sum_{j} \abs{\iprod{x_{i}}{\ty_{\theta,i,j}}}
    =\sum_{j} \abs{\iprod{x_{i}}{y_{\theta,j,r^*(\theta,i)}}}
    =\efunc_{\theta}(x_{i}),\\
    &\abs{ \iprod{x_{i}}{\ty_{\theta',i'}} }
    \leq \sum_{j} \abs{\iprod{x_{i}}{\ty_{\theta',i',j}}}
    =\sum_{j} \abs{\iprod{x_{i}}{y_{\theta',j,r^*(\theta',i')}}}
    \leq \efunc_{\theta'}(x_{i}),
\end{aligned}
\end{align}
and $\nrm{\ty_{\theta,i}}_2\leq \sum_{j} \nrm{y_{\theta,j,r^*(\theta,i)}}_2\leq R_y.$
Therefore, it suffices to bound $\EE_{\theta\sim\mu}\EE_{i\sim q_{\theta}}\brac{\abs{\iprod{x_{i}}{\ty_{\theta,i}}}}$.

Next, we define $\Phi_{\lambda}:=\lambda+\EE_{\theta\sim\mu}\EE_{i\sim q_{\theta}}\left[ x_{i}x_{i}^\top \right]$ with $\lambda>0$.
Then we can bound the target as
\begin{align*}
    \EE_{\theta\sim\mu}\EE_{i\sim q_{\theta}}\brac{\abs{\iprod{x_{i}}{\ty_{\theta,i}}}}
    \leq&
    \EE_{\theta\sim\mu}\EE_{i\sim q_{\theta}}\brac{ \nrm{x_{i}}_{\Phi^{-1}_{\lambda}}\nrm{\ty_{\theta,i}}_{\Phi_{\lambda}} }\\
    \leq& \left[\EE_{\theta\sim\mu}\EE_{i\sim q_{\theta}} \nrm{x_{i}}_{\Phi^{-1}_{\lambda}}^2\right]^{1/2} \left[\EE_{\theta\sim\mu}\EE_{i\sim q_{\theta}} \nrm{\ty_{\theta,i}}_{\Phi_{\lambda}}^2\right]^{1/2}.
\end{align*}
The first term can be rewritten as
\begin{align*}
    \EE_{\theta\sim\mu}\EE_{i\sim q_{\theta}}\left[ \nrm{x_{i}}_{\Phi^{-1}_{\lambda}}^2\right]
    =&
     \EE_{\theta\sim\mu}\EE_{i\sim q_{\theta}}\left[ \tr\left(\Phi^{-1/2}_{\lambda} x_{i}x_{i}^\top\Phi^{-1/2}_{\lambda}\right)\right]\\
    =&
    \tr\left(\Phi^{-1/2}_{\lambda}\EE_{\theta\sim\mu}\EE_{i\sim q_{\theta}}\left[ x_{i}x_{i}^\top\right]\Phi^{-1/2}_{\lambda}\right)\\
    =&
    \tr\left(\Phi^{-1/2}_{\lambda}\Phi_0\Phi^{-1/2}_{\lambda}\right)
    \leq \rank(\Phi_0)\leq d_X.
\end{align*}
The second term can be bounded as
\begin{align*}
    \EE_{\theta\sim\mu}\EE_{i\sim q_{\theta}} \nrm{\ty_{\theta,i}}_{\Phi_{\lambda}}^2
    = &
    \EE_{\theta'\sim\mu}\EE_{i'\sim q_{\theta'}} \nrm{\ty_{\theta',i'}}_{\Phi_{\lambda}}^2\\
    =&
    \EE_{\theta'\sim\mu}\EE_{i'\sim q_{\theta'}}\left\{ \EE_{\theta\sim\mu}\EE_{i\sim q_{\theta}}\left[ \abs{\iprod{x_{i}}{\ty_{\theta',i'}}}^2 \right]+\lambda\nrm{\ty_{\theta',i'}}^2 \right\}\\
    =&
    \EE_{\theta'\sim\mu} \EE_{\theta\sim\mu}\EE_{i\sim q_{\theta}}\left[ \abs{\iprod{x_{i}}{\ty_{\theta',i'}}}^2 \right]+\lambda\EE_{\theta'\sim\mu}\EE_{i'\sim q_{\theta'}}\nrm{\ty_{\theta',i'}}^2 \\
    \leq& \EE_{\theta'\sim\mu} \EE_{\theta\sim\mu}\EE_{i\sim q_{\theta}}\brac{\abs{f_{\theta'}(x_{i})}^2}+\lambda R_y^2,
\end{align*}
where the last inequality is due to \eqref{eqn:eluder-equiv}.
Letting $\lambda\to 0^+$ completes the proof of \cref{prop:eluder-decor}.
\end{proof}

\section{Structural properties of B-stable PSRs}\label{appendix:err-decomp}

In this section, we present two important propositions that are used in the proofs of all the main theorems (Theorem \ref{thm:OMLE-PSR}, \ref{thm:E2D-PSR}, \ref{thm:e2d-rf-psr}, \ref{thm:mops-psr}). The first proposition bounds the performance difference of two PSR models by B-errors. The second proposition bounds the squared B-errors by the Hellinger distance of observation probabilities between two models.

\subsection{Performance decomposition}\label{appdn:performance-decomposition}

We first present the performance decomposition proposition. %

\begin{proposition}[Performance decomposition]
\label{prop:psr-err-decomp}
Suppose that two PSR models $\theta,\otheta$ admit $\{ \{ \BB_h^\theta(o_h, a_h) \}_{h, o_h, a_h} , \bq_0^\theta \}$ and $\{ \{ \BB_h^{\otheta}(o_h, a_h) \}_{h, o_h, a_h} , \bq_0^{\otheta} \}$ as {\Bpara} respectively, and suppose that $\{\cB_{H:h}^{\theta}\}_{h \in [H]}$ and $\{\cB_{H:h}^{\otheta}\}_{h \in [H]}$ are the associated $\cB$-operators respectively. Define
\begin{align*}
    \cesthth\defeq&
    \frac12 \max_{\pi}\sum_{o_h,a_h} \pi(a_h|o_h)\nrmpi{\cB_{H:h+1}^{\theta} \left(\B^\theta_h(o_h,a_h) - \B^{\otheta}_h(o_h,a_h)\right)\bd^{\otheta}(\tau_{h-1})},\\
    \cesththz\defeq&
    \frac12 \nrmpi{\cB_{H:1}^{\theta} \paren{\bd_0^{\theta}-\bd_0^{\otheta}}}.
\end{align*}
Then it holds that
\begin{align*}
    \DTV{ \PP_{\theta}^{\pi}, \PP_{\otheta}^{\pi} } \leq \cesththz + \sum_{h=1}^H \EE_{\otheta,\pi}\brac{\cesthth},
\end{align*}
where for $h\in[H]$, the expectation $\EE_{\otheta,\pi}$ is taking over $\tau_{h-1}$ under model $\otheta$ and policy $\pi$.
\end{proposition}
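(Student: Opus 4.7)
The plan is to expand the total variation distance, use the {\Bpara} to rewrite trajectory probabilities as scalar-valued products of $\BB$-matrices, telescope the resulting difference, and then bound each summand via the $\Pi$-norm.

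First, I would write $2\DTV{\PP_{\theta}^{\pi},\PP_{\otheta}^{\pi}}=\sum_{\tau_H}\pi(\tau_H)\,\bigl|\BB^{\theta}_{H:1}(\tau_H)\bq_0^{\theta}-\BB^{\otheta}_{H:1}(\tau_H)\bq_0^{\otheta}\bigr|$, using~\cref{eqn:psr-op-prod-h} together with the convention $\cU_{H+1}=\{o_{\dum}\}$ so that $\BB_{H:1}(\tau_H)\bq_0$ is a scalar equal to $\PP(\tau_H)$. A standard telescoping identity then gives
\begin{align*}
\BB^{\theta}_{H:1}(\tau_H)\bq_0^{\theta}-\BB^{\otheta}_{H:1}(\tau_H)\bq_0^{\otheta}
=\BB^{\theta}_{H:1}(\tau_H)(\bq_0^{\theta}-\bq_0^{\otheta})
+\sum_{h=1}^{H}\BB^{\theta}_{H:h+1}(\tau_{h+1:H})\bigl(\BB^{\theta}_h(o_h,a_h)-\BB^{\otheta}_h(o_h,a_h)\bigr)\BB^{\otheta}_{h-1:1}(\tau_{h-1})\bq_0^{\otheta}.
\end{align*}
The crucial asymmetry is that everything \emph{to the left} of the $\BB^{\theta}_h-\BB^{\otheta}_h$ factor uses model $\theta$, while everything to its right uses $\otheta$, which is exactly what the definitions of $\cesthth$ and $\cesththz$ reflect.

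Next I would apply the triangle inequality in the sum $\sum_{\tau_H}\pi(\tau_H)|\cdot|$ and bound each piece. For the initial-predictive-state term, the entries of $\cB_{H:1}^{\theta}(\bq_0^{\theta}-\bq_0^{\otheta})$ are exactly $\BB_{H:1}^{\theta}(\tau_{1:H})(\bq_0^{\theta}-\bq_0^{\otheta})$, so the sum against $\pi(\tau_{1:H})$ is bounded by $\nrmpi{\cB_{H:1}^{\theta}(\bq_0^{\theta}-\bq_0^{\otheta})}=2\cesththz$ by the definition of the $\Pi$-norm (using absolute values as in \cref{def:nrmpin}; this is exactly the setting in which $\nrmpi{\cdot}$ upper-bounds any policy-weighted $\ell_1$ expression). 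For the $h$-th telescoped term, I would use~\cref{cor:Bpara-prob-meaning} to replace $\BB^{\otheta}_{h-1:1}(\tau_{h-1})\bq_0^{\otheta}$ by $\PP_{\otheta}(\tau_{h-1})\bq^{\otheta}(\tau_{h-1})$, and factor the policy as
\begin{align*}
\pi(\tau_H)=\pi(\tau_{h-1})\,\pi_h(a_h\mid\tau_{h-1},o_h)\,\pi_{>h}(\tau_{h+1:H}\mid\tau_{h-1},o_h,a_h).
\end{align*}
Summing $\pi_{>h}$ against the row-vector $\BB^{\theta}_{H:h+1}(\tau_{h+1:H})$ (viewed as an entry of $\cB_{H:h+1}^{\theta}v$ for $v=(\BB_h^{\theta}-\BB_h^{\otheta})\bq^{\otheta}(\tau_{h-1})$) is, by definition, at most $\nrmpi{\cB_{H:h+1}^{\theta}v}$. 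Then the remaining sum over $(o_h,a_h)$ weighted by $\pi_h(a_h\mid\tau_{h-1},o_h)$ is bounded by the max-over-$\pi$ appearing in $2\cesthth$ (for each fixed $\tau_{h-1}$, the conditional $\pi_h(\cdot\mid\tau_{h-1},\cdot)$ is a valid choice of the inner policy). Combining with $\sum_{\tau_{h-1}}\pi(\tau_{h-1})\PP_{\otheta}(\tau_{h-1})(\cdot)=\EE_{\otheta,\pi}[\cdot]$ gives the bound $2\EE_{\otheta,\pi}[\cesthth]$ for the $h$-th term.

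Summing the $H+1$ contributions and dividing by $2$ yields the claim. I do not expect any real obstacle beyond careful bookkeeping of the index spaces $\R^{\Uh}$ at each step and the direction of telescoping; the only subtle point is the asymmetry between the $\theta$- and $\otheta$-sides of the telescoped product, and that the inner $\max_{\pi}$ in the definition of $\cesthth$ is precisely what allows us to replace a history-dependent policy $\pi_h(a_h\mid\tau_{h-1},o_h)$ by an $o_h$-only policy uniformly in $\tau_{h-1}$.
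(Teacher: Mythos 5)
Your proposal is correct and follows essentially the same route as the paper's proof: the same telescoping of $\BB^{\theta}_{H:1}(\tau_H)\bq_0^{\theta}-\BB^{\otheta}_{H:1}(\tau_H)\bq_0^{\otheta}$ with $\theta$-factors on the left and $\otheta$-factors on the right, the same use of \cref{cor:Bpara-prob-meaning} to rewrite $\BB^{\otheta}_{h-1:1}(\tau_{h-1})\bq_0^{\otheta}$ as $\PP_{\otheta}(\tau_{h-1})\,\bq^{\otheta}(\tau_{h-1})$, and the same policy factorization to absorb the future sum into $\nrmpi{\cdot}$ and the $(o_h,a_h)$-sum into the inner $\max_{\pi}$. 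The only bookkeeping point you already flag correctly is that the history-dependent conditional $\pi_h(\cdot\mid\tau_{h-1},\cdot)$ is dominated by the $\max$ over $o_h$-only policies for each fixed $\tau_{h-1}$, which is exactly how the paper closes the argument.
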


\begin{proof}[Proof of \cref{prop:psr-err-decomp}]
By the definition of {\Bpara}, we have $\P^{\pi}_\theta(\tau_H)=\pi(\tau_H)\times\B_{H:1}^\theta(\tau_H)\bq_0^\theta$ for PSR model $\theta$. Then for two different PSR models $\theta, \otheta$, we have
\begin{align*}
    &\P^{\pi}_{\theta}(\tau_H) - \P^{\pi}_{\otheta}(\tau_H) \\
	=&~ \pi(\tau_H)\times\brac{\B_{H:1}^{\theta}(\tau_{1:H}) \bq_0^{\theta} - 
	\B_{H:1}^{\otheta}(\tau_{1:H})\bq_0^{\otheta}} \\
	=&~ \pi(\tau_H)\times \B_{H:1}^{\theta}(\tau_{1:H}) \paren{ \bq_0^{\theta}-\bq_0^{\otheta}}\\
	&~+\pi(\tau_H)\times\sum_{h=1}^H \B_{H:h+1}^{\theta}(\tau_{h+1:H})\paren{ \B^\theta_h(o_h,a_h)-\B^{\otheta}_h(o_h,a_h) } \B_{1:h-1}^{\otheta}(\tau_{h-1}) \bq_0^{\otheta}\\
	=&~ \pi(\tau_H)\times \B_{H:1}^{\theta}(\tau_H)\paren{ \bq_0^{\theta}-\bq_0^{\otheta}}\\
	&~+\sum_{h=1}^H\pi(\tau_{h:H})\times \B_{H:h+1}^{\theta}(\tau_{h+1:H})\paren{ \B^\theta_h(o_h,a_h)-\B^{\otheta}_h(o_h,a_h) } \bq^{\otheta}(\tau_{h-1}) \times \PP^{\pi}_{\otheta}(\tau_{h-1}),
\end{align*}
where the last equality is due to the definition of \Bpara~(see e.g. \eqref{eqn:PSR-prod}).
Therefore, we have %
\begin{align*}
	&
	\frac12\sum_{\tau_H} \abs{ \P^{\pi}_{\theta}(\tau_H) - \P^{\pi}_{\theta^\star}(\tau_H) }\\
	\leq&~
	\frac12\sum_{\tau_H} \pi(\tau_H)\times \abs{\B_{H:1}^{\theta}(\tau_{H})\paren{ \bq_0^{\theta}-\bq_0^{\otheta}} } +\frac12\sum_{\tau_H} \sum_{h=1}^H \pi(\tau_{h:H}) \\
	&~ \times \abs{ \B_{H:h+1}^{\theta}(\tau_{h+1:H})\paren{ \B^\theta_h(o_h,a_h)-\B^{\otheta}_h(o_h,a_h) } \bq^{\otheta}(\tau_{h-1}) } \times \PP^{\pi}_{\otheta}(\tau_{h-1})\\
	\leq&
	\frac12 \nrmpi{\cB_{H:1}^{\theta}\paren{ \bq_0^{\theta}-\bq_0^{\otheta}} } + \frac12\sum_{h=1}^H\sum_{\tau_{h-1}} \PP^{\pi}_{\otheta}(\tau_{h-1}) \\
	&~ \times \max_{\pi}\sum_{o_h,a_h} \pi(a_h|o_h)\nrmpi{ \cB_{H:h+1}^{\theta}(\tau_{h+1:H})\paren{ \B^\theta_h(o_h,a_h)-\B^{\otheta}_h(o_h,a_h) } \bq^{\otheta}(\tau_{h-1}) } \\
	=&~
	\cesththz + \sum_{h=1}^H \EE_{\otheta, \pi}\brac{\cesthth},
\end{align*}
where the last inequality is due to the definition of $\EE_{\otheta, \pi}$ and $\cesthth$. 
\end{proof}

\subsection{Bounding the squared B-errors by Hellinger distance}

In the following proposition, we show that under B-stability or weak B-stability, the squared B-errors can be bounded by the Hellinger distance between $\PP_{\theta}^{\piexph}$ and $\PP_{\otheta}^{\piexph}$. Here, for a policy $\pi\in\Pi$ and $h\in[H]$, $\piexph$ is defined as 
\begin{align}\label{eqn:def-piexp}
    \pi_{h,\expl}\defeq \pi \circ_{h}\unif(\cA)\circ_{h+1}\unif(\QAhp),
\end{align}
which is the policy that follows $\pi$ for the first $h-1$ steps, takes $\unif(\cA)$ at step $h$, takes an action sequence sampled from $\unif(\QAhp)$ at step $h+1$, and behaves arbitrarily afterwards. This notation is consistent with the exploration policy in the {\omle} algorithm (Algorithm \ref{alg:OMLE}). 

\begin{proposition}[Bounding squared B-errors by squared Hellinger distance]
\label{prop:psr-hellinger-bound}
Suppose that the \Bpara~of $\theta$ is \stabs~(cf.~\cref{ass:cs-non-expansive-main-text}) or weakly \stabs~(cf.~\cref{def:non-expansive-gen}), then we have for $h \in [H-1]$
\begin{align*}
    &\EE_{\otheta,\pi}\brac{\cesthth^2} \leq
    4\stab^2 AU_A\left[ \DH{
        \PP_{\theta}^{\pi_{h,\exp}},
        \PP_{\otheta}^{\pi_{h,\exp}}
    }+\DH{
        \PP_{\theta}^{\pi_{h-1,\exp}},
        \PP_{\otheta}^{\pi_{h-1,\exp}}
    }\right], %
\end{align*}
and
\begin{align*}
    &\EE_{\otheta,\pi}\brac{\cE^{\otheta}_{\theta,H}(\tau_{H-1})^2} \leq
    2(\stab+1)^2\DH{
        \PP_{\theta}^{\pi_{H-1,\exp}},
        \PP_{\otheta}^{\pi_{H-1,\exp}}
    },\\
    &\paren{\cesththz}^2\leq \stab^2U_A\DH{
        \PP_{\theta}^{\pi_{0,\exp}},
        \PP_{\otheta}^{\pi_{0,\exp}}
    },
\end{align*}
where $\cesthth$ and $\cesththz$ are as defined in Proposition \ref{prop:psr-err-decomp}. 
\end{proposition}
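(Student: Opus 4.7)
The plan is to decompose $\cesthth$ for $h\in[H-1]$ into two pieces via the algebraic identity
\begin{align*}
(\BB_h^\theta-\BB_h^{\otheta})(o_h,a_h)\bq^{\otheta}(\tau_{h-1}) = [\bp_1-\bp_2] - \BB_h^\theta(o_h,a_h)(\bq^\theta-\bq^{\otheta})(\tau_{h-1}),
\end{align*}
where $\bp_1\defeq \BB_h^\theta(o_h,a_h)\bq^\theta(\tau_{h-1}) = \P_\theta(o_h|\tau_{h-1})\bq^\theta(\tau_h)$ and $\bp_2\defeq \BB_h^{\otheta}(o_h,a_h)\bq^{\otheta}(\tau_{h-1}) = \P_{\otheta}(o_h|\tau_{h-1})\bq^{\otheta}(\tau_h)$ are both nonnegative by \cref{cor:Bpara-prob-meaning}. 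Triangle inequality plus $(a+b)^2\leq 2a^2+2b^2$ reduces bounding $\EE_{\otheta,\pi}[\cesthth^2]$ to bounding two pieces $\cE_1$ (from $\bp_1-\bp_2$) and $\cE_2$ (from $\bq^\theta-\bq^{\otheta}$ at step $h-1$), each by $2\stab^2 AU_A$ times one of the two Hellinger terms.

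For $\cE_1$, I would apply the weak B-stability of \cref{def:non-expansive-gen} to the nonnegative pair $(\bp_1,\bp_2)$:
\begin{align*}
\|\cB_{H:h+1}^\theta(\bp_1-\bp_2)\|_\Pi \leq \stab\sqrt{2(\|\bp_1\|_\Pi+\|\bp_2\|_\Pi)}\cdot\|\sqrt{\bp_1}-\sqrt{\bp_2}\|_2.
\end{align*}
Since predictive states satisfy $\|\bq^\theta(\tau_h)\|_\Pi\leq 1$ and similarly for $\otheta$, we get $\|\bp_1\|_\Pi+\|\bp_2\|_\Pi\leq \P_\theta(o_h|\tau_{h-1})+\P_{\otheta}(o_h|\tau_{h-1})$, which sums to two over $o_h$. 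A Cauchy-Schwarz across the $\pi$-weighted sum over $(o_h,a_h)$ then absorbs the $\sqrt{\cdot}$ factor into a constant $4$ and yields $\cE_1^2 \leq \stab^2\sum_{o_h,a_h}\pi(a_h|o_h)\|\sqrt{\bp_1(o_h,a_h)}-\sqrt{\bp_2(o_h,a_h)}\|_2^2$. Using $\pi(a_h|o_h)\leq 1$ to insert a uniform average over $a_h$ yields a factor $A$, and expanding $\sqrt{\bp_i(o_h,a_h)}(t) = \sqrt{\P_i(o_h,t|\tau_{h-1},a_h)}$ for $t=(\o'',\a'')\in\cU_{h+1}$ lets us rewrite the sum over $t$ as $\sum_{\a''}=|\UAhp|\cdot\EE_{\a''\sim\unif(\UAhp)}$, producing the $U_A$ factor. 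Because $\P_{\otheta}^\pi$ and $\P_{\otheta}^{\pi_{h,\expl}}$ agree on the marginal of $\tau_{h-1}$, the combined $(\tau_{h-1},a_h,\a'')$ is distributed exactly as under $\P_{\otheta}^{\pi_{h,\expl}}$; \cref{lemma:Hellinger-cond} together with data processing (projecting from the full tail to $(o_h,\o'')$) then gives $\EE[\cE_1^2]\leq 2\stab^2 AU_A\,\DH{\P_\theta^{\pi_{h,\expl}},\P_{\otheta}^{\pi_{h,\expl}}}$.

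For $\cE_2$, combining the outer $\max_\pi$ over $a_h$ with the $\max_{\barpi}$ inside $\|\cdot\|_\Pi$ folds the step-$h$ operator $\BB_h^\theta(o_h,a_h)$ into a single outer policy over steps $h,\dots,H$, simplifying the piece to $\cE_2 = \tfrac12\|\cB_{H:h}^\theta(\bq^\theta-\bq^{\otheta})(\tau_{h-1})\|_\Pi$. Applying weak B-stability at step $h$ with the nonnegative $\bq^\theta(\tau_{h-1}),\bq^{\otheta}(\tau_{h-1})$ of $\Pi$-norm at most one gives $\cE_2\leq \stab\|\sqrt{\bq^\theta(\tau_{h-1})}-\sqrt{\bq^{\otheta}(\tau_{h-1})}\|_2$, whose square expands as a sum over tests $t=(\o,\a)\in\cU_h$ producing a $U_A$ factor (via $\sum_{\a}=|\UAh|\EE_{\a\sim\unif(\UAh)}$); then the change of measure $\P_{\otheta}^\pi(\tau_{h-1})\leq A\,\P_{\otheta}^{\pi_{h-1,\expl}}(\tau_{h-1})$ (arising from the $\unif(\cA)$ at step $h-1$ in $\pi_{h-1,\expl}$) combined with \cref{lemma:Hellinger-cond} produces the second Hellinger term. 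The edge cases $h=0$ and $h=H$ are simpler variants: at $h=0$ only the $\cE_2$-style term appears (applied directly to $\bq_0^\theta-\bq_0^{\otheta}$), while at $h=H$ the operator $\cB_{H:H+1}^\theta$ is the identity so the first piece becomes a pure $\DTV$ between $\P_\theta(o_H|\tau_{H-1})$ and $\P_{\otheta}(o_H|\tau_{H-1})$ controlled via $\DTV^2\leq \DH$. The main obstacle is bookkeeping: making sure the bound $\|\bp_1\|_\Pi+\|\bp_2\|_\Pi\leq \P_\theta(o_h|\tau_{h-1})+\P_{\otheta}(o_h|\tau_{h-1})$ delivers exactly the right constant in the weak B-stability inequality, and that Cauchy-Schwarz together with the two uniformizations at steps $h$ and $h+1$ align with Hellinger conditioning to produce precisely the factor $AU_A$ without slack.
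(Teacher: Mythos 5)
Your proposal is correct and follows essentially the same route as the paper's proof: the same triangle-inequality split of $\B_h^\theta\bq^\theta - \B_h^{\otheta}\bq^{\otheta}$ versus $\cB_{H:h}^\theta(\bq^\theta-\bq^{\otheta})$ (cf.\ \cref{eqn:cE-bound-in-proposition-difference}), the same application of weak B-stability with the key bounds $\nrmpin{\bq^{\theta}(\tau_{h-1})}\le 1$ and $\nrmpin{\B_h^\theta(o,a)\bq^\theta(\tau_{h-1})}\le \P_\theta(o|\tau_{h-1})$ (Lemmas \ref{lemma:diff-q-1} and \ref{lemma:diff-q-2}), and the same Cauchy--Schwarz plus importance-sampling over $\unif(\cA)\circ\unif(\UAhp)$ and Hellinger conditioning (\cref{lemma:Hellinger-cond}) to produce the $AU_A$ factor and the two exploration-policy Hellinger terms. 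The edge cases $h=0$ and $h=H$ are also handled as in the paper.
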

\begin{proof}[Proof of \cref{prop:psr-hellinger-bound}]
We first deal with the case $h\in[H]$.
By taking the difference, we have
\begin{equation}\label{eqn:cE-bound-in-proposition-difference}
\begin{aligned}
\MoveEqLeft
2\cesthth
=
\max_{\pi} \sum_{\tau_{h:H}}\pi(\tau_{h:H})\times \abs{\BB_{H:h+1}^\theta(\tau_{h+1:H}) \left(\B^\theta_h(o_h,a_h) - \B^{\otheta}_h(o_h,a_h)\right)\bd^{\otheta}(\tau_{h-1})} \notag\\
\le&
 \max_{\pi} \sum_{\tau_{h:H}}\pi(\tau_{h:H})\times \abs{\BB_{H:h}^\theta(\tau_{h:H}) \left(\bd^\theta(\tau_{h-1})-\bd^{\otheta}(\tau_{h-1})\right) }  \\
 &+ 
\max_{\pi} \sum_{\tau_{h:H}}\pi(\tau_{h:H})\times \abs{\BB_{H:h+1}^\theta(\tau_{h+1:H})\left( \B^\theta_h(o_h,a_h)\bd^\theta(\tau_{h-1}) - \BB_h^{\otheta}(o_h,a_h)\bd^{\otheta}(\tau_{h-1}) \right)} \\
=&
\nrmpi{\cBHh^{\theta}\paren{\bq^{\theta}(\tauhm)-\bq^{\otheta}(\tauhm)}}\\
&+
\max_{\pi_h}\sum_{o_h,a_h} \pi_h(a_h|o_h)\nrmpi{ \cB_{H:h+1}^{\theta}\paren{  \B^\theta_h(o_h,a_h)\bd^\theta(\tau_{h-1}) - \BB_h^{\otheta}(o_h,a_h)\bd^{\otheta}(\tau_{h-1})} }.
\end{aligned}
\end{equation}

We now introduce several notations for the convenience of the proof.

1. For an action sequence $\a$ of length $l(\a)$, $\PP(\cdot|\tau_{h-1},\doac(\a))$ stands for the distribution of $o_{h:h+l(\a)}$ 
conditional on $\tau_{h-1}$ and taking action $\a$ for step $h$ to step $h+l(\a)-1$.

2. Given a set $\sA$ of action sequences (possibly of different length), $\PP^{\unif(\sA)}(\cdot|\tau_{h-1})$ stands for the distribution of observation generated by: 
conditional on $\tau_{h-1}$, first sample a $\a\sim\Unif(\UAh)$, then take $\a$ and then observe $\o$ (of length $l(\a)+1$).

By the definition of Hellinger distances and by the notations above, we have
\begin{align}\label{eqn:Hellinger-importance-sampling}
    \DH{ \PP_{\theta}^{\unif(\sA)}(\cdot|\tau_{h-1}), \PP_{\otheta}^{\unif(\sA)}(\cdot|\tau_{h-1}) }
    =
    \frac{1}{\abs{\sA}}\sum_{\a\in\sA} \DH{ \PP_{\theta}(\cdot|\tau_{h-1},\doac(\a)), \PP_{\otheta}(\cdot|\tau_{h-1},\doac(\a)) }.
\end{align}

Next, we present two lemmas whose proof will be deferred after the proof of the proposition. 
\begin{lemma}\label{lemma:diff-q-1}
Suppose that $\B$ is weakly $\stab$-stable  ($\stab$-stable is a sufficient condition), then it holds that
\begin{align*}
    \nrmpi{\cBHh^{\theta}\paren{\bq^{\theta}(\tauhm)-\bq^{\otheta}(\tauhm)}}\leq
    2\stab\sqrt{\nUAh} \DHs{ \PP_{\theta}^{\unif(\UAh)}(\cdot|\tau_{h-1}), \PP_{\otheta}^{\unif(\UAh)}(\cdot|\tau_{h-1}) }. 
\end{align*}
\end{lemma}

\begin{lemma}\label{lemma:diff-q-2}
Suppose that $\B$ is weakly \stabs~({\stabs} is a sufficient condition), then it holds that
\begin{align*}
\MoveEqLeft
\max_{\pi_h}\sum_{o_h,a_h} \pi_h(a_h|o_h)\nrmpi{ \cB_{H:h+1}^{\theta}\paren{  \B^\theta_h(o_h,a_h)\bd^\theta(\tau_{h-1}) - \BB_h^{\otheta}(o_h,a_h)\bd^{\otheta}(\tau_{h-1})} }\\
&\leq 2\stab\sqrt{A\nUAhp} \dH\left(\PP_{\theta}^{\unif(\cA)\circ\unif(\UAhp)}(\cdot|\tau_{h-1}), \PP_{\otheta}^{\unif(\cA)\circ\unif(\UAhp)}(\cdot|\tau_{h-1})\right). 
\end{align*}
\end{lemma}
Therefore, we first consider the case $h\in[H-1]$. Applying \cref{lemma:diff-q-1} and taking expectation with respect to $\tau_{h-1}$, we obtain
\begin{align}\label{eqn:proof-err-decomp-rev-1}
\begin{split}
    &~
    \EE_{\otheta,\pi}\brac{\nrmpi{\cBHh^{\theta}\paren{\bq^{\theta}(\tauhm)-\bq^{\otheta}(\tauhm)}}^2}\\
    \leq&~ 4\stab^2\nUAh \EE_{\otheta,\pi}\brac{\DH{ \PP_{\theta}^{\unif(\UAh)}(\cdot|\tau_{h-1}), \PP_{\otheta}^{\unif(\UAh)}(\cdot|\tau_{h-1}) }}\\
    \leq&~ 8\stab^2\nUAh\DH{ \PP_{\theta}^{\pi\circ_h\unif(\UAh)}, \PP_{\otheta}^{\pi\circ_h\unif(\UAh)} } \\
    \leq&~ 8\stab^2A\nUAh\dH^2\left(\PP_{\theta}^{\pi_{h-1,\expl}}, \PP_{\otheta}^{\pi_{h-1,\expl}}\right),
\end{split}
\end{align}
where the second inequality is due to \cref{lemma:Hellinger-cond}, and the last inequality is due to importance sampling.
Similarly, applying Lemma \ref{lemma:diff-q-2} and taking expectation with respect to $\tau_{h-1}$, we have
\begin{align*}
\MoveEqLeft
    \EE_{\otheta,\pi}\brac{\paren{ \max_{\pi_h}\sum_{o_h,a_h} \pi_h(a_h|o_h)\nrmpi{ \cB_{H:h+1}^{\theta}\paren{  \B^\theta_h(o_h,a_h)\bd^\theta(\tau_{h-1}) - \BB_h^{\otheta}(o_h,a_h)\bd^{\otheta}(\tau_{h-1})} } }^2}\\
    \leq& 8\stab^2A\nUAhp\DH{ \PP_{\theta}^{\pi\circ_h\unif(\cA)\circ\unif(\UAh)}, \PP_{\otheta}^{\pi\circ_h\unif(\cA)\circ\unif(\UAh)} } \\
    =& 8\stab^2A\nUAhp\DH{\PP_{\theta}^{\pi_{h,\expl}}, \PP_{\otheta}^{\pi_{h,\expl}}}.
\end{align*}
The proof for $h\in[H-1]$ is completed by noting that $(x+y)^2\leq 2x^2+2y^2$ and $U_A=\max_h \nUAh$. 

For the case $h=H$, note that by \cref{cor:Bpara-prob-meaning},
\begin{align*}
\MoveEqLeft
\max_{\pi}\sum_{o_H,a_H} \pi(a_H|o_H)\abs{\paren{  \B^\theta_H(o_H,a_H)\bd^\theta(\tau_{H-1}) - \BB^{\otheta}_H(o_H,a_H)\bd^{\otheta}(\tau_{H-1})} }\\
=&
\sum_{o_H} \abs{\PP_{\theta}(o_H|\tau_{H-1})-\PP_{\otheta}(o_H|\tau_{H-1})}
\leq 2\DHs{\PP_{\theta}(\cdot|\tau_{H-1}),\PP_{\otheta}(\cdot|\tau_{H-1})},
\end{align*}
and by \cref{lemma:diff-q-1} it holds that
\begin{align*}
    &\nrmpi{\cB_{H:H}^{\theta}\paren{\bq^{\theta}(\tau_{H-1})-\bq^{\otheta}(\tau_{H-1})}}\\
    \leq&
    2\stab\sqrt{\abs{\cU_{A,H}}} \DHs{ \PP_{\theta}^{\unif(\cU_{A,H})}(\cdot|\tau_{H-1}), \PP_{\otheta}^{\unif(\cU_{A,H})}(\cdot|\tau_{H-1}) }\\
    =& 2\stab\DHs{ \PP_{\theta}(\cdot|\tau_{H-1}), \PP_{\otheta}(\cdot|\tau_{H-1}) },
\end{align*}
where the equality is due to $\cU_{A,H}$ only containing the null action sequence.
Therefore, 
$$
\cE^{\otheta}_{\theta,H}(\tau_{H-1})\leq (\stab+1) \DHs{\PP_{\theta}^{\pi_{H-1,\exp}},\PP_{\otheta}^{\pi_{H-1,\exp}}},
$$
and applying \cref{lemma:Hellinger-cond} completes the proof of the case $h=H$.

The case $h=0$ is directly implied by \cref{lemma:diff-q-1}:
\begin{align*}
\nrmpi{\cB_{H:1}^{\theta}\paren{\bq^{\theta}_0-\bq^{\otheta}_0}}^2
\leq& 4\stab^2\abs{\cU_{A,1}}\DH{ \PP_{\theta}^{\pi\circ_1\unif(\cU_{A,1})}, \PP_{\otheta}^{\pi\circ_1\unif(\cU_{A,1})} }\\
=& 4\stab^2\abs{\cU_{A,1}}\DH{ \PP_{\theta}^{\pi_{0,\expl}}, \PP_{\otheta}^{\pi_{0,\expl}} }.
\end{align*}

Combining all these cases finishes the proof of Proposition \ref{prop:psr-hellinger-bound}. %
\end{proof}

We next prove Lemma \ref{lemma:diff-q-1} and \ref{lemma:diff-q-2} that were used in the proof of Proposition \ref{prop:psr-hellinger-bound}. 

\begin{proof}[Proof of Lemma \ref{lemma:diff-q-1}]
By the weak B-stability as in Definition \ref{def:non-expansive-gen} (B-stability is also sufficient, see Eq. \eqref{eqn:non-exp-to-weak}), we have
\begin{align*}
    \nrmpi{\cBHh^{\theta}\paren{\bq^{\theta}(\tauhm)-\bq^{\otheta}(\tauhm)}}^2
    \leq&  2\stab^2\paren{\nrmpin{\bq^{\theta}(\tauhm)}+\nrmpin{\bq^{\otheta}(\tauhm)}}\ltwo{\sqrt{\bq^{\theta}(\tauhm)}-\sqrt{\bq^{\otheta}(\tauhm)}}^2,
\end{align*}
where $\nrmpin{\cdot}$ is defined in \cref{def:nrmpin}.
By the definition of $\bq^\theta(\tau_{h-1})$, for $t_h=(\o,\a)\in\Uh$, we have $$
\bq^{\theta}(\tau_{h-1})(\o,\a)=\PP_{\theta}(t_h|\tau_{h-1})=\PP_{\theta}(o_{h:h+l(\a)-1}=\o|\tau_{h-1},\doac(\a)).
$$
Hence, we have
\begin{align*}
    \nrmpin{\bq^{\theta}(\tauhm)}=&\max_{T'\subset\Uh}\max_{\pi}\sum_{(\o,\a)\in\Uh} \pi(\o,\a)\times  \PP_{\theta}(o_{h:h+l(\a)-1}=\o|\tau_{h-1},\doac(\a))\\
    =&\max_{T'\subset\Uh}\max_{\pi} \PP^{\pi}_{\theta}(T'|\tauhm)\leq 1,
\end{align*}
where $\PP^{\pi}_{\theta}(T'|\tauhm)$ stands for the probability that some test in $T'$ is observed under $\otheta, \pi$ conditional on $\tauhm$.
Similarly, we have $\nrmpin{\bq^{\otheta}(\tauhm)}\leq 1$. Therefore, we have
\begin{align*}
\MoveEqLeft
    \frac14\stab^{-2}\nrmpi{\cBHh^{\theta}\paren{\bq^{\theta}(\tauhm)-\bq^{\otheta}(\tauhm)}}^2
    \leq  \ltwo{\sqrt{\bq^{\theta}(\tauhm)}-\sqrt{\bq^{\otheta}(\tauhm)}}^2\\
    =&\sum_{\a\in\UAh} \sum_{\o: (\o,\a)\in\Uh} \abs{[\sqrt{\PP_\theta}-\sqrt{\PP_{\otheta}}](\o|\tau_{h-1},\doac(\a)) }^2 \\
    \stackrel{(i)}{\leq }  &
    \sum_{\a\in\UAh} \sum_{\o\in\cO^{l(\a)+1}} \abs{[\sqrt{\PP_\theta}-\sqrt{\PP_{\otheta}}](\o|\tau_{h-1},\doac(\a)) }^2 \\
    \stackrel{(ii)}{=} &
    \sum_{\a\in\UAh} \DH{ \PP_{\theta}(\cdot|\tau_{h-1},\doac(\a)), \PP_{\otheta}(\cdot|\tau_{h-1},\doac(\a)) } \\
    \stackrel{(iii)}{=}&
    \nUAh \DH{ \PP_{\theta}^{\unif(\UAh)}(\cdot|\tau_{h-1}), \PP_{\otheta}^{\unif(\UAh)}(\cdot|\tau_{h-1}) },
\end{align*}
where in (i) we include those $\o$ such that $(\o,\a)$ may not belong to $\Uhp$ into summation, (ii) is due to the definition of $\PP(\cdot|\tau_{h-1},\doac(\a))$,
and (iii) follows from importance sampling \eqref{eqn:Hellinger-importance-sampling}. This completes the proof of Lemma \ref{lemma:diff-q-1}. 
\end{proof}

\begin{proof}[Proof of Lemma \ref{lemma:diff-q-2}]
Similar to the proof of \cref{lemma:diff-q-1}, we only need to work under the weak B-stability condition. By \cref{cor:Bpara-prob-meaning}, for $t_{h+1}=(\o,\a)\in\Uhp$, it holds that
$$
\brac{\B_h^{\theta}(o,a)\bq^{\theta}(\tau_{h-1})}(\o,\a)=\PP_{\theta}(t_{h+1}|\tau_{h-1},o,a)\times\PP_\theta(o|\tauhm)=\PP_{\theta}(o,a,t_{h+1}|\tau_{h-1}), %
$$
and hence
\begin{align*}
    \nrmpin{\B_h^{\theta}(o,a)\bq^{\theta}(\tauhm)}
    =&\max_{T'\subset\Uhp}\max_{\pi}\sum_{t_{h+1}\in T'} \pi(t_{h+1})\times  \PP_{\theta}(t_{h+1}|\tauhm,o,a)\times\PP_\theta(o|\tauhm)\\
    =&\max_{T'\subset\Uhp}\max_{\pi} \PP_{\theta}^{\pi}(T'|\tauhm,o,a) \times \PP_{\theta}(o|\tauhm) \leq \PP_{\theta}(o|\tauhm),
\end{align*}
where $\PP^{\pi}_{\theta}(T'|\tauhm,o,a)$ stands for the probability that some test in $T'$ is observed under $\theta, \pi$ conditional on observing $\tau_h=(\tauhm,o,a)$.
Similarly, we have $\nrmpin{\B_h^{\otheta}(o,a)\bq^{\otheta}(\tauhm)}\leq \PP_{\otheta}(o|\tauhm)$.
Therefore, by the weak B-stability as in \cref{def:non-expansive-gen} and combining with the inequalities above, it holds that
\begin{align*}%
&
\nrmpi{ \cB_{H:h+1}^{\theta}\paren{ \B^\theta_h(o,a)\bd^\theta(\tau_{h-1}) - \BB_h^{\otheta}(o,a)\bd^{\otheta}(\tau_{h-1})} }\\
&\leq 
\stab\sqrt{2[\PP_{\theta}+\PP_{\otheta}](o_h=o|\tau_{h-1})} \cdot \Big[ \sum_{t\in\Uhp}\abs{\brac{\sqrt{\PP_{\theta}}-\sqrt{\PP_{\otheta}}}(o,a,t|\tau_{h-1}) }^2 \Big]^{1/2}.
\end{align*}
Hence, we have
\begin{align*}
&
\stab^{-1}\max_{\pi_h}\sum_{o_h,a_h} \pi_h(a_h|o_h)\nrmpi{ \cB_{H:h+1}^{\theta}\paren{  \B^\theta_h(o_h,a_h)\bd^\theta(\tau_{h-1}) - \BB_h^{\otheta}(o_h,a_h)\bd^{\otheta}(\tau_{h-1})} }\\
&\leq
\max_{\pi}\sum_{o,a} \pi(a|o)\sqrt{2[\PP_{\theta}+\PP_{\otheta}](o_h=o|\tau_{h-1})} \Big[\sum_{t\in\Uhp}\abs{\brac{\sqrt{\PP_{\theta}}-\sqrt{\PP_{\otheta}}}(o,a,t|\tau_{h-1}) }^2 \Big]^{1/2}\\
&\stackrel{(i)}{\leq }
\sum_{o} \sqrt{2[\PP_{\theta}+\PP_{\otheta}](o_h=o|\tau_{h-1})} \Big[\sum_{a}\sum_{t\in\Uhp}\abs{\brac{\sqrt{\PP_{\theta}}-\sqrt{\PP_{\otheta}}}(o,a,t|\tau_{h-1}) }^2 \Big]^{1/2}\\
&\stackrel{(ii)}{\leq}
2\Big[\sum_{o,a}\sum_{t\in\Uhp}\abs{\brac{\sqrt{\PP_{\theta}}-\sqrt{\PP_{\otheta}}}(o,a,t|\tau_{h-1}) }^2 \Big]^{1/2}\\
&=
2\Big[\sum_{o,a}\sum_{(\o,\a)\in\Uhp}\abs{[\sqrt{\PP_{\theta}}-\sqrt{\PP_{\otheta}}](o_{h:h+l(\a)+1}=(o,\o)|\tau_{h-1},\doac(a,\a)) }^2 \Big]^{1/2}\\
&\stackrel{(iii)}{\leq}
2\Big[\sum_{(a,\a)\in\cA\times\UAhp}\sum_{o,\o}\abs{[\sqrt{\PP_{\theta}}-\sqrt{\PP_{\otheta}}](o_{h:h+l(\a)+1}=(o,\o)|\tau_{h-1},\doac(a,\a)) }^2 \Big]^{1/2}\\
&\stackrel{(iv)}{\leq }
2\sqrt{A\nUAhp} \dH\left(\PP_{\theta}^{\unif(\cA)\circ\unif(\UAhp)}(\cdot|\tau_{h-1}), \PP_{\otheta}^{\unif(\cA)\circ\unif(\UAhp)}(\cdot|\tau_{h-1})\right),
\end{align*}
where (i) is due to the fact that $\max_{\pi\in\Delta(\cA)} \sum_{a\in\cA} \pi(a) x(a) \leq \paren{\sum_{a} x(a)^2}^{1/2}$, (ii) is due to Cauchy-Schwarz inequality, in (iii) we include those $\o$ such that $(\o,\a)$ may not belong to $\Uhp$ into summation, (iv) is due to \eqref{eqn:Hellinger-importance-sampling}: $\unif(\cA)\circ\unif(\UAhp)$ is simply the uniform policy over $\cA\times\UAhp$. This concludes the proof of Lemma \ref{lemma:diff-q-2}. 
\end{proof}

\section{Proof of Theorem~\ref{thm:OMLE-PSR}}
\label{appendix:OMLE}

We first restate \cref{thm:OMLE-PSR} as follows in terms of the (more relaxed) weak B-stability condition. 

\begin{theorem}[Restatement of \cref{thm:OMLE-PSR}]
\label{thm:OMLE-PSR-full}
Suppose every $\theta\in\Theta$ is $\stab$-stable (\cref{ass:cs-non-expansive-main-text}) or weakly $\stab$-stable (\cref{def:non-expansive-gen}), and the true model $\theta^\star\in\Theta$ with rank $\dPSR\le d$. Then, choosing $\beta=C\log(\Nt(1/KH)) /\delta)$ for some absolute constant $C>0$, with probability at least $1-\delta$,~\cref{alg:OMLE} outputs a policy $\hatpiout \in \Delta(\Pi)$ such that $\Vs-V_{\ths}(\hatpiout)\leq \eps$, as long as the number of episodes
\begin{equation}
     T = KH \ge \cO\Big( dA\nUA \stab^2 H^2 \log(\Nt(1/T) /\delta) \iota / \eps^2 \Big), 
\end{equation}
where $\clog\defeq \log\paren{1+ K d U_A \stab \rb}$ with $\rb\defeq \max_{h} \{1,\max_{\lone{v}=1}\sum_{o,a}\lone{\B_h(o,a)v}\}$.
\end{theorem}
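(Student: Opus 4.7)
The plan follows the three-step blueprint sketched in Section \ref{sec:proof_overview}, instantiating it rigorously using the structural tools already proved in Appendix~\ref{appendix:err-decomp} and the decorrelation lemma of Appendix~\ref{appdx:generalized-eluder-argument}. As a preparatory step, I will invoke the standard MLE concentration guarantee (via the optimistic covering in Definition~\ref{def:optimistic-cover}, together with Lemma~\ref{lemma:concen}): with the choice $\beta = C\log(\Nt(1/KH)/\delta)$, one obtains simultaneously (i) that $\theta^\star \in \Theta^k$ for every $k \in [K]$ (so that the greedy step on Line~\ref{line:greedy} yields valid optimism $V^\star \le V_{\theta^k}(\pi^k)$), and (ii) the in-sample squared-Hellinger bound $\sum_{t=1}^{k-1}\sum_{h=0}^{H}\DH{\PP_{\theta^k}^{\pi_{h,\expl}^t},\PP_{\theta^\star}^{\pi_{h,\expl}^t}} \lesssim \beta$ uniformly in $k$.

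Next I will run the performance decomposition. By optimism and Proposition~\ref{prop:psr-err-decomp} applied to $(\theta,\otheta) = (\theta^k,\theta^\star)$, the per-episode regret is controlled by $\DTV{\PP_{\theta^k}^{\pi^k},\PP_{\theta^\star}^{\pi^k}} \le \cE_{k,0}^{\star} + \sum_{h=1}^{H}\EE_{\theta^\star,\pi^k}[\cE_{k,h}^{\star}(\tau_{h-1})]$, where $\cE_{k,h}^{\star}$ is the B-error defined in \eqref{eqn:err-definition}. Squaring each term and applying Proposition~\ref{prop:psr-hellinger-bound}, I obtain for every pair $(k,t)$ the critical inequality $\sum_{h=0}^{H}\EE_{\pi^t}[\cE_{k,h}^{\star}(\tau_{h-1})^2] \lesssim \stab^2 A U_A \sum_{h=0}^{H}\DH{\PP_{\theta^k}^{\pi_{h,\expl}^t},\PP_{\theta^\star}^{\pi_{h,\expl}^t}}$. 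Summing over $t \in [k-1]$ and combining with the MLE bound from the previous step yields the $\ell_2$-type precondition $\sum_{t=1}^{k-1}\sum_{h=0}^{H}\EE_{\pi^t}[\cE_{k,h}^{\star}(\tau_{h-1})^2] \lesssim \stab^2 A U_A \beta$ that holds simultaneously for all $k \in [K]$.

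The main obstacle, and the place where sharpness is won, is the final step: converting this $\ell_2$-type precondition into a bound on the on-policy first moment $\sum_{k=1}^{K}\sum_{h=0}^{H}\EE_{\pi^k}[\cE_{k,h}^{\star}(\tau_{h-1})]$. I will do this for each $h$ separately by casting $\cE_{k,h}^{\star}(\tau_{h-1})$ in the form $\max_r \sum_j |\langle \bq^\star(\tau_{h-1}), y_{k,j,r}\rangle|$ demanded by Corollary~\ref{cor:pigeon-l2-app}, where the index $r$ encodes the supremum over $\pi$ in \eqref{eqn:err-definition} and the sum over $j$ encodes the sum over $(o_h,a_h)$ expanded by the $\Pi$-norm. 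The key structural inputs are: the set of predictive-state vectors $\{\bq^\star(\tau_{h-1})\}$ lies in a subspace of dimension at most $\dPSR \le d$ (Definition~\ref{def:rank}); $\|\bq^\star(\tau_{h-1})\|_1 \le U_A$; and the linear-in-$\bq$ map $x \mapsto \cE_{k,h}^{\star}$ satisfies a Lipschitz bound $L \lesssim \stab \rb$ coming from one application of the $\B$-operator together with B-stability (this is where the parameter $\rb$ enters the $\iota$ log-factor). Applying Corollary~\ref{cor:pigeon-l2-app} with $\beta_k = \cO(\stab^2 A U_A \beta)$ gives $\sum_{k=1}^{K}\EE_{\pi^k}[\cE_{k,h}^{\star}] \lesssim \sqrt{d K \stab^2 A U_A \beta \cdot \iota}$.

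Summing over $h \in \{0,1,\ldots,H\}$ and dividing by $K$, the average regret of the output policy $\hatpiout$ is at most $\cO(H\sqrt{d A U_A \stab^2 \beta \iota/K})$. Setting this $\le \epsilon$ and solving for $K$ produces the advertised sample complexity $T = KH \ge \cO(d A U_A H^2 \stab^2 \log(\Nt(1/T)/\delta)\,\iota/\epsilon^2)$. The boundary cases $h=0$ and $h=H$ require minor separate handling because Proposition~\ref{prop:psr-hellinger-bound} gives slightly different constants there, but they fit the same Eluder template and contribute only to constants. Throughout the argument the weak B-stability of Definition~\ref{def:non-expansive-gen} suffices, since it is the form used in Proposition~\ref{prop:psr-hellinger-bound}; B-stability then follows a fortiori via Proposition~\ref{prop:non-exp-new}.
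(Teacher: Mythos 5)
Your proposal is correct and follows essentially the same route as the paper's proof in Appendix~\ref{appendix:OMLE}: the MLE guarantee (\cref{thm:MLE}) for optimism and the in-sample Hellinger bound, the performance decomposition of \cref{prop:psr-err-decomp}, the squared-B-error-to-Hellinger conversion of \cref{prop:psr-hellinger-bound}, and the generalized $\ell_2$-Eluder argument of \cref{cor:pigeon-l2-app} with exactly the identifications (dimension $\le d$ from the PSR rank of $\theta^\star$, $\lone{\bq^\star(\tau_{h-1})}\le U_A$, Lipschitz constant $\cO(\stab\rb)$) that the paper uses. The only cosmetic difference is that the paper dispatches the $h=0$ term by plain Cauchy--Schwarz rather than the Eluder template, which, as you note, affects only constants.
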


The proof of \cref{thm:OMLE-PSR-full} uses the following fast rate guarantee for the OMLE algorithm, which is standard (e.g.~\citet{van2000empirical,agarwal2020flambe}). For completeness, we present its proof in \cref{appendix:proof-OMLE-MLE}.
\begin{proposition}[Guarantee of MLE]\label{thm:MLE}
Suppose that we choose $\beta\geq 2\logNt(1/T)+2\log(1/\delta)+2$ in \cref{alg:OMLE}. Then with probability at least $1-\delta$, the following holds:
\begin{enumerate}[wide, label=(\alph*)]
\item For all $k\in[K]$, $\ths\in\Theta^k$;
\item For all $k\in[K]$ and any $\theta\in\Theta^k$, it holds that
\begin{align*}
    \sum_{t=1}^{k-1} \sum_{h=0}^{H-1} \DH{ \PP^{\piexph^t}_{\theta}, \PP^{\piexph^t}_{\ths} } \leq 2\beta.
\end{align*}
\end{enumerate}
\end{proposition}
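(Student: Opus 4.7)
The plan is to combine an optimistic-covering argument with a single application of Lemma~\ref{lemma:concen} (Donsker--Varadhan / supermartingale concentration) to the log-likelihood ratio against $\theta^\star$, and then translate the resulting bound into both statements (a) and (b).

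Concretely, first fix a $\rho$-optimistic cover $(\tPP,\Theta_0)$ of $\Theta$ with $\rho = 1/T$ and $|\Theta_0|\le \Nt(1/T)$, where $T=KH$. Index the collected trajectories by the $KH$ pairs $(k,h)$ in lexicographic order, let $\pi^{k,h}\defeq \pi^k_{h,\expl}$, and let $\Fil_{k,h}$ be the natural filtration. For each fixed $\theta_0\in\Theta_0$, apply \cref{lemma:concen} with $X_{k,h}^{\theta_0}\defeq \tfrac12\log\bigl(\PP^{\pi^{k,h}}_{\ths}(\tau^{k,h})/\tPP^{\pi^{k,h}}_{\theta_0}(\tau^{k,h})\bigr)$ and a failure probability $\delta/|\Theta_0|$; a union bound over $\Theta_0$ yields, with probability at least $1-\delta$,
\begin{equation*}
 \sum -\log \EE\bigl[\exp(-X_{k',h'}^{\theta_0})\mid \Fil_{k',h'-1}\bigr]
 \;\le\; \sum X_{k',h'}^{\theta_0} + \log(|\Theta_0|/\delta),
 \qquad \forall\, \theta_0\in\Theta_0, \ \forall (k,h),
\end{equation*}
where the sums are over all $(k',h')\le (k,h)$ lexicographically.

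Second, and this is the main technical step, I would lower-bound the per-step conditional log-MGF in a way that is tight up to the $2\beta$ constant in part (b). For any $\theta\in\Theta$ represented by $\theta_0\in\Theta_0$ (so $\tPP^{\pi}_{\theta_0}\ge \PP^{\pi}_{\theta}$ pointwise), decompose $\sqrt{\tPP^{\pi}_{\theta_0}} = \sqrt{\PP^{\pi}_{\theta}} + (\sqrt{\tPP^{\pi}_{\theta_0}}-\sqrt{\PP^{\pi}_{\theta}})$ and use Cauchy--Schwarz together with the extended-Hellinger bound $\DH{\tPP^{\pi}_{\theta_0},\PP^{\pi}_{\theta}}\le \|\tPP^{\pi}_{\theta_0}-\PP^{\pi}_{\theta}\|_1\le \rho^2$ (which follows from $\tPP^{\pi}_{\theta_0}\ge \PP^{\pi}_{\theta}$, $\|\tPP^{\pi}_{\theta_0}\|_1\le 1+\rho^2$, and the definition of the cover) to get
\begin{equation*}
 \EE[\exp(-X_{k,h}^{\theta_0})\mid\Fil_{k,h-1}]
 \;=\; \textstyle\sum_\tau \sqrt{\tPP^{\pi}_{\theta_0}(\tau)\,\PP^{\pi}_{\ths}(\tau)}
 \;\le\; 1 - \tfrac12 \DH{\PP^{\pi}_{\theta},\PP^{\pi}_{\ths}} + \rho,
\end{equation*}
so $-\log \EE[\exp(-X_{k,h}^{\theta_0})\mid\Fil_{k,h-1}] \ge \tfrac12 \DH{\PP^{\pi}_{\theta},\PP^{\pi}_{\ths}} - \rho$. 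Plugging back in and using $T\rho = 1$ gives the master inequality, valid simultaneously for every $\theta\in\Theta$ (via its cover $\theta_0$) and every prefix $(k,h)$:
\begin{equation*}
 \sum \DH{\PP^{\pi^{k',h'}}_{\theta},\PP^{\pi^{k',h'}}_{\ths}}
 \;\le\; \sum \log\!\bigl(\PP^{\pi^{k',h'}}_{\ths}/\tPP^{\pi^{k',h'}}_{\theta_0}\bigr) + 2\log(|\Theta_0|/\delta) + 2.
\end{equation*}

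With this master inequality, both conclusions are immediate from the definition of $\Theta^k$ and our choice $\beta\ge 2\log\Nt(1/T)+2\log(1/\delta)+2$, hence $2\log(|\Theta_0|/\delta)+2\le \beta$. For (a), take any $\theta\in\Theta$ and its cover $\theta_0$; optimism $\tPP^{\pi}_{\theta_0}\ge \PP^{\pi}_{\theta}$ yields $\sum\log(\PP^{\pi}_{\theta}/\PP^{\pi}_{\ths}) \le \sum\log(\tPP^{\pi}_{\theta_0}/\PP^{\pi}_{\ths}) \le 2\log(|\Theta_0|/\delta)+2 \le \beta$ (drop the nonnegative $\sum\DH{\cdot}$ term), so $\theta^\star\in\Theta^{k}$. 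For (b), if $\theta\in\Theta^k$ with cover $\theta_0$, then $\sum\log(\PP^{\pi}_{\ths}/\tPP^{\pi}_{\theta_0})\le \sum\log(\PP^{\pi}_{\ths}/\PP^{\pi}_{\theta})\le \beta$, and substituting into the master inequality gives $\sum\DH{\PP^{\pi}_{\theta},\PP^{\pi}_{\ths}}\le \beta+\beta = 2\beta$. The principal subtlety is arranging the constants via the ``single-$\sqrt{\rho}$'' bound in step two (versus a cruder two-step triangle inequality on $D_H^2$, which would only yield $4\beta$); this is what allows $\rho=1/T$ and the constant $+2$ in the definition of $\beta$ to close up cleanly.
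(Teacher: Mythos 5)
Your proposal is correct and follows essentially the same route as the paper's own proof: a union bound of \cref{lemma:concen} over a $1/T$-optimistic cover applied to $\frac12\log(\PP^{\pi}_{\ths}/\tPP^{\pi}_{\theta_0})$, the same Cauchy--Schwarz plus $(\sqrt a-\sqrt b)^2\le|a-b|$ step converting the conditional log-MGF into $\frac12 \DH{\PP^\pi_\theta,\PP^\pi_{\ths}}-1/T$, and the same final deductions of (a) and (b) from the resulting master inequality via $\tPP_{\theta_0}\ge\PP_\theta$ and the definition of $\Theta^k$. No substantive differences to report.
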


We next prove \cref{thm:OMLE-PSR-full}. We adopt the definitions of $\cesthth$ as in \cref{prop:psr-err-decomp} and abbreviate $\cEs_{k,h}=\cE^{\ths}_{\theta^k,h}$. We also condition on the success of the event in \cref{thm:MLE}. %

\paragraph{Step 1.} By \cref{thm:MLE}, it holds that $\ths\in\Theta$. Therefore, $V_{\theta^k}(\pi^k)\geq \Vs$, and by \cref{prop:psr-err-decomp}, we have
\begin{equation}\label{eqn:performance-OMLE-proof}
\begin{aligned}
\sum_{t=1}^k \paren{ \Vs-V_{\ths}(\pi^t) }
\leq& 
\sum_{t=1}^k \paren{ V_{\theta^t}(\pi^t)-V_{\ths}(\pi^t) } \leq
\sum_{t=1}^k \DTV{ \PP_{\theta^t}^{\pi^t}, \PP_{\ths}^{\pi^t} } \\
\leq& 
\sum_{t=1}^k 1\wedge \paren{\cE_{t,0}^\star+\sum_{h=1}^H \EE_{\pi^t}\brac{\cEs_{t,h}(\tauhm)} }\\
\leq& 
\sum_{t=1}^k \paren{1\wedge\cE_{t,0}^\star+\sum_{h=1}^H 1\wedge \EE_{\pi^t}\brac{\cEs_{t,h}(\tauhm)} }.
\end{aligned}
\end{equation}
On the other hand, by \cref{prop:psr-hellinger-bound}, we have
\begin{align*}
    ( \cE_{t,0}^\star )^2+\sum_{h=1}^H \EE_{\pi^t}\brac{\cEs_{k,h}(\tauhm)^2} \leq 12\stab^{2}AU_A \sum_{h=0}^{H-1} \DH{\PP^{\piexph}_{\theta^k}, \PP^{\piexph}_{\ths}}.
\end{align*}
Furthermore, by \cref{thm:MLE} we have
\begin{align*}
\sum_{t=1}^{k-1}\sum_{h=0}^{H-1} \DH{\PP^{\piexph^t}_{\theta^k}, \PP^{\piexph^t}_{\ths}} \leq 2\beta.
\end{align*}
Therefore, defining $\beta_{k, h} \defeq \sum_{t<k} \EE_{\pi^t}[\cEs_{k,h}(\tauhm)^2]$, combining the two equations above gives
\begin{equation}\label{eqn:squared-B-bound-in-proof-OMLE}
\sum_{h=0}^H \beta_{k,h} = \sum_{h = 0}^H \sum_{t<k} \EE_{\pi^t}[\cEs_{k,h}(\tauhm)^2] \leq 24\stab^2AU_A\beta, \phantom{xxxx} \forall k \in [K].
\end{equation}

\paragraph{Step 2.} We would like to bridge the performance decomposition \eqref{eqn:performance-OMLE-proof} and the squared B-errors bound \eqref{eqn:squared-B-bound-in-proof-OMLE} using the generalized $\ell_2$-Eluder argument. We consider separately the case for $h = 0$ and $h \in [H]$. 

\textbf{Case 1: $h=0$.} This case follows directly from Cauchy-Schwarz inequality: 
\begin{align}\label{eqn:omle-proof-case1}
    \sum_{t=1}^k 1\wedge \cEs_{t,0} \leq \Big(k \sum_{t=1}^k 1\wedge \paren{\cEs_{t,0}}^2  \Big)^{1/2}\leq \sqrt{k (\beta_{k,0}+1)  }.
\end{align} 

\textbf{Case 2: $h \in [H]$.} We invoke the generalized $\ell_2$-Eluder argument (actually, its corollary) as in \cref{appdx:generalized-eluder-argument}, restated as follows for convenience. 
\eluderapp*

We have the following three preparation steps to apply \cref{cor:pigeon-l2-app}. 

1. Recall the definition of $\cEs_{t,h}(\tauhm)$ as in \cref{prop:psr-err-decomp} (in short $\cEs_{k,h}(\tauhm) \defeq \cE_{\theta^k,h}^{\theta^\star}(\tauhm)$), 
\begin{align*}
\cEs_{k,h}(\tauhm)
:=
\frac12 \max_{\pi} \sum_{\tau_{h:H}}\pi(\tau_{h:H})\times \abs{\B_{H:h+1}^k(\tau_{h+1:H}) \left(\B^k_h(o_h,a_h) - \Bs_h(o_h,a_h)\right)\bds(\tauhm)},
\end{align*}
where we replace superscript $\theta^k$ of $\BB$ by $k$ for simplicity. 
Let us define
\begin{align*}
    y_{k,j,\pi}&:=\frac12 \pi(\tau_{h:H}^j)\times \brac{ \B_{H:h+1}^k(\tau_{h+1:H}^j) \left(\B^k_h(o_h^j,a_h^j) - \Bs_h(o_h^j,a_h^j)\right)}^\top \in \R^{\nUh},
\end{align*}
where $\{\tau_{h:H}^j=(o_h,a_h,\cdots,o_H,a_H) \}_{j=1}^{n}$ is an ordering of all possible $\tau_{h:H}$ (and hence $n=(OA)^{H-h+1}$), $\pi$ is any policy that starts at step $h$. We then define
\begin{align*}
    f_{k}(x)= \max_{\pi}\sum_{j} \abs{\iprod{y_{k,j,\pi}}{x}}, \qquad x\in\R^{\Uh}.
\end{align*}
It follows from definition that $\cEs_{k,h}(\tauhm)=f_{k}(\bds(\tauhm))$.

2. We define $x_{i}=\bds(\tauhm^i)\in\R^{\nUh}$, where $\{\tau_{h-1}^i \}_i$ is an ordering of all possible $\tau_{h-1} \in (\cO \times \cA)^{h-1}$. Then by the assumption that $\theta^\star$ has PSR rank less than or equal to $d$, we have $\dim\SPAN(x_i:i\in\cI)\leq d$. Furthermore, we have $\lone{x_i}\leq U_A$ by definition. 

3. It remains to verify that $f_k$ is Lipschitz with respect to $1$-norm. We only need to verify it under the weak $\stab$-stability condition. We have
\begin{align*}
    f_k(\bq)
    \leq&~ \frac12\brac{\nrmpi{\cBHh^k \bq}+\max_{\pi} \sum_{o,a} \pi(a|o) \nrmpi{\cBHhp^k \Bs_h(o,a) \bq} }\\
    \leq&~ 2\stab  \lone{\bq}+  2\stab  \max_{\pi} \sum_{o,a} \pi(a|o) \lone{\Bs_h(o,a) \bq}\\
    \leq&~ 2\stab  \lone{\bq}+  2\stab  \sum_{o,a} \lone{\Bs_h(o,a)} \lone{\bq}
    \leq 2\stab (\rb+1)\lone{\bq},
\end{align*}
where the first inequality follows the same argument as \eqref{eqn:cE-bound-in-proposition-difference}; the second inequality is due to B-stability (or weak B-stability and \cref{eqn:non-exp-gen-to-1}); %
the last inequality is due to the definition of $B$.
Hence we can take $L=2\stab (\rb+1)$ to ensure that $f_k(x)\leq L\nrm{x}_1$.

Therefore, applying \cref{cor:pigeon-l2-app} yields
\begin{align}\label{eqn:omle-proof-case2}
    \sum_{t=1}^k 1\wedge \EE_{\pi^t}\brac{\cEs_{t,h}(\tauhm) } \leq \sqrt{4\clog\paren{kd+d\sum_{t=1}^k \beta_{t,h}} },
\end{align}
where $\clog=\log\paren{1+2kdU_A \stab(\rb+1)}$. This completes case 2.

Combining these two cases, we obtain
\begin{align*}
\sum_{t=1}^k \paren{ \Vs-V_{\ths}(\pi^t) }
\stackrel{(i)}{\leq}&
\sum_{t=1}^k 1\wedge \cEs_{t,0} + 
\sum_{h=1}^H \Big( \sum_{t=1}^k 1\wedge \EE_{\pi^t}\brac{\cEs_{t,h}(\tauhm) } \Big)\\
\stackrel{(ii)}{\leq}& 
\sqrt{k (\beta_{k,0}+1) } +
2\sqrt{\clog}\cdot \sum_{h=1}^H\Big( kd+d\sum_{t=1}^k \beta_{t,h}\Big)^{1/2}\\
\leq& \sqrt{(4H\clog+1)\cdot\Big( k(Hd+1)+d\sum_{t=1}^k\sum_{h=0}^H \beta_{t,h}\Big)}\\
\stackrel{(iii)}{=}&\bigO{\sqrt{\stab^2d AU_AH\cdot k\beta\clog}}.
\end{align*}
where (i) used \eqref{eqn:performance-OMLE-proof}; (ii) used the above two cases \cref{eqn:omle-proof-case1} and \cref{eqn:omle-proof-case2}; (iii) used \eqref{eqn:squared-B-bound-in-proof-OMLE}. 
As a consequence, whenever $k \ge \cO( \stab^2d AU_AH\cdot \beta\clog / \eps^2)$, we have $\frac1k\sum_{t=1}^k \paren{ \Vs-V_{\ths}(\pi^t) } \le \eps$. This completes the proof of \cref{thm:OMLE-PSR-full} (and hence \cref{thm:OMLE-PSR}). %
\qed

\subsection{Proof of Proposition~\ref{thm:MLE}}\label{appendix:proof-OMLE-MLE}
\newcommand{\bpi}{\bar{\pi}}
\newcommand{\btau}{\bar{\tau}}

For the simplicity of presentation, we consider the following general interaction process: For $t=1,\cdots,T$, the learner determines a $\bpi^t$, then executes $\bpi^t$ and collects a trajectory $\btau^t\sim \PP^{\bpi^t}_{\ths}(\cdot)$. We show that, with probability at least $1-\delta$, the following holds for all $t\in[T]$ and $\theta\in\Theta$:
\begin{align}\label{eqn:MLE-guarantee}
    \sum_{s=1}^{t-1} \DH{ \PP_{\theta}^{\bpi^s}, \PP_{\ths}^{\bpi^s}} \leq \cL_t(\ths)-\cL_t(\theta)+2\logNt(1/T)+2\log(1/\delta)+2,
\end{align}
where $\cL_t$ is the total log-likelihood (at step $t$) defined as
$$
\cL_t(\theta)\defeq \sum_{s=1}^{t-1} \log \PP^{\bpi^s}_\theta(\btau^s).
$$
\cref{thm:MLE} is implied by \cref{eqn:MLE-guarantee} directly: Suppose we choose $\beta\geq 2\logNt(1/T)+2\log(1/\delta)+2$. On the event~\cref{eqn:MLE-guarantee}, by the non-negativity of squared Hellinger distances, we have for all $k\in[K]$ and $\theta\in\Theta$ that
$$
\sum_{(\pi,\tau)\in\cD^k} \log \P_{{\ths}}^{\pi} (\tau) 
\ge \sum_{(\pi,\tau)\in\cD^k} \log \P^{\pi}_{{\theta}}(\tau) - \beta,
$$
where $\cD^k$ is the dataset of all histories before the outer loop of \cref{alg:OMLE} enters step $k$.
Taking max over $\theta\in\Theta$ on the right-hand side, we obtain 
$\ths\in\Theta^k$, which gives \cref{thm:MLE}(1). Furthermore, for $k\in[K]$ and $\theta\in\Theta^k$, \eqref{eqn:MLE-guarantee} implies that
\begin{align*}
    & \quad \sum_{t=1}^{k-1} \sum_{h=0}^{H-1} \DH{ \PP^{\piexph^t}_{\theta}, \PP^{\piexph^t}_{\ths} } \leq 
    \sum_{(\pi,\tau)\in\cD^k} \log \P_{{\ths}}^{\pi} (\tau) 
    -\sum_{(\pi,\tau)\in\cD^k} \log \P^{\pi}_{{\theta}}(\tau)+\beta \\
    & \le \max_{\hat\theta} \sum_{(\pi,\tau)\in\cD^k} \log \P_{{\hat\theta}}^{\pi} (\tau) 
    -\sum_{(\pi,\tau)\in\cD^k} \log \P^{\pi}_{{\theta}}(\tau)+\beta
    \leq 2\beta,
\end{align*}
which gives \cref{thm:MLE}(2).

In the following, we establish \eqref{eqn:MLE-guarantee}. Let us fix a $1/T$-optimistic covering $(\tPP,\Theta_0)$ of $\Theta$, such that $n\defeq \abs{\Theta_0}=\cN_\Theta(1/T)$. We label $(\tPP_{\theta_0})_{\theta_0\in\Theta_0}$ by $\tPP_1,\cdots,\tPP_n$. By the definition of optimistic covering, it is clear that for any $\theta\in\Theta$, there exists $i\in[n]$ such that for all $\pi$, $\tau$, it holds that $\tPP_i^{\pi}(\tau)\geq \PP_{\theta}^{\pi}(\tau)$ and
$\|\tPP_i^{\pi}(\cdot)-\PP_{\theta}^{\pi}(\cdot) \|_1\leq 1/T^2$. We say $\theta$ is covered by this $i\in[n]$. 

Then, we consider
$$
\ell_i^t=\log\frac{\PP_{\ths}^{\bpi^t}(\btau^t)}{\tPP_i^{\bpi^t}(\btau^t)}, \qquad t\in[T],\ i\in[n].
$$
By \cref{lemma:concen}, the following holds with probability at least $1-\delta$: for all $t\in[T]$, $i\in[n]$,
\begin{align*}
    \frac12\sum_{s=1}^{t-1} \ell_i^s+\log(n/\delta) \geq \sum_{s=1}^{t-1} -\EE_s\brac{\exp\paren{-\frac12\ell_i^s}},
\end{align*}
where $\EE_s$ denotes the conditional expectation over all randomness after $\bpi^s$ has been determined. By definition, 
\begin{align*}
    \EE_t\brac{\exp\paren{-\frac12\ell_i^t}}=\EE_t\brac{\sqrt{\frac{\tPP_i^{\bpi^t}(\btau^t)}{\PP_{\ths}^{\bpi^t}(\btau^t)}}}=\EE_{\tau\sim\bpi^t}\brac{\sqrt{\frac{\tPP_i^{\bpi^t}(\tau)}{\PP_{\ths}^{\bpi^t}(\tau)}}}=\sum_{\tau} \sqrt{\PP_{\ths}^{\bpi^t}(\tau)\tPP_i^{\bpi^t}(\tau)}
\end{align*}
Therefore, for any $\theta\in\Theta$ that is covered by $i\in[n]$, we have
\begin{align*}
    -\log \EE_t\brac{\exp\paren{-\frac12\ell_i^t}}
    \geq& 1-\sum_{\tau} \sqrt{\PP_{\ths}^{\bpi^t}(\tau)\tPP_i^{\bpi^t}(\tau)}\\
    =& 1-\sum_{\tau} \sqrt{\PP_{\ths}^{\bpi^t}(\tau)\PP_\theta^{\bpi^t}(\tau)}-\sum_{\tau} \sqrt{\PP_{\ths}^{\bpi^t}(\tau)}\paren{\sqrt{\tPP_i^{\bpi^t}(\tau)}-\sqrt{\PP_\theta^{\bpi^t}(\tau)}}\\
    \geq& \frac12\DH{\PP_{\theta}^{\bpi^t}(\tau=\cdot), \PP_{\ths}^{\bpi^t}(\tau=\cdot)} - \paren{\sum_{\tau} \abs{\sqrt{\tPP_i^{\bpi^t}(\tau)}-\sqrt{\PP_\theta^{\bpi^t}(\tau)}}^2}^{1/2}\\
    \geq& \frac12\DH{\PP_{\theta}^{\bpi^t}(\tau=\cdot), \PP_{\ths}^{\bpi^t}(\tau=\cdot)} -  \nrm{\tPP_i^{\bpi^t}(\cdot)-\PP_\theta^{\bpi^t}(\cdot)}_1^{1/2}\\
    \geq& \frac12\DH{\PP_{\theta}^{\bpi^t}(\tau=\cdot), \PP_{\ths}^{\bpi^t}(\tau=\cdot)} -  \frac1T,
\end{align*}
where the first inequality is due to $-\log x\geq 1-x$; in the second inequality we use the definition of Hellinger distance and Cauchy inequality; the third inequality is because $(\sqrt{x}-\sqrt{y})^2\leq \abs{x-y}$ for all $x,y\in\R_{\geq 0}$; the last inequality is due to our assumption that $\theta$ is covered by $i$. Notice that every $\theta\in\Theta$ is covered by some $i\in[n]$, and for such $i$, $\sum_{s=1}^{t-1} \ell_i^s\leq \cL_t(\ths)-\cL_t(\theta)$; therefore, it holds with probability $1-\delta$ that, for all $\theta\in\Theta$, $t\in[T]$,
\begin{align*}
    \frac{1}{2}\paren{\cL_t(\ths)-\cL_t(\theta)}+\log(n/\delta)+\frac{t-1}{T}\geq \frac12 \sum_{s=1}^{t-1} \DH{ \PP_{\theta}^{\bpi^s}, \PP_{\ths}^{\bpi^s}}.
\end{align*}
Plugging in $n=\cN_\Theta(1/T)$ and scaling the above inequality by 2 gives \eqref{eqn:MLE-guarantee}.
\qed

\section{\eetod, \MEalg, and \mops}
\label{appendix:e2d}

In this section, we present the detailed algorithms of \eetod, \MEalg, and {\mops} introduced in \cref{sec:learning_PSRs}. We also state the theorems for their sample complexity bounds of learning $\eps$-optimal policy of B-stable PSRs. 

\subsection{\eetod~algorithm}

\begin{algorithm}[t]
	\caption{\textsc{Explorative E2D} \citep{chen2022unified}} 
	\begin{algorithmic}[1]
	\label{alg:E2D-exp}
	\REQUIRE Model class $\Theta$, parameters $\gamma>0$, $\eta\in(0, 1/2)$. An $1/T$-optimistic cover $(\tPP,\Theta_0)$.
	\STATE Initialize $\mu^1=\Unif(\Theta_0)$.
	\FOR{$t=1,\ldots,T$}
    \STATE Set $(\pexp^t,\pout^t)=\argmin_{(\pexp,\pout)\in\Delta(\Pi)^2}\hV^{\mu^t}_{\gamma}(\pexp,\pout)$, where $\hV_\gamma^{\mu^t}$ is defined by
    \begin{align*}
       \hV^{\mu^t}_{\gamma}(\pexp,\pout):=\sup_{\theta\in\Theta}\EE_{\pi \sim \pout}\brac{V_{\theta}(\pi_{\theta})-V_{\theta}(\pi)}-\gamma \EE_{\pi \sim \pexp}\EE_{\theta^t\sim\mu^t}\brac{\DH{\PP_{\theta}^{\pi}, \PP_{\theta^t}^{\pi}}}.
    \end{align*}
    \label{line:ee2d-pt}
    \STATE Sample $\pi^t\sim \pexp^t$. Execute $\pi^t$ and observe $\tau^t$.
    \STATE Compute $\mu^{t+1} \in \Delta(\Theta_0)$ by
    \begin{align*}
        \mu^{t+1}(\theta) \; \propto_{\theta} \; \mu^{t}(\theta) \cdot \exp\paren{\eta\log \tPP_{\theta}^{\pi^t}(\tau^t)}.
    \end{align*}
    \ENDFOR
    \ENSURE Policy $\hatpiout:=\frac{1}{T}\sum_{t=1}^T \pout^t$. 
   \end{algorithmic}
\end{algorithm}

In this section, we provide more details about the \eetod~algorithm as discussed in \cref{sec:E2D}. The full algorithm of \eetod~is given in \cref{alg:E2D-exp}, equivalent to \citet[Algorithm 2]{chen2022unified} in the known reward setting ($D_{\rm RL}^2$ becomes $D_H^2$ since we assumed that the reward is deterministic and known, so that the contribution from reward distance in $D_{\rm RL}^2$ becomes $0$). \citet[Theorem F.1]{chen2022unified} showed that \eetod~achieves the following estimation bound. 

\begin{theorem}[\cite{chen2022unified}, Theorem F.1]\label{thm:E2D-exp}
Given an $1/T$-optimistic cover $(\tPP,\Theta_0)$ (c.f. \cref{def:optimistic-cover}) of the model class $\Theta$,~\cref{alg:E2D-exp} with $\eta=1/3$ achieves the following with probability at least $1-\delta$: 
\begin{align*}
    \Vs- V_{\ths}(\hatpiout)\leq \oeecg(\Theta)+\frac{10\gamma}{T}\brac{\log\abs{\Theta_0}+2\log(1/\delta)+3},
\end{align*}
where $\oeecg$ is the Explorative DEC as defined in \cref{eqn:explorative-dec}. 
\end{theorem}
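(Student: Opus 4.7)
The plan is to analyze Algorithm~\ref{alg:E2D-exp} by combining two essentially orthogonal ingredients: (i) a posterior concentration guarantee for the Tempered Aggregation update on $\mu^t$, which drives $\sum_t \EE_{\pi\sim \pexp^t}\EE_{\bar\theta\sim \mu^t}[\DH{\PP^\pi_{\ths},\PP^\pi_{\bar\theta}}]$ to be small; and (ii) the definition of $\oeecg(\Theta)$, which translates this information-gain control into a suboptimality bound for the averaged output policy. By design, the per-round minimax choice $(\pexp^t,\pout^t)$ in Line~\ref{line:ee2d-pt} exactly instantiates the infimum in the definition~\eqref{eqn:explorative-dec} of $\oedec_\gamma$ against the current belief $\mu^t$. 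Therefore, plugging the true model $\theta=\ths$ into the inner supremum gives, deterministically for each $t$,
\begin{align*}
\EE_{\pi \sim \pout^t}\bigl[\Vs-V_{\ths}(\pi)\bigr] \;\le\; \oeecg(\Theta)+\gamma\,\EE_{\pi\sim \pexp^t}\EE_{\bar\theta\sim\mu^t}\bigl[\DH{\PP^\pi_{\ths},\PP^\pi_{\bar\theta}}\bigr].
\end{align*}
Summing this over $t\in[T]$, dividing by $T$, and using convexity ($\hatpiout=\tfrac1T\sum_t \pout^t$) reduces the theorem to proving a regret-type bound of the form $\sum_t \EE_{\pexp^t}\EE_{\mu^t}[\DH{\PP^{\pi}_{\ths},\PP^{\pi}_{\bar\theta}}]\le 10(\log|\Theta_0|+2\log(1/\delta)+3)$, under the sampling $\pi^t\sim\pexp^t$.

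To establish this regret bound, I would analyze Tempered Aggregation through the standard Vovk-style potential $\Phi_t \defeq -\tfrac{1}{\eta}\log\sum_{\theta\in\Theta_0}\mu^1(\theta)\prod_{s<t}\tPP^{\pi^s}_\theta(\tau^s)^\eta$. A direct computation shows the per-step increment equals $-\tfrac{1}{\eta}\log \EE_{\theta\sim \mu^t}[\tPP_\theta^{\pi^t}(\tau^t)^\eta]$, while $\Phi_{T+1}-\Phi_1\le \tfrac{1}{\eta}\log|\Theta_0|-\log \tPP^{\pi^t}_{\ths}(\tau^t)$ up to the cover error (using optimism $\tPP\ge \PP$ on the true model, which ensures the MLE-style lower bound goes through for the witness $\theta_0\in\Theta_0$ that covers $\ths$). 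Taking the log-likelihood-ratio form and applying the key Rényi-to-Hellinger inequality valid for $\eta\in(0,1/2]$,
\begin{align*}
-\tfrac{1}{\eta}\log \EE_{\tau\sim \PP^\pi_{\ths}}\!\left[\bigl(\tfrac{\PP^\pi_{\bar\theta}(\tau)}{\PP^\pi_{\ths}(\tau)}\bigr)^{\!\eta}\right] \;\ge\; c_\eta\,\DH{\PP^\pi_{\ths},\PP^\pi_{\bar\theta}},
\end{align*}
together with Jensen's inequality to pull the $\log$ through $\mu^t$, yields after a martingale concentration step (e.g.\ Lemma~\ref{lemma:concen}) that with probability at least $1-\delta$, $\sum_t \EE_{\pexp^t}\EE_{\mu^t}[\DH{\PP^\pi_{\ths},\PP^\pi_{\bar\theta}}]$ is bounded by $C_\eta^{-1}\log|\Theta_0|+C_\eta^{-1}\log(1/\delta)$ plus a small slack from the optimistic cover (bounded by $T\cdot \tfrac{1}{T}=1$ uniformly, since each $\tPP_{\theta_0}$ is within total-variation $1/T^2$ of some true $\PP_\theta$). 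The specific choice $\eta=1/3$ makes the constants work out so that the final coefficient is $10$ after all the pieces are combined.

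The main obstacle is obtaining the sharp constant $10$: it requires handling three nontrivial sources of loss carefully in the potential argument, namely (a) the gap between the Rényi divergence at level $\eta$ and $\DH{\cdot,\cdot}$ (which forces $\eta\le 1/2$ and suggests $\eta=1/3$ as a convenient value), (b) the martingale conversion from the realized trajectories $\tau^t$ to the expectation over $\pi\sim \pexp^t$, and (c) the mismatch between $\tPP$ and $\PP$ on covered models, which must be absorbed into the additive constant rather than scaling with $T$. Apart from this bookkeeping, everything else is either a direct consequence of the minimax structure of the algorithm or a standard exponential-weights analysis; in particular no PSR-specific structure enters this proof, since Theorem~\ref{thm:E2D-exp} is a model-agnostic guarantee that will later be specialized by substituting the EDEC bound for B-stable PSRs.
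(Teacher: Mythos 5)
The paper does not prove this statement itself: it is imported verbatim as \citet[Theorem F.1]{chen2022unified}, and the only PSR-specific work in this paper is the bound on $\oeecg(\Theta)$ (\cref{thm:e2d-exp-psr}), proved via \cref{prop:err-decor}. So there is no in-paper proof to compare against. That said, your two-ingredient decomposition is exactly the argument used in the cited source: (i) the per-round inequality $\EE_{\pi\sim\pout^t}[\Vs-V_{\ths}(\pi)]\le \oeecg(\Theta)+\gamma\,\EE_{\pi\sim\pexp^t}\EE_{\bar\theta\sim\mu^t}[\DH{\PP^\pi_{\ths},\PP^\pi_{\bar\theta}}]$ is immediate from $\mu^t\in\Delta(\Theta_0)\subseteq\Delta(\Theta)$, the minimax choice in Line~\ref{line:ee2d-pt}, and instantiating the inner supremum at $\theta=\ths$; (ii) the remaining online-estimation bound is the Tempered Aggregation potential argument, whose skeleton is already visible in this paper's own proof of \cref{thm:MLE} (the $-\log\EE[\exp(-\eta\,\ell)]\gtrsim \DH{\cdot,\cdot}$ step, \cref{lemma:concen} for the martingale conversion, and absorbing the $\tPP$-versus-$\PP$ discrepancy of a covered model into an $O(1)$ additive slack). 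Two minor points of bookkeeping you should tighten if writing this out in full: the cover slack per round is controlled by $\DH{\tPP_{\theta_0}^\pi,\PP_\theta^\pi}\le \nrm{\tPP_{\theta_0}^\pi-\PP_\theta^\pi}_1\le 1/T^2$, so the total slack is $O(1/T)$ rather than the "$T\cdot\tfrac1T=1$" accounting you wrote (either way it is absorbed into the $+3$); and the potential argument must be run with the non-normalized optimistic likelihoods $\tPP_{\theta_0}$ (which need not sum to one), using property (1) of \cref{def:optimistic-cover} to lower-bound the contribution of the witness covering $\ths$ — you note this, and it is where the specific constants $10$ and $\eta=1/3$ come from. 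With those caveats your sketch is a faithful reconstruction of the standard proof.
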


As we can see from the theorem above, as long as we can bound $\oeecg(\Theta)$, we can get a sample complexity bound for the \eetod~algorithm. This gives \cref{thm:E2D-PSR} in the main text, which we restate as below. 
\begin{theorem}[Restatement of \cref{thm:E2D-PSR}]\label{thm:e2d-exp-psr}
Suppose $\Theta$ is a PSR class with the same core test sets $\{ \Uh\}_{h \in [H]}$, and each $\theta\in\Theta$ admits a {\Bpara} that is \stabs~(c.f. \cref{ass:cs-non-expansive-main-text}) or weakly \stabs~(c.f. \cref{def:non-expansive-gen}), and has PSR rank $\dPSR \le d$. Then
\begin{align*}
    \oeecg(\Theta)\leq 9dAU_A\stab^2 H^2/\gamma.
\end{align*}
Therefore, we can choose a suitable parameter $\gamma$ and an $1/T$-optimistic cover $(\tPP, \Theta_0)$, such that with probability at least $1 - \delta$, \cref{alg:E2D-exp} outputs a policy $\hatpiout \in \Delta(\Pi)$ such that $\Vs- V_{\ths}(\hatpiout)\leq \eps$, as long as the number of episodes
\begin{align*}
   T \ge \cO\paren{ d A \nUA \stab^2 H^2 \log(\Nt(1/T)/\delta) / \eps^2 }. 
\end{align*}
\end{theorem}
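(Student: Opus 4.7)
The plan is to directly bound the EDEC $\oeecg(\Theta)$; the sample-complexity conclusion then follows from Theorem~\ref{thm:E2D-exp} by tuning $\gamma$. The argument mirrors the three-step OMLE analysis sketched in Section~\ref{sec:proof_overview}---performance decomposition, squared $\cB$-error control, and a decorrelation step---with the final step replaced by the in-expectation decoupling argument (Proposition~\ref{prop:eluder-decor}) rather than the triangular-to-diagonal Eluder argument (Proposition~\ref{prop:pigeon-l2}) used for OMLE.

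Given an arbitrary prior $\omu \in \Delta(\Theta)$, I would choose
\[
\pout \defeq \EE_{\otheta \sim \omu}\bigl[\delta_{\pi_{\otheta}}\bigr],
\qquad
\pexp \defeq \frac{1}{H+1}\sum_{h=0}^{H} \EE_{\otheta \sim \omu}\bigl[\delta_{(\pi_{\otheta})_{h,\expl}}\bigr],
\]
i.e.\ posterior-sampled greedy policy and the uniform-over-$h$ mixture of exploration policies built from $\omu$-sampled greedy policies. For any adversarial $\theta \in \Theta$, the identity
$V_\theta(\pi_\theta) - V_\theta(\pi_{\otheta}) \le \dTV(\P^{\pi_\theta}_\theta,\P^{\pi_\theta}_\otheta) + \dTV(\P^{\pi_{\otheta}}_\theta,\P^{\pi_{\otheta}}_\otheta)$
(which holds by optimality of $\pi_{\otheta}$ under $\otheta$ together with $|V_{\theta_1}(\pi)-V_{\theta_2}(\pi)|\le \dTV(\P^\pi_{\theta_1},\P^\pi_{\theta_2})$) splits the $\EE_\otheta$-regret into two TV terms; the $\pi_{\otheta}$-piece directly matches the Hellinger term of $\pexp$ via $\dTV\le\sqrt{2\dH^2}$ and Cauchy--Schwarz, accruing only a factor $\sqrt{H+1}$ from the uniform mixture over $h$.

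The $\pi_\theta$-piece is the delicate part. I would expand it using Proposition~\ref{prop:psr-err-decomp} with the expectation taken under $\theta$ (rather than $\otheta$), producing $\cE^{\theta}_{\otheta,0}+\sum_{h=1}^H \EE_{(\theta,\pi_\theta)}\bigl[\cE^{\theta}_{\otheta,h}(\tau_{h-1})\bigr]$. The key structural observation is that in this form the predictive-state vectors $\bq^\theta(\tau_{h-1})$ inside $\cE^{\theta}_{\otheta,h}$ do not depend on the integration variable $\otheta$, while the dependence on $\otheta$ sits entirely in the $\cB$-operator coefficients, which already have the max-of-sum-of-absolute-values shape required by Proposition~\ref{prop:eluder-decor}. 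Applying that proposition with outer index $\otheta\sim\omu$, inner index $\tau_{h-1}$, vectors $x_{\tau_{h-1}}=\bq^\theta(\tau_{h-1})\in\RR^{\nUh}$ spanning a subspace of dimension at most $\dPSR(\theta)\le d$, and functions prescribed by the $\cB$-error formula yields
\[
\EE_\otheta \EE_{(\theta,\pi_\theta)}\bigl[\cE^{\theta}_{\otheta,h}(\tau_{h-1})\bigr] \le \sqrt{d \cdot \EE_\otheta \EE_{(\theta,\pi_\theta)}\bigl[\cE^{\theta}_{\otheta,h}(\tau_{h-1})^2\bigr]}.
\]

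Proposition~\ref{prop:psr-hellinger-bound}, applied with the B-stable role played by $\otheta$ and the expectation/vector role by $\theta$, then bounds the squared $\cB$-error inside by $\cO(\stab^2 A\nUA)$ times squared Hellinger distances at steps $h$ and $h-1$. Summing over $h$, combining with the easy $\pi_{\otheta}$-piece, and applying AM-GM $2\sqrt{ab}\le a/\gamma+\gamma b$ to trade regret against Hellinger information yields the claimed $\oeecg(\Theta)\le \cO(d A \nUA \stab^2 H^2/\gamma)$; plugging this into Theorem~\ref{thm:E2D-exp} and balancing $\gamma\asymp T\eps^2/\logNt$ gives the sample-complexity conclusion. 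The main technical obstacle is reconciling the Hellinger distances surfaced by Proposition~\ref{prop:psr-hellinger-bound}---which by default sit under exploration policies $(\pi_\theta)_{h,\expl}$ depending on the adversarial $\theta$---with the Hellinger term in EDEC, which averages only over $\pi\sim\pexp$ (i.e.\ over $(\pi_{\otheta})_{h,\expl}$ for $\otheta\sim\omu$); routing through the $\theta$-side decomposition so that $\bq^\theta$ (rather than $\bq^\otheta$) is the fixed object in the decoupling, and carefully using the $\EE_\otheta$-averaging to convert the surviving Hellinger into the $\pexp$-Hellinger modulo the $\stab^2 A\nUA$ factor already paid, is expected to be the most delicate part of the proof.
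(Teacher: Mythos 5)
Your proposal has a genuine gap at exactly the point you flag as ``the most delicate part,'' and that gap is not a finishing detail---it is the crux of the theorem, and the route you describe cannot close it. You fix $\pexp$ and $\pout$ as functions of $\omu$ alone and then face a single adversarial $\theta$. After your value decomposition, the term $\EE_{\otheta\sim\omu}\brac{\DTV{\PP_\theta^{\pi_\theta},\PP_{\otheta}^{\pi_\theta}}}$ must be paid for by the EDEC's information term, and \cref{prop:psr-hellinger-bound} (with roles swapped as you describe) converts the squared B-errors into Hellinger distances under the exploration policies $(\pi_\theta)_{h,\expl}$ built from the \emph{adversary's} greedy policy. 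But the EDEC only subtracts $\gamma\,\EE_{\pi\sim\pexp}\EE_{\otheta\sim\omu}\brac{\dH^2(\PP_\theta^\pi,\PP_{\otheta}^\pi)}$, and your $\pexp$ only contains $(\pi_{\otheta'})_{h,\expl}$ for $\otheta'\sim\omu$ (it does not even contain the plain policies $\pi_{\otheta'}$ needed for your ``easy'' piece, though that is fixable by mixing them in). There is no inequality relating Hellinger distances under $\pi_\theta$ to Hellinger distances under the $\omu$-policies: the adversary's optimal policy can visit regions that no policy in the support of $\omu$ reaches, so the former can be of order one while the latter all vanish. Relatedly, your invocation of \cref{prop:eluder-decor} with outer index $\otheta\sim\omu$ is vacuous---the inner distribution $\PP_\theta^{\pi_\theta}(\tau_{h-1}=\cdot)$ does not depend on $\otheta$, so nothing is actually decoupled and the step reduces to Cauchy--Schwarz plus the rank bound. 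The correlation that genuinely needs breaking is between the adversarial model $\theta$ and its own roll-in policy $\pi_\theta$, and with a single deterministic adversary there is no randomness over $\theta$ to decouple against.

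The paper resolves this by first applying strong duality (\cref{thm:strong-dual}) to rewrite $\sup_\theta$ as $\sup_{\mu\in\Delta_0(\Theta)}\EE_{\theta\sim\mu}$ and pull this supremum \emph{outside} the infimum over $(\pexp,\pout)$ (see \cref{eqn:eec-dual}). Now $\pexp$ and $\pout$ may depend on the adversary's distribution $\mu$: the paper takes $\pout=p_\mu$ (greedy policies of $\theta\sim\mu$) and $\pexp=\alpha p_\mu+(1-\alpha)p_e$ with $p_e$ the exploration-modified versions of $\mu$-sampled greedy policies, thereby reducing the EDEC to the posterior sampling coefficient $\pscest_{\alpha\gamma}(\Theta,\otheta)$. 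The decoupling argument \cref{prop:eluder-decor} is then applied with \emph{both} indices drawn from $\mu$ (\cref{prop:err-decor}), replacing the roll-in policy $\pi_\theta$ by an independent copy $\pi_{\theta'}$ with $\theta'\sim\mu$---exactly the policies that $p_e\subseteq\pexp$ covers. Your Steps 1--2 (the decomposition via \cref{prop:psr-err-decomp} and the squared B-error control via \cref{prop:psr-hellinger-bound}) are sound in isolation, but without the minimax swap the final matching of Hellinger terms cannot be carried out.
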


The proof of \cref{thm:e2d-exp-psr} and hence \cref{thm:E2D-PSR} is contained in \cref{appendix:proof-e2d-exp-psr}. 

\subsection{\MEalg~for model-estimation}
\label{appendix:RF-E2D}

In this section, we provide more details about model-estimation learning in PSRs as discussed in \cref{sec:E2D}. In reward-free RL \citep{jin2020reward}, the goal is to optimally explore the environment without observing reward information, so that after the exploration phase, a near-optimal policy of any given reward can be computed using the collected trajectory data alone without further interacting with the environment.

\citet{chen2022unified} developed \MEalg~as a unified algorithm for reward-free/model-estimation learning in RL, and showed that its sample complexity scales with a complexity measure named All-policy Model-Estimation DEC (AMDEC). %
The AMDEC is defined as $\omdec_{\gamma}(\Theta)\defeq \sup_{\hmu\in\Delta(\Theta)}\mdec_{\gamma}(\Theta,\hmu)$, where
\begin{align}\label{eqn:MEDEC}
  \begin{split}
      \mdec_{\gamma}(\Theta,\hmu) \defeq \inf_{\pexp\in\Delta(\Pi), \muout\in\Delta(\Pi)}\sup_{\theta\in\Theta}\sup_{\opi\in\Pi}   \EE_{\otheta\sim\muout}\brac{\DTV{\PP^{\opi}_{\theta}, \PP^{\opi}_{\otheta}}} 
      -\gamma \EE_{\pi \sim \pexp}\EE_{\htheta \sim \hmu}\brac{\DH{ \PP^{\pi}_{\theta}, \PP^{\pi}_{\htheta} }}. 
  \end{split}
 \end{align}

\begin{algorithm}[t]
\caption{\MEalg \citep{chen2022unified}} \begin{algorithmic}[1]\label{alg:RF-E2D}
\STATE \textbf{Input:} Model class $\Theta$, parameters  $\gamma>0$, $\eta\in(0, 1/2]$. An $1/T$-optimistic cover $(\tPP,\Theta_0)$.
\STATE Initialize $\mu^1=\Unif(\Theta_0)$.
\FOR{$t=1,\ldots,T$}
    \STATE Set $(\pexp^t,\muout^t)=\argmin_{(\pexp,\muout)\in\Delta(\Pi)\times\Delta(\Theta)}\hV^{\mu^t}_{\ME,\gamma}(\pexp,\muout)$, where
    \begin{align*}
       \hV^{\mu^t}_{\ME,\gamma}(\pexp,\muout) \defeq \sup_{\theta\in\Theta}\sup_{\opi\in\Pi}\E_{\otheta \sim \muout}  \brac{ \DTV{ \PP^{\opi}_{\theta}, \PP^{\opi}_{\otheta} } } - \gamma \E_{\pi \sim \pexp}\E_{\htheta^t\sim \mu^t}\brac{ \DH{ \PP^{\pi}_{\theta}, \PP^{\pi}_{\htheta^t} } }.
    \end{align*}
    \STATE Sample $\pi^t\sim p^t_{\expl}$. Execute $\pi^t$ and observe $\tau^t$.
    \STATE Compute $\mu^{t+1}\in\Delta(\Theta_0)$ by
    \begin{align*}
      \mu^{t+1}(\theta) \; \propto_{\theta} \; \mu^{t}(\theta) \cdot \exp\paren{\eta \log \tPP_\theta^{\pi^t}(\tau^{t}) }.
    \end{align*}
\ENDFOR
\STATE Compute $\omuout=\frac1T\sum_{t=1}^T\muout^t\in\Delta(\Theta)$.
\STATE \textbf{Output:} $\htheta=\argmin_{\theta\in\Theta} \sup_{\pi\in\Pi} \E_{\otheta \sim \omuout}  \brac{ \DTV{ \PP^{\opi}_{\theta}, \PP^{\opi}_{\otheta} } }$.
\end{algorithmic}
\end{algorithm}

The \MEalg~algorithm (\cref{alg:RF-E2D}) for a PSR class $\Theta$ is given as follows: In each episode $t\in[T]$, we maintain a distribution $\mu^t\in\Delta(\Theta_0)$ over an $1/T$-optimistic cover $(\tPP,\Theta_0)$ of $\Theta$ (c.f.~\cref{def:optimistic-cover}), which we use to compute an exploration policy distribution $\pexp^t$ by minimizing the following risk:
\begin{align*}
      (\pexp^t,\muout^t) =&~ \argmin_{(\pexp,\muout) \in \Delta(\Pi)\times\Delta(\Theta)}\sup_{\theta\in\Theta}\sup_{\opi\in\Pi}\E_{\otheta \sim \muout}  \brac{ \DTV{ \PP^{\opi}_{\theta}, \PP^{\opi}_{\otheta} } } - \gamma \E_{\pi \sim \pexp}\E_{\htheta^t\sim \mu^t}\brac{ \DH{ \PP^{\pi}_{\theta}, \PP^{\pi}_{\htheta^t} } }.
\end{align*}
Then, we execute policy $\pi^t\sim \pexp^t$, collect trajectory $\tau^t$, and update the model distribution using the same \emph{\Vovkalg} scheme as in \eetod. After $T$ episodes, we output the emipirical model $\htheta$ by computing $\omuout=\frac1T\sum_{t=1}^T\muout^t\in\Delta(\Theta)$ and then projecting it into $\Theta$, i.e.
\begin{align*}
   \htheta=\argmin_{\theta\in\Theta} \sup_{\pi\in\Pi} \E_{\otheta \sim \omuout}  \brac{ \DTV{ \PP^{\opi}_{\theta}, \PP^{\opi}_{\otheta} } }.
\end{align*}

\citet[Theorem H.2]{chen2022unified} show that the output model $\htheta$ of \MEalg~has an estimation error (measured in terms of the TV distance) that scales as $\omdec_{\gamma}$. 
\begin{theorem}\label{thm:e2d-rf}
Given an $1/T$-optimistic cover $(\tPP,\Theta_0)$ (c.f.~\cref{def:optimistic-cover}) of the class of transition dynamics $\Theta$,~\cref{alg:RF-E2D} with $\eta=1/2$ achieves the following with probability at least $1-\delta$:
\begin{align*}
    \sup_{\pi} \DTV{\PP^{\pi}_{\htheta}, \PP^{\pi}_{\ths}}\leq 6\omdec_{\gamma}(\Theta)+\frac{60\gamma}{T}\brac{\log\abs{\Theta_0}+2\log(1/\delta)+3},
\end{align*}
where $\omdec_{\gamma}$ is the Model-Estimation DEC as defined in \cref{eqn:MEDEC}. 
\end{theorem}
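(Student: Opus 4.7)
The plan is to follow the standard E2D-style analysis template and compose three ingredients: a tempered-aggregation regret bound controlling the in-expectation Hellinger distance to $\theta^\star$ under the posterior sequence $\{\mu^t\}$; a minimax argument identifying the per-round risk with $\omdec_\gamma(\Theta)$; and a triangle/projection step that converts a $\bar\mu_{out}$-averaged TV bound into a bound on $\htheta$. I would carry these out in order.

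Stage 1 (Hellinger control of the posterior). I would run the Hedge update with learning rate $\eta = 1/2$ on the log-likelihoods $\log\tilde{\PP}^{\pi^t}_\theta(\tau^t)$ over $\Theta_0$ and couple this with the exponential supermartingale inequality in \cref{lemma:concen}. The Vovk-style regret bound would state that with probability at least $1-\delta$,
\begin{align*}
\sum_{t=1}^T -\log \E_{\theta \sim \mu^t}\!\biggl[\sqrt{\tilde{\PP}^{\pi^t}_\theta(\tau^t)\big/\PP^{\pi^t}_{\theta^\star}(\tau^t)}\biggr] \leq 2\log|\Theta_0| + 2\log(1/\delta) + O(1).
\end{align*}
Applying $-\log x \geq 1 - x$ together with the identity $1 - \E_{\tau \sim q}[\sqrt{p/q}] = \tfrac{1}{2}D_H^2(p,q)$, and absorbing the $T^{-2}$ optimistic-cover slack exactly as in the MLE analysis of \cref{appendix:proof-OMLE-MLE}, would convert the left-hand side into $\tfrac{1}{2}\sum_t \E_{\theta \sim \mu^t}[\DH{\PP^{\pi^t}_\theta, \PP^{\pi^t}_{\theta^\star}}]$ up to lower-order terms. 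A Freedman-type concentration would then pass from the realized $\pi^t \sim p_{exp}^t$ to the in-expectation quantity $\E_{\pi \sim p_{exp}^t}[\cdot]$ with only lower-order slack.

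Stages 2/3 (Minimax reduction and projection). By the algorithm's choice of $(p_{exp}^t, \mu_{out}^t)$ and the definition~\eqref{eqn:MEDEC} of $\omdec_\gamma$, we have $\min_{p_{exp}, \mu_{out}} \hat{V}^{\mu^t}_{ME,\gamma}(p_{exp}, \mu_{out}) \leq \omdec_\gamma(\Theta)$ for every posterior $\mu^t$. Specializing $\theta = \theta^\star$ in the inner supremum, summing over $t \in [T]$, dividing by $T$, substituting the Stage~1 Hellinger bound, and pushing the $\bar\theta$-expectation through $\bar\mu_{out} = T^{-1}\sum_t \mu_{out}^t$ by Jensen would yield
\begin{align*}
\sup_{\bar\pi}\E_{\bar\theta \sim \bar\mu_{out}}\!\bigl[\DTV{\PP^{\bar\pi}_{\theta^\star}, \PP^{\bar\pi}_{\bar\theta}}\bigr] \leq \omdec_\gamma(\Theta) + \frac{\gamma \cdot O(\log|\Theta_0| + \log(1/\delta) + 1)}{T}.
\end{align*}
Since $\htheta$ is the $\Theta$-minimizer of the same supremum-TV objective against $\bar\mu_{out}$ and $\theta^\star \in \Theta$ is a valid competitor, the identical bound holds with $\htheta$ in place of $\theta^\star$. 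A final triangle inequality $\DTV{\PP^{\bar\pi}_{\htheta}, \PP^{\bar\pi}_{\theta^\star}} \leq \DTV{\PP^{\bar\pi}_{\htheta}, \PP^{\bar\pi}_{\bar\theta}} + \DTV{\PP^{\bar\pi}_{\bar\theta}, \PP^{\bar\pi}_{\theta^\star}}$, taken in $\bar\theta$-expectation under the sup over $\bar\pi$, doubles the bound; the concrete constants $6$ and $60$ in the theorem are then absorbed by the $O(1)$ factors from the regret bound, the Hellinger-to-log-likelihood conversion, and this doubling.

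The main obstacle I anticipate is the martingale transfer in Stage~1: one must pass from the realized $\pi^t$-trajectory inside the Hedge inequality to an in-expectation Hellinger under $p_{exp}^t$, while simultaneously bookkeeping the $T^{-2}$ optimistic-cover slack so that it is dominated by the $\gamma/T$ rate appearing in the final bound (and also recognizing that $\log|\Theta_0| = \log\mathcal{N}_\Theta(1/T)$ brings in only a harmless logarithmic factor). The remainder is algebraic manipulation of the minimax saddle; the conceptual core is that $p_{exp}^t$, which appears as the exploration measure in the AMDEC penalty, matches exactly the sampling distribution of $\pi^t$, so the Hellinger-in-posterior bound from Stage~1 plugs directly into the Stage~2 reduction.
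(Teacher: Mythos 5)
This theorem is not proved in the paper at all: it is imported verbatim from \citet[Theorem H.2]{chen2022unified}, so there is no in-paper proof to compare against. Your proposal reconstructs exactly the argument that the cited source uses --- tempered aggregation at $\eta=1/2$ combined with the supermartingale inequality (\cref{lemma:concen}) to control $\sum_t \EE_{\pi\sim\pexp^t}\EE_{\theta\sim\mu^t}[\dH^2(\PP^\pi_\theta,\PP^\pi_{\ths})]$, the per-round minimax reduction $\min_{(\pexp,\muout)}\hV^{\mu^t}_{\ME,\gamma}\le\omdec_\gamma(\Theta)$ specialized at $\theta=\ths$, and the projection-plus-triangle step through $\omuout$ --- and each step is sound; your version in fact yields a constant $2$ in front of $\omdec_\gamma$ rather than $6$, which only strengthens the stated bound. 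The one obstacle you flag, passing from the realized $\pi^t$ to the expectation over $\pexp^t$, is not actually a separate Freedman step: since $\pi^t\sim\pexp^t$ is drawn after $\pexp^t$ is determined, the conditional expectation $\cond{\exp(-X_t)}{\cF_{t-1}}$ in \cref{lemma:concen} already integrates over both $\pi^t$ and $\tau^t$, so the in-expectation Hellinger quantity falls out of the same inequality with no extra slack.
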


We provide a sharp bound on the AMEDEC for B-stable PSRs, which implies that \MEalg~can also learn them sample-efficient efficiently in a model-estimation manner. 
\begin{theorem}\label{thm:e2d-rf-psr}
Suppose $\Theta$ is a PSR class with the same core test sets $\{ \Uh\}_{h \in [H]}$, and each $\theta\in\Theta$ admits a {\Bpara} that is \stabs~(c.f. \cref{ass:cs-non-expansive-main-text}) or weakly \stabs~(c.f. \cref{def:non-expansive-gen}), and has PSR rank $\dPSR \le d$. Then
\begin{align}\label{eqn:orfecg-bound}
    \omdec_{\gamma}(\Theta)\leq 6 dAU_A \stab^2 H^2/\gamma.
\end{align}
Therefore, we can choose a suitable parameter $\gamma$ and an $1/T$-optimistic cover $(\tPP, \Theta_0)$, such that with probability at least $1 - \delta$, \cref{alg:RF-E2D} outputs a model $\htheta\in\Theta$ such that $\sup_{\pi} \DTV{\PP^{\pi}_{\htheta}, \PP^{\pi}_{\ths}}\leq \eps$, as long as the number of episodes
\begin{align*}
   T \ge \cO\paren{ d A \nUA \stab^2 H^2 \log(\Nt(1/T)/\delta) / \eps^2 }. 
\end{align*}
\end{theorem}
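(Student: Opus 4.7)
The plan parallels the EDEC bound in \cref{thm:e2d-exp-psr}, assembling the three structural ingredients for B-stable PSRs---the performance decomposition (\cref{prop:psr-err-decomp}), the squared-B-error to Hellinger bound (\cref{prop:psr-hellinger-bound}), and the $\ell_2$-type decoupling argument (\cref{prop:eluder-decor})---with additional care needed to absorb the inner supremum over $\opi \in \Pi$ that distinguishes the AMDEC \cref{eqn:MEDEC} from the EDEC. The sample complexity corollary then follows immediately by substituting the AMDEC bound into \cref{thm:e2d-rf}, choosing $\gamma$ to balance the two terms, and taking a $1/T$-optimistic cover of size $\Nt(1/T)$.

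For the AMDEC bound itself, I would fix an arbitrary $\hmu \in \Delta(\Theta)$, take $\muout = \hmu$, and design $\pexp \in \Delta(\Pi)$ (independent of $\theta, \opi$) so that
\begin{equation*}
\sup_{\theta \in \Theta,\, \opi \in \Pi} \EE_{\htheta \sim \hmu}\brac{\DTV{\PP^\opi_\theta, \PP^\opi_\htheta}} \;\lesssim\; \sqrt{\, d A U_A \stab^2 H^2 \cdot \EE_{\pi \sim \pexp}\EE_{\htheta \sim \hmu}\brac{\DH{\PP^\pi_\theta, \PP^\pi_\htheta}}\,},
\end{equation*}
from which AM-GM yields $\omdec_\gamma(\Theta) \lesssim d A U_A \stab^2 H^2 / \gamma$. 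First, apply \cref{prop:psr-err-decomp} to bound the TV distance by $\cE^\htheta_{\theta,0} + \sum_{h=1}^H \EE^\opi_\htheta[\cE^\htheta_{\theta,h}(\tau_{h-1})]$. Second, apply \cref{prop:eluder-decor} with the $\mu$-role played by $\hmu$ and $x_{(\htheta,\tau_{h-1})} = \bd^\htheta(\tau_{h-1})$---which, for each fixed $\htheta$, spans a subspace of dimension at most $\dPSR \le d$---in order to decouple the two appearances of $\htheta$ (in the error functional and in the trajectory distribution $\PP^\opi_\htheta$), producing a $\sqrt{d}$ factor and a second-moment term. Third, control each decoupled second-moment $\EE^\opi_{\htheta'}[\cE^\htheta_{\theta,h}(\tau_{h-1})^2]$ by squared Hellinger distances at the exploration policies $\opi_{h,\expl}$ and $\opi_{h-1,\expl}$ via \cref{prop:psr-hellinger-bound}.

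The main obstacle, absent in the EDEC proof, is that the adversarial $\opi$ still appears through $\opi_{h,\expl}$ inside these Hellinger distances, whereas $\pexp$ must be chosen without knowledge of $\opi$. The resolution is to take $\pexp = \frac{1}{H}\sum_{h=0}^{H-1} p^h$, where $p^h$ draws $\htheta' \sim \hmu$ and plays $\pi_{\htheta'} \circ_h \unif(\cA) \circ_{h+1} \unif(\QAhp)$---a $\hmu$-averaged analogue of the OMLE exploration schedule. To absorb the $\opi$-dependence, one exploits the key fact that $\opi$ enters $\EE^\opi_\htheta[\cE^\htheta_{\theta,h}(\tau_{h-1})]$ only through a linear functional on the $d$-dimensional predictive-state subspace at step $h$; consequently, a judicious choice of indexing set in \cref{prop:eluder-decor} (folding the first $h-1$ steps of $\opi$ into the distribution $q_\htheta$ rather than into the vector $x$) lets the decoupling replace the adversarial $\opi$-induced history weights by $\hmu$-induced ones, at the price of the $\sqrt{d}$ factor already accounted for. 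I expect this index-choice step---specifically, arranging the absorption of $\opi$ without paying any $|\Pi|$-type factor---to be the central technical challenge, with the remainder (AM-GM combination, substitution into \cref{thm:e2d-rf}, and the log-covering accounting) being a routine adaptation of the EDEC argument.
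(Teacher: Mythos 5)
Your overall architecture (performance decomposition, squared-B-error-to-Hellinger bound, $\ell_2$ decoupling, AM-GM, then substitution into \cref{thm:e2d-rf}) matches the paper's, and you correctly isolate the one new difficulty relative to the EDEC bound: the inner $\sup_{\opi\in\Pi}$. But your resolution of that difficulty does not work, and it is exactly the step where the paper does something you are missing. The paper first applies strong duality (\cref{thm:strong-dual}) to rewrite $\mdec_\gamma(\Theta,\hmu)$ as $\sup_{\nu\in\Delta_0(\Theta\times\Pi)}\inf_{\pexp,\muout}(\cdots)$; after this minimax swap the learner's $\pexp$ may depend on the adversary's joint distribution $\nu$ over $(\theta,\opi)$, and the paper chooses $\pexp$ to be the law of $\varphi\diamond\opi$ with $\opi\sim\nu$ --- i.e.\ exploration-modified versions of the adversary's own policies. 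The decoupling (\cref{prop:err-decor-gen}) is then performed between $\theta$ and $\opi$ jointly drawn from $\nu$, with the reference model $\otheta\sim\hmu$ pulled outside (this is also how the paper handles $\muout$: setting $\muout=\hmu$ lets the TV term and the Hellinger penalty share a single outer expectation $\EE_{\otheta\sim\hmu}$).

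Your proposal instead fixes $\pexp$ as the $\hmu$-average of $\pi_{\htheta'}\circ_h\unif(\cA)\circ_{h+1}\unif(\QAhp)$ and claims that ``folding the first $h-1$ steps of $\opi$ into $q_{\htheta}$'' lets the decoupling exchange the adversarial $\opi$-induced history weights for $\hmu$-greedy-policy-induced ones. \cref{prop:eluder-decor} does not permit this exchange: its proof runs Cauchy--Schwarz through the matrix $\Phi=\lambda I+\EE_{\theta\sim\mu}\EE_{i\sim q_\theta}\brac{x_ix_i^\top}$, so the history distribution appearing in the second-moment (right-hand) side must be the \emph{same} mixture $\EE_{\theta'\sim\mu}\,q_{\theta'}$ that appears on the left; the rank-$d$ structure buys the $\sqrt{d}$ factor, not a change of measure from $\opi$-histories to $\pi_{\htheta'}$-histories. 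Concretely, if the adversarial $\opi$ reaches histories that no $\pi_{\htheta'}$ with $\htheta'\sim\hmu$ reaches, your right-hand side can vanish while the left-hand side does not, so no inequality of the claimed form holds for your choice of $\pexp$. The missing idea is the minimax swap; once it is in place, the remaining steps you describe (\cref{prop:psr-err-decomp}, \cref{prop:psr-hellinger-bound}, decoupling, AM-GM, and the covering/parameter accounting) go through essentially as in the paper.
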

The proof of \cref{thm:e2d-rf-psr} is contained in \cref{appendix:proof-rfec}.

\subsection{Model-based optimistic posterior sampling (MOPS)}
\label{appendix:MOPS}

In this section, we provide more details about the \mops~algorithm as discussed in \cref{sec:MOPS-main-text}. 

We consider the following version of the \mops~algorithm of \citet{agarwal2022model, chen2022unified}. Similar to \eetod, \mops~also maintains a posterior $\mu^t \in \Delta(\Theta_0)$ over an $1/T$ optimistic cover $( \tPP, \Theta_0)$,  initialized at a suitable prior $\mu^1$. The exploration policy in the $t$-th episode is obtained by posterior sampling: $\pi^t = \pi_{\theta^t} \circ_{h^t}\unif(\cA)\circ_{h^t+1}\unif(\cU_{A, h^t + 1})$, where $\theta^t \sim \mu^t$ and $h^t \sim \Unif(\{ 0, 1, \ldots, H-1\})$. After executing $\pi^t$ and observing $\tau^t$, the algorithm updates the posterior as 
\begin{align*}
    \mu^{t+1}(\theta)\;\propto_{\theta}\; \mu^1(\theta)\exp\Big(\sum_{s=1}^t \big( \gamma^{-1}V_{\theta}(\pi_{\theta})+\eta \log\tPP_{\theta}^{\pi^s}(\tau^s) \big)\Big).
\end{align*}
Finally, the algorithm output $\hatpiout:=\frac{1}{T}\sum_{t=1}^T \pout(\mu^t)$, where $\pout(\mu^t) \in \Delta(\Pi)$ is defined as
\begin{align}\label{eqn:mops-def-pmu}
    \pout(\mu)(\pi)=\mu(\set{\theta: \pi_{\theta}=\pi}),\qquad \forall \pi\in\Pi.
\end{align}

We further consider the following Explorative PSC (EPSC), which is a modification of the PSC proposed in \citet[Definition 4]{chen2022unified}: %
\begin{equation}\label{eqn:explorative-psc}
\begin{aligned}
    \pscestg(\Theta,\otheta)
    =&\sup_{\mu\in\Delta_0(\Theta)}\EE_{\theta\sim\mu}\brac{V_{\theta}(\pi_{\theta})-V_{\otheta}(\pi_{\theta})-\gamma \EE_{\pi\sim \mu}\brac{\DH{\PP^{\piexp}_{\theta}, \PP^{\piexp}_{\otheta}}}},
\end{aligned}
\end{equation}
where $\Delta_0(\Theta)$ is the set of all finitely supported distributions on $\Theta$, 
$\piexp$ is defined as
\[
\textstyle \piexp = \frac{1}{H} \sum_{h = 0}^{H-1} \pi \circ_{h}\unif(\cA)\circ_{h +1}\unif(\QAhp),
\]
and we abbreviate $\pi\sim \pout(\mu)$ to $\pi\sim \mu$.

Adapting the proof for the MOPS algorithm in \citet[Corollary D.3 \& Theorem D.1]{chen2022unified} to the explorative version, we can show that the output policy $\hatpiout$ of \mops~has a sub-optimality gap that scales as $\pscest$. 
\begin{theorem}\label{thm:MOPS}
Given an $1/T$-optimistic cover $(\tPP,\Theta_0)$ (c.f.~\cref{def:optimistic-cover}) of the class of PSR models $\Theta$,~\cref{alg:MOPS} with $\eta=1/6$ and $\gamma \geq 1$ achieves the following with probability at least $1-\delta$:
\begin{align*}
   \Vs- V_{\ths}(\hatpiout) \leq \pscest_{\gamma/6}(\Theta,\ths)+\frac{2}{\gamma}+\frac{\gamma}{T}\brac{\log\abs{\Theta_0}+2\log(1/\delta)+5}, 
\end{align*}
where $\pscestg$ is the Explorative PSC as defined in \cref{eqn:explorative-psc}. 
\end{theorem}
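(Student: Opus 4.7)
The plan is to adapt the MOPS analysis template of \citet{chen2022unified} (their Theorem D.1 and Corollary D.3) to the explorative sampling structure used here and to the PSR/optimistic-cover setting. The starting point is a performance decomposition: by the definition of $\pout(\mu^t)$ we have $V_{\ths}(\hatpiout) = \frac{1}{T}\sum_{t=1}^T \EE_{\theta \sim \mu^t}[V_{\ths}(\pi_\theta)]$, so
\[
T\big(\Vs - V_{\ths}(\hatpiout)\big) = \underbrace{\sum_{t=1}^T \EE_{\theta \sim \mu^t}\big[V_\theta(\pi_\theta) - V_{\ths}(\pi_\theta)\big]}_{(\mathrm{I})} \;+\; \underbrace{\sum_{t=1}^T \EE_{\theta \sim \mu^t}\big[V_{\ths}(\pi_{\ths}) - V_\theta(\pi_\theta)\big]}_{(\mathrm{II})}.
\]
Term $(\mathrm{I})$ is handled directly by the EPSC: for any $\gamma' > 0$ and any $t$, the definition of $\pscest_{\gamma'}(\Theta,\ths)$ applied to the finitely supported distribution $\mu^t$ gives $\EE_{\theta \sim \mu^t}[V_\theta(\pi_\theta) - V_{\ths}(\pi_\theta)] \leq \pscest_{\gamma'}(\Theta,\ths) + \gamma' \EE_{\theta,\pi \sim \mu^t}[\dH^2(\PP^{\piexp}_\theta,\PP^{\piexp}_{\ths})]$. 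Summing over $t$ reduces the task to controlling $(\mathrm{II})$ plus the extra Hellinger penalty.

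For $(\mathrm{II})$, I would invoke the Vovk / tempered-aggregation regret bound for the posterior update. The update has the Hedge form with prior $\mu^1 = \Unif(\Theta_0)$ and reward $r_t(\theta) = \gamma^{-1} V_\theta(\pi_\theta) + \eta \log \tPP_\theta^{\pi^t}(\tau^t)$; the standard aggregating-algorithm inequality therefore yields, for every $\theta_0 \in \Theta_0$,
\[
\sum_{t=1}^T \EE_{\theta \sim \mu^t}[r_t(\theta)] \;\geq\; \sum_{t=1}^T r_t(\theta_0) \;-\; \log|\Theta_0|.
\]
Choosing $\theta_0$ to be the covering element of $\ths$ (so that $\tPP_{\theta_0}^\pi \geq \PP_{\ths}^\pi$ pointwise and $\|\tPP_{\theta_0}^\pi - \PP_{\ths}^\pi\|_1 \leq 1/T^2$), the $V_{\theta_0}(\pi_{\theta_0}) = V_{\ths}(\pi_{\ths}) \pm O(1/T)$ contribution provides the cancellation needed to control $(\mathrm{II})$, modulo log-likelihood terms of the form $\eta\sum_t[\EE_{\theta \sim \mu^t}\log \tPP_\theta^{\pi^t}(\tau^t) - \log \tPP_{\theta_0}^{\pi^t}(\tau^t)]$.

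Third, I would convert these log-likelihood terms into squared Hellinger distances using the same MGF/martingale concentration as in the OMLE proof (\cref{lemma:concen}, \cref{thm:MLE}). For $\eta \leq 1/2$, the Vovk mixability inequality combined with Azuma-type control of $-\log \EE[\exp(-\tfrac{\eta}{2}\cdot)]$ yields, with probability at least $1-\delta$,
\[
\eta \sum_{t=1}^T \EE_{\theta \sim \mu^t}\!\big[\log(\tPP_\theta^{\pi^t}/\PP_{\ths}^{\pi^t})(\tau^t)\big] \;\leq\; -c\sum_{t=1}^T \EE_{\theta \sim \mu^t}\!\big[\dH^2(\PP_\theta^{\pi^t},\PP_{\ths}^{\pi^t})\big] + O(\log|\Theta_0| + \log(1/\delta) + 1),
\]
with the optimistic-cover slack $\sum_t \log(\tPP_{\theta_0}^{\pi^t}/\PP_{\ths}^{\pi^t})(\tau^t) \leq O(1)$ handled as in Step 4 of the OMLE argument. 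Since $\pi^t = \pi_{\theta^t}\circ_{h^t}\!\Unif(\cA)\circ_{h^t+1}\!\Unif(\QAhp)$ with $\theta^t \sim \mu^t$ and $h^t \sim \Unif(\{0,\dots,H-1\})$, averaging over $h^t$ turns $\dH^2(\PP_\theta^{\pi^t},\PP_{\ths}^{\pi^t})$ into $\tfrac{1}{H}\dH^2(\PP_\theta^{\piexp},\PP_{\ths}^{\piexp})$ with the $\piexp$ of~\eqref{eqn:explorative-psc}; this is exactly the Hellinger radius appearing in the EPSC.

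Assembling the three pieces, the Hellinger penalty from Step 1 is absorbed by the negative Hellinger terms from Step 3 provided $\gamma' = \gamma/6$ and $\eta = 1/6$ are chosen so that the constants match; this yields the claimed bound $\pscest_{\gamma/6}(\Theta,\ths) + 2/\gamma + \frac{\gamma}{T}[\log|\Theta_0| + 2\log(1/\delta) + 5]$, with the $2/\gamma$ absorbing $O(1/T)$ cover/Jensen slacks. The main obstacle I expect is the careful bookkeeping of constants in the Vovk mixability step---specifically tuning $\eta$ so that the MGF-to-Hellinger conversion retains a constant fraction of the Hellinger distance \emph{and} the combined reward $\gamma^{-1}V + \eta \log \tPP$ produces precisely the $\gamma/6$ inside $\pscest$; the optimistic-cover handling and the reduction from $\pi^t$ to $\piexp$ are routine once the decomposition above is set up.
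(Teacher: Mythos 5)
Your proposal follows the same route as the paper, which proves Theorem~\ref{thm:MOPS} by adapting the MOPS analysis of \citet[Theorem D.1 \& Corollary D.3]{chen2022unified}: the decomposition into an EPSC term plus a posterior-aggregation term, the comparator $\theta_0$ taken as the optimistic-cover element of $\ths$, and the MGF-based conversion of log-likelihood ratios into squared Hellinger distances are exactly the intended ingredients.

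Two steps need tightening before the constants come out as claimed. First, the aggregation inequality cannot be stated as $\sum_t\EE_{\theta\sim\mu^t}[r_t(\theta)]\ge\sum_t r_t(\theta_0)-\log|\Theta_0|$: the potential identity gives $\sum_t\log\EE_{\theta\sim\mu^t}[\exp(r_t(\theta))]\ge\sum_t r_t(\theta_0)-\log|\Theta_0|$, and Jensen goes the wrong way to replace the log-MGF by the expectation (moreover $\log\tPP_\theta^{\pi^t}(\tau^t)$ is unbounded below, so no bounded-loss Hedge bound applies). One must instead upper bound each increment $\log\EE_{\theta\sim\mu^t}\bigl[\exp(\gamma^{-1}V_\theta(\pi_\theta))\cdot(\tPP_\theta^{\pi^t}(\tau^t)/\PP_{\ths}^{\pi^t}(\tau^t))^\eta\bigr]$ in conditional expectation, using $e^{x/\gamma}\le 1+2x/\gamma$ for $x\in[0,1]$, $\gamma\ge1$, together with the R\'enyi-type affinity bound for $\eta\le1/2$; this single computation is where the $\EE_{\theta\sim\mu^t}[V_\theta(\pi_\theta)]$ term that cancels your term $(\mathrm{II})$, the negative Hellinger information term, and the $2/\gamma$ slack all come from simultaneously. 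Second, the reduction from $\pi^t$ to $\piexp$ does not give ``$\tfrac1H\dH^2(\PP_\theta^{\piexp},\PP_{\ths}^{\piexp})$'': averaging over $h^t$ gives $\tfrac1H\sum_{h}\dH^2\bigl(\PP_\theta^{\pi\circ_h\Unif(\cA)\circ_{h+1}\Unif(\QAhp)},\PP_{\ths}^{\pi\circ_h\Unif(\cA)\circ_{h+1}\Unif(\QAhp)}\bigr)$, which by joint convexity of the squared Hellinger distance (an $f$-divergence) is at least $\dH^2(\PP_\theta^{\piexp},\PP_{\ths}^{\piexp})$ with no $1/H$ loss; taken literally, your version would only support $\pscest_{\gamma/(6H)}$ rather than the claimed $\pscest_{\gamma/6}$. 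With these two fixes the rest of your outline goes through.
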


We provide a sharp bound on the EPSC for B-stable PSRs, which implies that \mops~can also learn them sample-efficient efficiently. 
\begin{theorem}\label{thm:mops-psr}
Suppose $\Theta$ is a PSR class with the same core test sets $\{ \Uh\}_{h \in [H]}$, and each $\theta\in\Theta$ admits a {\Bpara} that is \stabs~(c.f. \cref{ass:cs-non-expansive-main-text}) or weakly \stabs~(c.f. \cref{def:non-expansive-gen}), and the ground truth model $\ths$ has PSR rank at most $d$. Then
\begin{align*}
    \pscest_{\gamma}(\Theta,\ths)\leq 6\stab^2dAU_AH^2/\gamma.
\end{align*}
Therefore, we can choose a suitable parameter $\gamma$ and an $1/T$-optimistic cover $(\tPP, \Theta_0)$, such that with probability at least $1 - \delta$,  \cref{alg:MOPS} outputs a policy $\hatpiout \in \Delta(\Pi)$ such that $\Vs- V_{\ths}(\hatpiout)\leq \epsilon$, as long as the number of episodes
\begin{align*}
 T \ge \cO\paren{ d A \nUA \stab^2 H^2 \log(\Nt(1/T)/\delta) / \eps^2 }. 
\end{align*}
\end{theorem}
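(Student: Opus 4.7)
The plan is to establish the EPSC bound $\pscestg(\Theta,\ths)\le 6\stab^2 dAU_AH^2/\gamma$ using the same three-step template that powered the proofs of Theorems \ref{thm:OMLE-PSR} and \ref{thm:E2D-PSR}, with the key swap that the generalized $\ell_2$-Eluder argument is replaced by the decoupling argument (Proposition \ref{prop:eluder-decor}), and the final balance is done via AM--GM rather than summation of Eluder-type bounds. Once this bound is in hand, the sample-complexity statement follows immediately from Theorem \ref{thm:MOPS} by tuning $\gamma$ and $T$ in the standard way.

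\textbf{Step 1 (Performance decomposition).} Fix an arbitrary $\mu\in\Delta_0(\Theta)$. For each $\theta$ in the support of $\mu$, apply Proposition \ref{prop:psr-err-decomp} with the first model being $\theta$ and the reference model being $\ths$, using the policy $\pi_\theta$:
\begin{align*}
V_\theta(\pi_\theta)-V_{\ths}(\pi_\theta)\le \DTV{\PP^{\pi_\theta}_\theta,\PP^{\pi_\theta}_{\ths}}\le \cE^{\ths}_{\theta,0}+\sum_{h=1}^H\EE_{\ths,\pi_\theta}\bigl[\cE^{\ths}_{\theta,h}(\tauhm)\bigr].
\end{align*}
Take $\EE_{\theta\sim\mu}$ on both sides; this is the quantity we must control against $\gamma\EE_{\pi\sim\mu}[\DH{\PP^{\piexp}_\theta,\PP^{\piexp}_{\ths}}]$.

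\textbf{Step 2 (Decoupling).} For each $h\in[H]$, view $\cE^{\ths}_{\theta,h}(\tauhm)$ as $f_\theta(\bds(\tauhm))$, where $\bds(\tauhm)$ is the predictive state of $\ths$ and $f_\theta$ has the required form $\max_{\pi''}\sum_{\tau_{h:H}}\pi''(\tau_{h:H})|\langle y_{\theta,\tau_{h:H},\pi''},\bds(\tauhm)\rangle|$ (exactly as in the analysis of \omle). Since the span of $\{\bds(\tauhm)\}_{\tauhm}$ has dimension at most $\rank(\ths)\le d$, Proposition \ref{prop:eluder-decor} (applied with $q_\theta$ the law of $\tauhm$ under $\ths$ and $\pi_\theta$) yields
\begin{align*}
\EE_{\theta\sim\mu}\EE_{\ths,\pi_\theta}\bigl[\cE^{\ths}_{\theta,h}(\tauhm)\bigr]\le \sqrt{d\cdot\EE_{\theta,\theta'\sim\mu}\EE_{\ths,\pi_{\theta'}}\bigl[\cE^{\ths}_{\theta,h}(\tauhm)^2\bigr]}.
\end{align*}
The $h=0$ term is handled by the trivial one-dimensional specialization of the same inequality.

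\textbf{Step 3 (Hellinger bound and AM--GM balance).} Invoke Proposition \ref{prop:psr-hellinger-bound} to upper bound the squared B-error by $\stab^2 AU_A$ times a sum of two Hellinger distances on the exploration policies $(\pi_{\theta'})_{h,\expl}$ and $(\pi_{\theta'})_{h-1,\expl}$. Sum the decoupling bound over $h\in\{0,1,\dots,H\}$ using Cauchy--Schwarz in $h$, and then apply AM--GM $\sqrt{ab}\le \frac{a}{4\lambda}+\lambda b$ with $\lambda$ chosen to match the coefficient of the $\gamma$-penalty in the EPSC. This gives
\begin{align*}
\EE_{\theta\sim\mu}\bigl[V_\theta(\pi_\theta)-V_{\ths}(\pi_\theta)\bigr]\le \frac{6\stab^2 dAU_AH^2}{\gamma}+\gamma\EE_{\theta'\sim\mu}\Bigl[\tfrac{1}{H}\sum_{h=0}^{H-1}\DH{\PP^{(\pi_{\theta'})_{h,\expl}}_\theta,\PP^{(\pi_{\theta'})_{h,\expl}}_{\ths}}\Bigr],
\end{align*}
which is precisely the EPSC penalty $\gamma\EE_{\pi\sim\mu}[\DH{\PP^{\piexp}_\theta,\PP^{\piexp}_{\ths}}]$ under the standard interpretation of the mixture policy $\piexp=\frac{1}{H}\sum_h \pi\circ_h\unif(\cA)\circ_{h+1}\unif(\QAhp)$ as a randomized protocol. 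Taking the supremum over $\mu$ closes the bound.

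\textbf{Main obstacle.} The one point that requires care is reconciling the $\sum_h$ of Hellinger terms emerging naturally from Proposition \ref{prop:psr-hellinger-bound} with the single Hellinger term in the EPSC definition: the $H$ factors must line up so that the Cauchy--Schwarz step in $h$ produces exactly an $H^2$ dependence in the final $C/\gamma$ bound (and not $H^3$), and the AM--GM constant must be chosen as $\lambda\asymp \gamma/(H\stab^2 AU_A)$ so that the penalty term is absorbed cleanly. The $h=0$ boundary term, while harmless, also requires a direct Cauchy--Schwarz argument since decoupling there is degenerate (one-dimensional). The conversion from the EPSC bound to the claimed sample complexity is then a routine tuning of $\gamma$ and $T$ in Theorem \ref{thm:MOPS} with $\abs{\Theta_0}=\Nt(1/T)$.
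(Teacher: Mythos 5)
Your proposal is correct and follows essentially the same route as the paper: the paper's proof of \cref{thm:mops-psr} bounds $V_\theta(\pi_\theta)-V_{\ths}(\pi_\theta)$ by $\DTV{\PP_\theta^{\pi_\theta},\PP_{\ths}^{\pi_\theta}}$, then invokes the error-decorrelation result (\cref{prop:err-decor}, itself assembled from \cref{prop:psr-err-decomp}, the decoupling argument \cref{prop:eluder-decor} applied per step $h$ with exactly your choice of $q_\theta$, Cauchy--Schwarz over $h$, and \cref{prop:psr-hellinger-bound}), and closes with the same AM--GM balance $\sqrt{24Cx}\le 6C/\gamma+\gamma x$ before appealing to \cref{thm:MOPS}. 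Your flagged obstacles (the $H^2$ bookkeeping from Cauchy--Schwarz in $h$ combined with the $\frac1H\sum_h$ in $\piexp$, and the degenerate $h=0$ term handled by Jensen) are precisely the points the paper's proof resolves in the same way.
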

The proof of \cref{thm:mops-psr} is contained in \cref{appendix:proof-e2d-exp-psr}. We remark here that EPSC provides an upper bound of EDEC (c.f. Eq. \cref{eqn:EEC-bound-PSC}), So \cref{thm:e2d-exp-psr} (and hence \cref{thm:E2D-PSR}) directly follows from \cref{thm:mops-psr}. 

\begin{algorithm}[t]
	\caption{\textsc{Model-based Optimistic Posterior Sampling} \citep{agarwal2022model}} \begin{algorithmic}[1]\label{alg:MOPS}
	\STATE \textbf{Input:} Parameters  $\gamma>0$, $\eta\in(0, 1/2)$. An $1/T$-optimistic cover $(\tPP,\Theta_0)$
   \STATE \textbf{Initialize:} $\mu^1=\Unif(\Theta_0)$
    \FOR{$t=1,\ldots,T$}
	  \STATE Sample $\theta^t\sim \mu^t$ and $h^t\sim\Unif(\set{0,1,\cdots,H-1})$. %
	  \STATE Set $\pi^t = \pi_{\theta^t} \circ_{h^t}\unif(\cA)\circ_{h^t+1}\unif(\QAhp)$, execute $\pi^t$ and observe $\tau^t$. \label{line:sample-exp}
	  \STATE Compute $\mu^{t+1}\in\Delta(\Theta_0)$ by
	  \begin{align*}
        \mu^{t+1}(\theta)\;\propto_{\theta}\; \mu^1(\theta)\exp\Big(\sum_{s=1}^t \big( \gamma^{-1}V_{\theta}(\pi_{\theta})+\eta \log\tPP_{\theta}^{\pi^s}(\tau^s) \big)\Big).
	  \end{align*}
    \ENDFOR
        \ENSURE Policy $\hatpiout:=\frac{1}{T}\sum_{t=1}^T \pout(\mu^t)$, where $\pout(\cdot)$ is defined in \eqref{eqn:mops-def-pmu}. %
   \end{algorithmic}
\end{algorithm}

\section{Proofs for Appendix~\ref{appendix:e2d}}\label{sec:proof-mops-e2d-rfe2d}

For the clarity of discussion, we introduce the following notation in this section: for policy $\pi$, we denote $\varphi_h$ to be a policy modification such that
$$
\varphi_h \diamond \pi = \pi \circ_{h}\unif(\cA)\circ_{h +1}\unif(\QAhp).
$$
Again, here $\varphi_h \diamond \pi$ means that we follow $\pi$ for the first $h-1$ steps, takes $\unif(\cA)$ at step $h$, takes an action sequence sampled from $\unif(\QAhp)$ at step $h+1$, and behaves arbitrarily afterwards. Such definition agrees with \eqref{eqn:def-exp-policy}. We further define the $\varphi$ policy modification as
\begin{align}\label{eqn:def-exp-policy}
\textstyle    \varphi \diamond \pi = \frac{1}{H} \sum_{h = 0}^{H-1} \varphi_h \diamond \pi = \frac{1}{H} \sum_{h = 0}^{H-1} \pi \circ_{h}\unif(\cA)\circ_{h +1}\unif(\QAhp).
\end{align}
We call $\varphi \diamond \pi$ the exploration policy of $\pi$.

\newcommand{\mpiexph}{\varphi_h\diamond\pi}
\newcommand{\mpiexp}{\varphi\diamond\pi}

\subsection{Proof of Theorem~\ref{thm:mops-psr}}
\label{appendix:proof-mops-psr}

To prove \cref{thm:mops-psr}, due to \cref{thm:MOPS}, we only need to bound the coefficients $\pscestg(\Theta,\ths)$. %
By its definition, we have
\begin{align}\label{eqn:proof-psc-tv}
\begin{split}
    \pscestg(\Theta,\ths)
    =&\sup_{\mu\in\Delta_0(\Theta)}\EE_{\theta\sim\mu}\brac{V_{\theta}(\pi_{\theta})-V_{\ths}(\pi_{\theta})-\gamma \EE_{\pi\sim \mu}\brac{\DH{\PP^{\mpiexp}_{\theta}, \PP^{\mpiexp}_{\ths}}}}
    \\
    \leq& \sup_{\mu\in\Delta_0(\Theta)}\EE_{\theta\sim\mu}\brac{\DTV{\PP_{\theta}^{\pi_{\theta}}, \PP_{\ths}^{\pi_{\theta}}}}-\gamma \EE_{\theta\sim\mu}\EE_{\pi\sim \mu}\brac{\DH{\PP^{\mpiexp}_{\theta}, \PP^{\mpiexp}_{\ths}}}.
\end{split}
\end{align}
We then invoke the following error decorrelation result, which follows from the decoupling argument in \cref{appendix:decoupling} and \cref{prop:psr-err-decomp}. %
\begin{proposition}[Error decorrelation]\label{prop:err-decor}
Under the condition of \cref{thm:e2d-exp-psr} (the same condition as \cref{thm:mops-psr}), for any $\mu\in\Delta_0(\Theta)$ and any reference model $\otheta\in\Theta$, we have
\begin{align*}
\EE_{\theta \sim \mu} \brac{\DTV{ \PP_{\theta}^{\pi_\theta}, \PP_{\otheta}^{\pi_\theta} }}\leq \sqrt{ 24\stab^2 d_{\otheta}AU_AH^2 \cdot  \EE_{\theta, \theta'\sim \mu} \brac{ \DH{\PP_{\theta}^{\mpiexp_{\theta'}}, \PP_{\otheta}^{\mpiexp_{\theta'}}} }},
\end{align*}
where $d_{\otheta}$ is the PSR rank of $\otheta$, $\mpiexp$ defined in \eqref{eqn:def-exp-policy} is the exploration policy of $\pi$.  
\end{proposition}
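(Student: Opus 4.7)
The plan is to chain three ingredients developed earlier in the paper: the performance decomposition of Proposition~\ref{prop:psr-err-decomp}, the decoupling argument of Proposition~\ref{prop:eluder-decor}, and the Hellinger bound on squared B-errors of Proposition~\ref{prop:psr-hellinger-bound}. First, applying Proposition~\ref{prop:psr-err-decomp} pointwise in $\theta$ and taking the expectation over $\theta\sim\mu$ yields
\[
\EE_{\theta\sim\mu}\brac{\DTV{\PP_\theta^{\pi_\theta},\PP_{\otheta}^{\pi_\theta}}} \le \EE_{\theta\sim\mu}\brac{\cE_{\theta,0}^{\otheta}} + \sum_{h=1}^{H}\EE_{\theta\sim\mu}\EE_{\otheta,\pi_\theta}\brac{\cE_{\theta,h}^{\otheta}(\tau_{h-1})},
\]
which splits the TV distance into $H+1$ local B-error terms. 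Each term depends on the random $\theta$ only through a single block of the {\Bpara}, while the history $\tau_{h-1}$ is sampled under the fixed reference $\otheta$ with policy $\pi_\theta$---exactly the structure the decoupling argument is built to exploit.

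For each $h\ge 1$ I would then apply Proposition~\ref{prop:eluder-decor} with index $i=\tau_{h-1}$ and sampling law $q_\theta(\tau_{h-1})=\PP_{\otheta}^{\pi_\theta}(\tau_{h-1})$. The algebraic check is that, after unfolding the $\Pi$-norm hidden inside $\cE_{\theta,h}^{\otheta}$, the quantity equals $f_\theta(\bd^{\otheta}(\tau_{h-1}))$, where $f_\theta(x)=\tfrac{1}{2}\max_{r=(\pi,\bar\pi)}\sum_{j=(o_h,a_h,\tau_{h+1:H})}\left|\langle y_{\theta,j,r},x\rangle\right|$---precisely the form required by Proposition~\ref{prop:eluder-decor}, with the outer max $r$ combining the policy-max in $\cE_{\theta,h}^{\otheta}$ with the one hidden in $\|\cdot\|_\Pi$. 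Since the vectors $\{\bd^{\otheta}(\tau_{h-1})\}_{\tau_{h-1}\in\cT^{h-1}}$ are columns of the predictive-state matrix of $\otheta$, their span has dimension at most $d_{\otheta}$, and the decoupling yields
\[
\EE_\theta\EE_{\otheta,\pi_\theta}\brac{\cE_{\theta,h}^{\otheta}(\tau_{h-1})} \le \sqrt{d_{\otheta}\,\EE_{\theta,\theta'\sim\mu}\EE_{\otheta,\pi_{\theta'}}\brac{\cE_{\theta,h}^{\otheta}(\tau_{h-1})^2}}.
\]
Proposition~\ref{prop:psr-hellinger-bound} then bounds the inner squared B-error by $O(\stab^2 AU_A)$ times the Hellinger distance $\DH{\PP_\theta^{\pi_{\theta',h,\expl}},\PP_{\otheta}^{\pi_{\theta',h,\expl}}}$ plus the analogous step-$(h-1)$ term; the $h=0$ contribution is handled by the $h=0$ case of the same proposition combined with Jensen's inequality.

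Summing the resulting $H+1$ square-root bounds and invoking Cauchy--Schwarz over $h$ contributes a factor $\sqrt{H+1}$, leaving a single square root whose interior involves $\sum_{h=0}^{H-1}\EE_{\theta,\theta'\sim\mu}\brac{\DH{\PP_\theta^{\pi_{\theta',h,\expl}},\PP_{\otheta}^{\pi_{\theta',h,\expl}}}}$. The final task is then to pass from this sum of per-step Hellinger distances to the single mixture Hellinger $\DH{\PP_\theta^{\mpiexp_{\theta'}},\PP_{\otheta}^{\mpiexp_{\theta'}}}$ at the cost of one additional factor of $H$, which together with the Cauchy--Schwarz factor produces the claimed $H^2$ inside the square root. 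This last step is the main obstacle: data processing applied to the augmented trajectory $(\tau,h)$ only gives $\DH{\PP^{\mpiexp_{\theta'}}}\le\frac{1}{H}\sum_h\DH{\PP^{\pi_{\theta',h,\expl}}}$, which is in the opposite direction from what we need. The intended resolution is to fold $h$ directly into the decoupling step, applying Proposition~\ref{prop:eluder-decor} jointly over the pair $(h,\tau_{h-1})$ with $h$ uniform and the vectors $\bd^{\otheta}(\tau_{h-1})$ embedded in a direct-sum space of dimension $(H+1)d_{\otheta}$; then the averaging over $h$ on the right-hand side automatically matches the mixture structure of $\mpiexp_{\theta'}$, and the extra $\sqrt{H+1}$ from the inflated dimension combines with Cauchy--Schwarz to deliver the required $H^2$.
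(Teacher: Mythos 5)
Your proposal tracks the paper's own proof almost exactly through its main body: the decomposition via \cref{prop:psr-err-decomp}, the per-step application of the decoupling argument (\cref{prop:eluder-decor}) with $x_i=\bd^{\otheta}(\tau_{h-1}^i)$, $q_\theta(\cdot)=\PP_{\otheta}^{\pi_\theta}(\tau_{h-1}=\cdot)$, and the span bounded by $d_{\otheta}$, then \cref{prop:psr-hellinger-bound} for the squared errors and Cauchy--Schwarz over $h$. All of this is correct and is what the paper does (it actually proves the more general \cref{prop:err-decor-gen} for $\nu\in\Delta_0(\Theta\times\Pi)$, of which the stated claim is the case $\pi=\pi_\theta$). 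The problem is entirely in your last step.

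First, the step you identify as ``the main obstacle'' is not one. Throughout the paper---and as required by the downstream algorithmic guarantees, since \mops~and \eetod~explicitly sample and record the index $h$ of the executed sub-policy---the squared Hellinger distance attached to the mixture policy $\varphi\diamond\pi=\frac1H\sum_{h}\varphi_h\diamond\pi$ is that of the joint law of $(h,\tau)$ with $h$ uniform, which equals $\frac1H\sum_{h=0}^{H-1}\DH{\PP_\theta^{\varphi_h\diamond\pi},\PP_{\otheta}^{\varphi_h\diamond\pi}}$ \emph{exactly} (the marginal of $h$ is the same under both models). Your data-processing concern applies only to the marginal-over-$\tau$ reading, which is not the intended one; the conversion is a definitional identity and costs nothing.

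Second, and more importantly, the workaround you propose does not deliver the claimed bound. Write $\beta_h\defeq\EE_{\theta,\theta'\sim\mu}\EE_{\otheta,\pi_{\theta'}}\brac{\cE_{\theta,h}^{\otheta}(\tau_{h-1})^2}$. If you fold $h$ into the decoupling by embedding the predictive states in $\bigoplus_h\R^{\Uh}$ (span dimension at most $(H+1)d_{\otheta}$) with $h$ uniform in $q_\theta$, then \cref{prop:eluder-decor} controls $\EE_{(h,\tau_{h-1})}[f_\theta]=\frac{1}{H+1}\sum_h\EE[\cdot]$, so recovering $\sum_h\EE[\cdot]$ costs a prefactor $(H+1)$ \emph{outside} the square root: you obtain $(H+1)\sqrt{d_{\otheta}\sum_h\beta_h}=\sqrt{(H+1)^2d_{\otheta}\sum_h\beta_h}$, whereas per-step decoupling followed by Cauchy--Schwarz gives $\sqrt{(H+1)d_{\otheta}\sum_h\beta_h}$. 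After converting $\sum_h\beta_h\lesssim\stab^2AU_A\cdot H\cdot\EE\brac{\DH{\cdot}}$, your route produces $H^3$ inside the square root rather than the claimed $H^2$. There is no extra $\sqrt{H+1}$ that ``combines with Cauchy--Schwarz'': once $h$ is inside the decoupling you no longer apply Cauchy--Schwarz over $h$, and the two accountings cannot be stacked. The correct ending is simply the paper's: decouple each $h$ separately, apply Cauchy--Schwarz over the $H+1$ terms, and read the final conversion to $\DH{\PP_\theta^{\varphi\diamond\pi},\PP_{\otheta}^{\varphi\diamond\pi}}$ as the mixture identity above.
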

Combining \cref{prop:err-decor} with \eqref{eqn:proof-psc-tv} immediately gives the desired upper bound of $\pscestg(\Theta,\ths)$, and thus completes the proof of \cref{thm:mops-psr}. \qed

We next turn to prove the \cref{prop:err-decor} above. We consider the following generalized version of \cref{prop:err-decor}. 
\begin{proposition}[Generalized error decorrelation]\label{prop:err-decor-gen}
Under the condition of \cref{thm:e2d-rf-psr}, for any $\otheta\in\Theta$ $\nu\in\Delta_0(\Theta\times\Pi)$, we have
\begin{align*}
\EE_{(\theta,\pi) \sim \nu} \brac{\DTV{ \PP_{\theta}^{\pi}, \PP_{\otheta}^{\pi}}}\leq \sqrt{ 24\stab^2d_{\otheta}AU_AH^2 \cdot \EE_{\theta \sim \nu} \EE_{\pi\sim \nu} \brac{ \DH{\PP_{\theta}^{\mpiexp}, \PP_{\otheta}^{\mpiexp}} }},
\end{align*}
where $\mpiexp$ defined in \eqref{eqn:def-exp-policy} is the exploration policy of $\pi$.  
\end{proposition}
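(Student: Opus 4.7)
The plan is to combine three structural results already established in the paper: the performance decomposition (Proposition~\ref{prop:psr-err-decomp}), which bounds the TV distance by a sum of $B$-errors; the decoupling argument (Proposition~\ref{prop:eluder-decor}); and the squared $B$-error-to-Hellinger bound (Proposition~\ref{prop:psr-hellinger-bound}). The key idea is that decoupling lets us swap the policy $\pi$ generating $\tau_{h-1}$ with an independent draw $\pi'$ from $\nu$, making the $\theta$ inside the $B$-error independent of the trajectory-generating policy---precisely what is needed to invoke Proposition~\ref{prop:psr-hellinger-bound} with a ``second-round'' exploration policy $\pi'_{h,\expl}$.

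First, I would apply Proposition~\ref{prop:psr-err-decomp} to each $(\theta,\pi)$ and take expectation over $(\theta,\pi)\sim\nu$, yielding $\EE_\nu[\DTV{\PP^\pi_\theta,\PP^\pi_{\otheta}}] \le \EE_\nu[\cE^{\otheta}_{\theta,0}] + \sum_{h=1}^H \EE_\nu\,\EE_{\otheta,\pi}[\cE^{\otheta}_{\theta,h}(\tau_{h-1})]$. For each $h\ge 1$ I would then apply Proposition~\ref{prop:eluder-decor} with super-index $(\theta,\pi)\sim\nu$: the function $f_{(\theta,\pi)}(\cdot)\defeq \cE^{\otheta}_{\theta,h}(\cdot)$ depends only on $\theta$ and has the required $\max_r\sum_j|\langle x,y_{\theta,j,r}\rangle|$ form (with the $y$'s encoding $\cB^{\theta}_{H:h+1}(\B^\theta_h-\B^{\otheta}_h)$ weighted by the internal maximizing policy); the vectors $x_{\tau_{h-1}}\defeq\bd^{\otheta}(\tau_{h-1})$ span a subspace of $\R^{|\cU_h|}$ of dimension at most $d_{\otheta}$ by definition of PSR rank; and the distribution $q_{(\theta,\pi)}$, the law of $\tau_{h-1}$ under $\otheta,\pi$, depends only on $\pi$. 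Decoupling then yields $\EE_\nu\,\EE_{\otheta,\pi}[\cE^{\otheta}_{\theta,h}(\tau_{h-1})]\le \sqrt{d_{\otheta}\,\EE_{(\theta,\pi),(\theta',\pi')\sim\nu}\,\EE_{\otheta,\pi'}[\cE^{\otheta}_{\theta,h}(\tau_{h-1})^2]}$, in which $\theta$ is now independent of $\pi'$.

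I would then invoke Proposition~\ref{prop:psr-hellinger-bound} to bound $\EE_{\otheta,\pi'}[\cE^{\otheta}_{\theta,h}(\tau_{h-1})^2]$ by $O(\stab^2 AU_A)$ times a sum of Hellinger distances $\DH{\PP^{\pi'_{h',\expl}}_\theta,\PP^{\pi'_{h',\expl}}_{\otheta}}$ for $h'\in\{h-1,h\}$, with the $h=H$ and $h=0$ boundary cases handled by the corresponding variants in Proposition~\ref{prop:psr-hellinger-bound} (the $\cE^{\otheta}_{\theta,0}$ term only needs a direct Cauchy--Schwarz in $\theta\sim\nu$, since it does not depend on $\pi$). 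Summing over $h\in\{0,\ldots,H\}$ and applying Cauchy--Schwarz once more to the resulting sum of square roots produces a factor of $\sqrt{H+1}$. Since $\varphi\diamond\pi$ is the uniform mixture of $\pi_{h',\expl}$ over $h'\in\{0,\ldots,H-1\}$, the sum $\sum_{h'}\DH{\PP^{\pi'_{h',\expl}}_\theta,\PP^{\pi'_{h',\expl}}_{\otheta}}$ collapses to $H\cdot\DH{\PP^{\varphi\diamond\pi'}_\theta,\PP^{\varphi\diamond\pi'}_{\otheta}}$, giving the second factor of $H$ and matching the claimed $H^2$ scaling.

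The main obstacle will be bookkeeping of constants: tracking the distinct prefactors appearing in the three cases of Proposition~\ref{prop:psr-hellinger-bound} ($h=0$, $h\in[H-1]$, $h=H$), together with the fact that each $\DH{\PP^{\pi'_{h',\expl}}}$ is counted twice as $h$ ranges, so that the final constant matches the claimed $24$. One must also adopt the convention that identifies $\sum_{h'}\DH{\PP^{\pi_{h',\expl}}_{\theta},\PP^{\pi_{h',\expl}}_{\otheta}}$ with $H\cdot\DH{\PP^{\varphi\diamond\pi}_{\theta},\PP^{\varphi\diamond\pi}_{\otheta}}$, treating $\varphi\diamond\pi$ as a policy that samples $h'$ uniformly and then acts, so that $h'$ is observed as part of the trajectory. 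Otherwise, the proof is a clean composition of the three earlier propositions.
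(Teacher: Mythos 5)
Your proposal is correct and follows essentially the same route as the paper's proof: Proposition~\ref{prop:psr-err-decomp} for the decomposition, Proposition~\ref{prop:eluder-decor} with $x_{\tau_{h-1}}=\bq^{\otheta}(\tau_{h-1})$ and trajectory laws depending only on $\pi$ as the decoupling step, Proposition~\ref{prop:psr-hellinger-bound} for the squared errors, Cauchy--Schwarz over $h$ for the $\sqrt{Hd_{\otheta}+1}$ factor, and the mixture identity for $\varphi\diamond\pi$ (with the convention you correctly flag, that the sampled index $h'$ is observable). The constant bookkeeping you defer works out exactly as in the paper, via $12(Hd_{\otheta}+1)H\le 24\,d_{\otheta}H^2$.
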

\begin{proof}[Proof of \cref{prop:err-decor-gen}]
In the following, we fix a $\otheta\in\Theta$ and abbreviate $\ocE=\cE^{\otheta}$, $\obq=\bq^{\otheta}$. Then, by \cref{prop:psr-err-decomp}, we have
\begin{align}\label{eqn:eec-decomp}
\begin{split}
    \EE_{(\theta,\pi) \sim \mu} \brac{ \DTV{ \PP_{\theta}^{\pi}, \PP_{\otheta}^{\pi} }}
\leq&
\EE_{(\theta,\pi) \sim \mu} \brac{\ocE_{\theta,0}+\sum_{h=1}^H\EE_{\otheta,\pi}\brac{\ocE_{\theta,h}(\tauhm)}}\\
=&
 \EE_{\theta \sim \mu}\brac{\ocE_{\theta,0}} +\sum_{h=1}^H \EE_{(\theta,\pi) \sim \mu}\EE_{\otheta,\pi}\brac{\ocE_{\theta,h}(\tauhm)}.
\end{split}
\end{align}
Note that for the term $\EE_{\theta \sim \mu} \brac{\ocE_{\theta,0}}$, we have 
\begin{equation}\label{eqn:eec-eluder-h0}
\EE_{\theta \sim \mu} \brac{\ocE_{\theta,0}}\leq \sqrt{ \EE_{\theta \sim \mu} \brac{\ocE_{\theta,0}^2} }.
\end{equation}
We next consider the case for $h \in [H]$, and upper bound the corresponding terms in the right-hand-side of \cref{eqn:eec-decomp} using the decoupling argument introduced in \cref{appendix:decoupling}, restated as follows for convenience.
\eluderdecor*

We have the following three preparation steps to apply \cref{prop:eluder-decor}:

1. Recall that $\ocE=\cE^{\otheta}$ is defined in \cref{prop:psr-err-decomp}.
Let us define
\begin{align*}
    y_{\theta,j,\pi}&:=\frac12\pi(\tau_{h:H}^j)\times\left[ \BB_{H:h+1}^\theta(\tau_{h+1:H}^j) \left(\B^\theta_h(o_h^j,a_h^j) - \B^{\otheta}_h(o_h^j,a_h^j)\right)\right]^\top \in \R^d,
\end{align*}
where $\{\tau_{h:H}^j=(o_h^j,a_h^j,\cdots,o_H^j,a_H^j) \}_{j=1}^{n_y}$ is an ordering of all possible $\tau_{h:H}$ (and hence $n_y=(OA)^{H-h+1}$), $\pi$ is any policy (that starts at step $h$). We then define
\begin{align*}
  \textstyle  f_{\theta}(x)= \max_{\pi}\sum_{j} \abs{\iprod{y_{\theta,j,\pi}}{x}}, \qquad x\in\R^{\nUh}.
\end{align*}
Then it follows from definition (c.f. \cref{prop:psr-err-decomp}) that $\ocE_{\theta,h}(\tauhm)=f_{\theta}(\obq(\tauhm))$. %

2. We define $x_{i}=\bds(\tauhm^i)\in\R^{\nUh}$ for $i\in\cI=(\cO\times\cA)^{h-1}$ where $\{\tau_{h-1}^i \}_{i \in \cI}$ is an ordering of all possible $\tau_{h-1} \in (\cO \times \cA)^{h-1}$. Then by our definition of PSR rank (c.f. \cref{def:rank}), the subspace of $\R^{\nUh}$ spanned by $\{ x_i \}_{i \in \cI}$ has dimension less than or equal to $d_{\otheta}$.

3. We take $q_{\theta}\in\Delta(\cI)$ as 
\begin{align}\label{eqn:proof-err-decor-q}
    q_\theta(i)=\EE_{\pi\sim\mu(\cdot|\theta)}\brac{\PP_{\otheta}^{\pi}(\tauhm=\tauhm^i)}, \qquad i\in\cI=(\cO\times\cA)^{h-1}.
\end{align}
Therefore, applying \cref{prop:eluder-decor} to function family $\{ f_{\theta}\}_{\theta \in \Theta}$, vector family $\{ x_{i} \}_{i \in \cI}$, and distribution family $\{q_\theta\}_{\theta \in \Theta}$ gives \footnote{
The boundedness of $\{ y_{\theta,j,\pi}\}$ is trivially satisfied, because $\mu_0$ is finitely supported.
}
\begin{align}\label{eqn:eec-eluder-h}
\begin{split}
&\EE_{(\theta,\pi) \sim \mu}\brac{ \EE_{\otheta,\pi}\brac{\ocE_{\theta,h}(\tauhm)}}
=
\EE_{\theta\sim\mu}\EE_{i\sim q_{\theta}}\left[ \efunc_\theta(x_{i}) \right]\\
\leq &
\sqrt{d_{\otheta}  \EE_{\theta,\theta'\sim\mu}\EE_{i\sim q_{\theta'}}\brac{\efunc_{\theta}(x_{i})^2}}
=
\sqrt{d_{\otheta} \EE_{\theta \sim \mu} \EE_{\pi\sim \mu} \brac{\EE_{\otheta,\pi}\brac{\ocE_{\theta,h}^2(\tauhm)}}}.
\end{split}
\end{align}

Combining Eq. \cref{eqn:eec-eluder-h0}, \cref{eqn:eec-eluder-h}, and \cref{eqn:eec-decomp} yields
\begin{align*}
    \EE_{(\theta,\pi) \sim \mu} \brac{ \DTV{ \PP_{\theta}^{\pi}, \PP_{\otheta}^{\pi} }}
\leq &
\EE_{(\theta,\pi) \sim \mu}\brac{\ocE_{\theta,0}} +\sum_{h=1}^H \EE_{(\theta,\pi) \sim \mu}\EE_{\otheta,\pi}\brac{\ocE_{\theta,h}(\tauhm)}\\
\leq&
\sqrt{ \EE_{\theta \sim \mu} \brac{\ocE_{\theta,0}^2} }
+\sum_{h=1}^H \sqrt{d_{\otheta}\ \EE_{\theta,\pi\sim \mu} \brac{\EE_{\otheta,\pi}\brac{\ocE_{\theta,h}^2(\tauhm)}}} \\
\leq&  \sqrt{(Hd_{\otheta} +1)\paren{\EE_{\theta \sim \mu} \brac{\ocE_{\theta,0}^2}+\sum_{h=1}^H\EE_{\theta,\pi\sim \mu} \brac{\EE_{\otheta,\pi}\brac{\ocE_{\theta,h}^2(\tauhm)}}}}\\
\leq&  \sqrt{(Hd_{\otheta} +1)\paren{\EE_{\theta,\pi\sim \mu} \brac{ \sum_{h=0}^{H-1} 12\stab^2AU_A\cdot \DH{\PP_{\theta}^{\mpiexph}, \PP_{\otheta}^{\mpiexph}} }}}\\
=& \sqrt{12(Hd_{\otheta} +1)H\cdot\stab^2AU_A \EE_{\theta,\pi\sim \mu} \brac{ \DH{\PP_{\theta}^{\mpiexp}, \PP_{\otheta}^{\mpiexp}} }},
\end{align*}
where the third inequality is due to Cauchy-Schwarz inequality, and the fourth inequality is due to \cref{prop:psr-hellinger-bound}. This completes the proof of \cref{prop:err-decor}.
\end{proof}

\subsection{Proof of Theorem~\ref{thm:e2d-exp-psr} (Theorem~\ref{thm:E2D-PSR})}\label{appendix:proof-e2d-exp-psr}
According to \cref{thm:E2D-exp}, in order to prove \cref{thm:e2d-exp-psr} (\cref{thm:E2D-PSR}), we only need to bound the coefficients $\oeecg(\Theta)$ for $\gamma>0$.

In the following, we bound $\oeec$ by $\pscest$ using the idea of \citet[Proposition 6]{chen2022unified}. 
Recall that $\oeec$ is defined in \eqref{eqn:explorative-dec}. By strong duality (c.f. \cref{thm:strong-dual}), we have
\begin{align}\label{eqn:eec-dual}
&
\eec_{\gamma}(\Theta, \omu) \notag\\ 
\defeq&
\inf_{\substack{\pexp\in\Delta(\Pi)\\ \pout\in\Delta(\Pi)}}\sup_{\theta \in \Theta}\EE_{\pi \sim \pout}\left[V_\theta(\pi_\theta)-V_\theta(\pi)\right] 
-\gamma \EE_{\otheta \sim \omu}\EE_{\pi \sim \pexp}\left[ D_H^2(\P_\theta^\pi,\P_{\otheta}^\pi)\right] \notag\\
=&\sup_{\mu\in\Delta_0(\Theta)}\inf_{\substack{\pexp\in\Delta(\Pi)\\ \pout\in\Delta(\Pi)}}\EE_{\theta\sim \mu}\EE_{\pi \sim \pout}\left[V_\theta(\pi_\theta)-V_\theta(\pi)\right] 
-\gamma \EE_{\theta\sim \mu}\EE_{\otheta \sim \omu}\EE_{\pi \sim \pexp}\left[ D_H^2(\P_\theta^\pi,\P_{\otheta}^\pi)\right].
\end{align}
Note that $\abs{V_\theta(\pi)-V_{\otheta}(\pi)}\leq \DTV{\PP_{\theta}^{\pi}, \PP_{\otheta}^{\pi}}
\leq \DHs{\PP_{\theta}^{\pi}, \PP_{\otheta}^{\pi}}$. 
Therefore, we can take $\pout=p_\mu$, where $p_\mu$ is defined as $p_{\mu}(\pi)=\mu(\{\theta: \pi_{\theta}=\pi\})$. 
Then for a fixed $\alpha\in(0,1)$, we have
\begin{equation}\label{eqn:bound-in-prop:err-decor-proof1}
\begin{aligned}
\MoveEqLeft
    \EE_{\theta\sim \mu}\EE_{\pi \sim p_{\mu}}\left[V_\theta(\pi_\theta)-V_\theta(\pi)\right] \\
    \leq& \EE_{\theta\sim \mu}\EE_{\otheta\sim\omu}\EE_{\pi \sim p_{\mu}}\left[\DHs{\PP_{\theta}^{\pi}, \PP_{\otheta}^{\pi}}\right] + \EE_{\theta\sim \mu}\EE_{\otheta\sim\omu}\EE_{\pi \sim p_{\mu}}\left[V_\theta(\pi_\theta)-V_{\otheta}(\pi)\right]\\
    =& \EE_{\theta\sim \mu}\EE_{\otheta\sim\omu}\EE_{\pi \sim p_{\mu}}\left[\DHs{\PP_{\theta}^{\pi}, \PP_{\otheta}^{\pi}}\right] + \EE_{\theta\sim \mu}\EE_{\otheta\sim\omu}\left[V_\theta(\pi_\theta)-V_{\otheta}(\pi_{\theta})\right]\\
    \leq& \frac{1}{4(1-\alpha)\gamma}+\gamma\EE_{\theta\sim \mu}\EE_{\otheta\sim\omu}\EE_{\pi \sim p_{\mu}}\left[\DH{\PP_{\theta}^{\pi}, \PP_{\otheta}^{\pi}}\right] + \EE_{\theta\sim \mu}\EE_{\otheta\sim\omu}\left[V_\theta(\pi_\theta)-V_{\otheta}(\pi_{\theta})\right],
\end{aligned}
\end{equation}
where the equality is due to our choice of $p_{\mu}$:
\[
\EE_{\theta \sim \mu} \EE_{\pi \sim p_{\mu}}\left[V_{\otheta}(\pi)\right]
= \EE_{\pi \sim p_{\mu}}\left[V_{\otheta}(\pi)\right]
= \EE_{\theta \sim \mu} \left[V_{\otheta}(\pi_\theta)\right],
\]
and the last inequality is due to AM-GM inequality. 

Therefore, we can take $\pexp=\alpha p_\mu+(1-\alpha)p_{e}\in\Delta(\Pi)$, where $p_e$ is given by $p_e(\pi)=\mu(\{\theta: \mpiexp_{\theta}=\pi\})$,\footnote{
Here, $p_e$ is technically a distribution over the set of mixed policies $\Delta(\Pi)$, and can be identified with a mixed policy in $\Delta(\Pi)$.} %
and using this choice of $\pexp$ and $\pout$ in Eq. \cref{eqn:eec-dual} and using Eq. \cref{eqn:bound-in-prop:err-decor-proof1}, we get
\begin{equation}\label{eqn:EEC-bound-PSC}
\begin{aligned}
\eec_{\gamma}(\Theta, \omu) 
\leq & \sup_{\mu\in\Delta_0(\Theta)} \Big\{ \EE_{\otheta \sim \omu}\big[\EE_{\theta\sim \mu}\left[V_\theta(\pi_\theta)-V_{\otheta}(\pi_{\theta})\right] 
\\
&~-\alpha\gamma \EE_{\theta\sim \mu}\EE_{\pi \sim \mu}\left[ D_H^2(\P_\theta^{\mpiexp},\P_{\otheta}^{\mpiexp})\right] \big] \Big\} +\frac{1}{4(1-\alpha)\gamma}\\
\leq& \max_{\otheta\in\Theta}\pscest_{\alpha\gamma}(\Theta,\otheta) + \frac{1}{4(1-\alpha)\gamma}.
\end{aligned}
\end{equation}
Recall that $\pscestg$ has been bounded in \cref{thm:mops-psr}. Taking $\alpha=3/4$ yields $\eec_{\gamma}(\Theta, \omu)\leq (8\stab^2dAU_AH^2+1)/\gamma$. This completes the proof of \cref{thm:e2d-exp-psr}. 
\qed

\subsection{Proof of Theorem~\ref{thm:e2d-rf-psr}}\label{appendix:proof-rfec}

To prove \cref{thm:e2d-rf-psr}, due to \cref{thm:e2d-rf}, we only need to bound the coefficients $\mdec_{\gamma}(\Theta,\hmu)$ for all $\hmu\in\Delta(\Theta)$. By strong duality (c.f.~\cref{thm:strong-dual}), we have
\begin{align*}%
\begin{split}
\mdec_{\gamma}(\Theta,\hmu) 
=&~\inf_{\pexp\in\Delta(\Pi), \muout\in\Delta(\Pi)}\sup_{\theta\in\Theta}\sup_{\opi\in\Pi}   \EE_{\otheta\sim\muout}\brac{\DTV{\PP^{\opi}_{\theta}, \PP^{\opi}_{\otheta}}} 
-\gamma \EE_{\pi \sim \pexp}\EE_{\htheta \sim \hmu}\brac{\DH{ \PP^{\pi}_{\theta}, \PP^{\pi}_{\htheta} }}\\
=&~\sup_{\nu\in\Delta_0(\Theta\times\Pi)} \inf_{\pexp\in\Delta(\Pi), \muout\in\Delta(\Pi)} \EE_{(\theta,\opi)\sim\nu}\EE_{\otheta\sim\muout}\brac{\DTV{\PP^{\opi}_{\theta}, \PP^{\opi}_{\otheta}}} 
-\gamma \EE_{\pi \sim \pexp}\EE_{\theta\sim\nu, \htheta \sim \hmu}\brac{\DH{ \PP^{\pi}_{\theta}, \PP^{\pi}_{\htheta} }}\\
\leq&~
\sup_{\nu\in\Delta_0(\Theta\times\Pi)} \inf_{\pexp\in\Delta(\Pi)} \EE_{ \otheta \sim \hmu}\brac{ \EE_{(\theta,\opi)\sim\nu}\brac{\DTV{\PP^{\opi}_{\theta}, \PP^{\opi}_{\otheta}}} 
-\gamma \EE_{\pi \sim \pexp}\EE_{\theta\sim\nu}\brac{\DH{ \PP^{\pi}_{\theta}, \PP^{\pi}_{\otheta} }} }\\
\leq&~
\sup_{\nu\in\Delta_0(\Theta\times\Pi)} \EE_{ \otheta \sim \hmu}\brac{ \EE_{(\theta,\opi)\sim\nu}\brac{\DTV{\PP^{\opi}_{\theta}, \PP^{\opi}_{\otheta}}} 
-\gamma \EE_{\pi \sim \nu}\EE_{\theta\sim\nu}\brac{\DH{ \PP^{\mpiexp}_{\theta}, \PP^{\mpiexp}_{\otheta} }} }\\
\leq&~
\sup_{\nu\in\Delta_0(\Theta\times\Pi)}\sup_{\otheta\in\Theta} \EE_{(\theta,\opi)\sim\nu}\brac{\DTV{\PP^{\opi}_{\theta}, \PP^{\opi}_{\otheta}}} 
-\gamma \EE_{\pi \sim \nu}\EE_{\theta\sim\nu}\brac{\DH{ \PP^{\mpiexp}_{\theta}, \PP^{\mpiexp}_{\otheta} }},
\end{split}
\end{align*}
where the first inequality is because we take $\muout=\hmu$ in $\inf_{\muout}$, and the second inequality is because we can take $\pexp\in\Delta(\Pi)$ corresponds to $\mpiexp$ with $\pi\sim\nu$. Applying \cref{prop:err-decor-gen} gives
\begin{align*}
    \mdec_{\gamma}(\Theta,\hmu) \leq \sup_{\otheta\in\Theta} \frac{6\stab^2d_{\otheta}AU_AH^2}{\gamma}\leq \frac{6\stab^2dAU_AH^2}{\gamma},
\end{align*}
and thus the proof of \cref{thm:e2d-rf-psr}. \qed

\end{document}